\newcommand{\norm}[1]{\left\|#1\right\|}
\newtheorem*{theorem*}{Theorem}
\newtheorem*{corollary*}{Corollary}
\begin{document}

\title{Towards Practical Adam: Non-Convexity, Convergence Theory, and Mini-Batch Acceleration\footnote{}}

\author{\name Congliang Chen $^\dag$\email congliangchen@link.cuhk.edu.cn \\
       \addr The Chinese University of Hong Kong, Shenzhen
       \AND
       \name Li \ Shen $^\ddag$\thanks{The first three authors contribute equally.\quad $\ddag.$ Corresponding authors.\\
       $*.$ This paper is an extension of \citep{zou2019sufficient}, which was part of our CVPR 2019 paper. In this version, we renew our analysis technique that yields a cleaner convergence rate compared with \citep{zou2019sufficient}. In addition, we also derive the linear speedup properties for mini-batch Adam and distributed Adam in the Parameter-server model.  This work is done when Congliang Chen is a research intern at Tencent AI Lab, China. } \email mathshenli@gmail.com \\
       \addr  JD Explore Academy 
       \AND
       \name Fangyu\ Zou $^\dag$ \email fangyuzou@gmail.com  \\
       \addr Meta
      \AND
       \name Wei \ Liu$^\ddag$ \email wl2223@columbia.edu \\
       \addr Tencent
       }

\editor{Sanjiv Kumar}

\maketitle

\begin{abstract}
Adam is one of the most influential adaptive stochastic algorithms for training deep neural networks, which has been pointed out to be divergent even in the simple convex setting via a few simple counterexamples. 
Many attempts, such as decreasing an adaptive learning rate, adopting a big batch size, incorporating a temporal decorrelation technique, seeking an analogous surrogate, \textit{etc.}, have been tried to promote Adam-type algorithms to converge. 
In contrast with existing approaches, we introduce an alternative easy-to-check sufficient condition, which merely depends on the parameters of the base learning rate and combinations of historical second-order moments, to guarantee the global convergence of generic Adam for solving large-scale non-convex stochastic optimization. 
This observation, coupled with this sufficient condition, gives much deeper interpretations on the divergence of Adam. On the other hand, in practice, mini-Adam and distributed-Adam are widely used without any theoretical guarantee. We further give an analysis on how the batch size or the number of nodes in the distributed system affects the convergence of Adam, which theoretically shows that mini-batch and distributed Adam can be linearly accelerated by using a larger mini-batch size or a larger number of nodes.
At last, we apply the generic Adam and mini-batch Adam with the sufficient condition for solving the counterexample and training several neural networks on various real-world datasets. Experimental results are exactly in accord with our theoretical analysis.

\end{abstract}

\begin{keywords}
  Adam, non-convexity, convergence rate, mini-batch/distributed Adam,  linear speedup.
\end{keywords}

\section{Introduction}
Large-scale non-convex stochastic optimization \citep{bottou2018optimization}, covering a slew of applications in statistics and machine learning \citep{jain2017non,bottou2018optimization} such as learning a latent variable from massive data whose probability density distribution is unknown, takes the following generic formulation:
\begin{align}\label{minimization}
\min_{\bm{x} \in \mathbb{R}^{d}}\ f(\bm{x}) = \mathbb{E}_{\xi\sim \mathbb{P}}\,\big[\widetilde{f}(\bm{x},\xi)\big],
\end{align}
where $f(\bm{x})$ is a non-convex function and $\xi$ is a random variable with an unknown distribution $\mathbb{P}$.

Alternatively, a compromised approach to handle this difficulty is to use an unbiased stochastic estimate of $\bm{\nabla}\!{f}(\bm{x})$, denoted as $g(\bm{x},\xi)$, which leads to the stochastic gradient descent (SGD) algorithm \citep{robbins1985stochastic}. Its coordinate-wise version is defined as follows:
\begin{align}\label{sgd}
\bm{x}_{t+1,k} = \bm{x}_{t,k} - \eta_{t,k}\bm{g}_{t,k}(\bm{x}_{t},\xi_{t}),
\end{align}
for $ k =1,2,\ldots,d$, where $\eta_{t,k} \ge 0$ is the learning rate of the $k$-th component of stochastic gradient  $\bm{g}(\bm{x}_{t},\xi_{t})$ at the $t$-th iteration. \textcolor{black}{Under some mild assumptions (e.g., the optimal solution exists), a}  sufficient condition \citep{robbins1985stochastic} to ensure the global convergence of vanilla SGD in Eq.~\eqref{sgd} is to require $\eta_{t}$ to meet the following diminishing condition:  
\begin{align}\label{surfficient-sgd}
\sum_{t=1}^{\infty}\|\eta_{t}\| = \infty\ {\rm\ and\ } \ \sum\limits_{t=1}^{\infty}\|\eta_{t}\|^2 < \infty.
\end{align}
Although the vanilla SGD algorithm with learning rate $\eta_{t}$ satisfying condition \eqref{surfficient-sgd} does converge, its empirical performance could be still stagnating,  since it is difficult to tune an effective learning rate $\eta_{t}$ via condition \eqref{surfficient-sgd}.   

To further improve the empirical performance of SGD, a large variety of adaptive SGD algorithms, including AdaGrad \citep{duchi2011adaptive}, RMSProp \citep{hinton2012neural}, Adam \citep{kingma2014adam}, Nadam \citep{dozat2016incorporating}, AdaBound \citep{luo2019adaptive}, \textit{etc.}, have been proposed to automatically tune the learning rate $\eta_{t}$ by using second-order moments of historical stochastic gradients $\{\bm{g}_{t}\}$. 
Let $\bm{v}_{t,k}$ and $\bm{m}_{t,k}$ be the \textcolor{black}{exponential moving average of } the historical second-order moments $(\bm{g}^2_{1,k},\bm{g}^2_{2,k},\cdots,\bm{g}^2_{t,k})$ and stochastic gradient estimates $(\bm{g}_{1,k},\bm{g}_{2,k},\cdots,\bm{g}_{t,k})$, respectively. \textcolor{black}{More specifically, two groups of hyperparameters ($\beta_t$, $\theta_t$) will be involved into the calculation of $m_{t,k} = \beta_t m_{t-1,k} + (1-\beta_t)g_{t,k}$ and $v_{t,k} = \theta_t v_{t-1,k} + (1-\theta_t)g^2_{t,k} $.}
Then, the generic iteration scheme of these adaptive SGD algorithms \citep{Reddi2018on,chen2018convergence} is summarized as 
\begin{equation}\label{generic-adaptive-sgd}
\!\! \bm{x}_{t+1,k}\!= \bm{x}_{t,k} - \eta_{t,k}\bm{m}_{t,k},\ {\rm\ with\ }\eta_{t,k}\!= {\alpha_{t}}/{\sqrt{\bm{v}_{t,k}}},\!\!
\end{equation}
for $k =1,2,\ldots,d$, where $\alpha_{t} >0 $ is called base learning rate and it is independent of stochastic gradient estimates $(\bm{g}_{1,k},\bm{g}_{2,k},\cdots,\bm{g}_{t,k})$ for all $t\ge 1$. 
Although Adam works well for solving large scale convex and non-convex optimization problems such as training deep neural networks, it has been disclosed to be divergent in some scenarios via counterexamples \citep{Reddi2018on}.
\textcolor{black}{Thus, without any further assumptions for corrections, Adam should not be directly used.}
Recently, developing sufficient conditions to guarantee global convergences of Adam -type algorithms has attracted much attention from both machine learning and optimization communities. The existing successful attempts can be divided into four categories: decreasing a learning rate, adopting a big batch size, incorporating a temporal decorrelation, and seeking an analogous surrogate. However, some of them are either hard to check or impractical. In this work, we will first introduce an alternative easy-to-check sufficient condition to guarantee the global convergences of the original Adam.

Meanwhile, in practice, stochastic Adam, where a single sample is used to estimate gradient, \textcolor{black}{converges slowly to the optimal point}. People usually use mini-batch Adam instead to get faster convergence performance. In SGD, although how the sample size will affect the convergence has been well studied \citep{li2014efficient}, few works give analysis on mini-batch adaptive gradient methods especially on Adam, since mini-batch size largely affects adaptive learning rate $\eta_{t,k}$ in Eq~.\eqref{generic-adaptive-sgd}, which makes the analysis difficult. In this work, we give the first complexity analysis for mini-batch Adam, which shows that mini-batch Adam can also be theoretically accelerated by using a larger mini-batch size.

On the other hand, as the data size goes larger in machine learning problems, it is hard to collect, store and process data in a single machine. Several machines are involved in the optimization process. Hence, distributed optimization methods are proposed, where distributed Adam is also popularly used. Different from mini-batch Adam, where only one machine is used for optimization, several machines are involved. In the distributed setting, machines are connected via a network graph. More specifically, there are two kinds of structures used in distributed Adam: parameter-server structure and decentralized structure. In the parameter-server structure, there is one special machine called as parameter server and the rest called workers. The parameter server connects to all workers, but workers don't connect to each other. Therefore, workers can share information with the parameter server in each communication round but cannot share information with the other workers. However, in the decentralized structure, there is not a server involved in the structure. A pre-defined graph connects all machines. A machine can only share information with its direct neighbors in each communication round. Still, few works answer how the local batch size and number of machines will affect the convergence of distributed Adam. In this work, \textcolor{black}{because the analysis of distributed Adam under the parameter-server model is similar to Mini-batch Adam, } we answer this question and show that distributed Adam \textcolor{black}{under a parameter-server model} can also achieve a linear speedup property as distributed SGD \citep{yu2019linear}.   

In summary, the contributions of this work are five-fold:
\begin{itemize}
\item[(1)] We introduce an easy-to-check sufficient condition to ensure the global convergences \textcolor{black}{(i.e., averaged expected gradient norm converges to 0)} of generic Adam in the \textcolor{black}{common smooth non-convex stochastic setting with mild assumptions}. Moreover, this sufficient condition is distinctive from the existing conditions and is easier to verify.
\item[(2)] We provide a new explanation on the divergences of original Adam and RMSProp, which are possibly due to an incorrect parameter setting of the combinations of historical second-order moments. 
\item[(3)] We find that the sufficient condition extends the restrictions of RMSProp \citep{mukkamala2017variants} and covers many convergent variants of Adam, e.g., AdamNC, AdaGrad with momentum, \textit{etc.} Thus, their convergences in the non-convex stochastic setting naturally hold. 
\item[(4)] We theoretically show that mini-batch Adam can be further accelerated by adopting a larger mini-batch size, and that distributed Adam can achieve a linear speed up property in the parameter-server distributed system by using commonly used sufficient condition parameters. 
\item[(5)] We conduct experiments to validate the sufficient condition for the convergences of Adam and mini-batch Adam.  The experimental results match our theoretical results. 
\end{itemize}

The paper is organized as follows. In Section \ref{relatedwork}, we first give the formulation of generic Adam and then discuss several works related to Adam including several existing sufficient convergence conditions, analysis of mini-batch, and distributed stochastic gradient methods. In Section \ref{sufficient_condition}, we derive the sufficient condition for convergence of Adam and provide several insights for the divergence of vanilla Adam. In Section \ref{sec_pratical}, we give the complexity analysis on practical Adam with a commonly used sufficient condition parameter, including mini-batch Adam and distributed Adam.  At last, in Section \ref{experimental_result}, we conduct some experiments under both theoretical settings and practical settings to verify the established theory. \textcolor{black}{In addition, by practical Adam, we mean that we give a thorough analysis for Adam, mini-batch Adam, and distributed Adam, which have been commonly used for training deep neural networks without theoretical guarantees.}

\section{Related work}\label{relatedwork}

\subsection{Generic Adam}

For readers' convenience, we first clarify a few necessary notations used in the forthcoming Generic Adam.  First, we denote $\bm{x}_{t,k}$ as the $k$-th component of $\bm{x}_{t}\in\mathbb{R}^{d}$, and $\bm{g}_{t,k}$ as the $k$-th component of the stochastic gradient at the $t$-th iteration respectively,
and call $\alpha_{t} > 0$ base learning rate and $\beta_{t}$ momentum parameter, respectively. Let $\epsilon>0$ be a sufficiently small constant. Denote $\bm{0}=(0,\cdots,0)^{\top} \in \mathbb{R}^{d}$, and $\bm{\epsilon}=(\epsilon,\cdots,\epsilon)^{\top} \in \mathbb{R}^{d}$.
All operations, such as multiplying, dividing, and taking the square root, are executed in the coordinate-wise mode.

\begin{algorithm}[H]
\caption{\ Generic Adam}
\label{Adam}
\begin{algorithmic}[1]
   \STATE {\bf Parameters:} Set suitable base learning rate $\{\alpha_t\}$, momentum parameter $\{\beta_t\}$, and exponential moving average parameter $\{\theta_t\}$, respectively. Choose $\bm{x}_1 \in \mathbb{R}^d$ and set initial values $\bm{m}_0=\bm{0} \in \mathbb{R}^d$ and $\bm{v}_0=\bm{\epsilon} \in \mathbb{R}^d$.
   \FOR{$t= 1,\,2,\,\ldots,\,T$}
    \STATE Sample a stochastic gradient $\bm{g}_t$; 
        \FOR {$k=1,\,2,\,\ldots,\,d$}
        \STATE $\bm{v}_{t,k} = \theta_t \bm{v}_{t-1,k} + (1 - \theta_t) \bm{g}_{t,k}^2$;
        \STATE $\bm{m}_{t,k} = \beta_t \bm{m}_{t-1,k} + (1 - \beta_t) \bm{g}_{t,k}$;
        \STATE $\bm{x}_{t+1,k} = \bm{x}_{t,k} - {\alpha_t \bm{m}_{t,k}}/\sqrt{\bm{v}_{t,k}}$;
        \ENDFOR
   \ENDFOR
 \end{algorithmic}
 \end{algorithm}
It is not hard to check that Generic Adam covers RMSProp by setting $\beta_{t} =0$ directly. Moreover, it covers Adam with a bias correction \citep{kingma2014adam} as follows: 
\begin{remark}
The vanilla Adam with the bias correction \citep{kingma2014adam} takes constant parameters $\beta_{t}=\beta$ and $\theta_{t}=\theta$. The iteration scheme is written as
$\bm{x}_{t+1} = \bm{x}_{t} -\widehat{\alpha}_{t} \frac{\widehat{\bm{m}}_{t}}{\sqrt{\widehat{\bm{v}}_{t}}}$, with $\widehat{\bm{m}}_{t} = \frac{\bm{m}_{t}}{1-\beta^{t}}$ and $\widehat{\bm{v}}_{t} = \frac{\bm{v}_{t}}{1-\theta^{t}}$. Let $\alpha_t = \widehat{\alpha}_t\frac{\sqrt{1-\theta^t}}{1-\beta^t}$. Then, the above can be rewritten as $\bm{x}_{t+1} = \bm{x}_{t} - {\alpha_t \bm{m}_{t}}/\sqrt{\bm{v}_{t}}$. Thus, it is equivalent to taking constant $\beta_t$, constant $\theta_t$, and new base learning rate $\alpha_t$ in Generic Adam. 
\end{remark}

\subsection{Convergence Conditions for Adam}
First, because \citet{Reddi2018on} gave counterexamples on divergence of origin Adam, several sufficient conditions have been proposed to guarantee global convergences of Adam that can be summarized into the following four categories:

\smallskip
\noindent
{\bf (C1) Decreasing a learning rate.}\ ~ 
\citet{Reddi2018on} have declared that the core cause of divergences of Adam and RMSProp is largely controlled by the difference between the two adjacent learning rates, \textit{i.e.},  
\begin{equation}\label{Gamma_t}
\Gamma_{t} = {1}/{\bm{\eta}_{t}}-{1}/{\bm{\eta}_{t-1}}= {\sqrt{\bm{v}_{t}}}/{\alpha_{t}}-{\sqrt{\bm{v}_{t-1}}}/{\alpha_{t-1}}.
\end{equation}
Once positive definiteness of $\Gamma_{t}$ is violated, Adam and RMSProp may suffer from divergence \citep{Reddi2018on}. 
Based on this observation, two variants of Adam called AMSGrad and AdamNC have been proposed with convergence guarantees in both the convex \citep{Reddi2018on} and non-convex \citep{chen2018convergence} stochastic settings by requiring $\Gamma_{t}\succ 0$. 
In addition, Padam \citep{zhou2018convergence} extended from AMSGrad has been proposed to contract the generalization gap in training deep neural networks, whose convergence has been ensured by requiring $\Gamma_{t}\succ 0$.
As a relaxation of $\Gamma_{t} \succ 0$, \citet{barakat2020convergence} showed that when $\alpha_t/\sqrt{v_t} \leq \alpha_{t-1}/(c\sqrt{v_{t-1}})$ holds for all $t$ and some positive $c$, the algorithm Adam can converge.
In the strongly convex stochastic setting, by using the long-term memory technique developed in \citep{Reddi2018on}, \citet{huang2018nostalgic} have proposed NosAdam by attaching more weights on historical second-order moments to ensure its convergence. 
Prior to that, the convergence rate of RMSProp \citep{mukkamala2017variants} has already been established in the convex stochastic setting by employing similar parameters to those of AdamNC \citep{Reddi2018on}. 

\smallskip
\noindent
{\bf(C2) Adopting a big batch size.}\ ~ \citet{basu2018convergence}, for the first time, showed that deterministic Adam and RMSProp with original iteration schemes are convergent by using a full-batch gradient. 
On the other hand, both Adam and RMSProp can be reshaped as specific signSGD-type algorithms \citep{balles18aDissecting,bernstein2018signSGG} whose $\mathcal{O}(1/\sqrt{T})$ convergence rates have been provided in the non-convex stochastic setting by setting batch size as large as the number of maximum iterations \citep{bernstein2018signSGG}. 
Recently, \citet{Zaheer2018Adaptive} have established $\mathcal{O}(1/\sqrt{T})$ convergence rate of original Adam directly in the non-convex stochastic setting by requiring the batch size to be the same order as the number of maximum iterations. 
We comment that this type of requirement is impractical when Adam and RMSProp are applied to tackle large-scale problems \eqref{minimization}, since these approaches cost a huge number of computations to estimate big-batch stochastic gradients in each iteration.

\smallskip
\noindent
{\bf(C3) Incorporating a temporal decorrelation.}\ ~ By exploring the structure of the convex counterexample in \citep{Reddi2018on}, \citet{zhou2018adashift} have pointed out that the divergence of RMSProp is fundamentally caused by the imbalanced learning rate rather than the absence of $\Gamma_{t}\succ 0$.
Based on this viewpoint, \citet{zhou2018adashift} have proposed AdaShift by incorporating a temporal decorrelation technique to eliminate the inappropriate correlation between $\bm{v}_{t,k}$ and the current second-order moment $\bm{g}_{t,k}^2$, 
in which the adaptive learning rate $\eta_{t,k}$ is required to be independent of $\bm{g}^2_{t,k}$.
However, the convergence of AdaShift in \citep{zhou2018adashift} was merely restricted to RMSProp for solving the convex counterexample in \citep{Reddi2018on}.  

\smallskip
\noindent
{\bf (C4) Seeking an analogous surrogate.}\ ~Due to the divergences of Adam and RMSProp \citep{Reddi2018on},  \citet{zou2018convergence} proposed a class of new surrogates called AdaUSM to approximate Adam and RMSProp by integrating weighted AdaGrad with a unified heavy ball and Nesterov accelerated gradient momentums. 
Its $\mathcal{O}(\log{(T)}/\sqrt{T})$ convergence rate has also been provided in the non-convex stochastic setting by requiring a non-decreasing weighted sequence. 
Besides, many other adaptive stochastic algorithms without combining momentums, such as AdaGrad \citep{ward2018adagrad,li2019convergence} and stagewise AdaGrad \citep{chen2018universal}, have been guaranteed to be convergent and work well in the non-convex stochastic setting.

In contrast with the above four types of modifications and restrictions, we introduce an alternative easy-to-check sufficient condition (abbreviated as {\bf (SC)}) to guarantee the global convergences of original Adam.
The proposed {\bf (SC)} merely depends on the parameters in estimating $\bm{v}_{t,k}$ and base learning rate $\alpha_{t}$. {\bf (SC)} neither requires the positive definiteness of $\Gamma_{t}$ like {\bf(C1)} nor needs the batch size as large as the same order as the number of maximum iterations like {\bf(C2)} in both the convex and non-convex stochastic settings. 
Thus, it is easier to verify and more practical compared with {\bf (C1)}-{\bf(C3)}. 
On the other hand, {\bf (SC)} is partially overlapped with {\bf(C1)} since the proposed {\bf (SC)} can cover AdamNC \citep{Reddi2018on}, AdaGrad with exponential moving average  (AdaEMA) momentum \citep{chen2018convergence}, and RMSProp \citep{mukkamala2017variants} as instances whose convergences are all originally motivated by requiring the positive definiteness of $\Gamma_{t}$. 
While, based on {\bf (SC)}, we can directly derive their global convergences in the non-convex stochastic setting as byproducts without checking the positive definiteness of $\Gamma_{t}$ step by step.   
Besides, {\bf (SC)} can serve as an alternative explanation on divergences of original Adam and RMSProp, which are possibly due to incorrect parameter settings for accumulating the historical second-order moments rather than the imbalanced learning rate caused by the inappropriate correlation between $\bm{v}_{t,k}$ and $\bm{g}^2_{t,k}$ like {\bf(C3)}. 
In addition, AdamNC and AdaEMA are convergent under {\bf(SC)}, but violate {\bf(C3)} in each iteration.  \textcolor{black}{Meanwhile, there are lots of work improving upper bounds for the above algorithms, e.g., \cite{defossez2020simple} improved the constants related to $\beta$ by introducing a novel average scheme in the analysis.}

\subsection{Mini-batch Stochastic Gradient Methods}
In practice, people usually use mini-batch stochastic gradient methods instead of single sample stochastic gradient methods or full gradient methods for faster convergence. For mini-batch SGD algorithms, \citet{li2014efficient} have shown that mini-batch SGD boosts $\mathcal{O}(\frac{1}{\sqrt[4]{T}})$ convergence rate of SGD to $\mathcal{O}(\frac{1}{\sqrt[4]{sT}})$ where $s$ is the mini-batch size. However, as it is much harder to show the convergence of adaptive gradient methods, few works analyze how sample size will affect the convergence of the adaptive gradient algorithms. \citet{li2019convergence} gave an analysis on Adagrad and showed the convergence rate is linear in the sample size. \citet{Zaheer2018Adaptive} gave the analysis on Adam, showing that large batch size can help convergence, but the batch size should increase with iteration increasing, which may not be practical. In this work, we theoretically show that mini-batch Adam can be accelerated by adopting a larger mini-batch size as mini-batch SGD \citep{li2014efficient} in the same order.

\subsection{Distributed Stochastic Gradient Methods}
Distributed stochastic gradient descent was first introduced in \citet{agarwal2011distributed} in the parameter-server setting. Further, in the decentralized setting, \citet{lian2017can} gave the analysis on the stochastic gradient descent. The analysis shows that the convergence speed will be linear in the number of workers in the parameter-server setting or will be linear to some constant related to the decentralized graph structure. For the adaptive gradient methods, in the parameter-server setting, \citet{reddi2020adaptive} gave algorithms in the federated scenario called FedAdam, FedAdagrad, and FedYogi. Moreover, they showed that the convergence speed will be linear in the number of workers. However, \textcolor{black}{instead divided by $\sqrt{\bm{v}_t}$, they divide the gradient with $\sqrt{\bm{v_t}+\bm{\epsilon}}$.  Meanwhile,} in their assumptions, $\epsilon$ in the algorithm should be in the order of $O(\frac{G}{L})$, where $G$ is the upper bound of gradient norm, and $L$ is the Lipschitz constant of the objective function. However, in practice, $\epsilon$ is always set to be a small value, much smaller than $G/L$. On the other hand, the large $\epsilon$ may dominate the adaptive term in their algorithms. Hence, their methods may degrade to stochastic gradient descent. \citet{carnevale2020distributed} shoed that Adam with gradient tracking method can be linearly accelerated with an increasing number of nodes in the decentralized and strongly convex setting. Still, it is unclear whether, in the nonconvex setting, this linear speedup will hold when Adam is used. Moreover, \citet{chen2020toward} gave an analysis of Adagrad and showed the convergence speed will be linear in the number of workers. Meanwhile, \citet{nazari2019dadam} gave the analysis of Adagrad in the decentralized setting. \citet{xie2019local} also gave a variant on Adagrad algorithm called AdaAlter in the centralized setting and showed the convergence will linearly speed up by increasing the number of workers. 
Recently,  \citet{chen2021quantized,chen2022efficient} extend Adam to the distributed quantized Adam with error compensation technique \cite{stich2018sparsified}. However, the linear speedup property in \citep{chen2021quantized,chen2022efficient} does not hold. 
To the best of our knowledge, whether the distributed Adam can achieve a linear speedup is still open. This paper theoretically demonstrates that the distributed Adam in the parameter-server model can achieve a linear speedup concerning the number of workers.



\section{Novel Sufficient Condition for Convergence of Adam}
\label{sufficient_condition}

In this section, we characterize the upper-bound of gradient residual of problem \eqref{minimization} as a function of parameters $(\theta_t,\alpha_t)$. Then the convergence rate of Generic Adam is derived directly by specifying appropriate parameters $(\theta_{t},\alpha_{t})$. Below, we state the necessary assumptions that are commonly used for analyzing the convergence of a stochastic algorithm for non-convex problems: 
\begin{description}
\item[(A1)] The minimum value of problem \eqref{minimization} is lower-bounded, \textit{i.e.}, $f^* = \min_{\bm{x} \in \mathbb{R}^{d}}\ f(\bm{x}) > -\infty$;
\item[(A2)] The gradient of $f$ is $L$-Lipschitz continuous,
\textit{i.e.}, $\|\bm{\nabla}\!f(\bm{x})-\bm{\nabla}\!f(\bm{y})\|\!\le\! L\|\bm{x}\!-\!\bm{y}\|,\ \forall \bm{x},\bm{y}\!\in\! \mathbb{R}^{d}$;
\item[(A3)] The stochastic gradient $\bm{g}_{t}$ is an unbiased estimate, \textit{i.e.}, $\mathbb{E}\,[\bm{g}_{t}] = \bm{\nabla}\!{f}_{t}(\bm{x}_{t})$;
\item[(A4)] The second-order moment of stochastic gradient $\bm{g}_{t}$ is uniformly upper-bounded, \textit{i.e.}, $\mathbb{E}\,\|\bm{g}_{t}\|^2 \le G$.

\end{description}

In addition, we also suppose that the parameters $\{\beta_t\}$, $\{\theta_t\}$, and $\{\alpha_t\}$ satisfy the following restrictions:
\begin{description}
\item[(R1)] The parameters $\{\beta_t\}$ satisfy $0\leq \beta_t \leq \beta < 1$ for all $t$ for some constant $\beta$; 
\item[(R2)] The parameters $\{\theta_t\}$ satisfy $0 < \theta_t < 1$ and $\theta_t$ is non-decreasing in $t$ with $\theta := \lim\limits_{t\to\infty}\theta_t > \beta^2$;
\item[(R3)] The parameters $\{\alpha_t\}$ satisfy that $\chi_t := \frac{\alpha_t}{\sqrt{1-\theta_t}}$ is ``almost" non-increasing in $t$, by which we mean that there exist a non-increasing sequence $\{a_t\}$ and a positive constant $C_0$ independent of $t$ such that $a_t \leq \chi_t \leq C_0 a_t$.
\end{description}

The restriction (R3) indeed says that $\chi_t$ is the product between some non-increasing sequence $\{a_t\}$ and some bounded sequence. This is a slight generalization of $\chi_t$ itself being non-decreasing. If $\chi_t$ itself is non-increasing, we can then take $a_t = \chi_t$ and $C_0 = 1$. For most of the well-known Adam-type methods, $\chi_t$ is indeed non-decreasing. For instance, for AdaGrad with EMA momentum we have $\alpha_t = \eta/\sqrt{t}$ and $\theta_t = 1 - 1/t$, so $\chi_t = \eta$ is constant; for Adam with constant $\theta_t = \theta$ and non-increasing $\alpha_t$ (say $\alpha_t = \eta/\sqrt{t}$ or $\alpha_t = \eta$), $\chi_t = \alpha_t/\sqrt{1-\theta}$ is non-increasing.  The motivation, instead of $\chi_t$ being decreasing, is that it allows us to deal with the bias correction steps in Adam \citep{kingma2014adam}.

We fix a positive constant $\theta' >0$\footnote{In the special case that $\theta_t = \theta$ is constant, we can directly set $\theta' = \theta$.} such that $\beta^2 < \theta' < \theta$. Let $\gamma := {\beta^2}/{\theta'} < 1$ and
\begin{equation}\label{Constant-C1}
C_1 := \prod_{j=1}^N \big(\frac{\theta_j}{\theta'}\big),
\end{equation}
where $N$ is the maximum of the indices $j$ with $\theta_j < \theta'$. The finiteness of $N$ is guaranteed by the fact that $\lim_{t\to\infty} \theta_t = \theta > \theta'$. When there are no such indices, \textit{i.e.}, $\theta_1 \geq \theta'$, we take $C_1 = 1$ by convention. In general, $C_1 \leq 1$. 
Our main results on estimating gradient residual state as follows:
\begin{theorem}\label{convergence_in_expectation} 
Let $\{\bm{x}_t\}$ be a sequence generated by Generic Adam for initial values $\bm{x}_1$, $\bm{m}_0 =\bm{0}$, and $\bm{v}_0 =\bm{\epsilon}$. Assume that $f$ and stochastic gradients $\bm{g}_t$ satisfy assumptions (A1)-(A4). Let $\tau$ be randomly chosen from $\{1, 2, \ldots, T\}$ with equal probabilities $p_\tau = 1/T$. Then, we have
\begin{equation*}
\mathbb{E}\left[\norm{\bm{\nabla} f(\bm{x}_\tau)} \right]
\leq \sqrt{\frac{C + C'\sum_{t=1}^T\alpha_t\sqrt{1-\theta_t}}{T\alpha_T}},
\end{equation*}
where $C'\!=\!{2C_0^2C_3d\sqrt{G^2\!+\!\epsilon d}}{\big/}{[(1\!-\!\beta)\theta_1]}$ and 
\begin{equation*}
\begin{split}
C=\frac{2C_0\sqrt{G^2\!+\!\epsilon d}}{1-\beta}\big[(C_4\!+\!C_3C_0d\chi_1\log\big(1\!+\! \frac{G^2}{\epsilon d}\big)\big],
\end{split}
\end{equation*}
in which $C_4$ and $C_3$ are defined as $C_4 = f(x_1) - f^*$ and
$C_3 = \frac{C_0}{\sqrt{C_1}(1-\sqrt{\gamma})}\big[\frac{C_0^2\chi_1L}{C_1(1-\sqrt{\gamma})^2} + 2\big(\frac{\beta/(1-\beta)}{\sqrt{C_1(1-\gamma)\theta_1}}+1\big)^2G\big]$, respectively.
\end{theorem}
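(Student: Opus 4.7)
The plan is to apply the standard $L$-smoothness descent argument to Generic Adam's update and telescope over $t=1,\ldots,T$. From (A2) and $\bm{x}_{t+1}-\bm{x}_t = -\alpha_t\bm{m}_t/\sqrt{\bm{v}_t}$ one gets
\[
f(\bm{x}_{t+1}) \le f(\bm{x}_t) - \alpha_t \big\langle \bm{\nabla} f(\bm{x}_t),\, \bm{m}_t/\sqrt{\bm{v}_t}\big\rangle + \tfrac{L\alpha_t^2}{2}\big\|\bm{m}_t/\sqrt{\bm{v}_t}\big\|^2,
\]
and summing and taking expectations makes the telescoped left side bounded by $C_4 = f(\bm{x}_1)-f^*$. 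The strategy is to lower-bound the cross-term sum by a multiple of $\sum_t\alpha_t\mathbb{E}\|\bm{\nabla} f(\bm{x}_t)\|^2/\sqrt{G^2+\epsilon d}$, using the coordinatewise bound $\bm{v}_{t,k}\le G^2+\epsilon d$ (which follows from the $\bm{v}_t$ recursion and (A4)), and to absorb the quadratic term together with the decoupling error into $C + C'\sum_t\alpha_t\sqrt{1-\theta_t}$. Once the resulting inequality $\sum_t\alpha_t\mathbb{E}\|\bm{\nabla} f(\bm{x}_t)\|^2 \le \sqrt{G^2+\epsilon d}\,(C+C'\sum_t\alpha_t\sqrt{1-\theta_t})$ is in hand, using $\alpha_t\ge\alpha_T/C_0$ (via (R2)--(R3)) on the left and Jensen's inequality $\mathbb{E}\|\bm{\nabla} f(\bm{x}_\tau)\|\le\sqrt{\mathbb{E}\|\bm{\nabla} f(\bm{x}_\tau)\|^2}$ produces the announced square-root form.

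The main obstacle is that $\bm{m}_t$ and $\sqrt{\bm{v}_t}$ both depend on the current sample $\bm{g}_t$, so the conditional expectation $\mathbb{E}[\langle\bm{\nabla} f(\bm{x}_t),\bm{m}_t/\sqrt{\bm{v}_t}\rangle\mid\mathcal{F}_{t-1}]$ cannot be identified with $\|\bm{\nabla} f(\bm{x}_t)\|^2/\sqrt{\bm{v}_t}$ even when $\beta_t=0$. I would decouple by replacing $1/\sqrt{\bm{v}_{t,k}}$ by the $\bm{g}_t$-independent surrogate $1/\sqrt{\theta_t\bm{v}_{t-1,k}+(1-\theta_t)\epsilon}$ and controlling the replacement error coordinatewise via $|1/\sqrt{a}-1/\sqrt{b}|\le|a-b|/(2\min(a,b)^{3/2})$, with the a.s.\ bound $\bm{v}_{t,k}\ge\epsilon$ handling the denominator. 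The error then collects into a weighted sum of $\alpha_t(1-\theta_t)\bm{g}_{t,k}^2/\bm{v}_{t,k}$, which, using $\bm{v}_{t,k}-\theta_t\bm{v}_{t-1,k}=(1-\theta_t)\bm{g}_{t,k}^2$, telescopes into the logarithmic factor $\log(1+G^2/(\epsilon d))$ that appears inside $C$.

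The last piece is to unroll $\bm{m}_{t,k}=\sum_{s\le t}(1-\beta_s)\prod_{r=s+1}^t\beta_r\,\bm{g}_{s,k}$ and compare $|\bm{m}_{t,k}|$ with $\sqrt{\bm{v}_{t,k}}$. Under (R1)--(R2), unrolling $\bm{v}_t$ from index $s$ to $t$ yields $\bm{v}_{t,k}\ge C_1(\theta')^{t-s}(1-\theta_s)\bm{g}_{s,k}^2$, with the prefactor $C_1$ from~\eqref{Constant-C1} absorbing the finitely many early indices $j\le N$ for which $\theta_j<\theta'$. Cauchy--Schwarz on the unrolled momentum sum, together with $\beta^2/\theta'=\gamma<1$, produces geometric factors summing to $1/(1-\sqrt{\gamma})$, which are exactly what assembles into $C_3$. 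Restriction (R3) then lets me trade a factor $\alpha_t^2$ against $C_0\chi_1\alpha_t(1-\theta_t)$, so that $\sum_t\alpha_t^2\|\bm{m}_t/\sqrt{\bm{v}_t}\|^2$ splits between the log-term inside $C$ and the linear-in-$\sqrt{1-\theta_t}$ term $C'\sum_t\alpha_t\sqrt{1-\theta_t}$. The hard part throughout is arranging the decoupling error and the momentum coupling to telescope cleanly without spawning $T$-dependent factors, and this is precisely what forces both the cushion $\theta'\in(\beta^2,\theta)$ and the finite product $C_1$.
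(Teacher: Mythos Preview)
Your overall architecture---descent lemma, decouple $\bm{v}_t$ from $\bm{g}_t$ via an $\mathcal{F}_{t-1}$-measurable surrogate, telescope the replacement error into the $\log(1+G^2/(\epsilon d))$ term, and handle the momentum by unrolling and Cauchy--Schwarz with geometric ratio $\gamma=\beta^2/\theta'$---matches the paper's. But the step you treat as routine is where your argument breaks. You invoke ``the coordinatewise bound $\bm{v}_{t,k}\le G^2+\epsilon d$ (which follows from the $\bm{v}_t$ recursion and (A4))'' to lower-bound $1/\sqrt{\bm{v}_{t,k}}$ pointwise. Assumption (A4) is only $\mathbb{E}\|\bm{g}_t\|^2\le G$, not $\|\bm{g}_t\|^2\le G$ a.s., so $\bm{v}_{t,k}$ has no almost-sure upper bound and the inequality $1/\sqrt{\bm{v}_{t,k}}\ge 1/\sqrt{G^2+\epsilon d}$ fails. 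The paper's Remark immediately after the theorem flags exactly this: under the a.s.\ assumption one could indeed control $\mathbb{E}\|\bm{\nabla} f(\bm{x}_\tau)\|^2$, but under (A4) only the first moment is reachable, and that is why the statement is about $\mathbb{E}\|\bm{\nabla} f(\bm{x}_\tau)\|$ rather than its square. Your final Jensen step would give the stronger conclusion, which is a red flag.

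The paper's fix is to keep the random weighted norm $\|\bm{\nabla} f(\bm{x}_t)\|^2_{\hat{\bm{\eta}}_t}$ (with surrogate $\hat{\bm{v}}_t=\theta_t\bm{v}_{t-1}+(1-\theta_t)\mathbb{E}_t[\bm{g}_t^2]$, not your $\theta_t\bm{v}_{t-1}+(1-\theta_t)\epsilon$) all the way through the descent argument, and only at the very end (Lemma~\ref{lem1-008}) apply H\"older with exponents $(2,2)$ to $X=\|\bm{\nabla} f(\bm{x}_t)\|/\|\hat{\bm{v}}_t\|_1^{1/4}$ and $Y=\|\hat{\bm{v}}_t\|_1^{1/4}$, using $\mathbb{E}\|\hat{\bm{v}}_t\|_1\le G^2+\epsilon d$ in expectation. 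A secondary gap: the a.s.\ lower bound $\bm{v}_{t,k}\ge\epsilon$ you rely on for the denominator is also false, since on the event $\bm{g}_{t,k}=0$ one has $\bm{v}_{t,k}=\theta_t\bm{v}_{t-1,k}$ and iterating gives only $\bm{v}_{t,k}\ge\big(\prod_{j\le t}\theta_j\big)\epsilon$, which can be arbitrarily small. The paper sidesteps any $\epsilon$-floor by the algebraic decomposition of $\bm{\Delta}_t-\tfrac{\beta_t\alpha_t}{\sqrt{\theta_t}\alpha_{t-1}}\bm{\Delta}_{t-1}$ (Lemma~\ref{lem1-002}), whose error factors are of the form $|\bm{m}_{t-1}|/\sqrt{\bm{v}_{t-1}}$ and $|\bm{g}_t|/\sqrt{\bm{v}_t}$, each bounded by $\mathcal{O}(1/\sqrt{1-\theta_t})$ via Lemma~\ref{lem1-001} without touching $\epsilon$.
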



\begin{remark}
Below, we give two comments on the above Theorem:\\ 
{\bf(i)}\ The constants $C$ and $C'$ depend on priory known constants $C_0, C_1, \beta, \theta', G, L, \epsilon, d, f^*, \theta_1, \alpha_1, \bm{x}_1$. \\
{\bf(ii)}\ 
Convergence in expectation in the above theorem is on the term
$\mathbb{E}[\norm{\nabla f(\bm{x}_\tau)}]$ which
is slightly weaker than $\mathbb{E}[\norm{\nabla\! f(\bm{x}_\tau)}^2]$. The latter form holds for most SGD variants with global learning rates, namely, the learning rate for each coordinate is the same, because $\frac{1}{\sum_{t=1}^T\!\alpha_t}\mathbb{E}\sum_{t=1}^T\!\alpha_t \norm{\nabla\! f(\bm{x}_t)}^2$ is exactly $\mathbb{E}[\norm{\nabla\! f(\bm{x}_\tau)}^2]$ if $\tau$ is randomly selected via distribution $\mathbb{P}(\tau\!=\!k)\!=\!\frac{\alpha_k}{\sum_{t=1}^T\!\alpha_t}$. 
This does not apply to coordinate-wise adaptive stochastic methods because the learning rate for each coordinate is different, and hence unable to randomly select an index according to some distribution uniform for each coordinate. 
On the other hand, the convergence rates of AMSGrad and AdaEMA \citep{chen2018convergence} are able to achieve the bound for $\mathbb{E}[\norm{\nabla\! f(\bm{x}_\tau)}^2]$. 
This is due to the strong assumption $\norm{g_t} \leq G$ which results in a uniform lower bound for each coordinate of the adaptive learning rate $\eta_{t,k}\!\geq\!\alpha_t/G$. Thus, the proof of AMSGrad \citep{chen2018convergence} can be dealt with in a way similar to the case of global learning rate. 
In our paper we use a coordinate-wise adaptive learning rate and assume a weaker assumption $\mathbb{E}[\norm{g_t}^2]\!\leq\!G$ instead of $\norm{g_t}^2\!\leq\!G$. 
\end{remark}

\subsection{Proof Sketch of Theorem \ref{convergence_in_expectation}}
In this section, we will give some important lemmas that will be used to prove Theorem \ref{convergence_in_expectation}. First, for simplicity, we give some notations used in the lemmas and proof. 
Denote $\hat{\bm{v}}_t = \theta_t \bm{v}_{t-1} + (1-\theta_t)\mathbb{E}\left[\bm{g}_t^2|\bm{x}_{t}\right]$, $\hat{\bm{\eta}}_t = \alpha_t/\sqrt{\hat{\bm{v}}_t}$, $\bm{\Delta}_t = \alpha_t\bm{m}_t/\sqrt{\bm{v}_t}$, and $\norm{\bm{x}}^2_{\bm{\hat{\eta}}} = \sum_{i=1}^d \bm{\hat{\eta}}_i\bm{x}_i^2$.

\begin{lemma}\label{lemma1}
Let $\{\bm{x}_t\}$ be the sequence generated by Algorithm \ref{Adam}. For $T \geq 1$,  it holds that
\[
f^* \leq f(\bm{x}_1) + \sum_{t=1}^T \mathbb{E}\left[\left\langle \bm\nabla f(\bm{x}_t), \bm{\Delta}_t \right\rangle + L \norm{\bm{\Delta}_t}^2 \right].
\]
\end{lemma}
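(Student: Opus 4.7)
The plan is to obtain this from a single application of the $L$-smoothness assumption (A2), telescoped in $t$, followed by taking expectation. There is no real difficulty here; the lemma is essentially just recording the standard descent inequality in a form convenient for later use.

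First I would write down the standard smoothness quadratic upper bound applied to successive iterates: by (A2),
\[
f(\bm{x}_{t+1}) \le f(\bm{x}_t) + \langle \bm{\nabla} f(\bm{x}_t),\, \bm{x}_{t+1}-\bm{x}_t \rangle + \tfrac{L}{2}\norm{\bm{x}_{t+1}-\bm{x}_t}^2.
\]
The Generic Adam update from Algorithm~\ref{Adam} gives $\bm{x}_{t+1}-\bm{x}_t = -\bm{\Delta}_t$ coordinatewise, so plugging in and rearranging I get
\[
f(\bm{x}_t) - f(\bm{x}_{t+1}) \le \langle \bm{\nabla} f(\bm{x}_t),\, \bm{\Delta}_t \rangle + \tfrac{L}{2}\norm{\bm{\Delta}_t}^2,
\]
which (after loosening $L/2$ to $L$) matches the summand in the statement of the lemma.

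Next I would sum this inequality from $t=1$ to $t=T$. The left-hand side telescopes to $f(\bm{x}_1) - f(\bm{x}_{T+1})$, and by (A1) we have $f(\bm{x}_{T+1}) \ge f^*$, hence
\[
f(\bm{x}_1) - f^* \;\le\; \sum_{t=1}^T \Big[\langle \bm{\nabla} f(\bm{x}_t),\, \bm{\Delta}_t\rangle + L\norm{\bm{\Delta}_t}^2\Big].
\]
Since this inequality holds pathwise for every realization of the stochastic gradients $\{\bm{g}_t\}$, I can take expectation on both sides (monotonicity of expectation), then rearrange to isolate $f^*$ and arrive at exactly the claimed bound.

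The only step that requires any care is the sign bookkeeping—making sure the minus sign in $\bm{x}_{t+1}-\bm{x}_t = -\bm{\Delta}_t$ lines up with the sign convention for $\bm{\Delta}_t$ in the lemma statement—and the harmless replacement of $L/2$ by $L$. Otherwise the argument is completely routine and there is no real obstacle to overcome; the lemma is just the familiar ``sum of smoothness descent inequalities'' rewritten with $\bm{\Delta}_t$ as the effective step, so that the subsequent analysis in the paper can focus on controlling the two terms $\langle \bm{\nabla} f(\bm{x}_t), \bm{\Delta}_t\rangle$ and $\norm{\bm{\Delta}_t}^2$ separately.
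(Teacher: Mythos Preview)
Your approach is exactly the paper's: apply the descent lemma, telescope, bound $f(\bm{x}_{T+1})$ below by $f^*$, and take expectations. However, the sign bookkeeping you flagged as the only delicate point is actually off. The paper defines $\bm{\Delta}_t := \bm{x}_{t+1} - \bm{x}_t$ (not its negative), so the descent inequality reads directly as
\[
f(\bm{x}_{t+1}) \le f(\bm{x}_t) + \langle \bm{\nabla} f(\bm{x}_t), \bm{\Delta}_t \rangle + \tfrac{L}{2}\norm{\bm{\Delta}_t}^2.
\]
Summing gives $f(\bm{x}_{T+1}) - f(\bm{x}_1) \le \sum_t[\cdots]$, and then $f^* \le \mathbb{E}[f(\bm{x}_{T+1})]$ yields the claim after taking expectations. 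With your sign convention $\bm{x}_{t+1}-\bm{x}_t = -\bm{\Delta}_t$, your displayed inequality $f(\bm{x}_t) - f(\bm{x}_{t+1}) \le \langle \bm{\nabla} f(\bm{x}_t), \bm{\Delta}_t \rangle + \tfrac{L}{2}\norm{\bm{\Delta}_t}^2$ does not follow (the inner product picks up a minus sign, flipping the inequality), and even if it did, the telescoped left side $f(\bm{x}_1) - f(\bm{x}_{T+1})$ together with $f(\bm{x}_{T+1}) \ge f^*$ goes the wrong direction to conclude $f(\bm{x}_1) - f^* \le \sum_t[\cdots]$. This is purely a bookkeeping slip, not a conceptual gap; once the sign is fixed the argument matches the paper's line for line.
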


Lemma \ref{lemma1} is widely used to prove convergence of the stochastic gradient algorithms. To further prove the convergence related to gradient norm, we introduce the following lemma to bound term $\sum_{t=1}^T \mathbb{E}\left[\left\langle \bm\nabla f(\bm{x}_t), \bm{\Delta}_t \right\rangle + L \norm{\bm{\Delta}_t}^2 \right]$.  
\begin{lemma}\label{lemma2}(Lemma \ref{lem1-006} in Appendix)
Let $\{\bm{x}_t\}$ be the sequence generated by Algorithm \ref{Adam}. For $T \geq 1$,  it holds that
\[
\begin{split}
\sum_{t=1}^T \mathbb{E}\left[\left\langle \bm\nabla f(\bm{x}_t), \bm{\Delta}_t \right\rangle + L \norm{\bm{\Delta}_t}^2 \right]
\leq -\frac{1-\beta}{2} \mathbb{E}\left[\sum_{t=1}^T \norm{\bm \nabla f(\bm{x}_t)}_{\bm{\hat{\eta}}_t}^2\right] + \frac{\zeta_0}{\theta_1} \sum_{t=1}^T \alpha_t \sqrt{1-\theta_t} + \zeta_1,
\end{split}
\]
for some constants $\zeta_0$ and $\zeta_1$.
\end{lemma}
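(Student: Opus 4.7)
The plan is to decompose $\langle \bm\nabla f(\bm{x}_t), \bm{\Delta}_t \rangle$ using the ``debiased'' surrogate $\hat{\bm{v}}_t = \theta_t \bm{v}_{t-1} + (1-\theta_t)\mathbb{E}[\bm{g}_t^2 \mid \bm{x}_t]$. The point of $\hat{\bm{v}}_t$ is that it depends on $\bm{g}_1, \ldots, \bm{g}_{t-1}$ but not on $\bm{g}_t$, so conditional on $\bm{x}_t$ the unbiasedness assumption (A3) becomes usable inside $\alpha_t \bm{g}_t/\sqrt{\hat{\bm{v}}_t}$. Concretely I would split
\[
\langle \bm\nabla f(\bm{x}_t), \bm{\Delta}_t \rangle \;=\; \alpha_t \langle \bm\nabla f(\bm{x}_t), \bm{m}_t/\sqrt{\hat{\bm{v}}_t}\rangle \;+\; \alpha_t \langle \bm\nabla f(\bm{x}_t), \bm{m}_t(1/\sqrt{\bm{v}_t} - 1/\sqrt{\hat{\bm{v}}_t})\rangle,
\]
treating the first as the ``surrogate main term'' (which will produce the negative $-\frac{1-\beta}{2}\|\bm\nabla f\|_{\hat{\bm{\eta}}_t}^2$ term) and the second as a bias correction (which will produce the $\alpha_t\sqrt{1-\theta_t}$ summand).

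For the surrogate term, unfold $\bm{m}_t = \sum_{s=1}^t \phi_{t,s}\bm{g}_s$ with $\phi_{t,s}=(1-\beta_s)\prod_{j=s+1}^t\beta_j$. The $s=t$ contribution, upon conditioning on $\bm{x}_t$ and invoking (A3) together with $\hat{\bm{v}}_t$-measurability, produces the exact leading term $-(1-\beta_t)\|\bm\nabla f(\bm{x}_t)\|^2_{\hat{\bm{\eta}}_t}$, which is at most $-(1-\beta)\|\bm\nabla f(\bm{x}_t)\|^2_{\hat{\bm{\eta}}_t}$ by (R1). For $s<t$ I would apply a coordinate-wise Young's inequality with weight $\lambda = 1/(1-\beta)$, so that exactly half of the leading term is consumed to absorb the cross contributions; what is left is precisely $-\frac{1-\beta}{2}\sum_t\|\bm\nabla f(\bm{x}_t)\|^2_{\hat{\bm{\eta}}_t}$. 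The Young residuals are of the form $\phi_{t,s}\|\bm{g}_s\|^2_{\hat{\bm{\eta}}_t}$; switching the order of summation and using $\phi_{t,s}\leq \beta^{t-s}$ together with restrictions (R2)--(R3)---which imply $\hat{\bm{\eta}}_t/\hat{\bm{\eta}}_s \leq C_0(\theta')^{-(t-s)/2}$ coordinate-wise---produces a geometric series $\sum_{t>s}(\beta^2/\theta')^{(t-s)/2}$ that converges precisely because $\gamma = \beta^2/\theta' < 1$, leaving a single sum over $s$ in the variable $\|\bm{g}_s\|^2_{\hat{\bm{\eta}}_s}$ to be handled by an AdaGrad-type telescoping.

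For the bias-correction term, I would apply the one-variable estimate $|1/\sqrt{a}-1/\sqrt{b}|\leq |a-b|/(2\min(a,b)^{3/2})$ with $|\bm{v}_t-\hat{\bm{v}}_t| = (1-\theta_t)|\bm{g}_t^2 - \mathbb{E}[\bm{g}_t^2\mid\bm{x}_t]|$ and a lower bound on $\bm{v}_t,\hat{\bm{v}}_t$ coming from the base $\epsilon>0$ and the geometric chain $\bm{v}_t\geq \theta_1\cdots\theta_t\,\bm{\epsilon}$. Cauchy--Schwarz on the inner product, (A4) on the squared gradient, and peeling one factor $\sqrt{1-\theta_t}$ out of $(1-\theta_t)$ then gives the $\alpha_t\sqrt{1-\theta_t}$ summand with constant $\zeta_0/\theta_1$ (the $1/\theta_1$ coming from the lower bound on $\hat{\bm{v}}_t$). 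The $L\|\bm{\Delta}_t\|^2$ piece is handled independently: Jensen on the momentum gives $\bm{m}_t^2 \leq \sum_s \phi_{t,s}\bm{g}_s^2$, and then $\alpha_t^2\bm{m}_t^2/\bm{v}_t$ is telescoped in an AdaGrad fashion via the identity $(1-\theta_t)\bm{g}_t^2\leq \bm{v}_t$ and (R3) (which bounds $\alpha_t/\sqrt{1-\theta_t}=\chi_t\leq C_0\chi_1$), producing a $\log(1+G^2/(\epsilon d))$ factor. Both this and the residual single sum from the previous step are absorbed into the additive constant $\zeta_1$ after taking expectations using (A4).

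The main technical obstacle is the bookkeeping for the $s<t$ cross terms. Even with the debiasing trick, the product $\bm{g}_s\cdot 1/\sqrt{\hat{\bm{v}}_t}$ is not a martingale increment because $\hat{\bm{v}}_t$ still depends on $\bm{g}_s$, so one cannot simply take conditional expectation to eliminate it. The fix is to not try to average these out at all but to pick Young's inequality at exactly the scale $\lambda=1/(1-\beta)$ (any other choice gives a weaker leading constant), and then to transfer the adaptive weight from time $t$ back to time $s$ using the monotonicity of $\hat{\bm{v}}_t$ granted by (R2) together with the ``almost monotone'' $\chi_t$ from (R3). It is precisely the condition $\theta>\beta^2$ in (R2) that makes the resulting geometric series summable; if one tried to run the proof without this hypothesis the past-momentum contributions would not be absorbable, and the coefficient $-\frac{1-\beta}{2}$ could not be preserved.
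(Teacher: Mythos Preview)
Your debiasing via $\hat{\bm v}_t$ and the split into a ``surrogate main term'' plus a bias correction is the right starting point, and is also how the paper begins. But two of the steps you sketch do not close.

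\textbf{Bias correction.} You lower-bound $\min(\bm v_t,\hat{\bm v}_t)$ by $\theta_1\cdots\theta_t\,\epsilon$ and then claim that peeling $\sqrt{1-\theta_t}$ out of $(1-\theta_t)$ leaves $\alpha_t\sqrt{1-\theta_t}$ times a constant. It does not: the product $\prod_{i\le t}\theta_i$ is not bounded below in general (for $\theta_i=1-1/i$ it equals $1/t$), so after dividing by $(\prod_i\theta_i)^{3/2}$ the remaining factor is $\sqrt{1-\theta_t}\,(\prod_i\theta_i)^{-3/2}$, which is of order $t$ in the AdaGrad case. Hence your ``$\zeta_0$'' grows with $t$ and the claimed $\sum_t\alpha_t\sqrt{1-\theta_t}$ bound is lost. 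The paper never uses a uniform lower bound on $\bm v_t$; instead it writes $1/\sqrt{\bm v_t}-1/\sqrt{\hat{\bm v}_t}$ exactly as $(\hat{\bm v}_t-\bm v_t)/\big(\sqrt{\bm v_t}\sqrt{\hat{\bm v}_t}(\sqrt{\bm v_t}+\sqrt{\hat{\bm v}_t})\big)$ and keeps the denominators, so that every occurrence of $|\bm g_t|$ or $\bm\delta_t$ in the numerator is paired with a $\sqrt{\bm v_t}\ge\sqrt{1-\theta_t}\,|\bm g_t|$ or $\sqrt{\hat{\bm v}_t}\ge\sqrt{1-\theta_t}\,\bm\delta_t$ in the denominator. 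This is what produces the controllable quantity $\chi_t\big\|\sqrt{1-\theta_t}\,\bm g_t/\sqrt{\bm v_t}\big\|^2$.

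\textbf{Cross terms.} After Young your residual is $\phi_{t,s}\|\bm g_s\|^2_{\hat{\bm\eta}_t}$; transferring $\hat{\bm\eta}_t\to$ time $s$ via $\hat{\bm v}_t\ge C_1(\theta')^{t-s}\bm v_s$ leaves, up to a geometric factor, $\alpha_s\sum_k g_{s,k}^2/\sqrt{v_{s,k}}$. This has $1/\sqrt{\bm v_s}$, whereas the AdaGrad-type telescoping you invoke (the paper's Lemma~\ref{lem1-007}) controls $\sum_s(1-\theta_s)\bm g_s^2/\bm v_s$, i.e.\ with $1/\bm v_s$. The power is wrong, and the best you can get from $\sum_s\alpha_s g_{s,k}^2/\sqrt{v_{s,k}}$ is $\sum_s\chi_s|g_{s,k}|$, which is $O(T)$ in expectation for AdaGrad-type parameters, not $O(\log T)$ or $O\big(\sum_t\alpha_t\sqrt{1-\theta_t}\big)$. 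The paper avoids this entirely by \emph{not} expanding $\bm m_t$ as a full sum. It uses a one-step recursion: Lemma~\ref{lem1-002} computes $\bm\Delta_t-\frac{\beta_t\alpha_t}{\sqrt{\theta_t}\,\alpha_{t-1}}\bm\Delta_{t-1}$ exactly, yielding $-(1-\beta_t)\hat{\bm\eta}_t\bm g_t$ plus correction terms that are already in the form $\hat{\bm\eta}_t\bm g_t\cdot\frac{(1-\theta_t)\bm g_t}{\sqrt{\bm v_t}}\cdot(\text{bounded})$. This feeds into a recursion $M_t\le\frac{\beta_t\alpha_t}{\sqrt{\theta_t}\,\alpha_{t-1}}M_{t-1}+N_t-\frac{1-\beta}{2}\mathbb{E}\|\bm\nabla f(\bm x_t)\|^2_{\hat{\bm\eta}_t}$ (Lemma~\ref{lem1-004}), and unrolling the recursion is where the geometric sum in $\sqrt{\gamma}$ appears. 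The past momentum is thus handled at the level of $M_t$, not at the level of $\bm m_t$, and every correction term that survives has the $g_t^2/v_t$ structure needed for Lemma~\ref{lem1-007}.
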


With Lemma \ref{lemma2}, we obtain the upper bound for $\mathbb{E}\left[\sum_{t=1}^T \norm{\bm \nabla f(\bm{x}_t)}_{\bm{\hat{\eta}}_t}^2\right]$ with respect to  $\alpha_t,\ \theta_t,$ and $\beta$. \textcolor{black}{To prove Lemma \ref{lemma2}, the most important step is to remove dependence between the adaptive learning rate and stochastic gradient. Hence, $\hat{\eta}_t$ is introduced, as it is independent of the stochastic gradient. The rest of the proof is just bounding the error of introducing $\hat{\bm{\eta}}_t$ instead of using $\alpha_t/\sqrt{\bm{v}_t}$.} Then, the last step is to build connection between $\frac{1}{T}\mathbb{E}[\norm{\bm \nabla f(\bm{x}_t)}]$ and $\mathbb{E}\left[\sum_{t=1}^T \norm{\bm \nabla f(\bm{x}_t)}_{\bm{\eta}_t}^2\right]$.

\begin{lemma}
\label{lemma3}(Lemma \ref{lem1-008} in Appendix)
Let $\tau$ be an integer that is randomly chosen from $\{1, 2, \cdots, T\}$ with equal probabilities. We have the following estimate
\[
\mathbb{E}[\norm{\bm \nabla f(\bm{x}_\tau)}] \leq \sqrt{\frac{\zeta_2}{T\alpha_T} \mathbb{E}\left[\sum_{t=1}^T \norm{\bm \nabla f(\bm{x}_t)}^2_{\hat{\bm \eta}_t}\right]},
\]
for some constant $\zeta_2$.
\end{lemma}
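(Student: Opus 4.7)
The plan is to bound $\norm{\bm\nabla f(\bm{x}_t)}$ pointwise by $\norm{\bm\nabla f(\bm{x}_t)}_{\hat{\bm{\eta}}_t}$ multiplied by a data-dependent weight involving $\hat{\bm{v}}_t$, and then apply Cauchy--Schwarz twice (once across the index $t$, and once between the two resulting random sums under the expectation), using Assumption (A4) plus Jensen's inequality to control the weight.

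First I would use the trivial coordinate bound $1/\hat{\bm{\eta}}_{t,k} = \sqrt{\hat{\bm{v}}_{t,k}}/\alpha_t \leq \sqrt{\max_j \hat{\bm{v}}_{t,j}}/\alpha_t$, valid for every $k$. Then
\[
\norm{\bm\nabla f(\bm{x}_t)}^2 = \sum_k \hat{\bm{\eta}}_{t,k}[\bm\nabla f(\bm{x}_t)]_k^2 \cdot \frac{1}{\hat{\bm{\eta}}_{t,k}} \leq \frac{\sqrt{\max_j \hat{\bm{v}}_{t,j}}}{\alpha_t}\,\norm{\bm\nabla f(\bm{x}_t)}^2_{\hat{\bm{\eta}}_t}.
\]
Replacing $\max_j \hat{\bm{v}}_{t,j}$ by the larger sum $\sum_j \hat{\bm{v}}_{t,j}$ and taking square roots gives the pointwise bound $\norm{\bm\nabla f(\bm{x}_t)} \leq \alpha_t^{-1/2}(\sum_j \hat{\bm{v}}_{t,j})^{1/4}\,\norm{\bm\nabla f(\bm{x}_t)}_{\hat{\bm{\eta}}_t}$.

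Next I would sum over $t=1,\ldots,T$, apply the Cauchy--Schwarz inequality in $t$, and invoke the fact that $\alpha_t$ is non-increasing (a consequence of (R2) and (R3), since $\alpha_t = \chi_t\sqrt{1-\theta_t}$ with both factors non-increasing up to a constant) to extract $1/\sqrt{\alpha_T}$. Dividing by $T$, taking expectation, and applying Cauchy--Schwarz once more for random variables yields
\[
\mathbb{E}\bigl[\norm{\bm\nabla f(\bm{x}_\tau)}\bigr] \leq \frac{1}{T\sqrt{\alpha_T}}\sqrt{\mathbb{E}\bigl[\textstyle\sum_{t=1}^T (\sum_j \hat{\bm{v}}_{t,j})^{1/2}\bigr]\cdot \mathbb{E}\bigl[\sum_{t=1}^T \norm{\bm\nabla f(\bm{x}_t)}^2_{\hat{\bm{\eta}}_t}\bigr]}.
\]
The remaining task is a uniform-in-$t$ bound on $\mathbb{E}\bigl[(\sum_j \hat{\bm{v}}_{t,j})^{1/2}\bigr]$. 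Unrolling the recursion $\bm{v}_{t,k} = \theta_t\bm{v}_{t-1,k} + (1-\theta_t)\bm{g}_{t,k}^2$ starting from $\bm{v}_0=\bm{\epsilon}$ writes $\sum_j \bm{v}_{t,j}$ as a convex combination of $\epsilon d$ and the quantities $\|\bm{g}_s\|^2$ for $s\le t$, so Assumption (A4) gives $\mathbb{E}[\sum_j \hat{\bm{v}}_{t,j}]\le G+\epsilon d$, and one more use of Jensen produces $\mathbb{E}[(\sum_j \hat{\bm{v}}_{t,j})^{1/2}]\le \sqrt{G+\epsilon d}$. Substituting identifies $\zeta_2 = \sqrt{G+\epsilon d}$ (modulo the paper's constant convention) and closes the estimate.

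The main obstacle is that Assumption (A4) only controls a second moment, so $\max_j \hat{\bm{v}}_{t,j}$ has no deterministic uniform upper bound; one cannot just pull $\sqrt{\max_j \hat{\bm{v}}_{t,j}}/\alpha_t$ out as a constant. The two nested Cauchy--Schwarz steps---first over $t$ before expectation, then between the two resulting random sums---are precisely what permit replacing this random weight by its first moment, which Jensen's inequality then bounds using exactly the quantity controlled by (A4). Careful bookkeeping of measurability (that $\hat{\bm{\eta}}_{t,k}$ is $\mathcal{F}_{t-1}$-measurable while $\bm{v}_{t,k}$ brings in $\bm{g}_t$) is needed to justify freely taking expectations of the mixed terms.
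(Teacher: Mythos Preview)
Your proposal is correct and follows essentially the same approach as the paper. The paper (Lemma~\ref{lem1-008}) uses the identical coordinate bound $\norm{\bm{\nabla} f(\bm{x}_t)}^2 \le \alpha_t^{-1}\sqrt{\norm{\hat{\bm{v}}_t}_1}\,\norm{\bm{\nabla} f(\bm{x}_t)}_{\hat{\bm{\eta}}_t}^2$, the same Jensen/induction bound $\mathbb{E}\bigl[\sqrt{\norm{\hat{\bm{v}}_t}_1}\bigr]\le\sqrt{G^2+\epsilon d}$, and the same replacement $\alpha_t^{-1}\le C_0\alpha_T^{-1}$; the only cosmetic difference is that the paper applies Cauchy--Schwarz in expectation at each fixed $t$ and then Jensen over $t$, whereas you apply Cauchy--Schwarz over $t$ pathwise first and then in expectation, which yields the same constant $\zeta_2=C_0\sqrt{G^2+\epsilon d}$.
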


Thus, using the above three lemmas, we can prove Theorem \ref{convergence_in_expectation}.

\textcolor{black}{
\subsection{Discussion of Theorem \ref{convergence_in_expectation}} }

\begin{corollary}\label{constant-theta}
Take $\alpha_t = \eta/t^s$ with $0\leq s < 1$. Suppose $\lim_{t\to\infty} \theta_t = \theta < 1$. Define $Bound(T) := \frac{C + C'\sum_{t=1}^T\alpha_t\sqrt{1-\theta_t}}{T\alpha_T}$. Then the $Bound(T)$ in Theorem \ref{convergence_in_expectation} is bounded from below by constants
\begin{equation}
Bound(T) \geq C'\sqrt{1-\theta}.
\end{equation}
In particular, when $\theta_t = \theta < 1$, we have the following more subtle estimate of lower and upper-bounds for $Bound(T)$
\begin{equation*}
\frac{C}{\eta T^{1-s}} + C'\sqrt{1-\theta}\leq Bound(T) \!\leq\! \frac{C}{\eta T^{1-s}} \!+\! \frac{C'\sqrt{1\!-\!\theta}}{1-s}.
\end{equation*}
\end{corollary}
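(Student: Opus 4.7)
\subsection*{Proof proposal for Corollary \ref{constant-theta}}

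The plan is to unpack the definition $Bound(T)=\frac{C}{T\alpha_T}+\frac{C'\sum_{t=1}^T\alpha_t\sqrt{1-\theta_t}}{T\alpha_T}$ and estimate the two ratios $\frac{1}{T\alpha_T}$ and $\frac{\sum_{t=1}^T\alpha_t\sqrt{1-\theta_t}}{T\alpha_T}$ separately. For the general lower bound I will exploit two monotonicity facts. First, restriction (R2) says $\{\theta_t\}$ is non-decreasing with limit $\theta$, hence $\theta_t\le\theta$ for every $t$, so $\sqrt{1-\theta_t}\ge\sqrt{1-\theta}$. Second, $\alpha_t=\eta/t^s$ is non-increasing in $t$, so for every $t\le T$ we have $\alpha_t\ge\alpha_T$, which gives the crude but sharp inequality $\sum_{t=1}^T\alpha_t\ge T\alpha_T$. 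Combining,
\[
\frac{C'\sum_{t=1}^T\alpha_t\sqrt{1-\theta_t}}{T\alpha_T}\ \ge\ C'\sqrt{1-\theta}\cdot\frac{\sum_{t=1}^T\alpha_t}{T\alpha_T}\ \ge\ C'\sqrt{1-\theta},
\]
and dropping the non-negative $C/(T\alpha_T)$ term gives the claimed $Bound(T)\ge C'\sqrt{1-\theta}$.

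For the refined constant-$\theta_t$ statement the first term is easy: since $\alpha_T=\eta/T^s$, $\frac{C}{T\alpha_T}=\frac{C}{\eta T^{1-s}}$. For the second term, when $\theta_t\equiv\theta$ the factor $\sqrt{1-\theta_t}$ pulls out and I only need sharp two-sided bounds on
\[
\frac{\sum_{t=1}^T\alpha_t}{T\alpha_T}\ =\ \frac{T^{s}}{T}\sum_{t=1}^T t^{-s}.
\]
The lower bound $\sum_{t=1}^T t^{-s}\ge T\cdot T^{-s}=T^{1-s}$ (again because each term is at least the last one) reproduces the previous argument and yields the left-hand inequality $\frac{C}{\eta T^{1-s}}+C'\sqrt{1-\theta}\le Bound(T)$. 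The upper bound will come from the standard integral comparison for a decreasing integrand: $t^{-s}\le\int_{t-1}^{t}u^{-s}\,du$ for $t\ge2$, whence
\[
\sum_{t=1}^T t^{-s}\ \le\ 1+\int_{1}^{T}u^{-s}\,du\ =\ 1+\frac{T^{1-s}-1}{1-s}\ \le\ \frac{T^{1-s}}{1-s},
\]
the last step being the elementary rearrangement $1+(T^{1-s}-1)/(1-s)=(T^{1-s}-s)/(1-s)\le T^{1-s}/(1-s)$. Multiplying by $T^{s}/T$ gives $\frac{\sum\alpha_t}{T\alpha_T}\le\frac{1}{1-s}$, and plugging in produces the claimed right-hand inequality $Bound(T)\le\frac{C}{\eta T^{1-s}}+\frac{C'\sqrt{1-\theta}}{1-s}$.

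The proof is essentially bookkeeping; the only step that needs a little care is the integral-comparison bound, because one has to treat the $t=1$ term as a separate summand (since $\int_0^1 u^{-s}\,du$ is awkward for $s>0$) and then verify the algebraic simplification that gets rid of the leftover $-s/(1-s)$ and $-1/(1-s)$ constants so that the bound is stated cleanly as $T^{1-s}/(1-s)$. I also note that the boundary case $s=0$ is automatically consistent: then $\alpha_t\equiv\eta$, $\sum\alpha_t=T\eta=T\alpha_T$, and $1/(1-s)=1$, so both inequalities collapse to equalities for the second term, exactly as they should. No compactness, smoothness, or probabilistic input is used beyond the monotonicity statements already recorded in (R2) and the choice of $\alpha_t$; all heavier machinery was already consumed in Theorem \ref{convergence_in_expectation}.
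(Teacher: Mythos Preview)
Your proof is correct. The paper does not include an explicit proof of this corollary---it is stated right after Theorem~\ref{convergence_in_expectation} and left to the reader---so there is nothing to compare against beyond noting that your argument (monotonicity of $\theta_t$ from (R2) and of $\alpha_t$, plus the standard integral comparison $\sum_{t=1}^T t^{-s}\le 1+\int_1^T u^{-s}\,du\le T^{1-s}/(1-s)$) is exactly the intended elementary computation, in the same spirit as the Riemann-sum estimates used elsewhere in the appendix (e.g.\ Lemma~\ref{lem5} and the proof of Corollary~\ref{poly-setting}).
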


\begin{remark}
{\bf(i)}\ Corollary \ref{constant-theta} shows that if $\lim_{t\to\infty}\theta_t = \theta < 1$, the bound in Theorem \ref{convergence_in_expectation} is only $\mathcal{O}(1)$, hence not guaranteeing convergence. 
This result is not surprising as Adam with constant $\theta_t$ has already shown to be divergent \citep{Reddi2018on}. Hence, $\mathcal{O}(1)$ is its best convergence rate we can expect. 
We will discuss this case in more details in Section \ref{insights-for-divergence}.\\
{\bf(ii)}\ Corollary \ref{constant-theta} also indicates that in order to guarantee convergence, the parameter has to satisfy $\lim_{t\to\infty} \theta_t = 1$. 
Although we do not assume this in our restrictions (R1)-(R3), it turns out to be the consequence from our analysis. Note that if $\beta < 1$ in (R1) and $\lim_{t\to\infty}\theta_t =1$, then the restriction $\lim_{t\to\infty}\theta_t > \beta^2$ is automatically satisfied in (R2).
\end{remark}

We are now ready to give the Sufficient Condition for convergence of Generic Adam.
\begin{corollary}[Sufficient Condition(\textbf{SC})]\label{convergence}
Generic Adam is convergent if the parameters $\{\alpha_t\}$, $\{\beta_t\}$, and $\{\theta_t\}$ satisfy the following four conditions:
\begin{itemize}\setlength{\itemsep}{0pt}
\item[1.] $0 \leq \beta_t \leq \beta < 1$;
\item[2.] $0 < \theta_t < 1$ and $\theta_t$ is non-decreasing in $t$;
\item[3.] $\chi_t := \alpha_t/\sqrt{1-\theta_t}$ is ``almost" non-increasing;
\item[4.] $\big({\sum_{t=1}^T\alpha_t\sqrt{1-\theta_t}}\big){\big /}\big({T\alpha_T}\big) = o(1)$.
\end{itemize}
\end{corollary}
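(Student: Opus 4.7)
The plan is to deduce Corollary~\ref{convergence} as a short consequence of Theorem~\ref{convergence_in_expectation}. Concretely, I must verify two facts: (a) the four conditions of the corollary imply the restrictions (R1)--(R3) that the theorem requires, and (b) the upper bound produced by Theorem~\ref{convergence_in_expectation} then tends to $0$.

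For (a), conditions 1 and 3 of the corollary are literally (R1) and (R3). The only part of (R2) that is not stated in condition 2 is the strict inequality $\theta := \lim_{t\to\infty}\theta_t > \beta^2$. I would derive $\theta = 1$ (which, together with $\beta<1$ from condition 1, implies $\theta>\beta^2$) directly from condition 4. Writing $\alpha_t\sqrt{1-\theta_t} = \chi_t(1-\theta_t)$ and using the lower bound $\chi_t \geq a_t$ from (R3), the non-increasingness of $\{a_t\}$, and the non-decreasingness of $\{\theta_t\}$, I estimate
\[
\sum_{t=1}^T \alpha_t\sqrt{1-\theta_t} \;=\; \sum_{t=1}^T \chi_t(1-\theta_t) \;\geq\; (1-\theta_T)\sum_{t=1}^T a_t \;\geq\; T\,a_T(1-\theta_T).
\]
Combining this with the upper bound $\alpha_T \leq C_0 a_T\sqrt{1-\theta_T}$ gives
\[
\frac{\sum_{t=1}^T \alpha_t\sqrt{1-\theta_t}}{T\alpha_T} \;\geq\; \frac{\sqrt{1-\theta_T}}{C_0},
\]
so condition 4 forces $\theta_T \to 1$ and hence (R2).

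For (b), once (R1)--(R3) have been verified I invoke Theorem~\ref{convergence_in_expectation} and split the bound as
\[
\mathbb{E}\!\left[\,\norm{\bm{\nabla} f(\bm{x}_\tau)}\,\right] \;\leq\; \sqrt{\frac{C}{T\alpha_T} + \frac{C'\sum_{t=1}^T\alpha_t\sqrt{1-\theta_t}}{T\alpha_T}}.
\]
The second term inside the square root is precisely $C'$ times the quantity in condition 4, hence $o(1)$. For the first term, the crucial observation is that $\sum_{t=1}^T \alpha_t\sqrt{1-\theta_t} \geq \alpha_1\sqrt{1-\theta_1} > 0$, so condition 4 already implies $T\alpha_T \to \infty$, which drives $C/(T\alpha_T)\to 0$. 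Both summands vanish, yielding $\mathbb{E}[\norm{\bm{\nabla} f(\bm{x}_\tau)}]\to 0$, which is the convergence claimed.

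The main obstacle I anticipate is purely bookkeeping around the word ``almost'' in condition 3: because $\chi_t$ is only sandwiched between a non-increasing $\{a_t\}$ and $C_0\{a_t\}$ rather than being itself non-increasing, the sequence $\alpha_t$ need not be monotone, so one cannot argue via the simpler $\sum\alpha_t \geq T\alpha_T$. I would thread the constant $C_0$ carefully through each comparison---using $\chi_t\geq a_t$ when lower-bounding sums and $\chi_T \leq C_0 a_T$ when upper-bounding the denominator---so that all ratios pick up only an extra factor of $1/C_0$. Once this is handled, the remaining deduction from Theorem~\ref{convergence_in_expectation} is immediate.
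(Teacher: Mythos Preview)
Your proposal is correct. The paper states this corollary without proof, treating it as an immediate consequence of Theorem~\ref{convergence_in_expectation}; your deduction is exactly the natural one and fills in the details the paper omits.

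One point worth highlighting: the paper's Remark after Corollary~\ref{constant-theta} observes that $\lim_{t\to\infty}\theta_t = 1$ ``turns out to be the consequence from our analysis'' and hence that the requirement $\theta > \beta^2$ in (R2) is automatic, but only argues this via Corollary~\ref{constant-theta}, which assumes the specific form $\alpha_t = \eta/t^s$. Your direct argument---bounding $\sum_{t=1}^T \alpha_t\sqrt{1-\theta_t}$ below by $Ta_T(1-\theta_T)$ and $T\alpha_T$ above by $C_0 T a_T\sqrt{1-\theta_T}$ to force $\theta_T\to 1$ from condition~4---covers the general case and is therefore slightly more complete than what the paper actually proves. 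Your careful threading of the constant $C_0$ through the ``almost'' non-increasing condition is exactly what is needed; no gaps remain.
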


\medskip
\subsection{Convergence Rate of Generic Adam}

We now provide the convergence rate of Generic Adam with specific parameters $\{(\theta_{t},\alpha_{t})\}$, \textit{i.e.}, 
\begin{align}\label{convergence-rate-parameter}
\alpha_t = \eta/t^s \text{~~and~~}  \theta_t = \left\{
\begin{aligned}
& 1 - \alpha/K^r, \quad & t < K,\\
& 1 - \alpha/t^r, \quad & t \geq K,
\end{aligned}
\right.
\end{align}
for positive constants $\alpha,\eta,K$, where $K$ is taken such that $\alpha/K^r < 1$. Note that $\alpha$ can be taken bigger than 1. When $\alpha < 1$, we can take $K = 1$ and then $\theta_t = 1 - \alpha/t^r, t\geq 1$. To guarantee (R3), we require $r \leq 2s$. For such a family of parameters we have the following corollary.

\begin{corollary}\label{poly-setting}
Generic Adam with the above family of parameters (i.e. \eqref{convergence-rate-parameter}) converges as long as $0 < r \leq 2s < 2$, and its non-asymptotic convergence rate is given by 
\begin{equation*}
\mathbb{E}[\norm{\bm{\nabla} f(\bm{x}_\tau)}] \leq \left\{\begin{aligned}
& \mathcal{O}(T^{-r/4}), \quad &   r/2 + s < 1 \\
& \mathcal{O}(\sqrt{\log(T)/T^{1-s}}), \quad &  r/2 + s = 1 \\
& \mathcal{O}(1/T^{(1 - s)/2}), \quad &  r/2 + s > 1
\end{aligned}\right..
\end{equation*}
\end{corollary}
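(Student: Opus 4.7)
The plan is to apply Theorem \ref{convergence_in_expectation} directly, after checking that the proposed parameter family fits restrictions (R1)--(R3), and then to estimate the numerator and denominator of the bound by standard $p$-series arguments in three regimes.

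First, I would verify the restrictions. (R1) is assumed on $\beta_t$. For (R2), $\theta_t = 1-\alpha/t^r$ (for $t\geq K$) is strictly increasing in $t$ with $\lim_{t\to\infty}\theta_t = 1 > \beta^2$, and the flat stretch for $t<K$ preserves monotonicity. For (R3), compute
\[
\chi_t = \frac{\alpha_t}{\sqrt{1-\theta_t}} = \frac{\eta/t^s}{\sqrt{\alpha}/t^{r/2}} = \frac{\eta}{\sqrt{\alpha}}\, t^{r/2 - s},\qquad t\geq K,
\]
which is non-increasing precisely when $r \leq 2s$; the values for $t<K$ are bounded and can be absorbed into the constant $C_0$, so (R3) holds. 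Hence Theorem \ref{convergence_in_expectation} applies and yields
\[
\mathbb{E}\!\left[\|\nabla f(\bm{x}_\tau)\|\right] \leq \sqrt{\frac{C + C'\sum_{t=1}^T \alpha_t\sqrt{1-\theta_t}}{T\alpha_T}}.
\]

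Next I would estimate the two quantities. The denominator is elementary: $T\alpha_T = \eta\, T^{1-s}$. For the numerator sum, use $\alpha_t\sqrt{1-\theta_t} = \eta\sqrt{\alpha}/t^{s+r/2}$ for $t\geq K$ (and a bounded contribution for $t<K$), and split into three cases according to the exponent $p := s+r/2$:
\begin{itemize}
\item If $p<1$, then $\sum_{t=1}^T 1/t^p = \Theta(T^{1-p})$, so the numerator is $\Theta(T^{1-s-r/2})$ and the ratio is $\Theta(T^{-r/2})$, giving $\mathbb{E}[\|\nabla f(\bm{x}_\tau)\|] = \mathcal{O}(T^{-r/4})$.
\item If $p=1$, then $\sum_{t=1}^T 1/t^p = \Theta(\log T)$, so the ratio is $\Theta(\log T / T^{1-s})$, giving the stated $\mathcal{O}(\sqrt{\log T / T^{1-s}})$ rate.
\item If $p>1$, then $\sum_{t=1}^\infty 1/t^p < \infty$, so the numerator is $\mathcal{O}(1)$ and the ratio is $\Theta(1/T^{1-s})$, giving $\mathcal{O}(T^{-(1-s)/2})$.
\end{itemize}
In all three cases the requirement $s<1$ (equivalently $2s<2$) is what makes the bound go to zero.

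There is essentially no hard step: the entire argument is parameter-plugging followed by routine $p$-series asymptotics. The only point requiring a little care is the ``almost non-increasing'' clause of (R3) at the switch from the constant regime $t<K$ to the decaying regime $t\geq K$, which is handled by taking $a_t$ to be $\chi_t$ on $t\geq K$ and extended to the right constant on $t<K$, with $C_0$ chosen to absorb the initial ratio. Once that is noted, Theorem \ref{convergence_in_expectation} delivers the three rates immediately.
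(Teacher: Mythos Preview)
Your proposal is correct and follows essentially the same approach as the paper: verify (R1)--(R3) (in particular that $\chi_t \propto t^{r/2-s}$ forces $r\le 2s$), apply Theorem~\ref{convergence_in_expectation}, and then bound $\sum_{t}\alpha_t\sqrt{1-\theta_t}=\eta\sqrt{\alpha}\sum_t t^{-(s+r/2)}$ via the standard three-case $p$-series split against $T\alpha_T=\eta T^{1-s}$. Your extra care in handling the $t<K$ stretch for (R3) via the ``almost non-increasing'' clause is a nice addition that the paper's appendix proof leaves implicit.
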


\begin{remark}
Corollary \ref{poly-setting} recovers and extends the results of some well-known algorithms below: 
\begin{itemize}
\item{\textbf{AdaGrad with exponential moving average (EMA)}.}
When $\theta_t = 1 - 1/t$, $\alpha_t = \eta/\sqrt{t}$, and $\beta_t = \beta < 1$, Generic Adam is exactly AdaGrad with EMA momentum (AdaEMA) \citep{chen2018convergence}. 
In particular, if $\beta = 0$, this is the vanilla coordinate-wise AdaGrad. 
It corresponds to taking $r = 1$ and $s = 1/2$ in Corollary \ref{poly-setting}. Hence, AdaEMA has convergence rate $\mathcal{O}\left(\sqrt{\log(T)/\sqrt{T}}\right)$.

\item{\textbf{AdamNC}.}
Taking $\theta_t = 1-1/t$, $\alpha_t = \eta/\sqrt{t}$, and $\beta_t = \beta\lambda^t$ in Generic Adam, where $\lambda < 1$ is the decay factor for the momentum factors $\beta_t$, we recover AdamNC \citep{Reddi2018on}.
Its $\mathcal{O}\left(\sqrt{\log{(T)}/\sqrt{T}}\right)$ convergence rate can be directly derived via Corollary \ref{poly-setting}.

\item{\textbf{RMSProp}.}
\citet{mukkamala2017variants} have reached the same $\mathcal{O}\left(\sqrt{\log{(T)}/\sqrt{T}}\right)$ convergence rate for RMSprop with $\theta_t = 1 - \alpha/t$, when $0< \alpha \leq 1$ and $\alpha_t = \eta/\sqrt{t}$ under the convex assumption. Since RMSProp is essentially Generic Adam with all momentum factors $\beta_t = 0$, we recover Mukkamala and Hein's results by taking $r = 1$ and $s = 1/2$ in Corollary \ref{poly-setting}. Moreover, our result generalizes to the non-convex stochastic setting, and it holds for all $\alpha\!>\!0$ rather than only $0\! <\! \alpha\! \leq\! 1$.
\end{itemize}
\end{remark}

The summarization of the above algorithms is provided in Table \ref{algorithms_table}.

\begin{table}[htpb]
    \centering
    \begin{tabular}{c|c|c|c|c}
         Algorithm & $\theta_t$ & $\alpha_t$ & $\beta_t$ & Convergence Result\\
         \hline
         Adagrad with EMA (\cite{chen2018convergence})& $1-1/t$& $\eta/\sqrt{t}$& $\beta$ & $\mathcal{O}(\sqrt{\log T/\sqrt{T}} )$  \\
         AdamNC (\cite{Reddi2018on})& $1-1/t$& $\eta/\sqrt{t}$& $\beta\lambda^t$ & $\mathcal{O}(\sqrt{\log T/\sqrt{T}} )$\\
         RMSProp \citep{mukkamala2017variants} &$1-\alpha/t$& $\eta/\sqrt{t}$& 0 & $\mathcal{O}(\sqrt{\log T/\sqrt{T}} )$\\
    \end{tabular}
    \caption{\textcolor{black}{Convergence results for some well-known algorithms}}
    \label{algorithms_table}
\end{table}

\noindent
{\bf Comparison \textcolor{black}{with \cite{Reddi2018on} }.}\ ~Most of the convergent modifications of original Adam, such as AMSGrad, AdamNC, and NosAdam, all require $\Gamma_t\succ 0$ in Eq.~\eqref{Gamma_t}, which is equivalent to decreasing the adaptive learning rate $\eta_t$ step by step. Since the term $\Gamma_t$ (or adaptive learning rate $\eta_t$) involves the past stochastic gradients (hence not deterministic), the modification to guarantee $\Gamma_t\succ 0$ either needs to change the iteration scheme of Adam (like AMSGrad) or needs to impose some strong restrictions on the base learning rates $\alpha_t$ and $\theta_t$ (like AdamNC). Our sufficient condition provides an easy-to-check criterion for the convergence of Generic Adam in Corollary \ref{convergence}. It is not necessary to require $\Gamma_t\succ 0$. Moreover, we use exactly the same iteration scheme as the original Adam without any modifications. Our work shows that the positive definiteness of $\Gamma_t$ may not be an essential issue for the divergence of the original Adam. \textcolor{black}{The divergence may be due to the incorrect setting of moving average parameters instead of non-positive definiteness of $\Gamma_t$.  }

\subsection{Constant $\theta_t$ case: insights for divergence}\label{insights-for-divergence}

The currently most popular RMSProp and Adam's parameter setting takes constant  $\theta_t$, \textit{i.e.}, $\theta_t = \theta < 1$. The motivation behind is to use the exponential moving average of squares of past stochastic gradients. In practice, parameter $\theta$ is recommended to be set very close to 1. For instance, a commonly adopted $\theta$ is taken as 0.999. 

Although great performance in practice has been observed, such a constant parameter setting has the serious flaw that there is no convergence guarantee even for convex optimization, as proved by the counterexamples in \citep{Reddi2018on}.
Ever since much work has been done to analyze the divergence issue of Adam and to propose modifications with convergence guarantees, as summarized in the introduction section. 
However, there is still not a satisfactory explanation that touches the fundamental reason for the divergence. In this section, we try to provide more insights for the divergence issue of Adam/RMSProp with constant parameter $\theta_t$, based on our analysis of the sufficient condition for convergence. 

\noindent
{\bf From the sufficient condition perspective.}\ ~ 
Let $\alpha_t \!=\! \eta/t^s$ for $0\leq s \!<\! 1$ and $\theta_t \!=\! \theta \!<\! 1$. According to Corollary \ref{constant-theta}, $Bound(T)$ in Theorem \ref{convergence_in_expectation} has the following estimate:
\begin{align*}
\frac{C}{\eta T^{1\!-s}}\!+\! C'\sqrt{1\!-\!\theta}\!\leq\! Bound(T)\! \leq\! \frac{C}{ \eta T^{1-s}} \!+\! \frac{C'\sqrt{1\!-\!\theta}}{(1\!-s)}.
\end{align*}
The bounds tell us some points on Adam with constant $\theta_t$:
\begin{itemize}
    \item[1.] $Bound(T)\!=\!\mathcal{O}(1)$, so the convergence is not guaranteed. This result coincides with the divergence issue demonstrated in \citep{Reddi2018on}. Indeed, since in this case Adam is not convergent, this is the best bound we can have. 
    
    \item[2.] Consider the dependence on parameter $s$. The bound is decreasing in $s$. The best bound in this case is when $s=0$, \textit{i.e.}, the base learning rate is taken constant. This explains why in practice taking a more aggressive constant base learning rate often leads to even better performance, comparing with taking a decaying one.
    
    \item[3.] Consider the dependence on parameter $\theta$. Note that the constants $C$ and $C'$ depend on $\theta_1$ instead of the whole sequence $\theta_t$. We can always set $\theta_t = \theta$ for $t \geq 2$ while fix $\theta_1 < \theta$, by which we can take $C$ and $C'$ independent of constant $\theta$. Then the principal term of $Bound(T)$ is linear in $\sqrt{1-\theta}$, so decreases to zero as $\theta \to 1$. This explains why setting $\theta$ close to 1 often results in better performance in practice.
\end{itemize}

Moreover, Corollary \ref{poly-setting} shows us how the convergence rate continuously changes when we continuously vary parameters $\theta_t$. Let us fix $\alpha_t \!=\! 1/\sqrt{t}$ and consider the following continuous family of parameters $\{\theta_t^{(r)}\}$ with $r\in [0, 1]$: 
\begin{equation*}
\theta_t^{(r)} = 1 - \alpha^{(r)}/t^r, \text{~~where~~} \alpha^{(r)} = r\bar{\theta} + (1-\bar{\theta}),\ 0 < \bar{\theta} < 1.
\end{equation*}
Note that when $r=1$, then $\theta_t = 1 - 1/t$, this is the AdaEMA, which has the convergence rate $\mathcal{O}\left(\sqrt{\log T/\sqrt{T}}\right)$; when $r = 0$, then $\theta_t = \bar{\theta} < 1$, this is the original Adam with constant $\theta_t$, which only has the $\mathcal{O}(1)$ bound; when $0 < r < 1$, by Corollary \ref{poly-setting}, the algorithm has the $\mathcal{O}(T^{-r/4})$ convergence rate. 
Along with this continuous family of parameters, we observe that the theoretical convergence rate continuously deteriorates as the real parameter $r$ decreases from 1 to 0, namely, as we gradually shift from AdaEMA to Adam with constant $\theta_t$. 
In the limiting case, the latter is not guaranteed with convergence anymore. 


\section{Complexity Analysis for Practical Adam: Mini-batch/Distributed Adam}
\label{sec_pratical}
\textcolor{black}{Due to the limited time, limited computational resources, and noise in data collection and processing, it is almost impossible to achieve the accurate stationary point. Thus, instead of achieving the accurate stationary point, people get more attention to achieving some approximated stationary point in practice. The crucial question under this situation will become how much time is needed to achieve some approximated solution. This section will answer this question by answering how many iterations are needed to obtain an $\varepsilon$-stationary point. First, we define $\varepsilon$-stationary point as follows:}
\begin{definition}
\label{def_eps}
We define a random variable $\bm{x}$ as an $\varepsilon$-stationary point of problem \eqref{minimization}, if 
\[
\mathbb{E}_{\bm{x}}\left[\left\|\mathbb{E}_{\xi \sim \mathbb{P}} \left[\nabla \tilde{f}(\bm{x},\xi)\right]\right\|\right] \leq \varepsilon.
\]
\end{definition}


\textcolor{black}{According to Definition \ref{def_eps} and Theorem \ref{convergence_in_expectation}, for generic Adam, we can directly give the following corollary:}

\begin{corollary}
\label{practical_adam}
For any $\varepsilon>0$, if we take $T \geq C_5^2 \varepsilon^{-4}$, $\alpha_t = \frac{\alpha}{\sqrt{T}},\ \beta_t = \beta, \theta_t = 1-\frac{\theta}{T}$, which satisfy $\gamma = \frac{\beta}{1-\frac{\theta}{T}} < 1$ and $\theta_t \geq \frac{1}{4}$, then by taking $\tau$ uniformly from $\{1,2,\cdots, T\}$, it holds that
\begin{equation*}
\mathbb{E}\left[\norm{\bm{\nabla} f(\bm{x}_\tau)}, \right]
\leq \varepsilon 
\end{equation*}
where
\begin{equation*}
\begin{split}
C_5&=\frac{2\sqrt{G^2\!+\!\epsilon d}}{\alpha(1-\beta)} \left[f(x_1) - f^*+C_6d\frac{\alpha}{\sqrt{\theta}}\log\big(1\!+\! \frac{G^2}{\epsilon d}\big) + \frac{4C_6d\alpha}{\sqrt{\theta}}\right],\\
C_6& = \frac{1}{1-\sqrt{\gamma}}\left[\frac{\alpha L}{\sqrt{\theta}(1-\sqrt{\gamma})^2} + 2\big(\frac{2\beta/(1-\beta)}{\sqrt{C_1(1-\gamma)}}+1\big)^2G\right].
\end{split}
\end{equation*}
\end{corollary}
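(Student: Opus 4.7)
The plan is to apply Theorem \ref{convergence_in_expectation} directly with the given parameter choices and reduce the resulting expression to the claimed form. First, I would check that restrictions (R1)--(R3) are satisfied. (R1) is immediate since $\beta_t = \beta < 1$. For (R2), $\theta_t = 1-\theta/T$ is constant in $t$, hence non-decreasing; positivity of $\theta_t$ follows from the assumption $\theta_t \geq 1/4$; and we can pick an auxiliary $\theta'$ with $\beta^2 < \theta' \leq \theta_1$, so that the index set defining $C_1$ is empty and $C_1 = 1$. For (R3), observe that
\begin{equation*}
\chi_t \;=\; \frac{\alpha_t}{\sqrt{1-\theta_t}} \;=\; \frac{\alpha/\sqrt{T}}{\sqrt{\theta/T}} \;=\; \frac{\alpha}{\sqrt{\theta}},
\end{equation*}
which is constant, so $C_0 = 1$.

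Next I would simplify the two pieces of the bound in Theorem \ref{convergence_in_expectation}. Since $\alpha_t\sqrt{1-\theta_t} = \alpha\sqrt{\theta}/T$ for every $t$, summing yields $\sum_{t=1}^T \alpha_t\sqrt{1-\theta_t} = \alpha\sqrt{\theta}$, while $T\alpha_T = \alpha\sqrt{T}$. Thus
\begin{equation*}
\frac{C + C'\sum_{t=1}^T \alpha_t\sqrt{1-\theta_t}}{T\alpha_T} \;=\; \frac{C + C'\alpha\sqrt{\theta}}{\alpha\sqrt{T}}.
\end{equation*}
Substituting $C_0 = 1$, $C_1 = 1$, $\chi_1 = \alpha/\sqrt{\theta}$, and using $1/\theta_1 \leq 4$ (which is exactly what the hypothesis $\theta_t \geq 1/4$ is needed for) into the explicit formulas for $C$, $C'$, $C_3$, $C_4$ from Theorem \ref{convergence_in_expectation}, the quantity $(C + C'\alpha\sqrt{\theta})/\alpha$ collects precisely into $C_5^2$ after matching $C_3$ with $C_6$. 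Taking square roots then gives $\mathbb{E}[\|\nabla f(\bm{x}_\tau)\|] \leq C_5/T^{1/4}$, and the choice $T \geq C_5^2 \varepsilon^{-4}$ delivers the bound $\leq \varepsilon$.

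The main obstacle is purely bookkeeping: aligning the constants in the theorem with those stated in the corollary. In particular, one needs to pick the auxiliary $\theta'$ carefully so that $C_1 = 1$ (this is why the factor $\sqrt{C_1}$ disappears from $C_6$), propagate the $1/\theta_1 \leq 4$ bound through each occurrence of $\theta_1$ in $C$ and $C'$ (producing the factor $4$ in front of $C_6 d \alpha/\sqrt{\theta}$), and reconcile the definition of $\gamma$ in the corollary statement with $\gamma = \beta^2/\theta'$ from the theorem — most cleanly by choosing $\theta'$ just below $\theta_1 = 1 - \theta/T$, so that $\beta^2/\theta'$ is essentially $\beta^2/(1-\theta/T)$ up to negligible factors. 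No new analysis beyond Theorem \ref{convergence_in_expectation} is needed.
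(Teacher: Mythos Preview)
Your approach is exactly the paper's: the appendix proof of this corollary consists of a single sentence saying to plug $\alpha_t,\beta_t,\theta_t$ into Theorem \ref{convergence_in_expectation}, and your write-up simply fills in the arithmetic behind that sentence. Your identification of the bookkeeping issues (taking $\theta'=\theta_t$ so $C_1=1$, using $\theta_t\ge 1/4$ to bound $1/\theta_1$, and the mismatch between the corollary's $\gamma=\beta/(1-\theta/T)$ and the theorem's $\gamma=\beta^2/\theta'$) is accurate and reflects minor inconsistencies in the paper's stated constants rather than any gap in your argument.
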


\begin{remark}
\textcolor{black}{Difference from Corollary \ref{constant-theta}. Corollary \ref{constant-theta} shows when $lim_{t\rightarrow \infty} \theta_t <1$ the algorithm cannot converge to a stationary point. However, because the goal of the algorithm switches to an $\varepsilon$-stationary point, the choice of $\alpha_t$ and $\theta_t$ is not contradicting Corollary \ref{constant-theta}. We will use the same choice in the following section.}
\end{remark}

\begin{remark}
\textcolor{black}{With the parameter setting in Corollary \ref{poly-setting}, the number of iterations T should satisfy $\frac{T}{\log^2T}  = \Omega(\varepsilon^{-4})$, instead of $T= \Omega(\varepsilon^{-4})$, which gives a larger iteration number T. However, we can use the same parameter setting in Corollary \ref{poly-setting} when T changes, while we need to change parameters in Corollary \ref{practical_adam} for different T.}
\end{remark}

\begin{remark}
From Theorem \ref{practical_adam}, to achieve an $\varepsilon$-stationary point, only $\Omega(\varepsilon^{-4})$ iterations are needed. Comparing the result with SGD \citep{li2014efficient}, we have the same order of iterations to achieve an $\varepsilon$-stationary point. 
\end{remark}

\textcolor{black}{In the following two sections, we will analyze two practical Adam variations, i.e., mini-batch and distributed Adam. Although they can use the same technique for analysis, we list two algorithms for readers in different communities (single machine learning algorithm (mini-batch Adam) v.s. multi-machine learning algorithm (distributed Adam)).}

\subsection{Convergence Analysis for Mini-Batch Adam}
\label{minibatchadam-convergence}
It has been shown that when $s$ samples are used to estimate the gradient in the stochastic gradient descent algorithm, the convergence speed can be accelerated $s$ times than the single sample algorithms \cite{li2014efficient}. Meanwhile, in practice, the mini-batch technique is widely used to optimize problem \eqref{minimization} such as training a neural network with the Adam algorithm. In this section, we will give the analysis on mini-batch Adam. The Mini-batch Adam algorithm is defined in the following Algorithm \ref{minibatchadam}. Different from Algorithm \ref{Adam}, in Algorithm \ref{minibatchadam} $s$ samples which are identically distributed and independent when the iterate $\bm{x}_t$ is used to estimate the gradient $\bm \nabla f(\bm{x}_t)$. We average the $s$ estimates and use the averaged stochastic gradient, which should be a more accurate estimation to update $\bm{x}_t$.

\begin{algorithm}[H]
\caption{\ Mini-batch Adam}
\label{minibatchadam}
\begin{algorithmic}[1]
   \STATE {\bf Parameters:} Choose $\{\alpha_t\}$, $\{\beta_t\}$, and $\{\theta_t\}$. Choose $\bm{x}_1 \in \mathbb{R}^d$ and set initial values $\bm{m}_0\!=\!\bm{0}$ and $\bm{v}_0=\bm{\epsilon}$.
   \FOR{$t= 1,2,\ldots,T$}
    \STATE Sample $s$ stochastic gradients: $\bm{g}^{(1)}_t$, $\bm{g}^{(2)}_t$, $\cdots$, $\bm{g}^{(s)}_t$;
    \STATE Average $s$ stochastic gradients: $\bar{\bm{g}}_t = \frac{1}{s} \sum_{i=1}^s \bm{g}^{(i)}_t$;
        \FOR {$k=1,2,\ldots,d$}
        \STATE $\bm{v}_{t,k} = \theta_t \bm{v}_{t-1,k} + (1 - \theta_t) \bar{\bm{g}}_{t,k}^2$;
        \STATE $\bm{m}_{t,k} = \beta_t \bm{m}_{t-1,k} + (1 - \beta_t) \bar{\bm{g}}_{t,k}$;
        \STATE $\bm{x}_{t+1,k} = \bm{x}_{t,k} - {\alpha_t \bm{m}_{t,k}}/\sqrt{\bm{v}_{t,k}}$;
        \ENDFOR
   \ENDFOR
 \end{algorithmic}
\end{algorithm}

To link sample size and convergence rate, we give a new assumption on the stochastic gradient and state it as follows:
\begin{description}
\item[(A5)] \textcolor{black}{The variance of stochastic gradient $\bm{g}_{t}$ is uniformly upper-bounded, \textit{i.e.}, $\mathbb{E}\,[\|\bm{g}_t - \nabla f_t(\bm{x}_t)\|^2]\leq \sigma^2$.}
\end{description}

\textcolor{black}{We add {\bf (A5)} to establish the relation between sample size and the convergence rate. Intuitively, with an increasing size of samples, the variance of the gradient estimator should reduce. Utilizing this reduction, we can obtain an $\varepsilon$-stationary point, with fewer iterations but a larger sample size. Also, this assumption is widely used in analysis such as \citet{yan2018unified}. The following results are given under assumptions {\bf (A1)} to {\bf (A5)}, and the result of mini-batch Adam is given as follows:}

\begin{theorem}
\label{minibatchtheorem}
For any $\varepsilon>0$, if we take $\alpha_t = \frac{\alpha}{\sqrt{T}}$, $\beta_t = \beta$ and $\theta_t = 1 - \frac{\theta}{T}$, which satisfy $\gamma = \frac{\beta_t^2}{\theta_t} < 1$, $\theta_t \geq \frac{1}{4}$ and $ F_T(T,s) \leq \varepsilon$, then there exists $t \in \{1,2,\cdots,T\}$ such that 
\[
\mathbb{E} \left[\|\nabla f(\bm{x}_t)\|\right] \leq \varepsilon,
\]
where by taking $\epsilon = \frac{1}{sd}$, it holds that
\[
\begin{split}
&F_T(T,s)=  \mathcal{O}(T^{-1}s^{1/2}d^{1/2} + T^{-1/2}d + T^{-1/3}s^{-1/4} + T^{-1/4}s^{-1/4}d^{1/2}).
\end{split}
\]

Thus, to achieving an $\varepsilon$-stationary point, $\Omega(\varepsilon^{-4}s^{-1})$ iterations are needed.

More specifically, it holds that
\[
\begin{aligned}
F_T(\varepsilon,s) &= \frac{1}{\sqrt{T}} \left( (4C_{8}C_{10})^{1/2} + (4C_{8}C_{11})^{2/3} + (4C_{9}C_{10}) + (4C_{9}C_{11})^2) \right),\\
C_{8} &= \frac{\sqrt{T}\sqrt{2\sigma^2\theta+\epsilon s d}}{\sqrt{s}\alpha} ,\ 
C_{9} = \sqrt{\frac{2\theta}{\alpha^2}},\ 
C_{11} = \frac{2C_4}{1-\beta}\sqrt[4]{\frac{2\theta}{d\epsilon T}},\\
C_{10} &= \frac{2}{1-\beta}\left(f(x_1) - f^* + C_7 d\theta + 2 C_7 d \sqrt[4]{1 + \frac{2\sigma^2}{d\epsilon s}}\right),\\
C_{7} &= \frac{1}{1-\sqrt{\gamma}}\left(\frac{\alpha^2L}{\theta\left(1-\sqrt{\gamma}\right)^2} + \frac{2\left(\frac{2\beta/\left(1-\beta\right)}{\sqrt{1-\gamma }}+1\right)^2G\alpha}{\sqrt{\theta}} \right). \\
\end{aligned}
\]

\end{theorem}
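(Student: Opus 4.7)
The plan is to lift Theorem \ref{convergence_in_expectation} to the mini-batch setting by viewing the batch average $\bar{\bm{g}}_t = s^{-1}\sum_{i=1}^s \bm{g}_t^{(i)}$ as the stochastic gradient estimator and using assumption (A5) to extract the $1/s$ factor. By independence of the $s$ samples conditional on $\bm{x}_t$ we have $\mathbb{E}[\|\bar{\bm{g}}_t - \nabla f(\bm{x}_t)\|^2 \mid \bm{x}_t] \leq \sigma^2/s$, so that $\mathbb{E}\|\bar{\bm{g}}_t\|^2 \leq \sigma^2/s + \mathbb{E}\|\nabla f(\bm{x}_t)\|^2$. Every place in the analysis of Theorem \ref{convergence_in_expectation} that previously used the crude bound $\mathbb{E}\|\bm{g}_t\|^2 \leq G$ to control a pure-variance contribution is tightened to $\sigma^2/s$; this is the single point where the batch size enters.

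Concretely, I would retrace the three-lemma outline (Lemmas \ref{lemma1}--\ref{lemma3}) with $\bar{\bm{g}}_t$ in place of $\bm{g}_t$ and $\hat{\bm{v}}_t := \theta_t\bm{v}_{t-1} + (1-\theta_t)\mathbb{E}[\bar{\bm{g}}_t^2 \mid \bm{x}_t]$. The descent step (Lemma \ref{lemma1}) and the square-root reduction (Lemma \ref{lemma3}) are unchanged; only Lemma \ref{lemma2} needs refinement, since its telescoping of the $\log(1 + \bar{\bm{g}}_t^2/\bm{v}_{t-1})$-type quantities now produces a factor of order $\log(1 + \sigma^2/(s\epsilon d))$, and the $C_3$-type constant picks up $1/s$ in its variance term. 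Choosing $\epsilon = 1/(sd)$ collapses the prefactor $\sqrt{2\sigma^2\theta/s + \epsilon d}$ into the form appearing in $C_8$ and makes the logarithm $O(1)$, which is exactly what is required for the displayed constants $C_7$--$C_{11}$ to take the stated clean form.

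With the refined general bound in hand I would substitute $\alpha_t = \alpha/\sqrt{T}$, $\theta_t = 1-\theta/T$, $\beta_t = \beta$; under these choices $\chi_t = \alpha/\sqrt{\theta}$ is constant (so (R3) holds trivially with $C_0=1$), $\sum_t \alpha_t\sqrt{1-\theta_t} = \alpha\sqrt{\theta}$ and $T\alpha_T = \alpha\sqrt{T}$, while the conditions $\gamma < 1$ and $\theta_t \geq 1/4$ keep the geometric series driving the $C_3$-type constants summable. Algebraically the resulting bound splits into four additive pieces $\{(4C_8 C_{10})^{1/2},(4C_8 C_{11})^{2/3}, 4C_9 C_{10}, (4C_9 C_{11})^2\}/\sqrt{T}$, pairing the two ``$C$-type'' contributions (from $f(\bm{x}_1)-f^*$ and the $d\log$ term) with the two ``$C'$-type'' contributions (the linear and quadratic parts weighted by $\sqrt{1-\theta}$); this is the stated $F_T(T,s)$. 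Solving $F_T(T,s)\leq\varepsilon$ term-by-term, the dominant requirement in the $\varepsilon\to 0$ regime is $T^{-1/4}s^{-1/4}\sqrt{d} \leq \varepsilon$, giving $T = \Omega(\varepsilon^{-4}s^{-1})$ and hence the claimed linear speedup in $s$.

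The main obstacle I anticipate is the variance-versus-bias bookkeeping inside the refined Lemma \ref{lemma2}: one must split $\bar{\bm{g}}_t = \nabla f(\bm{x}_t) + (\bar{\bm{g}}_t - \nabla f(\bm{x}_t))$ so that (A5) is applied only to the fluctuation part, while the bias part is absorbed into the very $\|\nabla f(\bm{x}_t)\|^2_{\hat{\bm{\eta}}_t}$ terms the lemma is trying to bound. The difficulty is that the adaptive learning rate $\alpha_t/\sqrt{\bm{v}_t}$ couples these two components nonlinearly, so the cross term between $\nabla f(\bm{x}_t)$ and $\bar{\bm{g}}_t - \nabla f(\bm{x}_t)$ does not vanish when conditioning on $\bm{x}_t$ alone; handling it cleanly is precisely what forces the introduction of $\hat{\bm{\eta}}_t$ and the specific choice $\epsilon = 1/(sd)$, which balances the $\sqrt{G^2 + \epsilon d}$ prefactor against the lower bound needed on $\bm{v}_t$ in the $1/\sqrt{\bm{v}_t}$ terms.
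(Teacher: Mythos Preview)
Your plan has a genuine gap: you assume that the batch-size factor enters only through a refined Lemma~\ref{lemma2}, with Lemma~\ref{lemma3} left untouched, and that the four additive pieces in $F_T$ arise from ``pairing $C$-type with $C'$-type contributions''. Neither is what actually happens. When you split $\|\bar{\bm{g}}_t\|^2 \le 2\|\bar{\bm{g}}_t-\nabla f(\bm{x}_t)\|^2 + 2\|\nabla f(\bm{x}_t)\|^2$, the second summand is \emph{not} absorbable into the $-\|\nabla f(\bm{x}_t)\|^2_{\hat{\bm\eta}_t}$ term: it enters with the wrong sign inside the $\log$ bound for $\sum_t\|\sqrt{1-\theta_t}\bar{\bm{g}}_t/\sqrt{\bm v_t}\|^2$ and inside $\|\hat{\bm v}_t\|_1$. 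Consequently \emph{both} the analogue of Lemma~\ref{lemma2} and the analogue of Lemma~\ref{lemma3} end up with the unknown $x:=\mathbb{E}\big[\sqrt{\sum_{t=1}^T\|\nabla f(\bm{x}_t)\|^2}\big]$ on their right-hand sides (see Lemmas~\ref{sumgv} and~\ref{multitau} in the paper). Chaining them gives an \emph{implicit} inequality of the form
\[
x^2 \;\le\; (A+Bx)\,(C+D\sqrt{x}),
\]
with $A=C_8$, $B=C_9$, $C=C_{10}$, $D=C_{11}$. The four pieces $(4AC)^{1/2}$, $(4AD)^{2/3}$, $4BC$, $(4BD)^2$ in $F_T$ are exactly the four case bounds one gets when solving this quadratic-in-$x$ relation (Lemma~\ref{4ineq}); they are not pairings of the $C,C'$ constants from Theorem~\ref{convergence_in_expectation}.

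Two consequences of missing this step: first, a direct substitution into Theorem~\ref{convergence_in_expectation} with $G$ replaced by $\sigma^2/s$ cannot yield the stated powers $1/2$, $2/3$, $1$, $2$ in $F_T$---those exponents come specifically from the four regimes of the implicit inequality. Second, because the controlled quantity is $\mathbb{E}\big[\sqrt{\sum_t\|\nabla f(\bm{x}_t)\|^2}\big]$ rather than $\sum_t\mathbb{E}\|\nabla f(\bm{x}_t)\|$, the conclusion is only $\min_t\mathbb{E}\|\nabla f(\bm{x}_t)\|\le F_T(T,s)$ (existence of some good $t$), not a bound for $\mathbb{E}\|\nabla f(\bm{x}_\tau)\|$ at a uniformly random $\tau$; your proposal, by reusing Lemma~\ref{lemma3} verbatim, would implicitly overclaim the latter.
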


\begin{remark}
\label{Remark_minibatch}
Below, we give three comments on the above results:\\
\textcolor{black}{{\bf (i)}\ From Theorem \ref{minibatchtheorem}, to achieve an $\varepsilon$-stationary point, when we only consider the order with respect to $\varepsilon$, $\Omega(\varepsilon^{-4})$ iterations are needed. Besides, by jointly considering $\varepsilon$ and batch size $s$, we can accelerate the algorithm to achieve an $\varepsilon$-stationary point, where $\Omega(\varepsilon^{-4}s^{-1})$ iterations are needed, which indicates that Mini-batch Adam can be linearly accelerated with respect to the mini-batch size. The result is in the same order of mini-batch SGD in \citep{li2014efficient}.}\\
{\bf (ii)}\ Deriving the linear speedup property of mini-batch Adam with respect to mini-batch size is much more difficult than the analysis techniques for mini-batch SGD \citep{li2014efficient} since the adaptive learning rate in Algorithm \ref{minibatchadam} is highly coupled with mini-batch stochastic gradient estimates. In fact, the adaptive learning rate implicitly adjusts the magnitude of the learning rate with respect to mini-batch size, while the hand-crafted learning rate in mini-batch SGD has to be tuned carefully via a linear LR scaling technique \citep{krizhevsky2014one,you2017large} for a large mini-batch training.  \\   
\textcolor{black}{{\bf (iii)}\ The dimension dependence of the above analysis is $O(\sqrt{d})$. Meanwhile, some analyses on (variants of) Adam (\cite{chen2018convergence, defossez2020simple, zou2018convergence}) achieve the same dimension dependence, while \cite{zhou2018convergence} showed that in AMSGrad, RMSProp and Adagrad the dependence can be $O(d^{-1/4})$.}
\end{remark}

\subsection{Proof Sketch of Theorem \ref{minibatchtheorem}}\label{proof_sketch}

Similar to the proof of Theorem \ref{convergence_in_expectation}, we first try to bound $\sum_{t=1}^T \mathbb{E}\left[\left\langle \bm\nabla f(\bm{x}_t), \bm{\Delta}_t \right\rangle + L \norm{\bm{\Delta}_t}^2 \right]$. Because we introduce a new assumption that stochastic gradient has bounded variance $\sigma^2$, it is easy to verify that the averaged stochastic gradient $\bar{\bm{g}}_t$ with variance $\sigma^2/s$, which is a key property to establish the relation between sample size and convergence rate. To take advantage of variance, together with the constant value assigned to $\alpha_t$ and $\theta_t$, we derive the mini-batch/distributed variants of Lemmas \ref{lemma2} and \ref{lemma3}.

\begin{lemma}
\label{lemma4} (Lemma \ref{multitau} in Appendix)
Let $\{\bm{x}_t\}$ be the sequence generated by Algorithm \ref{minibatchadam} or \ref{distributedadam}. For $T \geq 1$, when $\alpha_t = \frac{\alpha}{\sqrt{T}}$ and $\theta_t = 1- \frac{\theta}{T}$, we have
\[
\begin{split}
   \sum_{t=1}^T \mathbb{E}\left[\left\langle \bm\nabla f(\bm{x}_t), \bm{\Delta}_t \right\rangle + L \norm{\bm{\Delta}_t}^2 \right] 
   \leq  & \zeta_3 + \zeta_4 \sqrt[4]{\frac{\sigma^2}{s}} + \frac{\zeta_5}{\sqrt[4]{T}} \sqrt{\mathbb{E}\left[\sqrt{\sum_{t=1}^T{\norm{\bm \nabla f(\bm{x}_t)}}^2}\right]} \\ 
       &- \frac{1-\beta}{2} \mathbb{E}\left[\sum_{t=1}^T \norm{\bm \nabla f(\bm{x}_t)}_{\bm{\eta}_t}^2\right],
\end{split}
\]
for some constants $\zeta_3$, $\zeta_4$ and $\zeta_5$.
\end{lemma}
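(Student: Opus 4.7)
The plan is to mirror the derivation of Lemma \ref{lemma2} but to exploit (A5) in place of the uniform second-moment bound (A4), and to specialize all $t$-dependent accumulations to the constant parameters $\alpha_t=\alpha/\sqrt{T}$ and $1-\theta_t=\theta/T$. As in Lemma \ref{lemma2}, first introduce the surrogate adaptive rate that is independent of the current mini-batch:
\[
\hat{\bm{v}}_t := \theta_t\bm{v}_{t-1}+(1-\theta_t)\mathbb{E}[\bar{\bm{g}}_t^2\mid\bm{x}_t],\qquad \hat{\bm{\eta}}_t := \alpha_t/\sqrt{\hat{\bm{v}}_t},
\]
and split $\bm{\Delta}_t=\hat{\bm{\eta}}_t\odot\bm{m}_t + (\bm{\eta}_t-\hat{\bm{\eta}}_t)\odot\bm{m}_t$. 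Expanding the momentum recursion $\bm{m}_t=\beta_t\bm{m}_{t-1}+(1-\beta_t)\bar{\bm{g}}_t$ and conditioning on the filtration up to time $t-1$, the inner product $\langle\nabla f(\bm{x}_t),\hat{\bm{\eta}}_t\odot\bm{m}_t\rangle$ decomposes into a piece proportional to $-(1-\beta_t)\norm{\nabla f(\bm{x}_t)}_{\hat{\bm{\eta}}_t}^2$ (yielding the negative dissipation term) and past-gradient residuals handled exactly as in Lemma \ref{lemma2}. The $L\norm{\bm{\Delta}_t}^2$ contribution is bounded using $\sum_t\alpha_t^2=\alpha^2$ and the standard telescoping estimate on $\sum_t(\sqrt{\hat{\bm{v}}_t}-\sqrt{\hat{\bm{v}}_{t-1}})/\sqrt{\hat{\bm{v}}_t}$, producing a $\log(1+G/(\epsilon d))$ factor that will be absorbed into $\zeta_3$.

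The crucial departure from Lemma \ref{lemma2} is that, wherever the original derivation invoked $\mathbb{E}\norm{\bar{\bm{g}}_t}^2\leq G$, we now use the sharper decomposition $\mathbb{E}\norm{\bar{\bm{g}}_t}^2 = \norm{\nabla f(\bm{x}_t)}^2 + \mathbb{E}\norm{\bar{\bm{g}}_t-\nabla f(\bm{x}_t)}^2 \leq \norm{\nabla f(\bm{x}_t)}^2 + \sigma^2/s$, where the variance bound follows from (A5) and the independence of the $s$ mini-batch samples. This splits the residuals into a clean deterministic $\sigma^2/s$ piece and a gradient-dependent piece, the latter of which must be routed back into the final estimate. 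Substituting $\alpha_t=\alpha/\sqrt{T}$ and $1-\theta_t=\theta/T$ collapses the usual accumulations to finite constants: $\sum_t\alpha_t\sqrt{1-\theta_t}=\alpha\sqrt{\theta}$ and $\chi_t\equiv\alpha/\sqrt{\theta}$, so (R3) is trivially met with $C_0=1$ and every $t$-dependent factor arising from the recursions in Lemma \ref{lem1-006} becomes $T$-independent.

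The delicate step is extracting the fourth-root structure from the mixed terms of the form $\sum_t\alpha_t\norm{\nabla f(\bm{x}_t)}\cdot\norm{\bar{\bm{g}}_t-\nabla f(\bm{x}_t)}$ (and their momentum-weighted past analogues). Applying Cauchy--Schwarz over the index $t$ first gives $\alpha T^{-1/2}\sqrt{\sum_t\norm{\nabla f(\bm{x}_t)}^2}\cdot\sqrt{\sum_t\norm{\bar{\bm{g}}_t-\nabla f(\bm{x}_t)}^2}$. A second Cauchy--Schwarz on the expectation, together with Jensen's inequality $\mathbb{E}[\sqrt{X}]\leq\sqrt{\mathbb{E}[X]}$ applied only to the variance factor, yields an upper bound proportional to
\[
\alpha T^{-1/2}\,\sqrt{\mathbb{E}\!\left[\sqrt{\textstyle\sum_t\norm{\nabla f(\bm{x}_t)}^2}\right]}\,\sqrt{\mathbb{E}\!\left[\sqrt{\textstyle\sum_t\norm{\bar{\bm{g}}_t-\nabla f(\bm{x}_t)}^2}\right]}.
\]
The variance factor is then bounded by $(\mathbb{E}[\sum_t\norm{\bar{\bm{g}}_t-\nabla f(\bm{x}_t)}^2])^{1/4}\leq(T\sigma^2/s)^{1/4}$, whose $T^{1/4}$ cancels one copy of $T^{-1/2}$ and produces exactly $T^{-1/4}\sqrt[4]{\sigma^2/s}$ multiplying $\sqrt{\mathbb{E}[\sqrt{\sum\norm{\nabla f}^2}]}$; splitting this via Young's inequality into a pure $\sqrt[4]{\sigma^2/s}$ piece and a $T^{-1/4}\sqrt{\mathbb{E}[\sqrt{\sum\norm{\nabla f}^2}]}$ piece gives the coefficients $\zeta_4$ and $\zeta_5$, while the deterministic telescoping and Lipschitz residuals collect into $\zeta_3$.

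The principal obstacle is preserving the residual on the gradients in the outer-square-root form $\sqrt{\mathbb{E}[\sqrt{\sum\norm{\nabla f}^2}]}$ rather than the more naive $\mathbb{E}[\sum\norm{\nabla f}^2]$ or $\sqrt{\mathbb{E}[\sum\norm{\nabla f}^2]}$; any coarser bound would either fail to separate the variance from the dissipation term or else produce a fourth-power residual in $\mathbb{E}[\sum\norm{\nabla f}^2]$ that cannot be absorbed against $-\tfrac{1-\beta}{2}\mathbb{E}[\sum\norm{\nabla f}^2_{\hat{\bm{\eta}}_t}]$ without destroying the downstream $\varepsilon^{-4}s^{-1}$ complexity. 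A secondary technical point is that $\hat{\bm{\eta}}_t$ itself depends on $\sigma^2/s$ through $\hat{\bm{v}}_t$, so two-sided deterministic bounds on $\hat{\bm{\eta}}_t$ (using (R1)--(R3) and the initialization $\bm{v}_0=\bm{\epsilon}$) are required to cleanly separate the variance prefactor from the negative dissipation term.
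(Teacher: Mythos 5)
Your overall architecture (reuse the machinery behind Lemma~\ref{lemma2}, replace the uniform bound $\mathbb{E}\|\bar{\bm{g}}_t\|^2\le G$ by the decomposition $\|\nabla f(\bm{x}_t)\|^2+\sigma^2/s$, and specialize to constant $\alpha_t,\theta_t$ so that $C_0=C_1=1$) matches the paper's first step, which is exactly Lemma~\ref{lem1-006} with these constants, yielding $\sum_t M_t \le C_7\,\mathbb{E}\sum_t\|\sqrt{1-\theta_t}\bar{\bm{g}}_t/\sqrt{\bm{v}_t}\|^2 - \tfrac{1-\beta}{2}\mathbb{E}\sum_t\|\nabla f(\bm{x}_t)\|^2_{\hat{\bm{\eta}}_t}$. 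The gap is in your ``delicate step.'' First, the gradient-dependent residual does not arise as a cross term $\sum_t\alpha_t\|\nabla f(\bm{x}_t)\|\cdot\|\bar{\bm{g}}_t-\nabla f(\bm{x}_t)\|$: the cross terms with $\hat{\bm{\eta}}_t$ vanish in conditional expectation by construction, and the surviving residual is the single quantity $\mathbb{E}\sum_t\|\sqrt{1-\theta_t}\bar{\bm{g}}_t/\sqrt{\bm{v}_t}\|^2$. Second, and more seriously, the inequality you invoke to produce the outer-square-root form is not valid: Cauchy--Schwarz applied to $\mathbb{E}[\sqrt{A}\sqrt{B}]$ with $A=\sum_t\|\nabla f(\bm{x}_t)\|^2$ and $B=\sum_t\|\bar{\bm{g}}_t-\nabla f(\bm{x}_t)\|^2$ gives $\sqrt{\mathbb{E}[A]\,\mathbb{E}[B]}$, not $\sqrt{\mathbb{E}[\sqrt{A}]}\cdot\sqrt{\mathbb{E}[\sqrt{B}]}$; the latter bound is false in general (take $A=B$ deterministic and large). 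With the correct Cauchy--Schwarz you are left with $\sqrt{\mathbb{E}[\sum_t\|\nabla f(\bm{x}_t)\|^2]}$, which cannot be expressed through $x=\mathbb{E}[\sqrt{\sum_t\|\nabla f(\bm{x}_t)\|^2}]$ and therefore breaks the quadratic-type recursion \eqref{fxt_inq} and Lemma~\ref{lemma6} downstream --- precisely the failure mode you yourself flag as fatal.

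The paper obtains the fourth-root structure by a different mechanism (Lemma~\ref{sumgv}): the telescoping logarithmic bound $\sum_t\|\sqrt{1-\theta_t}\bar{\bm{g}}_t/\sqrt{\bm{v}_t}\|^2 \le d\log\bigl(1+\tfrac{1}{d\epsilon}\sum_k w_k\|\bar{\bm{g}}_k\|^2\bigr)$, followed by the split $\|\bar{\bm{g}}_k\|^2\le 2\|\bar{\bm{g}}_k-\nabla f(\bm{x}_k)\|^2+2\|\nabla f(\bm{x}_k)\|^2$, the elementary bound $\log(1+u+v)\le 2\log(\sqrt{1+u}+\sqrt{v})$, concavity of the logarithm (Jensen) to move the expectation inside, and finally $\log z\lesssim \sqrt{z}$ to convert the two summands into $2d\sqrt[4]{1+2\sigma^2/(d\epsilon s)}$ and $\sqrt{\sqrt{2\theta/(d\epsilon T)}\,\mathbb{E}[\sqrt{\sum_t\|\nabla f(\bm{x}_t)\|^2}]}$. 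It is this chain of concavity arguments --- not Cauchy--Schwarz over $t$ and over the expectation --- that legitimately produces both $\sqrt[4]{\sigma^2/s}$ and $T^{-1/4}\sqrt{\mathbb{E}[\sqrt{\sum_t\|\nabla f(\bm{x}_t)\|^2}]}$ while keeping the expectation inside the outer square roots. Your proposal as written does not reach the stated form, and the ``Young's inequality'' split at the end (turning $T^{-1/4}(\sigma^2/s)^{1/4}\sqrt{x}$ into $(\sigma^2/s)^{1/4}+T^{-1/4}\sqrt{x}$) is likewise not a valid application of $ab\le a^p/p+b^q/q$ for any admissible exponents.
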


Different from Lemma \ref{lemma2} in which $\ell_2$ norm of $\bm \nabla f(\bm{x}_t)$ doesn't occur, in Lemma \ref{lemma4} when taking the benefit of small variance, both $\hat{\bm{\eta}_t}$ norm and $\ell_2$ norm of term $\bm \nabla f(\bm{x}_t)$ occur in the right hand side. \textcolor{black}{The key step for this lemma is in how we bound $\mathbb{E}(\|g_t\|^2)$, where in Lemma \ref{lemma2}, $\mathbb{E}\|g_t\|^2$ is directly bounded by $G^2$. However, as we want to explore the benefit of having mini-batch, we bound $\mathbb{E}(\|g_t\|^2)$ by $\mathbb{E}(\|\nabla f(x_t)\|^2)$ and $\sigma^2$. }Hence, \textcolor{black}{because the formulation becomes much more complicated,} some further calculation on $\bm \nabla f(\bm{x}_t)$ is needed and we will introduce the calculation in Lemma \ref{lemma6}.

\begin{lemma}
\label{lemma5} (Lemma \ref{4ineq} in Appendix)
Let $\{\bm{x}_t\}$ be the sequence generated by Algorithm \ref{minibatchadam} or \ref{distributedadam}. For $T \geq 1$,  it holds that
\[
\mathbb{E}\left[\sqrt{\sum_{t=1}^T \norm{\bm \nabla f(\bm{x}_t)}^2 }\right]^2 \leq \left( \sqrt{T}(\zeta_6\frac{\sigma}{\sqrt{s}} + \zeta_7\sqrt{\epsilon d}) + \zeta_8\mathbb{E}\left[\sqrt{\sum_{t=1}^T\norm{\bm\nabla f(\bm{x}_t)}^2}\right]\right) \mathbb{E}\left[\norm{\bm \nabla f(\bm{x}_t)}^2_{\hat{\bm \eta}_t}\right],
\]
for some constants $\zeta_6$, $\zeta_7$ and $\zeta_8$.
\end{lemma}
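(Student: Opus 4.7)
The plan is threefold: (i) apply a coordinate-wise Cauchy--Schwarz to split $\|\nabla f(x_t)\|^2$ into $\|\nabla f(x_t)\|_{\hat\eta_t}^2$ times a factor involving $\sqrt{\hat V_t} := \sqrt{\sum_k \hat v_{t,k}}$; (ii) bound $\sqrt{\hat V_t}$ pathwise by unrolling the EMA recursion; and (iii) sum over $t$, take expectation, and apply Cauchy--Schwarz and Jensen to reorganize the bound into the quadratic form $X^2 \leq (K_0' + K_1 X) Y$, where $X := \mathbb{E}[\sqrt{A}]$, $Y := \mathbb{E}[B]$, $A := \sum_t \|\nabla f(x_t)\|^2$, and $B := \sum_t \|\nabla f(x_t)\|_{\hat\eta_t}^2$.

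For step (i), writing $(\nabla f(x_t))_k^2 = \bigl[\sqrt{\alpha_t/\sqrt{\hat v_{t,k}}}\,|(\nabla f(x_t))_k|\bigr]\cdot\bigl[\sqrt{\sqrt{\hat v_{t,k}}/\alpha_t}\,|(\nabla f(x_t))_k|\bigr]$ and summing in $k$ yields by Cauchy--Schwarz
\[
\|\nabla f(x_t)\|^4 \leq \|\nabla f(x_t)\|_{\hat\eta_t}^2 \cdot \sum_k (\nabla f(x_t))_k^2 \frac{\sqrt{\hat v_{t,k}}}{\alpha_t};
\]
using $\sum_k (\nabla f)_k^2 \sqrt{\hat v_{t,k}} \leq \|\nabla f\|^2 \max_k \sqrt{\hat v_{t,k}} \leq \|\nabla f\|^2 \sqrt{\hat V_t}$ gives the pointwise inequality $\|\nabla f(x_t)\|^2 \leq \|\nabla f(x_t)\|_{\hat\eta_t}^2 \,\sqrt{\hat V_t}/\alpha_t$. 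For step (ii), summing $v_{t,k} = \theta_t v_{t-1,k} + (1-\theta_t)\bar{g}_{t,k}^2$ over $k$ gives $V_t = \theta_t V_{t-1} + (1-\theta_t)\|\bar{g}_t\|^2$; the subadditivity $\sqrt{a+b}\leq \sqrt{a}+\sqrt{b}$, $V_0 = d\epsilon$, and $\theta_t = 1-\theta/T$ yield by induction $\sqrt{V_t} \leq \sqrt{d\epsilon} + \sqrt{\theta/T}\sum_{i\leq t}\|\bar{g}_i\|$, and the analogue for $\hat V_t$ replaces the last summand with $\sqrt{\mathbb{E}[\|\bar{g}_t\|^2\mid x_t]} \leq \|\nabla f(x_t)\| + \sigma/\sqrt{s}$ via Minkowski and assumption (A5).

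For step (iii), summing the pointwise inequality with $\alpha_t = \alpha/\sqrt{T}$ gives $A \leq (\sqrt{T}/\alpha)\sum_t \|\nabla f(x_t)\|_{\hat\eta_t}^2 \sqrt{\hat V_t}$. Substituting the bound on $\sqrt{\hat V_t}$ and using $\sum_t \|\nabla f(x_t)\|_{\hat\eta_t}^2 \sum_{i<t}\|\bar{g}_i\| \leq B \sum_i \|\bar{g}_i\|$ lets $B$ factor out cleanly, producing an almost-sure inequality of the form $A \leq (B/\alpha)\cdot\bigl[\sqrt{T d\epsilon} + \sqrt{\theta}\sum_i\|\bar{g}_i\| + \sqrt{\theta}\,\sigma/\sqrt{s} + (\text{lower-order})\bigr]$. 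Taking expectation, using $\mathbb{E}\|\bar{g}_i\| \leq \mathbb{E}\|\nabla f(x_i)\| + \sigma/\sqrt{s}$ (Jensen on the variance), $\sum_i \mathbb{E}\|\nabla f(x_i)\| \leq \sqrt{T}\, X$ (Cauchy--Schwarz across $t$), and $X^2 \leq \mathbb{E}[A]$ (Jensen on $\sqrt{\cdot}$), the inequality reorganizes into $X^2 \leq (K_0' + K_1 X)\,Y$ with $K_0' = \sqrt{T}(\zeta_6\sigma/\sqrt{s} + \zeta_7\sqrt{\epsilon d})$ and $K_1 = \zeta_8$ after absorbing the constants.

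The main obstacle is the coupling between $\sqrt{\hat V_t}$ and $\|\nabla f(x_t)\|_{\hat\eta_t}^2$: if one applies Cauchy--Schwarz in the time index before substituting the pathwise bound on $\sqrt{\hat V_t}$, terms involving $\mathbb{E}[B^2]$ appear, which are not easily controlled by $Y = \mathbb{E}[B]$ alone. Substituting the almost-sure bound on $\sqrt{\hat V_t}$ first lets $B$ factor out, so that a single expectation suffices to split the bound additively into a deterministic initialization/noise contribution (absorbed into $K_0'\cdot Y$) and a feedback term linear in $X$ (contributing $K_1 X\cdot Y$).
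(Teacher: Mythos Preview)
Your plan contains a real gap in step (iii). After obtaining the almost-sure bound $A \le (B/\alpha)\,P$ with $P = \sqrt{Td\epsilon} + \sqrt{\theta}\sum_i\|\bar g_i\| + \cdots$, taking expectation gives $\mathbb{E}[A] \le (1/\alpha)\,\mathbb{E}[B\,P]$, and the factors $B = \sum_t\|\nabla f(x_t)\|_{\hat\eta_t}^2$ and $P$ are \emph{correlated} random variables. Your claim that ``a single expectation suffices to split the bound additively'' is precisely what fails: you cannot pass from $\mathbb{E}[B\,P]$ to $\mathbb{E}[B]\,\mathbb{E}[P]$, and Cauchy--Schwarz here would produce the very $\mathbb{E}[B^2]$ term you say you want to avoid. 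The paper sidesteps this by applying Cauchy--Schwarz in the probability space to $\sqrt{A}$ itself: writing $M = \max_t \sqrt{\|\hat v_t\|_1}$, one has
\[
\Bigl(\mathbb{E}\sqrt{A}\Bigr)^2 = \Bigl(\mathbb{E}\bigl[\sqrt{M}\cdot\sqrt{A/M}\,\bigr]\Bigr)^2 \le \mathbb{E}[M]\cdot\mathbb{E}[A/M],
\]
so the two factors are decoupled \emph{before} either is estimated. Then $\mathbb{E}[A/M] \le (\sqrt{T}/\alpha)\,\mathbb{E}[B]$ follows from the same coordinate-wise inequality you use in step~(i), and $\mathbb{E}[M]$ is bounded separately.

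There is a second, independent problem in step~(ii). Your subadditive unrolling $\sqrt{V_t} \le \sqrt{d\epsilon} + \sqrt{\theta/T}\sum_{i\le t}\|\bar g_i\|$ is correct but too loose for the stated form: the noise part of $\sum_i\|\bar g_i\|$ contributes $T\cdot\sigma/\sqrt{s}$ in expectation, so after multiplying by $\sqrt{\theta/T}$ and then by the outer $\sqrt{T}/\alpha$ you end up with a coefficient of order $T$ (not $\sqrt{T}$) on $\sigma/\sqrt{s}$. The paper instead bounds $\|\hat v_t\|_1$ (not its square root) by $\sum_k (1-\theta_k)\|\bar g_k\|^2 + \epsilon d$, splits $\|\bar g_k\|^2 \le 2\|\bar g_k - \nabla f(x_k)\|^2 + 2\|\nabla f(x_k)\|^2$, and only then takes a single square root and expectation. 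Because the $T$ noise terms are summed \emph{inside} the square root, their contribution is $\sqrt{T\cdot(\theta/T)\cdot\sigma^2/s} = \sqrt{\theta}\,\sigma/\sqrt{s}$, which after the outer $\sqrt{T}/\alpha$ gives exactly the $\sqrt{T}\,\zeta_6\,\sigma/\sqrt{s}$ in the statement.
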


Different from Lemma \ref{lemma3}, the left hand side is not the summation of $\ell_2$ norm but square root of the summation of squared $\ell_2$ norm which is smaller than the summation of $\ell_2$. Therefore, Theorem \ref{minibatchtheorem} gives a weaker result than Theorem \ref{convergence_in_expectation}, which only shows the existence of $t\in \{1,2,\cdots,T\}$ such that $\mathbb{E}(\norm{\bm \nabla f(\bm{x}_t)}) = \mathcal{O}(1/\sqrt[4]{sT})$ rather than $\mathbb{E}(\norm{\bm \nabla f(\bm{x}_\tau)}) = \mathcal{O}(1/\sqrt[4]{T})$. After combining Lemma \ref{lemma1}, Lemma \ref{lemma4} and Lemma \ref{lemma5}, we can get an inequality as follows:
\begin{equation}
\label{fxt_inq}
\begin{split}
&\mathbb{E}\left[\sqrt{\sum_{t=1}^T \norm{\bm \nabla f(\bm{x}_t)}^2}\right]^2
\leq \left(\zeta_9 + \zeta_{10} \mathbb{E}\left[\sqrt{\sum_{t=1}^T \norm{\bm \nabla f(\bm{x}_t)}^2}\right]\right) \left(\zeta_{11} + \zeta_{12}\sqrt{\mathbb{E}\left[\sqrt{\sum_{t=1}^T \norm{\bm \nabla f(\bm{x}_t)}^2}\right]}\right).
\end{split}
\end{equation}

\begin{lemma}
\label{lemma6}
When inequality \eqref{fxt_inq} holds, it holds that
\[
\begin{split}
    &\mathbb{E}\left[\sqrt{\sum_{t=1}^T \norm{\bm \nabla f(\bm{x}_t)}^2}\right]\leq (4\zeta_{9}\zeta_{11})^{1/2} + (4\zeta_{9}\zeta_{12})^{2/3} + (4\zeta_{10}\zeta_{11}) + (4\zeta_{10}\zeta_{12})^2.
\end{split}
\]
\end{lemma}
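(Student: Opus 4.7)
\textbf{Proof proposal for Lemma \ref{lemma6}.}

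The plan is to reduce the proof to a clean one-variable algebraic inequality and then finish by a four-way case analysis (a ``pigeonhole on the dominant term'' argument). Introduce the shorthand
\begin{equation*}
X \;:=\; \mathbb{E}\!\left[\sqrt{\sum_{t=1}^T \norm{\bm{\nabla} f(\bm{x}_t)}^2}\right] \;\ge\; 0,
\end{equation*}
so that hypothesis \eqref{fxt_inq} becomes $X^2 \leq (\zeta_9 + \zeta_{10} X)(\zeta_{11} + \zeta_{12}\sqrt{X})$. Expanding the product, this reads
\begin{equation*}
X^2 \;\leq\; \zeta_9\zeta_{11} \;+\; \zeta_9\zeta_{12}\sqrt{X} \;+\; \zeta_{10}\zeta_{11}\,X \;+\; \zeta_{10}\zeta_{12}\,X^{3/2}.
\end{equation*}

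Next I would apply the elementary pigeonhole principle: whenever a nonnegative quantity is bounded by the sum of four nonnegative terms, at least one of those terms must be at least one quarter of the quantity. Thus at least one of the following must hold:
\begin{equation*}
\text{(i)}\; X^2 \leq 4\zeta_9\zeta_{11}, \quad
\text{(ii)}\; X^2 \leq 4\zeta_9\zeta_{12}\sqrt{X}, \quad
\text{(iii)}\; X^2 \leq 4\zeta_{10}\zeta_{11}X, \quad
\text{(iv)}\; X^2 \leq 4\zeta_{10}\zeta_{12}X^{3/2}.
\end{equation*}
Solving each for $X$ (dividing by the appropriate power of $X$, which is legitimate once the trivial case $X=0$ is dispatched) yields respectively $X \leq (4\zeta_9\zeta_{11})^{1/2}$, $X \leq (4\zeta_9\zeta_{12})^{2/3}$, $X \leq 4\zeta_{10}\zeta_{11}$, and $X \leq (4\zeta_{10}\zeta_{12})^2$. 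In every case the resulting bound on $X$ is dominated by the sum of all four of these quantities, giving exactly the stated estimate.

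I do not expect a serious obstacle here: the only subtlety is keeping the case analysis honest (all four candidate bounds must appear in the final statement since a priori we do not know which term dominates), and making sure to handle $X=0$ separately so that dividing by $X$, $\sqrt{X}$, or $X^{3/2}$ is justified. No additional probabilistic or analytic input beyond \eqref{fxt_inq} is required, so the proof is a purely algebraic postprocessing step that lets Theorem \ref{minibatchtheorem} be assembled from Lemmas \ref{lemma1}, \ref{lemma4}, and \ref{lemma5}.
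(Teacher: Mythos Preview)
Your proposal is correct and is essentially the same argument as the paper's (Lemma \ref{4ineq}): the paper splits into four cases according to which summand dominates in each factor of $(A+Bx)(C+D\sqrt{x})$, bounding each factor by twice its larger summand, which produces exactly the same four per-case inequalities $x^2\le 4AC$, $x^2\le 4AD\sqrt{x}$, $x^2\le 4BCx$, $x^2\le 4BDx^{3/2}$ that your expand-then-pigeonhole argument reaches. The organization differs only cosmetically.
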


Hence, combining Lemma \ref{lemma1}, Lemma \ref{lemma4}, Lemma \ref{lemma5} and Lemma \ref{lemma6}, we are able to prove $\mathbb{E}\left[\sqrt{\sum_{t=1}^T \norm{\bm \nabla f(\bm{x}_t)}^2}\right] = \mathcal{O}(T^{1/4}s^{-1/4})$. By dividing $\sqrt{T}$ on both sides we can prove Theorem \ref{minibatchtheorem}.

\subsection{Convergence Analysis for Distributed Adam}
\label{distributed_adam_convergence}

For large-scale problems such as training deep convolutional neural networks over the ImageNet dataset \citep{russakovsky2015imagenet}, it is hard to optimize problem \eqref{minimization} on a single machine. In this section, we extend the mini-batch Adam to the distributed Adam like the distributed SGD method \citep{yu2019linear}. The simplest structure is the parameter-server model in the distributed setting, where a parameter server and multiple workers are involved in the optimization process. As it is shown in Algorithm \ref{distributedadam}, in each iteration, a worker receives the iterate $\bm{x}_t$ from the server, samples a stochastic gradient with respect to $\bm{x}_t$, and sends the gradient to the server. Meanwhile, the parameter server receives gradients from workers in each iteration, averages the gradients, and performs an Adam update.

\begin{algorithm}[h!]
\caption{\ Distributed Adam}
\label{distributedadam}
\begin{algorithmic}[1]
   \STATE {\bf Parameters:} Choose $\{\alpha_t\}$, $\{\beta_t\}$, and $\{\theta_t\}$. Choose $\bm{x}_1 \in \mathbb{R}^d$ and set initial values $\bm{m}_0\!=\!\bm{0}$ and $\bm{v}_0=\bm{\epsilon}$.
   \STATE {\bf For Server:} Send $\bm{x}_1$ to workers;
   \FOR{$t= 1,2,\ldots,T$}
    \STATE {\bf For Worker $i$:}
    \STATE Receive iterate $\bm{x}_t$;
    \STATE Sample a stochastic gradient $\bm{g}^{(i)}_t$;
    \STATE Send $\bm{g}^{(i)}_t$ to the server;
    
    \STATE {\bf For the Server:}
    \STATE Receive stochastic gradient $\bm{g}^{(i)}_t$ from worker $i$;
    \STATE Average $s$ stochastic gradient $\bar{\bm{g}}_t = \frac{1}{s} \sum_{i=1}^s \bm{g}^{(i)}_t$;
        \FOR {$k=1,2,\ldots,d$}
        \STATE $\bm{v}_{t,k} = \theta_t \bm{v}_{t-1,k} + (1 - \theta_t) \bar{\bm{g}}_{t,k}^2$;
        \STATE $\bm{m}_{t,k} = \beta_t \bm{m}_{t-1,k} + (1 - \beta_t) \bar{\bm{g}}_{t,k}$;
        \STATE $\bm{x}_{t+1,k} = \bm{x}_{t,k} - {\alpha_t \bm{m}_{t,k}}/\sqrt{\bm{v}_{t,k}}$;
        \ENDFOR
     \STATE Send $\bm{x}_{t+1}$ to workers;
   \ENDFOR
 \end{algorithmic}
\end{algorithm}

\begin{proposition}
Algorithm \ref{distributedadam} with $s$ workers performs the same as Algorithm \ref{minibatchadam} with $s$ i.i.d. samples.
\end{proposition}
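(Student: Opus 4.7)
The plan is to argue by induction on the iteration index $t$ that, under a suitable coupling of the randomness, the iterates produced by Algorithm~\ref{distributedadam} with $s$ workers coincide (almost surely) with those produced by Algorithm~\ref{minibatchadam} with a mini-batch of size $s$. First I would set up the coupling: both algorithms are initialized at the same $\bm{x}_1$, $\bm{m}_0 = \bm{0}$, $\bm{v}_0 = \bm{\epsilon}$, and the same hyperparameters $\{\alpha_t\}, \{\beta_t\}, \{\theta_t\}$. For each $t$ and each $i \in \{1,\ldots,s\}$, couple the stochastic sample $\xi_t^{(i)}$ drawn by worker $i$ in the distributed algorithm with the $i$-th sample drawn in the mini-batch algorithm so that $\bm{g}_t^{(i),\text{dist}} = \bm{g}_t^{(i),\text{mini}}$.

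The inductive step is then immediate. Assume that $\bm{x}_t$, $\bm{m}_{t-1}$, $\bm{v}_{t-1}$ agree between the two algorithms. In Algorithm~\ref{distributedadam}, the server broadcasts the \emph{same} iterate $\bm{x}_t$ to every worker before the gradient computation, so each worker $i$ evaluates its stochastic gradient at $\bm{x}_t$; moreover, the workers draw their samples independently, hence $\bm{g}_t^{(1)},\ldots,\bm{g}_t^{(s)}$ are i.i.d.\ conditional on $\bm{x}_t$, which is exactly the hypothesis on the $s$ samples in Algorithm~\ref{minibatchadam}. Under the coupling, the averaged gradients $\bar{\bm{g}}_t = \frac{1}{s}\sum_{i=1}^s \bm{g}_t^{(i)}$ are identical in both algorithms, and since the subsequent update rules for $\bm{v}_t$, $\bm{m}_t$, and $\bm{x}_{t+1}$ are coordinate-wise identical expressions in $\bar{\bm{g}}_t$, $\bm{v}_{t-1}$, $\bm{m}_{t-1}$, we conclude $\bm{v}_t, \bm{m}_t, \bm{x}_{t+1}$ agree as well. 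This closes the induction.

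There is essentially no hard step: the only point that requires a small remark is that the distributed algorithm sends $\bm{x}_t$ to every worker \emph{before} gradients are sampled and aggregates their returns \emph{before} performing the update, which is what eliminates any staleness and makes the sampled $\bm{g}_t^{(i)}$ mutually independent conditional on $\bm{x}_t$ (rather than conditional on stale worker-local iterates). Because the iterates are equal path-by-path under the coupling, every functional of the trajectory, in particular $\mathbb{E}[\|\nabla f(\bm{x}_t)\|]$ and the stationarity guarantees, transfers verbatim. In particular, Theorem~\ref{minibatchtheorem} applied to Algorithm~\ref{minibatchadam} yields the corresponding convergence statement for Algorithm~\ref{distributedadam} with $s$ workers, and in particular the linear speedup of order $\Omega(\varepsilon^{-4} s^{-1})$ iterations to reach an $\varepsilon$-stationary point.
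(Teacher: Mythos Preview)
Your proposal is correct; the paper itself provides no proof of this proposition, treating it as self-evident from comparing the two pseudocodes. Your induction-with-coupling argument is more explicit than anything in the paper and is a perfectly valid way to formalize the observation that, once the server broadcasts the common iterate $\bm{x}_t$ and aggregates the $s$ i.i.d.\ worker gradients into $\bar{\bm{g}}_t$, the update lines of Algorithm~\ref{distributedadam} are literally identical to those of Algorithm~\ref{minibatchadam}.
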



\begin{remark}
Below, we give two remarks on the above distributed Adam algorithm:\\
{\bf(i)}\ For distributed Adam, to achieve an $\varepsilon$-stationary point, $\Omega(\varepsilon^{-4}s^{-1})$ iterations are needed, which is a linear speedup with respect to the number of workers in the network, which is in the same order as that is in distributed SGD \citep{yu2019linear}.\\
{\bf(ii)}\ Distributed Adam has been popularly used for training deep neural networks. In addition, there also exist several variants of the distributed Adam algorithm, such as PMD-LAMB \citep{wang2020large}, LAMB \citep{you2019large}, LARS \citep{you2017large}, etc, for training large-scale deep neural networks. However, all these works do not establish the linear speedup property for distributed adaptive methods. 
\end{remark}

\section{Experimental Results}
\label{experimental_result}

In this section, we experimentally validate the proposed sufficient condition by applying Generic Adam and RMSProp to solve the counterexample \citep{chen2018convergence} and to train LeNet \citep{lecun1998gradient} on the MNIST dataset \citep{lecun2010mnist} and ResNet \citep{he2016deep} on the CIFAR-100 dataset \citep{krizhevsky2009learning}, respectively. Moreover, we use different batch sizes to train LeNet \citep{lecun1998gradient} on the MNIST dataset \citep{lecun2010mnist} and ResNet \citep{he2016deep} on the CIFAR-100 dataset \citep{krizhevsky2009learning}, respectively, and validate theory of the mini-batch Adam algorithm.

\subsection{Synthetic Counterexample}

\begin{figure*}[htpb]
\centering
\subfigure{\includegraphics[width=0.32\linewidth]{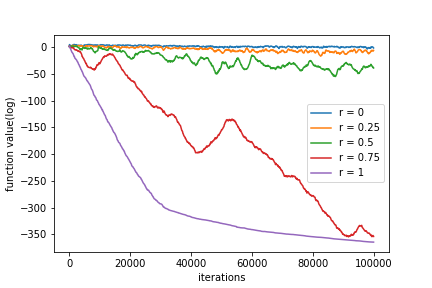}}\label{fig:synthesis_a}
\subfigure{\includegraphics[width=0.32\linewidth]{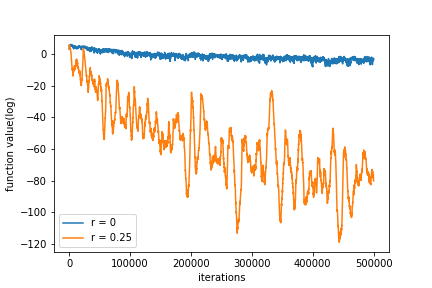}}\label{fig:synthesis_b}
\subfigure{\includegraphics[width=0.32\linewidth]{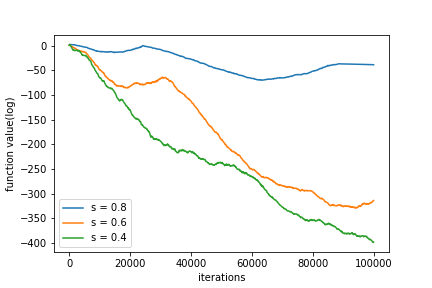}}\label{fig:synthesis_c}
\caption{The above figures are for function value with different $r$ and $s$ values, respectively.
The left figure plots the performance profiles of Generic Adam with different $r$ values, \textcolor{black}{where $\theta_t^{(r)} = 1- (0.01+0.99r^2)/t^r$}.
The middle figure plots the performance profiles of Generic Adam with $\theta_t^{(r)} = 1 - \frac{0.01}{t^r}$ and $r = 0$ and $0.25$. 
The right figure plots the performance profiles of Generic Adam with different $s$ values.
}
\label{fig:sensivitive}
\end{figure*}

In this experiment, we verify the phenomenon described in Section \ref{insights-for-divergence} how the convergence rate of Generic Adam gradually changes along a continuous path of families of parameters on the one-dimensional counterexample in \citet{chen2018convergence}:
\begin{equation}\label{counter-example}
f(x) = \mathbb{E}_\xi [\xi x^2],
\end{equation}
where $\mathbb{P}(\xi = 5.5) = 1/11$ and $\mathbb{P}(\xi=-0.5) = 10/11$.

\smallskip
\noindent
{\bf Sensitivity of parameter $r$.}\ ~
We set  $T = 10^5$, $\alpha_t = 5 / \sqrt{t}$,
$\beta = 0.9$, and $\theta_t$ as $\theta_t^{(r)} = 1 - (0.01 + 0.99r^2)/{t^r}$ with $r \in \{0,\ 0.25,\ 0.5,\ 0.75,\ 1.0\}$, respectively. 
Note that when $r=0$, Generic Adam reduces to the originally divergent Adam \citep{kingma2014adam} with $(\beta, \bar{\theta}) = (0.9, 0.99)$. When $r=1$, Generic Adam reduces to  AdaEMA \citep{chen2018convergence} with $\beta = 0.9$. 

The experimental results are shown in the left figure of Figure \ref{fig:sensivitive}. We can see that for $r=1.0, 0.75$ and $0.5$, Generic Adam is convergent. Moreover, the convergence becomes slower when $r$ decreases, which exactly matches Corollary \ref{poly-setting}. On the other hand, for $r = 0$ and $0.25$, Figure \ref{fig:sensivitive} shows that they do not converge. It seems that the divergence for $r = 0.25$ contradicts our theory. However, this is because when $r$ is very small, the $\mathcal{O}(T^{-r/4})$ convergence rate is so slow that we may not see a convergent trend in even $10^5$ iterations. Indeed, for $r = 0.25$, we actually have 
\[ \theta_t^{(0.25)} \leq 1 - (0.01 + 0.25* 0.25 * 0.99)/10^{5 * 0.25} \approx 0.9960, \]
which is not sufficiently close to 1. 
As a complementary experiment, we fix the numerator and only change $r$ when $r$ is small. We take $\alpha_t$ and $\beta_t$ as the same, while $\theta_t^{(r)} = 1 - \frac{0.01}{t^r}$ for $r = 0$ and $0.25$, respectively. 
The result is shown in the middle figure of Figures \ref{fig:sensivitive}. We can see that Generic Adam with $r= 0.25$ is indeed convergent in this situation.

\smallskip
\noindent
{\bf Sensitivity of parameter $s$.}\ ~
Now, we show the sensitivity of $s$ of the sufficient condition {(\bf SC)} by fixing $r\!=\!0.8$ and selecting $s$ from the collection $s= \{0.4, 0.6, 0.8\}$. 
The right figure in Figure \ref{fig:sensivitive} illustrates the sensitivity of parameter $s$ when Generic Adam is applied to solve the counterexample \eqref{counter-example}. The performance shows that when $s$ is fixed, smaller $r$ can lead to a faster and better convergence speed, which also coincides with the convergence results in Corollary \ref{poly-setting}. 

\subsection{LeNet on MNIST and ResNet-18 on CIFAR-100}
\label{ex52}

\begin{figure*}[htpb]
\centering
\subfigure{\includegraphics[width=0.32\linewidth]{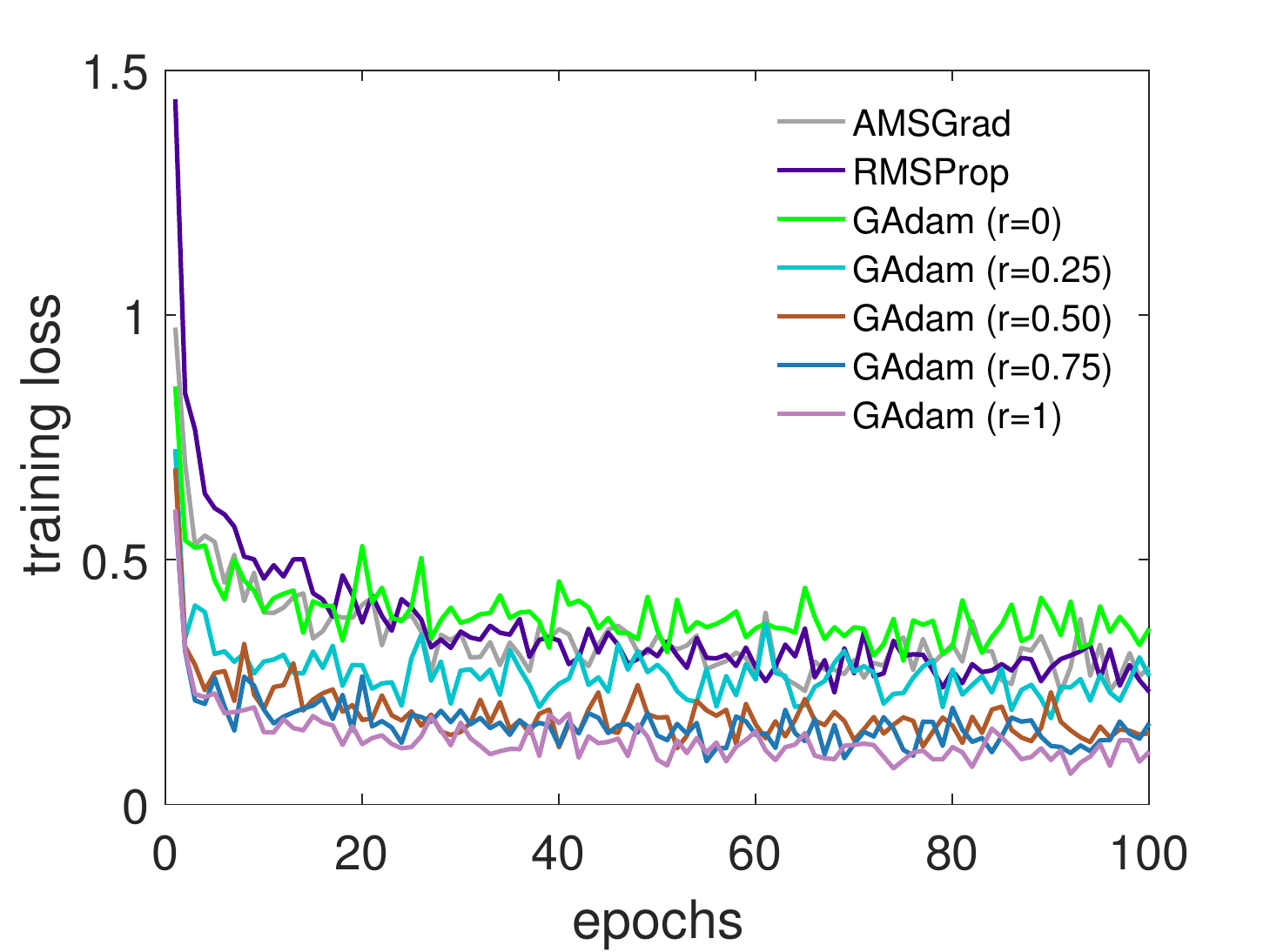}}\label{fig:Lenet_a}
\subfigure{\includegraphics[width=0.32\linewidth]{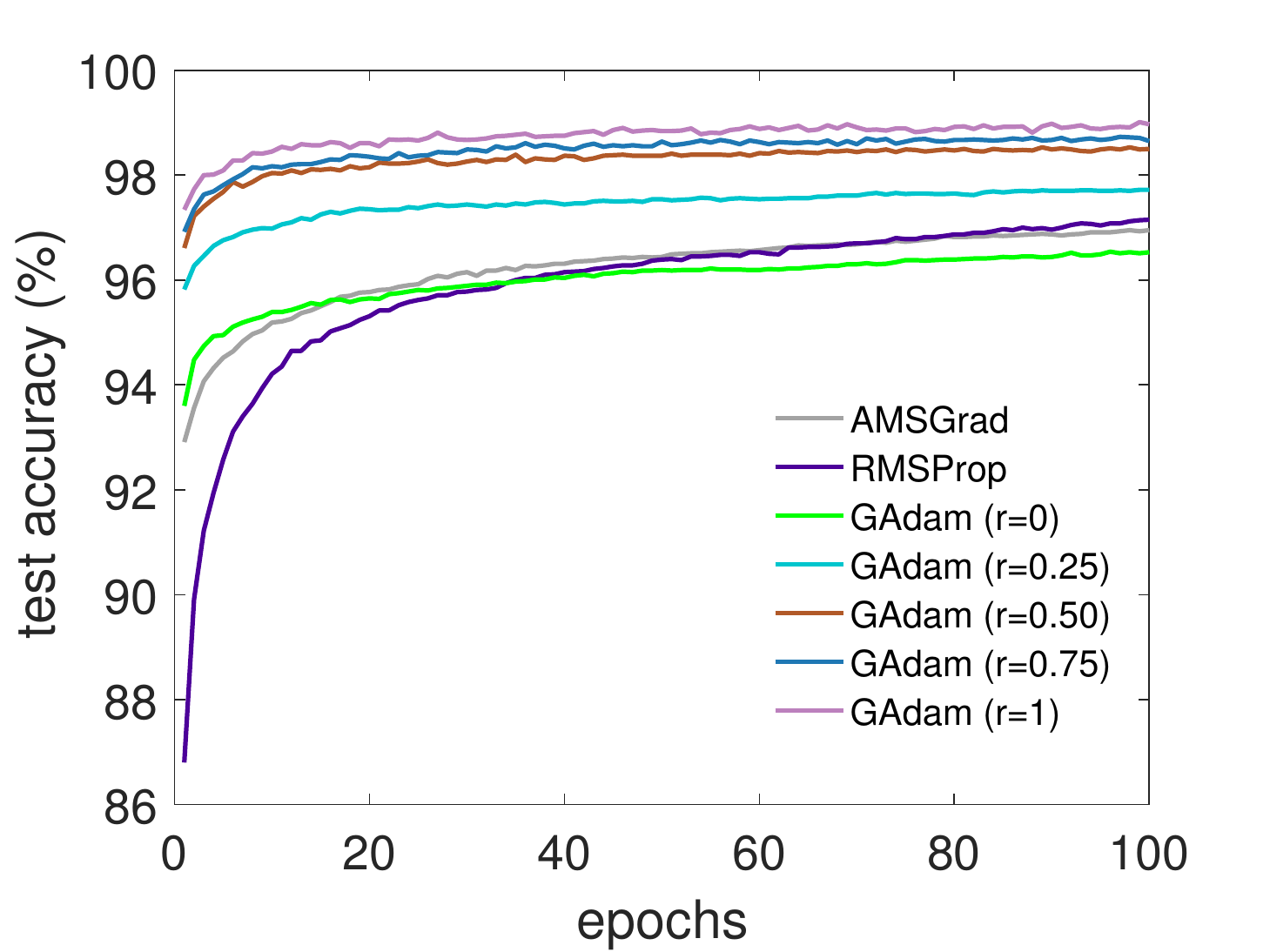}}\label{fig:Lenet_b}
\subfigure{\includegraphics[width=0.32\linewidth]{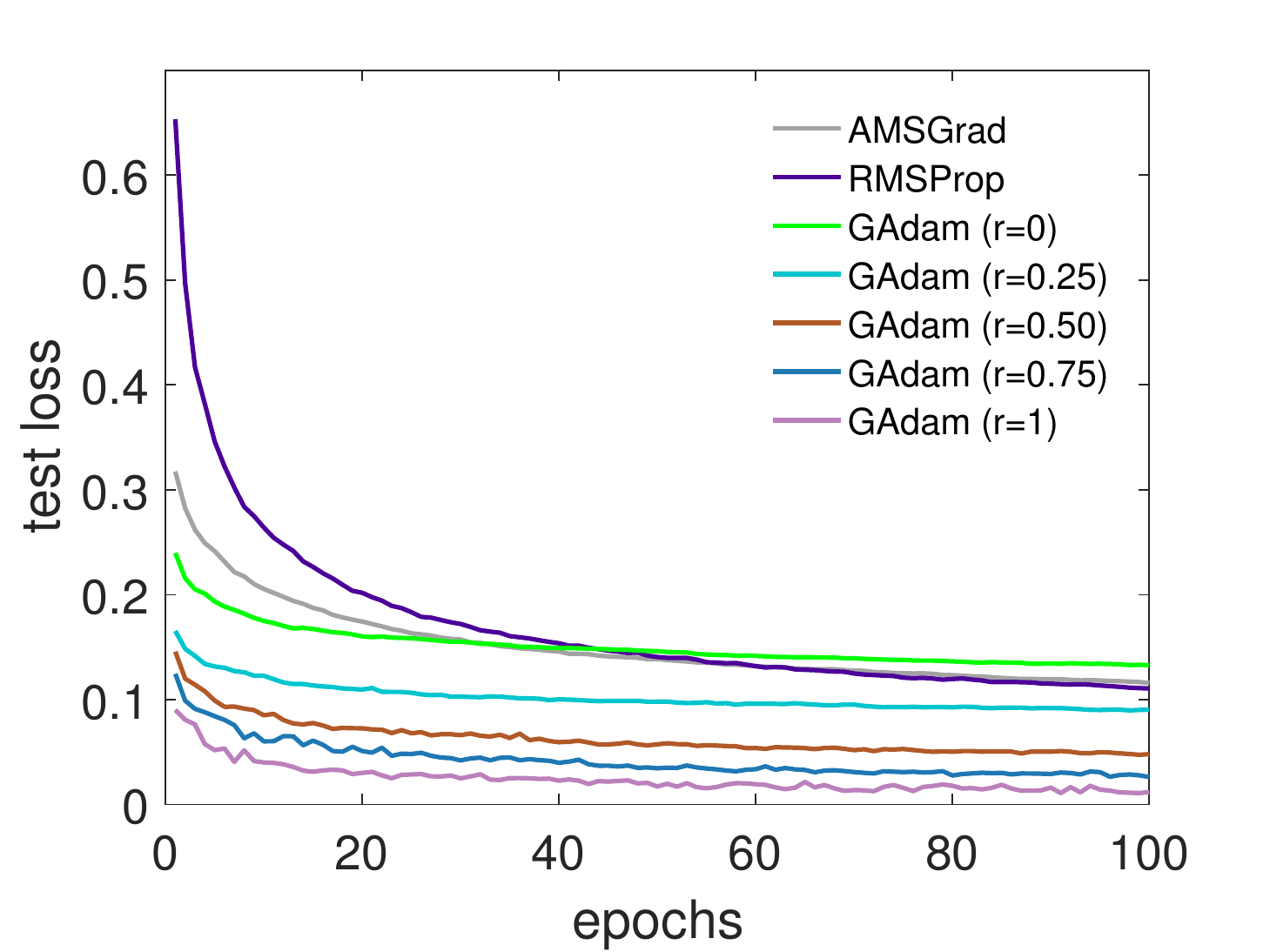}}\label{fig:Lenet_c}
\caption{Performance profiles of Generic Adam with $r=\{0, 0.25, 0.5, 0.75, 1\}$, RMSProp, and AMSGrad for training LeNet on the MNIST dataset. Figures (a), (b), and (c) illustrate training loss vs. epochs, test accuracy vs. epochs, and test loss vs. epochs, respectively.}
\label{fig:LeNet}
\vspace{-0.1cm}
\centering
\subfigure{\includegraphics[width=0.32\linewidth]{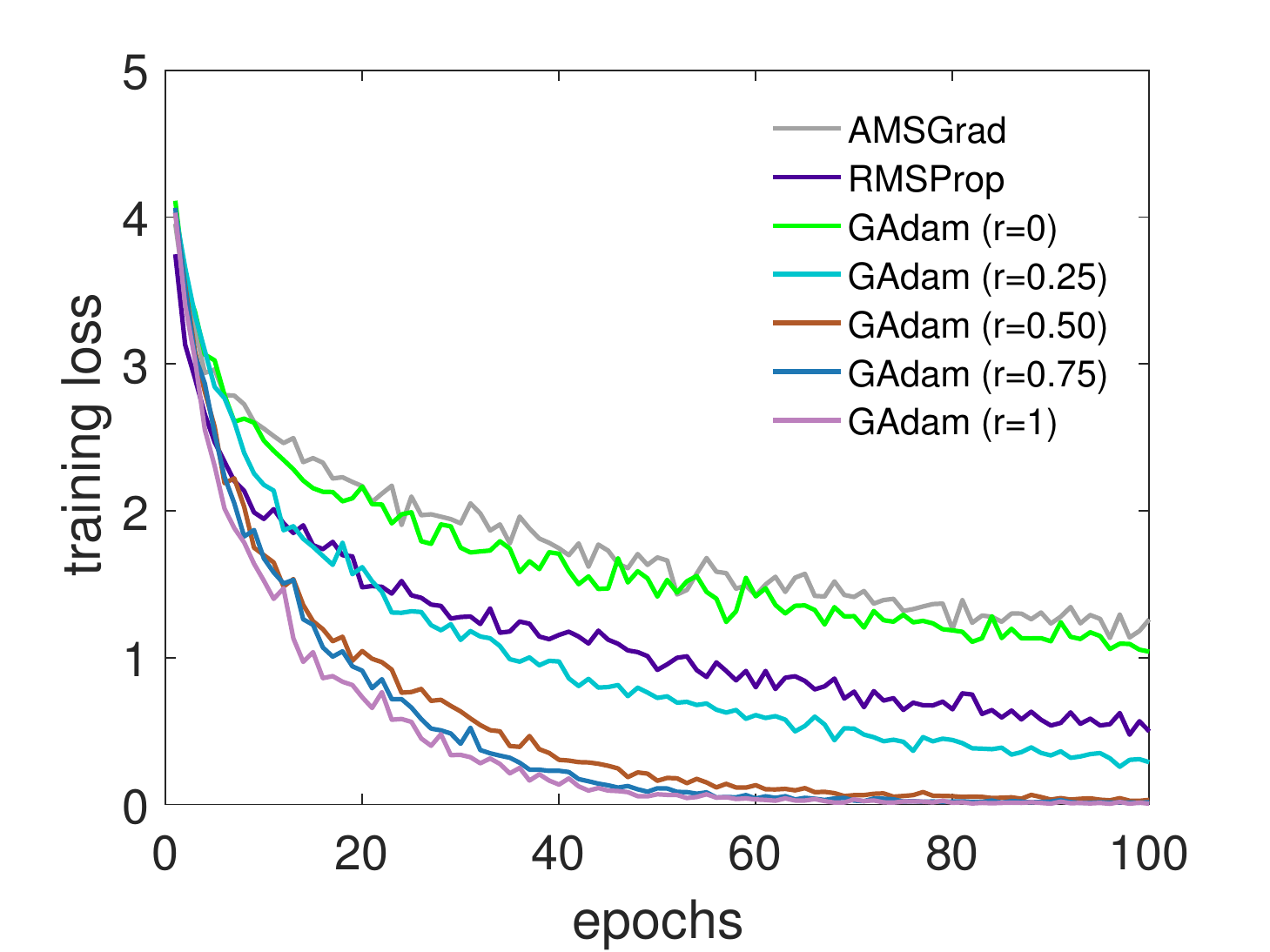}}\label{fig:resnet_a}
\subfigure{\includegraphics[width=0.32\linewidth]{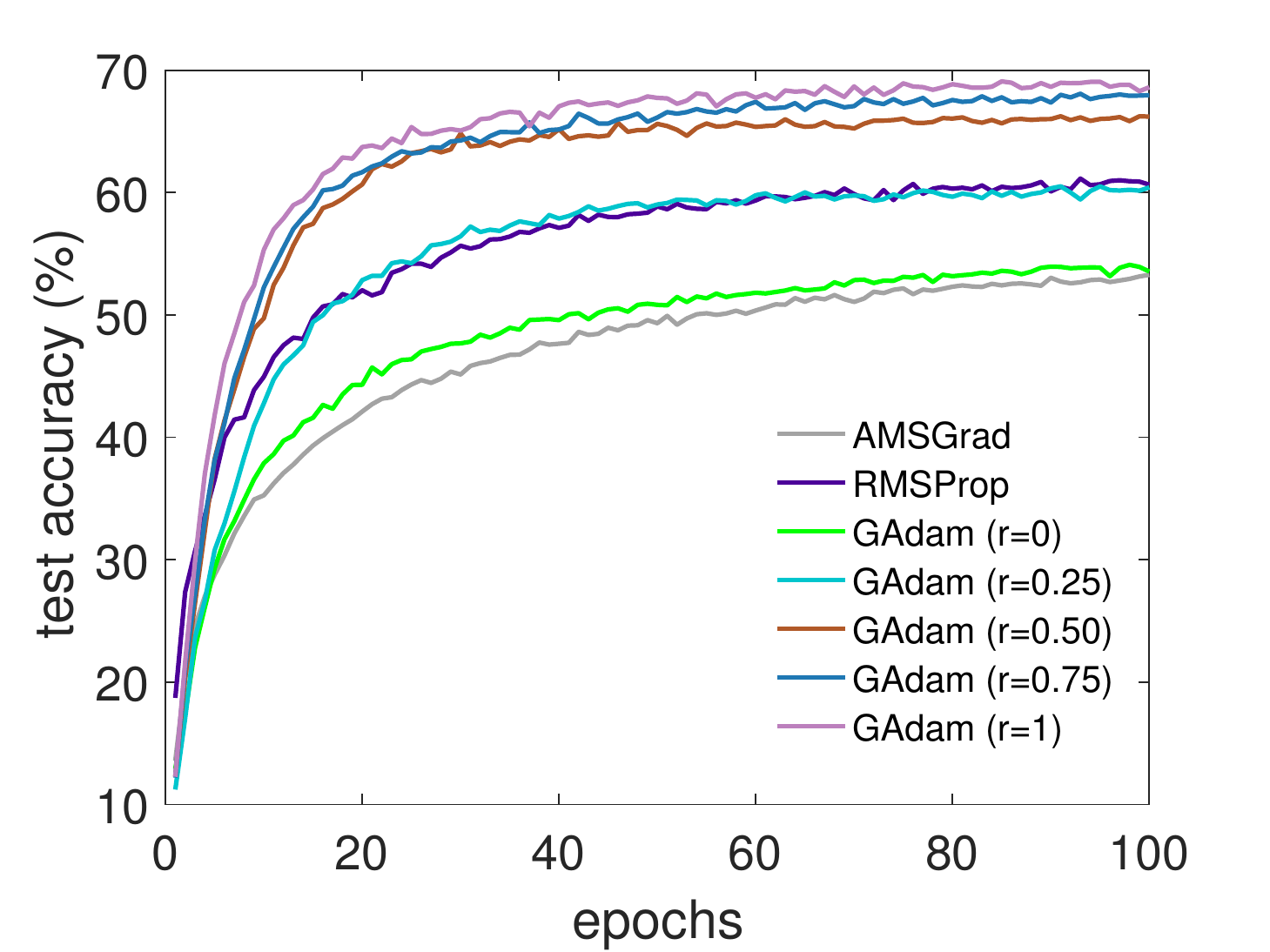}}\label{fig:resnet_b}
\subfigure{\includegraphics[width=0.32\linewidth]{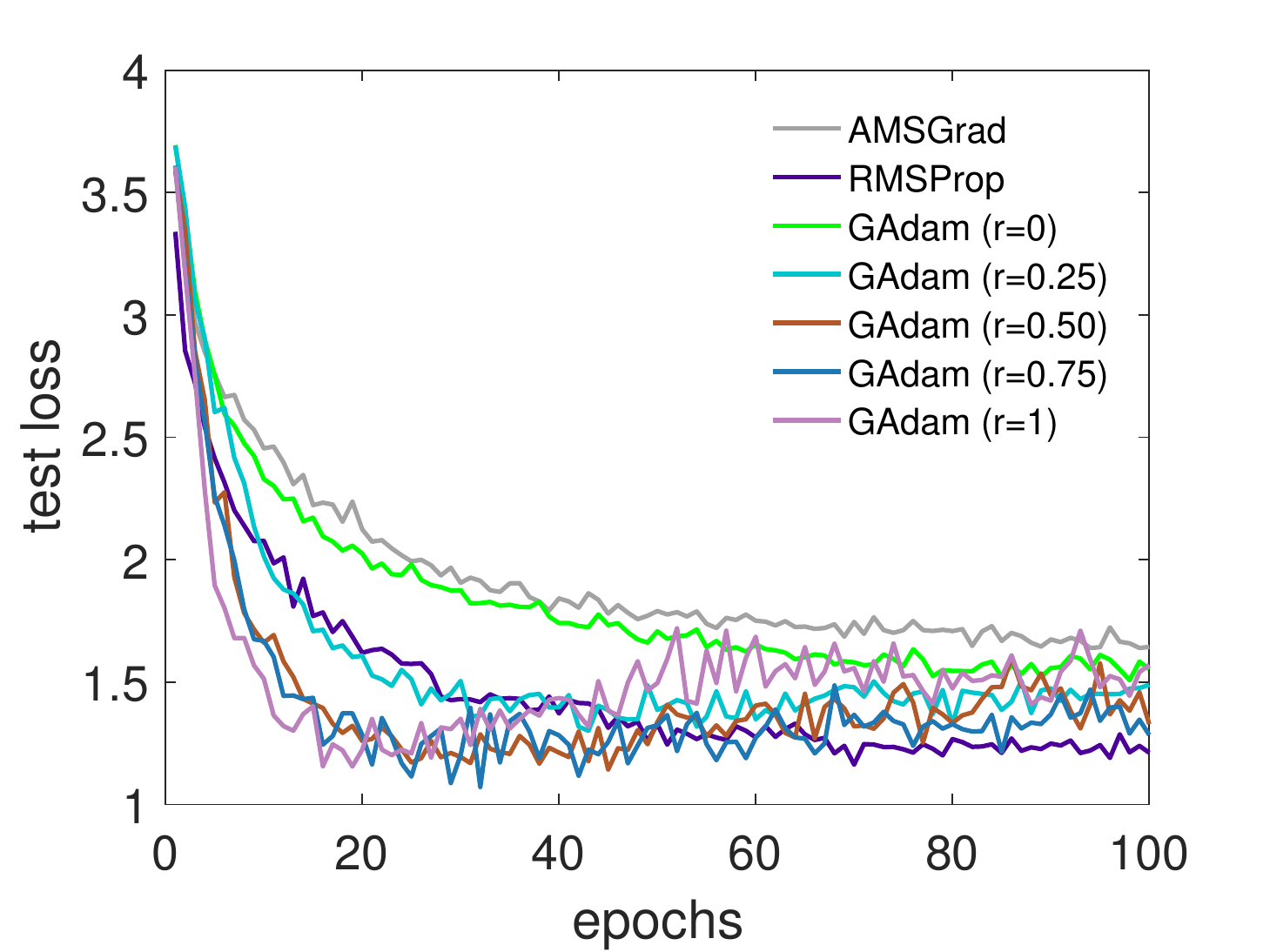}}\label{fig:resnet_c}
\caption{Performance profiles of Generic Adam with $r=\{0, 0.25, 0.5, 0.75, 1\}$, RMSProp, and AMSGrad for training ResNet on the CIFAR-100 dataset. Figures (a), (b), and (c) illustrate training loss vs. epochs, test accuracy vs. epochs, and test loss vs. epochs, respectively.}
\label{fig:ResNet}
\vspace{-0.3cm}
\end{figure*}

In this subsection, we apply Generic Adam 
to train LeNet on the MNIST dataset and ResNet-18 on the CIFAR-100 dataset, respectively, to validate the convergence rates in Corollary \ref{poly-setting}. 
Meanwhile, the comparisons between Generic Adam and AMSGrad \citep{Reddi2018on,chen2018convergence} are also provided to distinguish their differences in training deep neural networks. 
We illustrate the performance profiles in three aspects: training loss vs. epochs, test loss vs. epochs, and test accuracy vs. epochs, respectively. 
MNIST \citep{lecun2010mnist} is composed of ten classes of digits among $\{0, 1, 2, \ldots, 9\}$, which includes 60,000 training examples and 10,000 validation examples. The dimension of each example is $28 \times 28$. 
CIFAR-100 \citep{lecun2010mnist} is composed of 100 classes of $32\times 32$ color images. Each class includes 6,000 images. 
Besides, these images are divided into 50,000 training examples and 10,000 validation examples.
LetNet \citep{lecun1998gradient} used in the experiments is a five-layer convolutional neural network with ReLU activation function whose detailed architecture is described in~\citep{lecun1998gradient}. The batch size is set as $64$. The training stage lasts for $100$ epochs in total. No $\ell_2$ regularization on the weights is used.
ResNet-18 \citep{he2016deep} is a ResNet model containing 18 convolutional layers  for CIFAR-100 classification~\citep{he2016deep}. Input images are down-scaled to $1/8$ of their original sizes after the 18 convolutional layers and then fed into a fully-connected layer for the 100-class classification. The output channel numbers of 1-3 conv layers, 4-8 conv layers, 9-13 conv layers, and 14-18 conv layers are $64$, $128$, $256$, and $512$, respectively.  The batch size is $64$. The training stage lasts for $100$ epochs in total. No $\ell_2$ regularization on the weights is used.

In the experiments, for Generic Adam, we set $\theta_{t}^{(r)} = 1 - (0.001 + 0.999r)/t^r$ with $r \in \{0, 0.25, 0.5, 0.75,1\}$ and $\beta_t =0.9$, respectively; 
for RMSProp, we set $\beta_{t}=0$ and $\theta_{t}=1-\frac{1}{t}$ along with the parameter settings in \citet{mukkamala2017variants}. 
For fairness, the base learning rates $\alpha_t$ in Generic Adam, RMSProp, and AMSGrad are all set as $0.001/\sqrt{t}$. 
Figures \ref{fig:LeNet} and \ref{fig:ResNet} illustrate the results of Generic Adam with different $r$, RMSProp, and AMSGrad for training  LeNet on MNIST and training ResNet-18 on CIFAR-100, respectively. 
We can see that AMSGrad and Adam (Generic Adam with $r=0$) decrease the training loss slowest and show the worst test accuracy among the compared optimizers. 
One possible reason is due to the use of constant $\theta$ in AMSGrad and original Adam.
By Figures \ref{fig:LeNet} and \ref{fig:ResNet}, we can observe that the convergences of Generic Adam are extremely sensitive to the choice of parameter $\theta_{t}$. 
Larger $r$ can contribute to a faster convergence rate of Generic Adam, which corroborates the theoretical result in Corollary \ref{poly-setting}. 
Additionally, the test accuracies in Figures \ref{fig:LeNet}(b) and \ref{fig:ResNet}(b) indicate that a smaller training loss can contribute to a higher test accuracy for Generic Adam.

\subsection{Experiments on Practical Adam}
\textcolor{black}{
\subsubsection{Ablation Study between Batchsize and Optimal Learning Rate}
In this section, we apply mini-batch Adam algorithms and mini-batch SGD algorithms to the following quadratic minimization task:
\[
\min_x \mathbb{E}_{\xi \sim \mathcal{N}(0,100I)} \left[x^TAx - b^Tx + \xi^Tx\right],
\]
where, for simplicity, $A \in \mathbb{R}^{3\times 3}$ and $b \in \mathbb{R}^3$.
We optimize $x$ with batch size 1 to 320, and grid search the best learning rate from $\{k/2000| k = 1,2,\cdots, 2000\}$. For each pair of batch size and learning rate, we randomly sample 500 trials and take the average gradient norm as the criteria. Fig. \ref{toy_prac} shows the result of the best learning rate and averaged gradient norm for different batch sizes after 200 optimization iterations. 
}
\textcolor{black}{
From the figure, we can verify that when the batch size becomes larger, the optimal learning rate for SGD and SGD-momentum increases a lot (from 0.005 to 0.04). Meanwhile, with the adaptive learning rate, the optimal learning rate does not change too much (between 0.02 to 0.04). Hence, it shows the benefit of adaptive learning rate methods compared to SGD and verifies Remark \ref{Remark_minibatch}.
}

\begin{figure}[ht]
    \centering
    \includegraphics[width=0.4\linewidth]{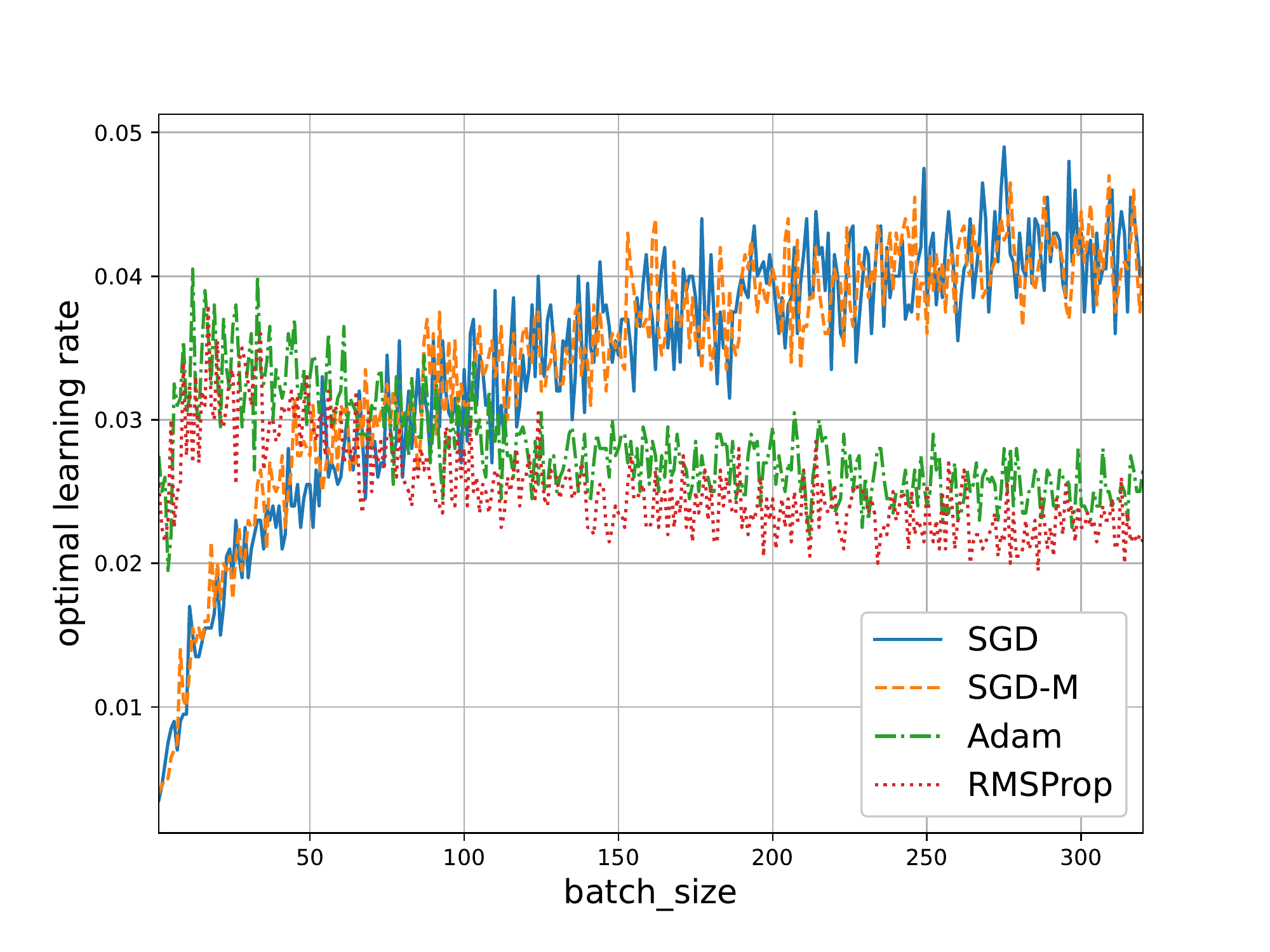}
    \includegraphics[width=0.4\linewidth]{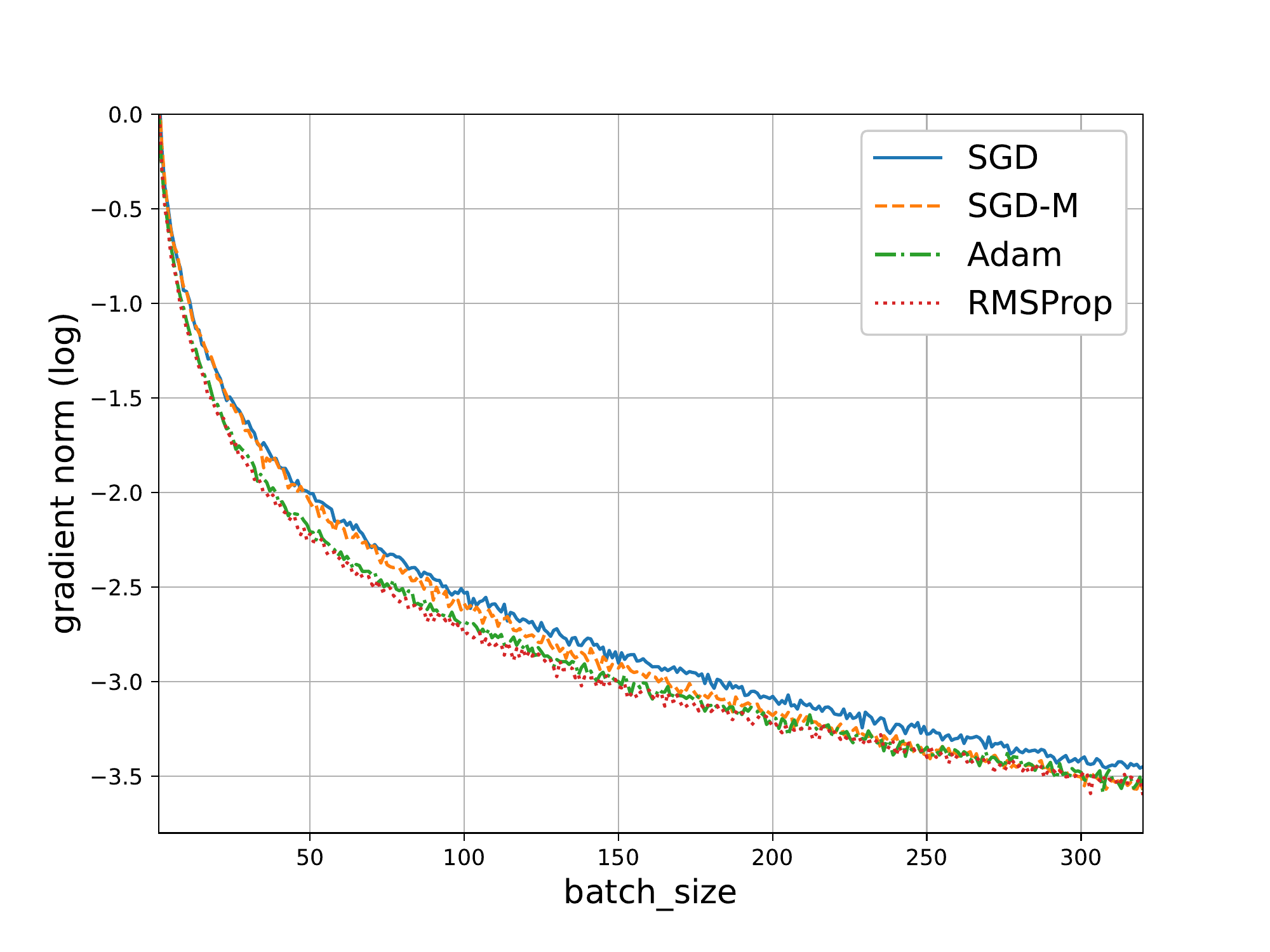}
    \caption{\textcolor{black}{Performance profiles of optimal learning rate v.s. batch size and optimality gap v.s. batch size. }}
    \label{toy_prac}
\end{figure}

\subsubsection{Vision Tasks}
In this section we apply the mini-batch Adam algorithm to train LeNet on the MNIST dataset and ResNet-18 on the CIFAR-100 dataset. Datasets and network architecture are the same as they are described in Section \ref{ex52}. But instead of using $\alpha_t = \alpha/\sqrt{t}$, $\theta_t = 1-\theta/t$, we set $\beta_t = 0.9$ and $\theta_t = 0.99$ for Adam and AMSGrad, and set $\beta_t = 0.9$ for RMSProp. We use different batchsizes \{32, 64, 128\} to train networks. Besides, when training ResNet-18 on the CIFAR100 dataset, we use an $\ell_2$ regularization on weights, the coefficient of the regularization term to $5e-4$. We use grid search in $[1e-2,\ 5e-3,\ 1e-3,\ 5e-4,\ 1e-4]$ for $\alpha_t$ with respect to test accuracy. In addition, when training ResNet-18 on the CIFAR100 dataset, $\alpha_t$ will reduce to $0.2\times \alpha_t$ every 19550 iterations (50 epochs for the 128 batchsize setting), which (learning rate decay) is commonly used in practice . The experimental results are shown in Figure \ref{fig:LeNet1} and Figure \ref{fig:ResNet1}. It can be shown that larger batchsize can give lower training loss in all experiments. However, large batchsize for training does not imply higher test accuracy or lower test loss, which needs to be further explored and examined.

\begin{figure*}[htpb]
\centering
\subfigure{\includegraphics[width=0.32\linewidth]{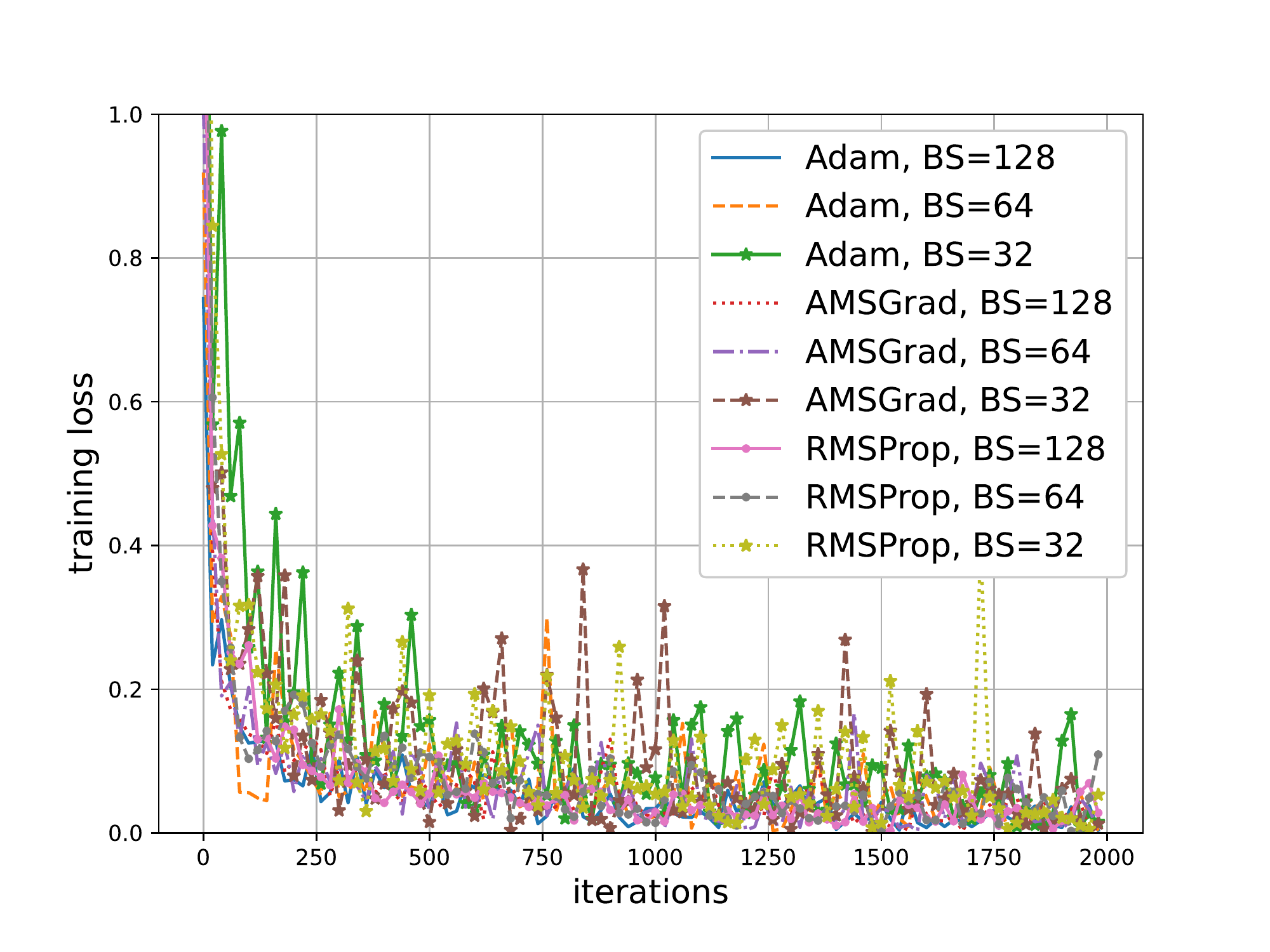}}
\subfigure{\includegraphics[width=0.32\linewidth]{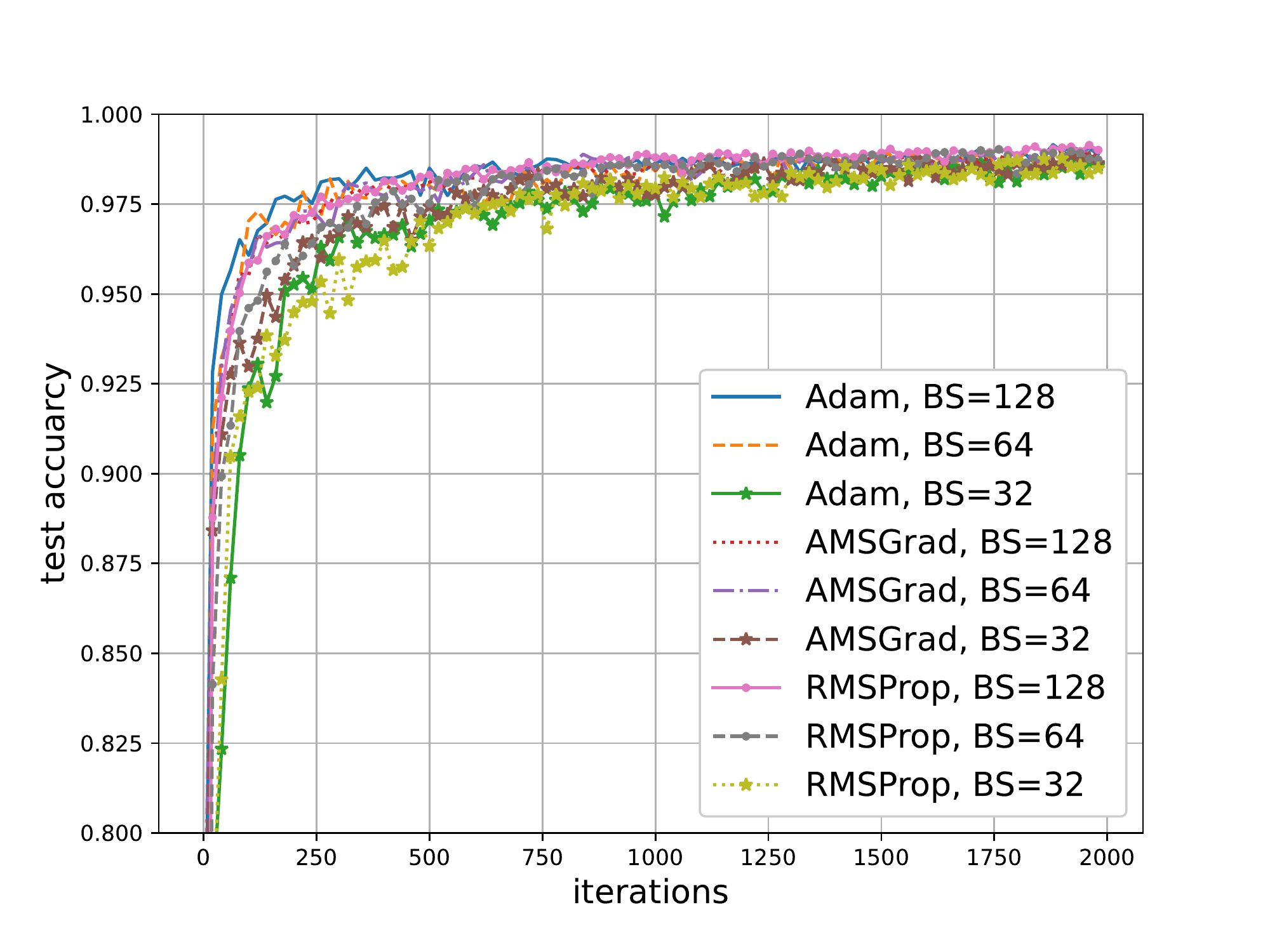}}
\subfigure{\includegraphics[width=0.32\linewidth]{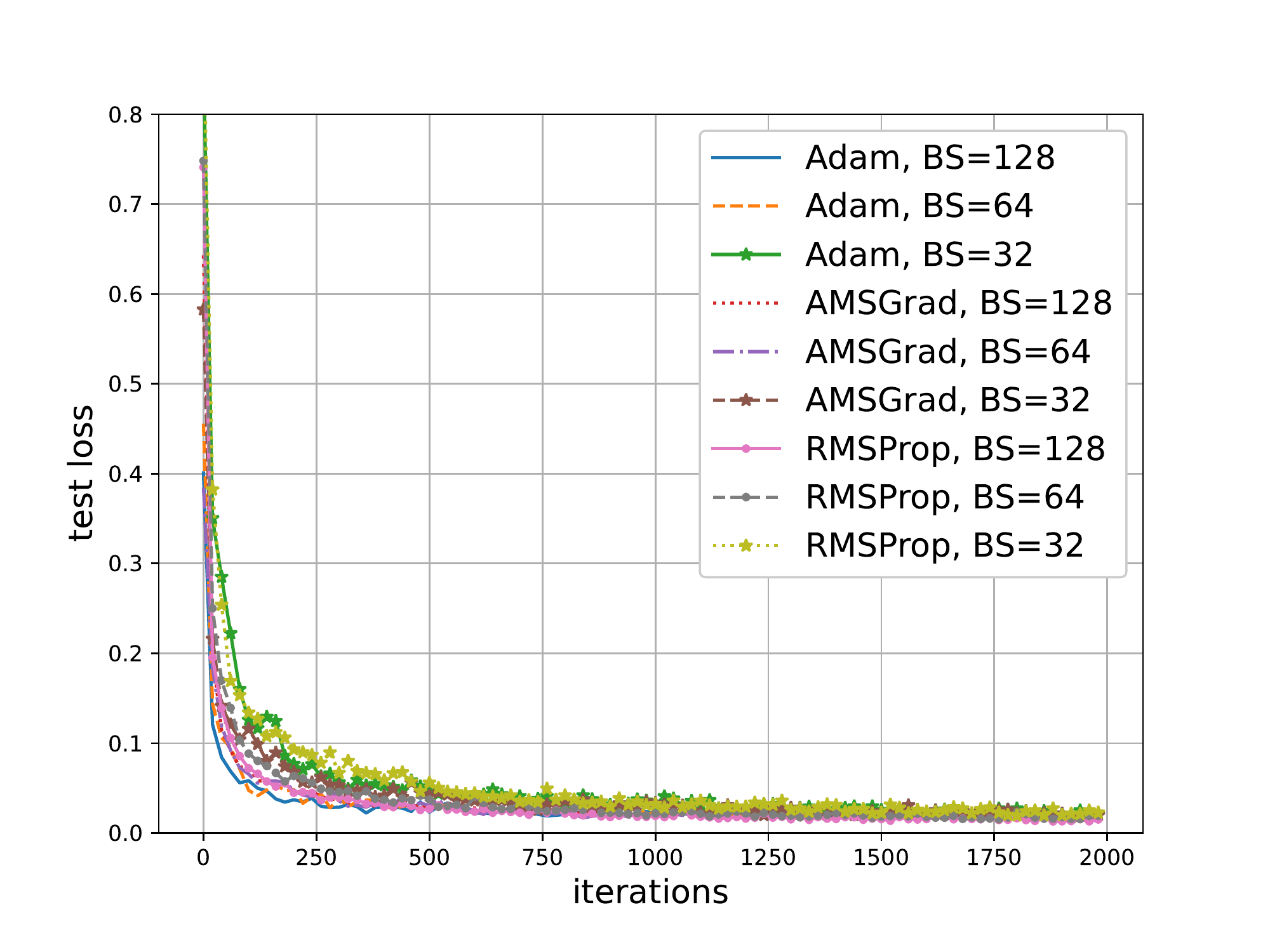}}
\caption{Performance profiles of mini-batch Adam, RMSProp and AMSGrad on MNIST, with batchsize = $\{32,\ 64,\ 128\}$.}
\label{fig:LeNet1}
\end{figure*}

\begin{figure*}[htpb]
\centering
\subfigure{\includegraphics[width=0.32\linewidth]{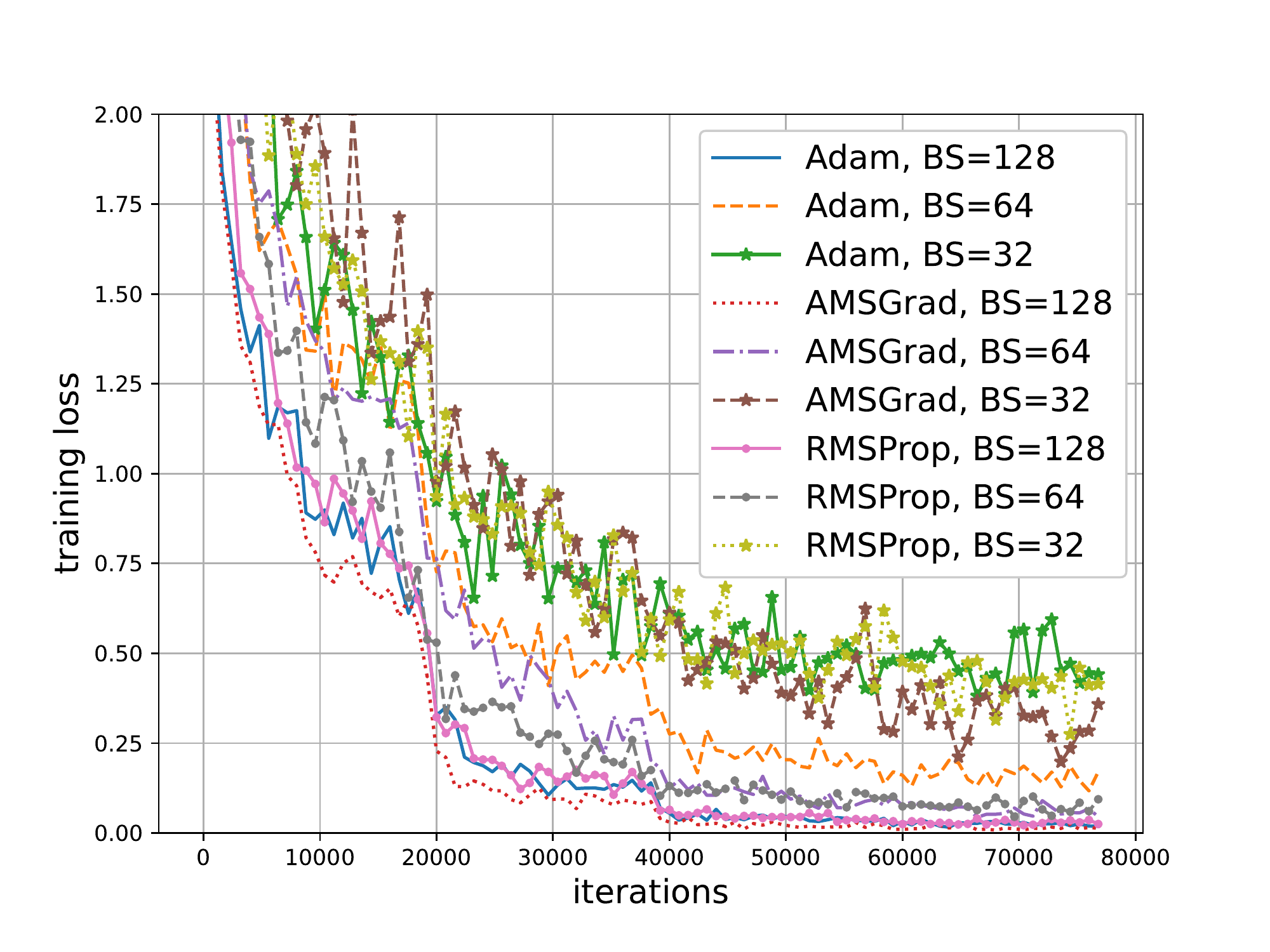}}
\subfigure{\includegraphics[width=0.32\linewidth]{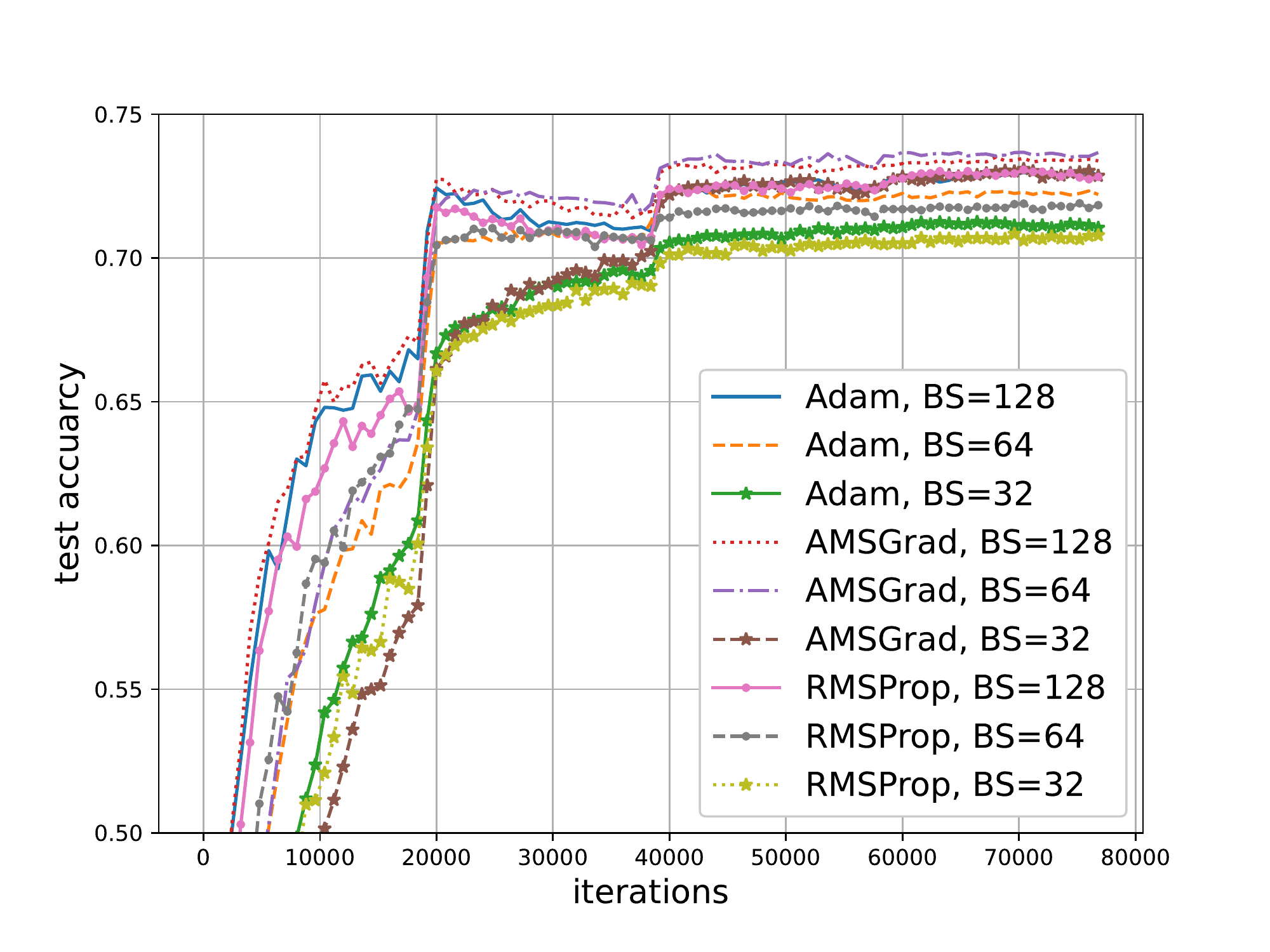}}
\subfigure{\includegraphics[width=0.32\linewidth]{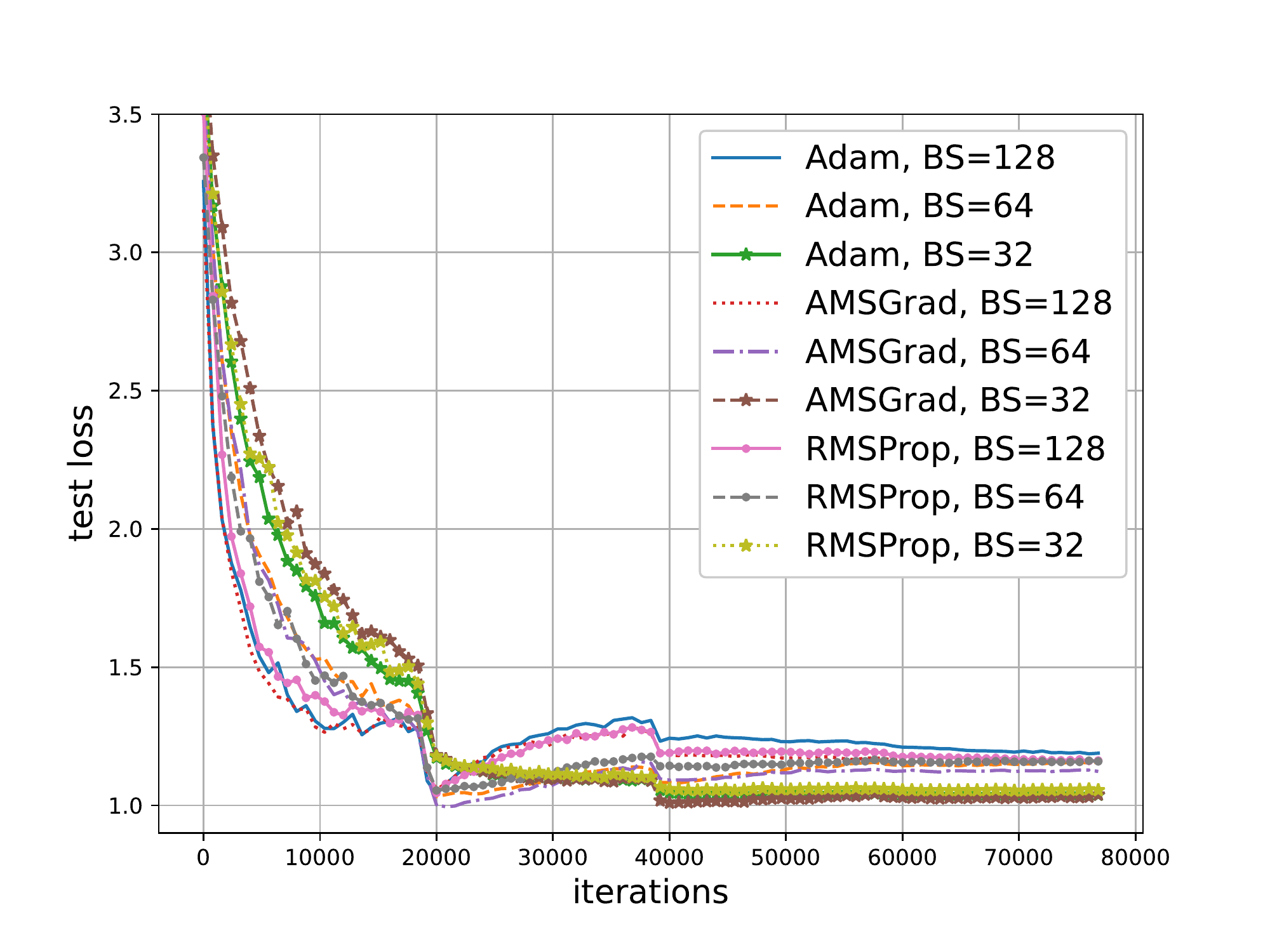}}
\caption{Performance profiles of mini-batch Adam, RMSProp and AMSGrad on CIFAR100, with batchsize = $\{32,\ 64,\ 128\}$.}
\label{fig:ResNet1}
\vspace{-0.3cm}
\end{figure*}

\subsubsection{Transformer XL on WikiText-103}
\textcolor{black}{Also, we applied mini-batch Adam to train a base model of Transformer XL \citep{dai2019transformer} on the dataset WikiText-103 \citep{merity2016pointer}. The base model of Transformer XL contains 16 self-attention layers. In each self-attention layer, there are 10 heads, and the encoding dimension of each head is set to 41. The WikiText-103 dataset is a collection of over 100 million tokens extracted from the set of verified ‘Good’ and ‘Featured’ articles on Wikipedia.   We adopt the same parameter settings provided by the authors but test on batch size $\{30,60,120\}$. The results are shown in Figure \ref{fig:Transformer}. Also, as it is shown in the figure, a larger batch size can give a lower training loss in all experiments. Meanwhile, in the figure, AMSGrad and Adam achieve much better performance than SGD-M, which shows the benefit of using adaptive methods instead of SGD-based methods, just like what was mentioned in \cite{zhang2019adaptive}. }

\begin{figure*}[htpb]
\centering
\subfigure{\includegraphics[width=0.32\linewidth]{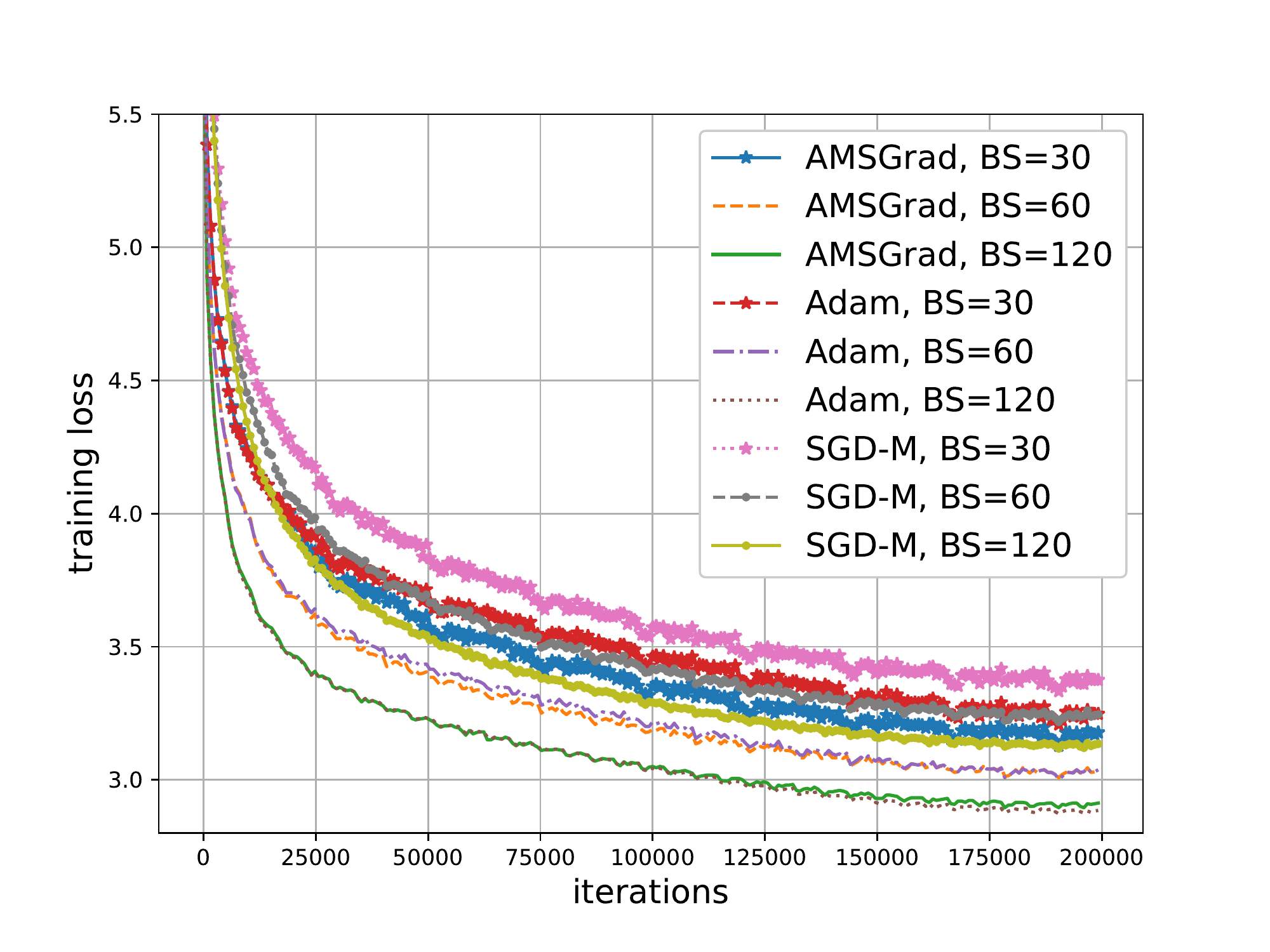}}
\subfigure{\includegraphics[width=0.32\linewidth]{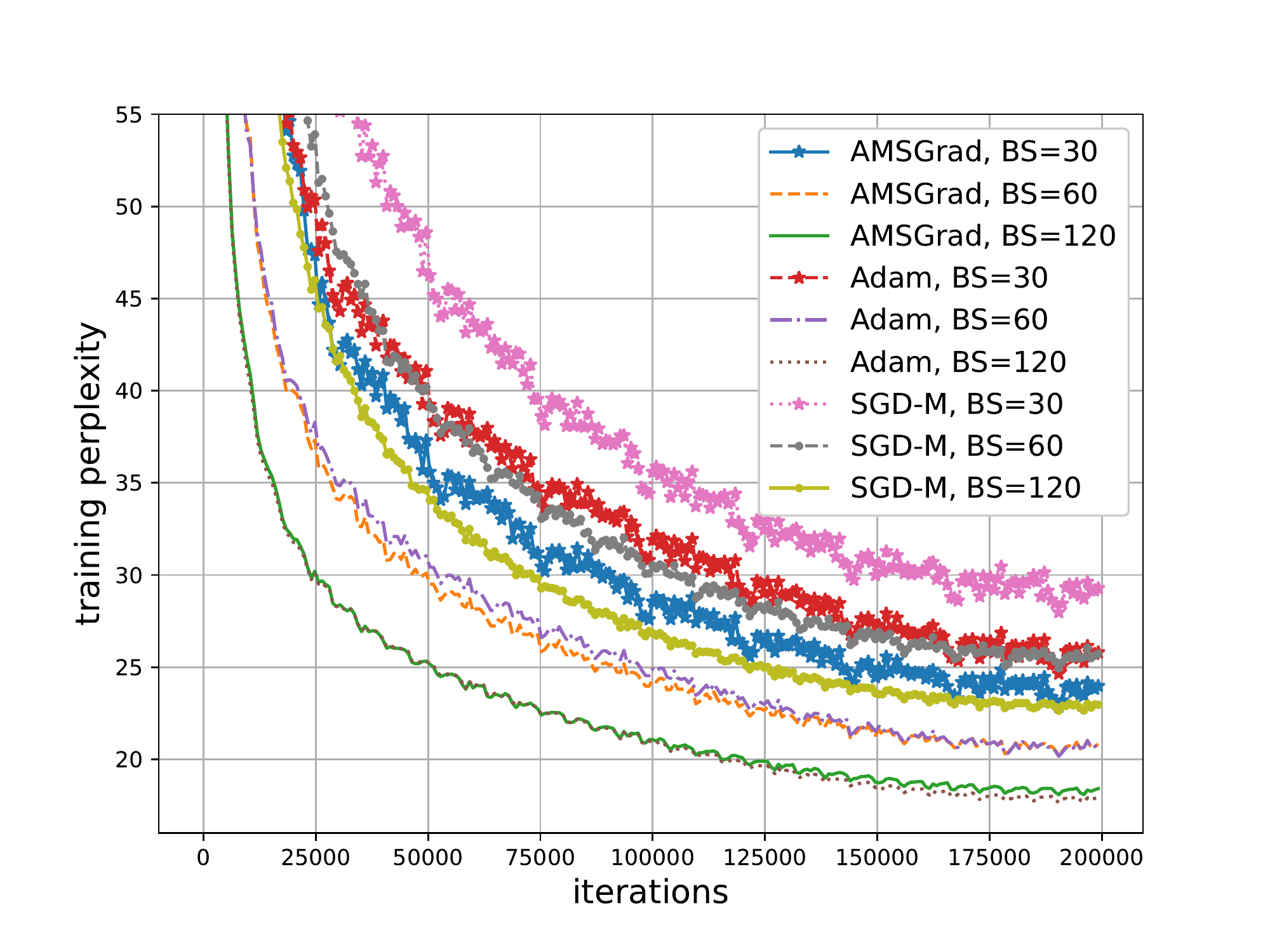}}
\subfigure{\includegraphics[width=0.32\linewidth]{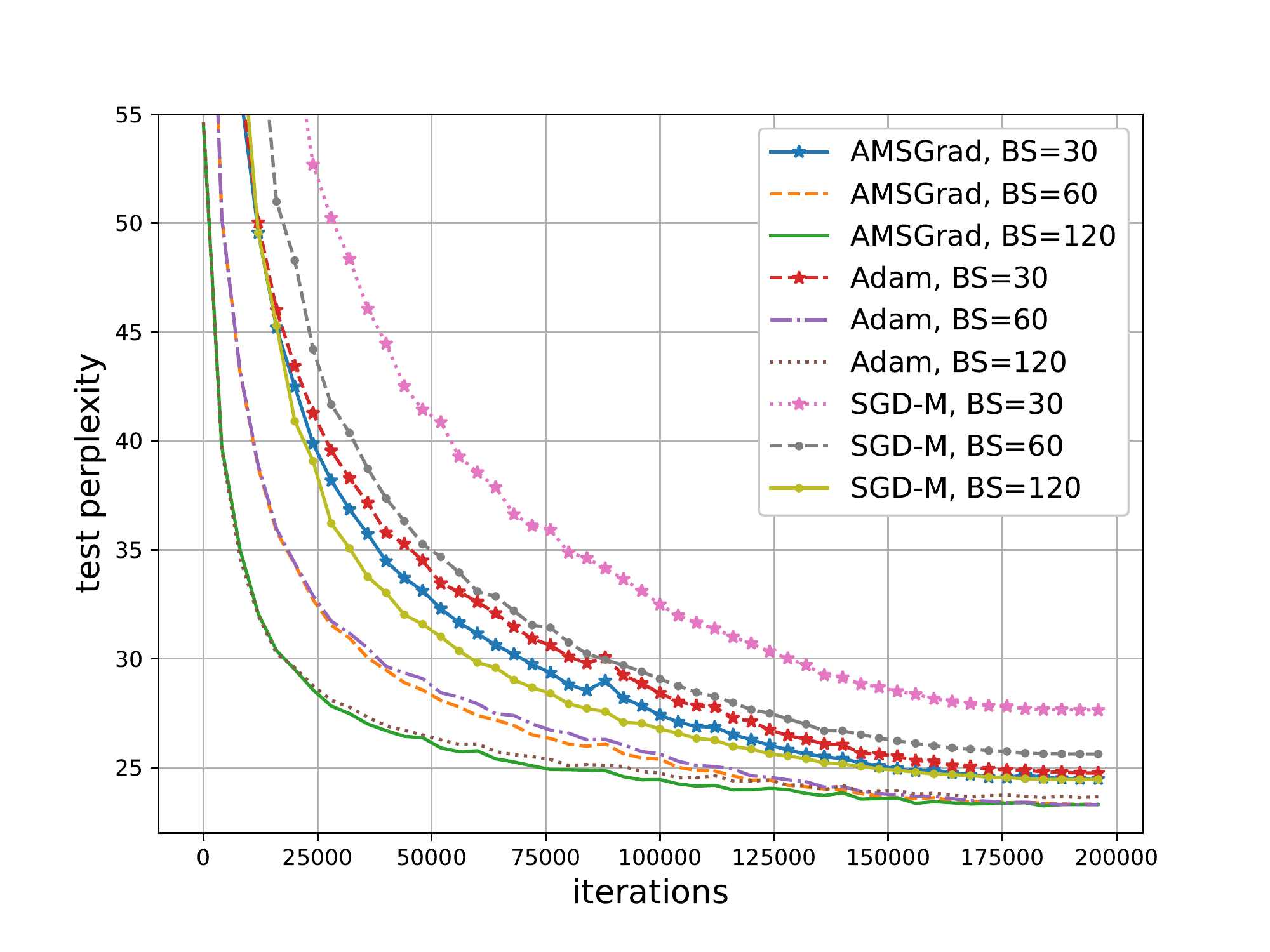}}
\caption{\textcolor{black}{Performance profiles of mini-batch Adam, SGD-M and AMSGrad on WikiText-103, with batchsize = $\{30,\ 60,\ 120\}$.}}
\label{fig:Transformer}
\vspace{-0.3cm}
\end{figure*}

\section{Conclusions}
In this work, we delved into the convergences of Adam, and presented an easy-to-check sufficient condition to guarantee their convergences in the non-convex stochastic setting. This sufficient condition merely depends on the base learning rate and the linear combination parameter of second-order moments. Relying on this sufficient condition, we found that the divergences of Adam are possibly due to the incorrect parameter settings. Besides, when encountering the practice Adam, we theoretically showed that the number of samples will linearly speed up the convergence in both the mini-batch setting and distributed setting, which closes the gap between theory and practice. At last, the correctness of theoretical results has also been verified via the counterexample and training deep neural networks on real-world datasets.

\section*{Acknowledgement}
This work is supported by the Major Science and Technology Innovation 2030 “Brain Science and Brain-like Research” key project (No. 2021ZD0201405).


\appendix

\section{Key Lemma to prove Theorem \ref{convergence_in_expectation} and Theorem \ref{practical_adam}}
In this section we provide the necessary lemmas for the proofs of Theorem \ref{convergence_in_expectation} and Theorem \ref{practical_adam}. 
First, we give some notations for simplifying the following proof.

\paragraph{Notations} We use bold letters to represent vectors. The $k$-th component of a vector $\bm{v}_t$ is denoted as ${v}_{t, k}$. The inner product between two vectors $\bm{v}_t$ and $\bm{w}_t$ is denoted as $\langle \bm{v}_t, \bm{w}_t \rangle$. Other than that, all computations that involve vectors shall be understood in the component-wise way. We say a vector $\bm{v}_t \geq 0$ if every component of $\bm{v}_t$ is non-negative, and $\bm{v}_t \geq \bm{w}_t$ if $v_{t,k} \geq w_{t,k}$ for all $k=1, 2, \ldots, d$. The $\ell_1$ norm of a vector $\bm{v}_t$ is defined as $\norm{\bm{v}_t}_1 = \sum_{k=1}^d |{v}_{t, k}|$. The $\ell_2$ norm is defined as $\norm{\bm{v}_t}^2 =\langle \bm{v}_t, \bm{v}_t \rangle = \sum_{k=1}^d |{v}_{t,k}|^2$. Given a positive vector $\bm{\hat{\eta}}_t$, it will be helpful to define the following weighted norm: $\norm{\bm{v}_t}^2_{\bm{\eta}_t} = \langle \bm{v}_t, \bm{\hat{\eta}}_t \bm{v}_t \rangle = \sum_{k=1}^d \hat{\eta}_{t, k}|{v}_{t, k}|^2$.

\begin{lemma}\label{lem5}
Given $S_0 > 0$ and a non-negative sequence $\{s_t\}$, let $S_t = S_0 + \sum_{i=1}^t s_i$ for $t \geq 1$. Then the following estimate holds 
\begin{equation}
\sum_{t=1}^T \frac{s_t}{S_t} \leq \log(S_T) - \log(S_0).
\end{equation}
\end{lemma}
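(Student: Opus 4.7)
The plan is to establish a per-term bound $s_t/S_t \leq \log(S_t) - \log(S_{t-1})$ and then telescope. The underlying idea is the standard concavity inequality for the logarithm: for any $x \geq y > 0$, we have $\log(x) - \log(y) \geq 1 - y/x$. This can be seen either by integrating $1/u$ from $y$ to $x$ and bounding the integrand below by $1/x$, or by noting that $\log(1+u) \geq u/(1+u)$ for $u \geq -1$ and applying it with $1 + u = x/y$.

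With this inequality in hand, I would set $x = S_t$ and $y = S_{t-1}$; since $s_t \geq 0$, we have $S_t \geq S_{t-1} > 0$, so the hypothesis is satisfied. The inequality then reads
\begin{equation*}
\log(S_t) - \log(S_{t-1}) \;\geq\; 1 - \frac{S_{t-1}}{S_t} \;=\; \frac{S_t - S_{t-1}}{S_t} \;=\; \frac{s_t}{S_t},
\end{equation*}
where the last equality uses the definition $S_t = S_0 + \sum_{i=1}^t s_i$, so $S_t - S_{t-1} = s_t$.

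Summing this inequality from $t = 1$ to $T$, the right-hand side is exactly $\sum_{t=1}^T s_t/S_t$, while the left-hand side telescopes to $\log(S_T) - \log(S_0)$. This yields the claimed bound. There is no real obstacle here: the only minor subtlety is to ensure that the base-case term $S_0$ is strictly positive (guaranteed by hypothesis) so that all logarithms are well-defined, and that $s_t \geq 0$ to make $S_t$ non-decreasing so the concavity inequality applies in the stated direction. The entire proof is thus a two-line argument: invoke concavity of $\log$ to bound each ratio by a log-difference, then telescope.
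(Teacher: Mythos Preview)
Your proof is correct and essentially the same as the paper's. The paper interprets $\sum_{t=1}^T s_t/S_t = \sum_{t=1}^T (S_t - S_{t-1})/S_t$ as a lower Riemann sum for $\int_{S_0}^{S_T} \frac{1}{x}\,dx$ using that $1/x$ is decreasing; your per-term inequality $\log(S_t)-\log(S_{t-1}) \geq (S_t-S_{t-1})/S_t$ is exactly this Riemann-sum comparison on the subinterval $[S_{t-1},S_t]$, followed by telescoping, so the two arguments coincide.
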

\begin{proof}
The finite sum $\sum_{t=1}^T {s_t}/{S_t}$ can be interpreted as a Riemann sum  $\sum_{t=1}^T (S_t - S_{t-1})/S_t.$
Since $1/x$ is decreasing on the interval $(0, \infty)$, we have 
$$\sum_{t=1}^T \frac{S_t-S_{t-1}}{S_t} \leq \int_{S_0}^{S_T} \frac{1}{x} d x = \log(S_T) - \log(S_0).$$
The proof is completed.
\end{proof}

\begin{lemma}[Abel's Lemma - Summation by parts]\label{lem2-003}
Let $\{u_t\}$ and $\{s_t\}$ be two non-negative sequences. Let $S_t = \sum_{i=1}^t s_i$ for $t \geq 1$. Then 
\begin{equation}
\sum_{t=1}^T u_t s_t = \sum_{t=1}^{T-1} (u_t - u_{t+1})S_t + u_T S_T.
\end{equation}
\end{lemma}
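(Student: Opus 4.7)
The plan is to prove this identity by the standard trick of writing $s_t$ as a telescoping difference and then reindexing. Specifically, I would set the convention $S_0 = 0$ so that $s_t = S_t - S_{t-1}$ for every $t \geq 1$. Substituting into the left-hand side gives
\[
\sum_{t=1}^T u_t s_t = \sum_{t=1}^T u_t S_t - \sum_{t=1}^T u_t S_{t-1}.
\]
The first step is therefore purely definitional, and after this rewriting the only work remaining is bookkeeping with indices.

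Next I would perform the index shift $t \mapsto t+1$ in the second sum, so that
\[
\sum_{t=1}^T u_t S_{t-1} = \sum_{t=0}^{T-1} u_{t+1} S_t = \sum_{t=1}^{T-1} u_{t+1} S_t,
\]
where the $t=0$ term drops out because $S_0 = 0$. Peeling off the $t = T$ term of the first sum then gives $u_T S_T + \sum_{t=1}^{T-1} u_t S_t$, and subtracting the shifted sum yields
\[
\sum_{t=1}^T u_t s_t = u_T S_T + \sum_{t=1}^{T-1} (u_t - u_{t+1}) S_t,
\]
which is exactly the claimed identity. No analytic content is needed — the non-negativity hypotheses on $\{u_t\}$ and $\{s_t\}$ are not even used in the proof itself; they only appear in applications where one wants $S_t$ monotone.

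The only subtlety worth flagging is the boundary convention $S_0 = 0$: since the lemma statement defines $S_t = \sum_{i=1}^t s_i$ only for $t \geq 1$, I would explicitly introduce $S_0 = 0$ at the start of the argument so that the identity $s_t = S_t - S_{t-1}$ holds uniformly for all $t \geq 1$, including $t = 1$. Beyond that, there is no real obstacle — this is a one-line calculation once the telescoping substitution is made, and it is the discrete analogue of integration by parts, which is why it is called Abel's summation formula.
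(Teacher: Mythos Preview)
Your proposal is correct and essentially identical to the paper's own proof: the paper also sets $S_0 = 0$, writes $s_t = S_t - S_{t-1}$, and performs the same index shift to arrive at the identity in a single line. Your remark that the non-negativity hypotheses play no role in the proof itself is also accurate.
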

\begin{proof}
Let $S_0 = 0$. Then
\begin{equation}
\sum_{t=1}^T u_t s_t = \sum_{t=1}^T u_t (S_t - S_{t-1}) = \sum_{t=1}^{T-1} u_tS_t - \sum_{t=1}^{T-1} u_{t+1}S_t + u_TS_T = \sum_{t=1}^{T-1}(u_t - u_{t+1})S_t + u_TS_T.
\end{equation}
The proof is completed.
\end{proof}

\begin{lemma}\label{lem2-002}
Let $\{\theta_t\}$ and $\{\alpha_t\}$ satisfy the restrictions (R2) and (R3). For any $i \leq t$, we have
\begin{equation}
\chi_t \leq C_0 \chi_i \text{~~and~~} \alpha_t \leq C_0 \alpha_i.
\end{equation}

\end{lemma}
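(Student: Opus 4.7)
The plan is to derive both inequalities directly from the definitions and monotonicity built into restrictions (R2) and (R3). The key observation is that restriction (R3) sandwiches the sequence $\{\chi_t\}$ between a genuinely non-increasing sequence $\{a_t\}$ and its rescaling $\{C_0 a_t\}$, which lets us transfer monotonicity through $\{a_t\}$ at the cost of a multiplicative factor of $C_0$.

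For the first inequality, I would argue as follows. Fix $i \leq t$. By (R3) we have $\chi_t \leq C_0 a_t$ and $a_i \leq \chi_i$. Since $\{a_t\}$ is non-increasing, $a_t \leq a_i$. Chaining these,
\begin{equation*}
\chi_t \;\leq\; C_0 a_t \;\leq\; C_0 a_i \;\leq\; C_0 \chi_i,
\end{equation*}
which is exactly the claimed bound on $\chi_t$.

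For the second inequality, I would unfold the definition $\chi_t = \alpha_t/\sqrt{1-\theta_t}$, giving $\alpha_t = \chi_t \sqrt{1-\theta_t}$. By (R2), $\{\theta_t\}$ is non-decreasing, so $1-\theta_t \leq 1-\theta_i$ for $i \leq t$, and therefore $\sqrt{1-\theta_t} \leq \sqrt{1-\theta_i}$. Combining this with the first inequality,
\begin{equation*}
\alpha_t \;=\; \chi_t \sqrt{1-\theta_t} \;\leq\; C_0 \chi_i \sqrt{1-\theta_i} \;=\; C_0 \alpha_i.
\end{equation*}

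There is essentially no obstacle here: the lemma is a bookkeeping statement that the "almost non-increasing" quantifier in (R3) really does allow us to treat $\chi_t$ as non-increasing up to the constant $C_0$, and that the non-decreasing nature of $\theta_t$ from (R2) then transfers the same bound to $\alpha_t$ without introducing any additional constant. The only thing to be careful about is making sure the inequality $a_t \leq a_i$ is applied in the right direction (using that $\{a_t\}$ is non-increasing, not non-decreasing) and that the factor $\sqrt{1-\theta_t}/\sqrt{1-\theta_i} \leq 1$ is correctly identified from the sign of $\theta_t - \theta_i \geq 0$.
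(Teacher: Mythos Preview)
Your proof is correct and follows essentially the same approach as the paper: chain $\chi_t \leq C_0 a_t \leq C_0 a_i \leq C_0 \chi_i$ using (R3) and the monotonicity of $\{a_t\}$, then deduce $\alpha_t \leq C_0 \alpha_i$ from $\alpha_t = \chi_t\sqrt{1-\theta_t}$ and the fact that $\sqrt{1-\theta_t} \leq \sqrt{1-\theta_i}$ by (R2).
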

\begin{proof}
For any $i \leq t$, since the sequence $\{a_t\}$ is non-increasing, we have $a_t \leq a_i$. Hence,
\begin{equation*}
\chi_t = \frac{\alpha_t}{\sqrt{1-\theta_t}} \leq C_0 a_t \leq C_0 a_i \leq C_0 \frac{\alpha_i}{\sqrt{1-\theta_i}} = C_0 \chi_i,
\end{equation*}
which proves the first inequality. On the other hand, since $\{\theta_t\}$ is non-decreasing, it holds
\begin{equation*}
\alpha_t \leq C_0 \frac{\sqrt{1-\theta_t}}{\sqrt{1-\theta_i}} \alpha_i \leq C_0 \alpha_i = C_0 \alpha_i.
\end{equation*}
The proof is completed.
\end{proof}

\medskip

Let $\Theta_{(t,i)} = \prod_{j=i+1}^t \theta_j$ for $i < t$ and $\Theta_{(t,t)}=1$ by convention. 

\begin{lemma}\label{lem2-001}
Fix a constant $\theta'$ with $\beta^2 < \theta' < \theta$. Let $C_1$ be as given as Eq.~\eqref{Constant-C1} in the main paper. For any $i \leq t$, we have
\begin{equation}
\Theta_{(t,i)} \geq C_1 (\theta')^{t-i}. 
\end{equation}
\end{lemma}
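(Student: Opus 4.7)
} The plan is to factor out the desired geometric decay $(\theta')^{t-i}$ from the product and reduce everything to a bookkeeping statement about the initial ``bad'' indices, i.e.\ those $j \leq N$ where $\theta_j < \theta'$. Concretely, I would write
\[
\Theta_{(t,i)} \;=\; \prod_{j=i+1}^t \theta_j \;=\; (\theta')^{t-i} \prod_{j=i+1}^t \frac{\theta_j}{\theta'},
\]
so the claim becomes equivalent to the inequality
\[
\prod_{j=i+1}^t \frac{\theta_j}{\theta'} \;\geq\; C_1 \;=\; \prod_{j=1}^N \frac{\theta_j}{\theta'}.
\]

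The next step is to split the index range $\{i+1,\dots,t\}$ at the threshold $N$. By restriction (R2), $\{\theta_j\}$ is non-decreasing with limit $\theta>\theta'$, and by the very definition of $N$, we have $\theta_j < \theta'$ for $j \leq N$ and $\theta_j \geq \theta'$ for $j > N$. Hence in the product $\prod_{j=i+1}^t (\theta_j/\theta')$, every factor with index $j > N$ is $\geq 1$, so discarding those factors only decreases the product:
\[
\prod_{j=i+1}^t \frac{\theta_j}{\theta'} \;\geq\; \prod_{j \in \{i+1,\dots,t\}\cap\{1,\dots,N\}} \frac{\theta_j}{\theta'}.
\]
All remaining factors are in $(0,1)$, so enlarging the index set from $\{i+1,\dots,t\}\cap\{1,\dots,N\}$ to the full set $\{1,\dots,N\}$ only further decreases the product; that yields the lower bound $C_1$, as wanted.

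A cleaner way I might present this is as three cases by where $N$ sits relative to $i,t$: (i) $N \leq i$ (the product has no ``bad'' indices, and we just use $C_1 \leq 1$); (ii) $i < N < t$ (split at $N$, bound the high part by $(\theta')^{t-N}$ and the low part by $(\theta')^{N-i}C_1$ using the subset argument); (iii) $t \leq N$ (the index set sits entirely inside $\{1,\dots,N\}$, and the same subset argument applies directly). Multiplying the two pieces in case (ii) recovers $C_1(\theta')^{t-i}$ and the other two cases are strictly easier.

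There is no real obstacle here: the proof is a one-line algebraic rearrangement plus the elementary monotonicity fact that a product of numbers in $(0,1]$ decreases when more factors are appended. The only thing to be careful about is the corner cases where $N=0$ (take $C_1=1$ by convention, as stated in the paper) or where the index intersection is empty (empty product equals $1 \geq C_1$), both of which are handled automatically by the conventions already in place.
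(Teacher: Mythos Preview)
Your proposal is correct and follows essentially the same approach as the paper's proof: both split the product at the threshold $N$, use $\theta_j \geq \theta'$ for $j>N$ to bound the ``good'' part by the corresponding power of $\theta'$, and use $\theta_j/\theta' < 1$ for $j\leq N$ to replace the partial product over bad indices by the full product $C_1$. Your factored form $\Theta_{(t,i)} = (\theta')^{t-i}\prod_{j=i+1}^t(\theta_j/\theta')$ is a slightly cleaner way to write the same computation, and your explicit case breakdown on the position of $N$ is more careful than the paper (which tacitly assumes $i<N<t$ and relies on the empty-product convention otherwise), but there is no substantive difference.
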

\begin{proof}
For any $i \leq t$, since $\theta_j \geq \theta'$ for $j \geq N$, and $\theta_j < \theta'$ for $j < N$, we have
\[\Theta_{(t,i)} = \prod_{j=i+1}^t\theta_j \geq \left(\prod_{j=i+1}^N\theta_j\right)(\theta')^{t-N} = \left(\prod_{j=i+1}^N({\theta_j}/{\theta'})\right)(\theta')^{t-i}
\geq \left(\prod_{j=1}^N ({\theta_j}/{\theta'})\right)(\theta')^{t-i}.\]
We take the constant $C_1 = \prod_{j=1}^N (\theta_j/\theta')$, where $N$ is the maximum of the indices for which $\theta_j < \theta'$. The proof is completed.
\end{proof}
\begin{remark}
If $\theta_t = \theta$ is a constant, we have $\Theta_{(t,i)} = \theta^{t-i}$. In this case we can take $\theta' = \theta$ and $C_1 = 1$. 
\end{remark}

\begin{lemma}\label{lem1-001}
Let $\gamma := \beta^2/{\theta'}$. We have the following estimate
\begin{equation}
\bm{m_t}^2 \leq \frac{1}{C_1(1-\gamma)(1-\theta_t)}\bm{v}_t,~\forall t.
\end{equation}
\end{lemma}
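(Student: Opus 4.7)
The plan is to unroll the recursions for $\bm{m}_t$ and $\bm{v}_t$ and then apply a weighted Cauchy--Schwarz inequality that matches the weights of $\bm{m}_t$ with the weights of $\bm{v}_t$. Since everything is coordinate-wise, I will argue for a single coordinate $k$ and drop the subscript.

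\textbf{Step 1 (Unrolling).} Define $B_{(t,i)} := \prod_{j=i+1}^t \beta_j$ with $B_{(t,t)}=1$. Iterating the update $m_t = \beta_t m_{t-1} + (1-\beta_t) g_t$ from the initialization $m_0 = 0$ gives
\begin{equation*}
m_t \;=\; \sum_{i=1}^t (1-\beta_i)\, B_{(t,i)}\, g_i.
\end{equation*}
Similarly, iterating $v_t = \theta_t v_{t-1} + (1-\theta_t) g_t^2$ and dropping the nonnegative $\Theta_{(t,0)} v_0$ term yields
\begin{equation*}
v_t \;\geq\; \sum_{i=1}^t (1-\theta_i)\,\Theta_{(t,i)}\, g_i^2.
\end{equation*}

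\textbf{Step 2 (Cauchy--Schwarz).} Writing $m_t = \sum_i \frac{(1-\beta_i) B_{(t,i)}}{\sqrt{(1-\theta_i)\Theta_{(t,i)}}}\cdot \sqrt{(1-\theta_i)\Theta_{(t,i)}}\, g_i$ and applying Cauchy--Schwarz,
\begin{equation*}
m_t^2 \;\leq\; \Bigg(\sum_{i=1}^t \frac{(1-\beta_i)^2 B_{(t,i)}^2}{(1-\theta_i)\Theta_{(t,i)}}\Bigg)\Bigg(\sum_{i=1}^t (1-\theta_i)\Theta_{(t,i)} g_i^2\Bigg) \;\leq\; \Bigg(\sum_{i=1}^t \frac{(1-\beta_i)^2 B_{(t,i)}^2}{(1-\theta_i)\Theta_{(t,i)}}\Bigg)\, v_t.
\end{equation*}

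\textbf{Step 3 (Bounding the coefficient).} From (R1), $(1-\beta_i)^2 \leq 1$ and $B_{(t,i)}^2 \leq \beta^{2(t-i)}$. From Lemma~\ref{lem2-001}, $\Theta_{(t,i)} \geq C_1(\theta')^{t-i}$. Putting $\gamma = \beta^2/\theta'$ (which is $<1$ by (R2)), the coefficient is bounded by
\begin{equation*}
\sum_{i=1}^t \frac{(1-\beta_i)^2 B_{(t,i)}^2}{(1-\theta_i)\Theta_{(t,i)}} \;\leq\; \frac{1}{C_1}\sum_{i=1}^t \frac{\gamma^{t-i}}{1-\theta_i}.
\end{equation*}
Monotonicity of $\theta_i$ (restriction (R2)) gives $1/(1-\theta_i) \leq 1/(1-\theta_t)$ for $i \leq t$, so
\begin{equation*}
\frac{1}{C_1}\sum_{i=1}^t \frac{\gamma^{t-i}}{1-\theta_i} \;\leq\; \frac{1}{C_1(1-\theta_t)}\sum_{i=1}^t \gamma^{t-i} \;\leq\; \frac{1}{C_1(1-\gamma)(1-\theta_t)}.
\end{equation*}
Combining these bounds yields the claim.

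The main subtlety is picking the right weighted Cauchy--Schwarz so that the geometric product appearing in $B_{(t,i)}^2$ is canceled by the lower bound on $\Theta_{(t,i)}$, with the ratio $\beta^2/\theta'$ exactly producing the geometric series in $\gamma$; everything else is routine estimation, using (R1)--(R2) and Lemma~\ref{lem2-001}.
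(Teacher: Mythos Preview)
Your proof is correct and follows essentially the same approach as the paper: unroll $\bm{m}_t$ and $\bm{v}_t$, apply a weighted Cauchy--Schwarz, and bound the coefficient using $(1-\beta_i)^2 \leq 1$, $B_{(t,i)}^2 \leq \beta^{2(t-i)}$, $\Theta_{(t,i)} \geq C_1(\theta')^{t-i}$, and the monotonicity of $\theta_t$. The paper's argument is identical up to ordering of the estimates.
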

\begin{proof}
Let $B_{(t, i)} = \prod_{j = i+1}^t \beta_j$ for $i < t$ and $B_{(t,t)} = 1$ by convention. By the iteration formula $\bm{m}_t = \beta_t \bm{m}_{t-1} + (1-\beta_t)\bm{g}_t$ and $\bm{m}_0 = \bm{0}$, we have
\begin{equation*}
\bm{m}_t = \sum_{i=1}^t \left(\prod_{j=i+1}^t\beta_j\right) (1-\beta_i)\bm{g}_i = \sum_{i=1}^t B_{(t,i)}(1-\beta_i)\bm{g}_i.
\end{equation*}
Similarly, by $\bm{v}_t = \theta_t \bm{v}_{t-1} + (1-\theta_t)\bm{g}_t^2$ and $\bm{v}_0 = \bm{\epsilon}$, we have
\begin{equation*}
\bm{v}_t = \left(\prod_{j=1}^t\theta_j\right)\bm{\epsilon} + \sum_{i=1}^t \left(\prod_{j=i+1}^t \theta_j\right)\left({1-\theta_i}\right) \bm{g}_i^2
\geq \sum_{i=1}^t \Theta_{(t,i)}(1-\theta_i) \bm{g}_i^2.
\end{equation*}
It follows by arithmetic inequality that
\begin{equation*}
\begin{split}
\bm{m}_t^2 &= \left( \sum_{i=1}^t \frac{(1-\beta_i)B_{(t,i)}}{\sqrt{(1-\theta_i)\Theta_{(t,i)}}} \sqrt{(1-\theta_i)\Theta_{(t,i)}} \bm{g}_i\right)^2 \\
&\leq \left(\sum_{i=1}^t \frac{(1 - \beta_i)^2B_{(t,i)}^2}{(1-\theta_i)\Theta_{(t,i)}}\right)
\left(\sum_{i=1}^t \Theta_{(t,i)}(1-\theta_i)\bm{g}_i^2\right) 
\leq \left(\sum_{i=1}^t \frac{(1-\beta_i)^2B_{(t,i)}^2}{(1-\theta_i)\Theta_{(t,i)}}\right) \bm{v}_t.
\end{split}
\end{equation*}
Note that $\{\theta_t\}$ is non-decreasing by (\textbf{R}2), and $B_{(t,i)} \leq \beta^{t-i}$ by (\textbf{R}1). By Lemma \ref{lem2-001}, we have
\begin{equation*}\label{1-005} 
\sum_{i=1}^t \frac{(1-\beta_i)^2B_{(t,i)}^2}{(1-\theta_i)\Theta_{(t,i)}}
\leq \frac{1}{C_1(1-\theta_t)}\sum_{i=1}^t \left(\frac{\beta^2}{\theta'}\right)^{t-i} \leq \frac{1}{C_1(1-\theta_t)}\sum_{k=0}^{t-1} \gamma^k 
\leq \frac{1}{C_1(1-\gamma)(1-\theta_t)}. 
\end{equation*}
The proof is completed.
\end{proof}


\medskip

Let $\bm{\Delta}_t := \bm{x}_{t+1} - \bm{x}_t = -\alpha_t\bm{m}_t/\sqrt{\bm{v}_t}$. Let $\bm{\hat{v}}_t = \theta_t \bm{v}_{t-1} + (1-\theta_t) \bm{\delta_t}^2$, where $\bm{\delta_t}^2 = \mathbb{E}_t \left[\bm{g}_t^2\right]$ and let $\bm{\hat{\eta}_t} = \alpha_t/\sqrt{\bm{\hat{v}_t}}$. 

\begin{lemma}\label{lem1-002} 
With the notations above, the following equality holds
\begin{equation}\label{1-008}
\begin{split}
\bm{\Delta}_t - \frac{\beta_t\alpha_t}{\sqrt{\theta_t}\alpha_{t-1}} \bm{\Delta}_{t-1}
= -(1 - \beta_t)\bm{\hat{\eta}}_t\bm{g}_t + \bm{\hat{\eta}}_t\bm{g}_t\frac{(1-\theta_t)\bm{g}_t}{\sqrt{\bm{v}}_t }\bm{A}_t
+ \bm{\hat{\eta}}_t\bm{\delta}_t\frac{(1-\theta_t)\bm{g}_t}{\sqrt{\bm{v}_t}}\bm{B}_t,
\end{split}
\end{equation}
where
\begin{equation*}
\begin{split}
\bm{A}_t &= \frac{\beta_t\bm{m}_{t-1}}{\sqrt{\bm{v}_t}+\sqrt{\theta_t\bm{v}_{t-1}}}+\frac{(1-\beta_t)\bm{g}_t}{\sqrt{\bm{v}}_t+\sqrt{\bm{\hat{v}}_t}},\\
\bm{B}_t &= \left(\frac{\beta_t\bm{m}_{t-1}}{\sqrt{\theta_t\bm{v}_{t-1}}}\frac{\sqrt{1-\theta_t}\bm{g}_t}{\sqrt{\bm{v}}_t + \sqrt{\theta_t\bm{v}_{t-1}}}\frac{\sqrt{1-\theta_t}\bm{\delta}_t}{\sqrt{\bm{\hat{v}}_t}+\sqrt{\theta_t\bm{v}_{t-1}}}\right) 
- \frac{(1-\beta_t)\bm{\delta}_t}{\sqrt{\bm{v}_t}+\sqrt{\bm{\hat{v}}_t}}.
\end{split}
\end{equation*}
\end{lemma}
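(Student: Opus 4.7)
The plan is to establish the identity \eqref{1-008} by pure algebraic manipulation — no assumption from (R1)--(R3) is needed, since this is a deterministic consequence of the update rules. The three terms on the right-hand side will arise naturally from two successive ``square-root splittings'': first replacing $\sqrt{\bm v_t}$ by $\sqrt{\bm{\hat v}_t}$ (so that the decoupled adaptive factor $\bm{\hat\eta}_t$ appears), and then replacing $\sqrt{\bm{\hat v}_t}$ by $\sqrt{\theta_t\bm v_{t-1}}$ (so that the $\bm m_{t-1}$ contribution further separates into $\bm A_t$- and $\bm B_t$-type pieces). Every difference of square roots is converted into a difference of squares, using the three key identities $\bm v_t-\theta_t\bm v_{t-1}=(1-\theta_t)\bm g_t^2$, $\bm v_t-\bm{\hat v}_t=(1-\theta_t)(\bm g_t^2-\bm{\delta}_t^2)$, and $\bm{\hat v}_t-\theta_t\bm v_{t-1}=(1-\theta_t)\bm{\delta}_t^2$.

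First, I would substitute $\bm m_t=\beta_t\bm m_{t-1}+(1-\beta_t)\bm g_t$ into $\bm\Delta_t=-\alpha_t\bm m_t/\sqrt{\bm v_t}$ and use $\bm m_{t-1}=-\sqrt{\bm v_{t-1}}\,\bm\Delta_{t-1}/\alpha_{t-1}$ to get
\[
\bm\Delta_t - \frac{\beta_t\alpha_t}{\sqrt{\theta_t}\alpha_{t-1}}\bm\Delta_{t-1}
= -\frac{(1-\beta_t)\alpha_t\bm g_t}{\sqrt{\bm v_t}} + \beta_t\alpha_t\bm m_{t-1}\!\left(\frac{1}{\sqrt{\theta_t\bm v_{t-1}}}-\frac{1}{\sqrt{\bm v_t}}\right).
\]
Rationalizing the bracketed difference via $\bm v_t-\theta_t\bm v_{t-1}=(1-\theta_t)\bm g_t^2$ rewrites the second term as $\beta_t\alpha_t\bm m_{t-1}(1-\theta_t)\bm g_t^2\big/[\sqrt{\theta_t\bm v_{t-1}}\sqrt{\bm v_t}(\sqrt{\bm v_t}+\sqrt{\theta_t\bm v_{t-1}})]$. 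This already identifies the universal prefactor $\bm{\hat\eta}_t\bm g_t\cdot(1-\theta_t)\bm g_t/\sqrt{\bm v_t}$ appearing on the RHS.

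Next, I would split the $\bm g_t/\sqrt{\bm v_t}$ term as $-(1-\beta_t)\bm{\hat\eta}_t\bm g_t+(1-\beta_t)\alpha_t\bm g_t(1/\sqrt{\bm{\hat v}_t}-1/\sqrt{\bm v_t})$; the leftover piece rationalizes via $\bm v_t-\bm{\hat v}_t=(1-\theta_t)(\bm g_t^2-\bm{\delta}_t^2)$, and splitting the numerator as $\bm g_t\cdot\bm g_t-\bm{\delta}_t\cdot\bm{\delta}_t$ produces exactly the second summand of $\bm A_t$ (the $(1-\beta_t)\bm g_t/(\sqrt{\bm v_t}+\sqrt{\bm{\hat v}_t})$ term) and the second summand of $\bm B_t$ (the $-(1-\beta_t)\bm{\delta}_t/(\sqrt{\bm v_t}+\sqrt{\bm{\hat v}_t})$ term). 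For the remaining $\bm m_{t-1}$ contribution, I would multiply and divide by $\sqrt{\bm{\hat v}_t}$ to expose $\bm{\hat\eta}_t$, then apply the telescoping decomposition $\sqrt{\bm{\hat v}_t}=\sqrt{\theta_t\bm v_{t-1}}+(\sqrt{\bm{\hat v}_t}-\sqrt{\theta_t\bm v_{t-1}})$. The first part cancels against $\sqrt{\theta_t\bm v_{t-1}}$ in the denominator and yields the first summand of $\bm A_t$; the second part, after rationalizing via $\bm{\hat v}_t-\theta_t\bm v_{t-1}=(1-\theta_t)\bm{\delta}_t^2$ and re-associating one factor of $\bm{\delta}_t$ with the leading $\bm g_t$ (valid since everything is coordinate-wise), gives the first summand of $\bm B_t$.

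The main obstacle is purely bookkeeping: four distinct summands have to be matched precisely against the two summands of $\bm A_t$ and the two of $\bm B_t$, with the correct prefactors of either $\bm{\hat\eta}_t\bm g_t$ or $\bm{\hat\eta}_t\bm{\delta}_t$. The pivotal trick that makes everything line up is the split $\sqrt{\bm{\hat v}_t}=\sqrt{\theta_t\bm v_{t-1}}+(\sqrt{\bm{\hat v}_t}-\sqrt{\theta_t\bm v_{t-1}})$ in Step~3: it is precisely what separates the first-summands of $\bm A_t$ and $\bm B_t$. No use is made of (R1)--(R3) or of any bound on $\bm g_t$ or $\bm\delta_t$, which confirms the identity's algebraic character.
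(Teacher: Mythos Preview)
Your proposal is correct and follows essentially the same approach as the paper: both begin with the same split into the $(1-\beta_t)\bm g_t/\sqrt{\bm v_t}$ piece and the $\beta_t\bm m_{t-1}(1/\sqrt{\theta_t\bm v_{t-1}}-1/\sqrt{\bm v_t})$ piece, then introduce $\bm{\hat v}_t$ in each and rationalize the resulting square-root differences using the three identities you list. Your ``multiply and divide by $\sqrt{\bm{\hat v}_t}$ then split $\sqrt{\bm{\hat v}_t}=\sqrt{\theta_t\bm v_{t-1}}+(\sqrt{\bm{\hat v}_t}-\sqrt{\theta_t\bm v_{t-1}})$'' is algebraically identical to the paper's add-and-subtract of $1/\sqrt{\bm{\hat v}_t}$ inside the $\bm m_{t-1}$ term.
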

\begin{proof}
We have
\begin{equation}
\begin{split}
\bm{\Delta}_t - \frac{\beta_t\alpha_t}{\sqrt{\theta_t}\alpha_{t-1}} \bm{\Delta}_{t-1} =~& -\frac{\alpha_t\bm{m_t}}{\sqrt{\bm{v}_t}} + \frac{\beta_t\alpha_{t}\bm{m}_{t-1}}{\sqrt{\theta_t\bm{v}_{t-1}}}
= -\alpha_t \left(\frac{\bm{m}_t}{\sqrt{\bm{v}_t}} - \frac{\beta_t \bm{m}_{t-1}}{\sqrt{\theta_t \bm{v}_{t-1}}}\right) \\
=~& -\underbrace{\frac{(1-\beta_t)\alpha_t \bm{g}_t}{\sqrt{\bm{v}_t}}}_{\text{(I)}} + \underbrace{\beta_t \alpha_t \bm{m}_{t-1} \left( \frac{1}{\sqrt{\theta_t \bm{v}_{t-1}}} - \frac{1}{\sqrt{\bm{v}_t}}\right)}_{\text{(II)}}. 
\end{split}
\end{equation}
For (I) we have
\begin{equation}\label{1-010}
\begin{split}
\text{(I)} =~& \frac{(1-\beta_t)\alpha_t\bm{g}_t}{\sqrt{\bm{\hat{v}}}_t} + (1-\beta_t)\alpha_t\bm{g}_t\left(\frac{1}{\sqrt{\bm{v}}_t}- \frac{1}{\sqrt{\bm{\hat{v}}_t}}\right)\\
=~& (1-\beta_t)\bm{\hat{\eta}}_t\bm{g}_t + (1-\beta_t)\alpha_t\bm{g}_t\frac{(1-\theta_t)(\bm{\delta}_t^2-\bm{g}_t^2)}{\sqrt{\bm{v}_t}\sqrt{\bm{\hat{v}}_t}(\sqrt{\bm{v}_t}+\sqrt{\bm{\hat{v}}_t})}\\
=~& (1-\beta_t)\bm{\hat{\eta}}_t\bm{g}_t + \bm{\hat{\eta}}_t\bm{\delta}_t\frac{(1-\theta_t)\bm{g}_t}{\sqrt{\bm{v}_t}}\frac{(1-\beta_t)\bm{\delta}_t}{\sqrt{\bm{v}_t}+\sqrt{\bm{\hat{v}}_t}} -
\bm{\hat{\eta}}_t\bm{g}_t\frac{(1-\theta_t)\bm{g}_t}{\sqrt{\bm{v}_t}}\frac{(1-\beta_t)\bm{g}_t}{\sqrt{\bm{v}_t}+\sqrt{\bm{\hat{v}}}_t}.
\end{split}
\end{equation}
For (II) we have
\begin{equation}\label{1-011}
\begin{split}
\text{(II)} &= \beta_t \alpha_t \bm{m}_{t-1} \frac{(1-\theta_t)\bm{g}_t^2}{\sqrt{\bm{v}_t}\sqrt{\theta_t \bm{v}_{t-1}}(\sqrt{\bm{v}_t} + \sqrt{\theta_t \bm{v}_{t-1}})} \\
&= \beta_t \alpha_t \bm{m}_{t-1} \frac{(1-\theta_t)\bm{g}_t^2}{\sqrt{\bm{v}_t}\sqrt{\bm{\hat{v}}_t} (\sqrt{\bm{v}_t} + \sqrt{\theta_t \bm{v}_{t-1}})} 
+ \beta_t \alpha_t \bm{m}_{t-1} \frac{(1-\theta_t)\bm{g}_t^2}{\sqrt{\bm{v}_t}(\sqrt{\bm{v}_t} + \sqrt{\theta_t \bm{v}_{t-1}})}\left(\frac{1}{\sqrt{\theta_t\bm{v}_{t-1}}}-\frac{1}{\sqrt{\bm{\hat{v}}_t}}\right)\\
&= \bm{\hat{\eta}}_t \bm{g}_t \frac{(1-\theta_t)\bm{g}_t}{\sqrt{\bm{v}_t}}\left(\frac{\beta_t\bm{m}_{t-1}}{\sqrt{\bm{v}_{t}}+\sqrt{\theta_t \bm{v}_{t-1}}}\right) + \frac{\beta_t \alpha_t \bm{m}_{t-1}(1-\theta_t)^2\bm{g}_t^2\bm{\delta}_t^2}{\sqrt{\bm{v}_t}\sqrt{\bm{\hat{v}}_t}\sqrt{\theta_t \bm{v}_{t-1}}(\sqrt{\bm{v}_t} + \sqrt{\theta_t \bm{v}_{t-1}})(\sqrt{\bm{\hat{v}}_t}+\sqrt{\theta_t \bm{v}_{t-1}})} \\
&=  \bm{\hat{\eta}}_t \bm{g}_t \frac{(1-\theta_t)\bm{g}_t}{\sqrt{\bm{v}_t}}\left(\frac{\beta_t\bm{m}_{t-1}}{\sqrt{\bm{v}_{t}}+\sqrt{\theta_t \bm{v}_{t-1}}}\right)  +  \bm{\hat{\eta}}_t \bm{\delta}_t \frac{(1-\theta_t)\bm{g}_t}{\sqrt{\bm{v}_t}}\left(\frac{\beta_t\bm{m}_{t-1}}{\sqrt{\theta_t\bm{v}_{t-1}}}\frac{\sqrt{1-\theta_t}\bm{g}_t}{\sqrt{\bm{v}}_t + \sqrt{\theta_t\bm{v}_{t-1}}}\frac{\sqrt{1-\theta_t}\bm{\delta}_t}{\sqrt{\bm{\hat{v}}_t}+\sqrt{\theta_t\bm{v}_{t-1}}}\right).
\end{split}
\end{equation}
Combining Eq.~\eqref{1-010} and Eq.~\eqref{1-011}, we obtain the desired Eq.~\eqref{1-008}. The proof is completed.
\end{proof}



\medskip

\begin{lemma} \label{lem1-004}
Let $M_t = \mathbb{E} \left[\left\langle \bm{\nabla} f(\bm{x}_{t}), \bm{\Delta}_{t}\right\rangle + L\norm{\bm{\Delta}_{t}}^2\right]$ and $\chi_t = {\alpha_t}/{\sqrt{1-\theta_t}}$. Then for any $t\geq 2$, we have
\begin{equation}\label{2-012}
\begin{split}
M_t \leq & \frac{\beta_t\alpha_t}{\sqrt{\theta_t}\alpha_{t-1}} M_{t-1} + L\ \mathbb{E}\left[\norm{\bm{\Delta}_t}^2\right]
+ C_2G\chi_t \mathbb{E}\left[\norm{\frac{\sqrt{1-\theta_t}\bm{g}_t}{\sqrt{\bm{v}_t}}}^2\right]
- \frac{1-\beta}{2}\mathbb{E}\left[\norm{\bm{\nabla} f(\bm{x}_t)}^2_{\bm{\hat{\eta}}_t}\right]
\end{split}
\end{equation}
and
\begin{equation}\label{2-013}
M_1 \leq L\ \mathbb{E}\left[\norm{\bm{\Delta}_1}^2\right] + C_2G\chi_1 \mathbb{E}\left[\norm{\frac{\sqrt{1-\theta_t}\bm{g}_1}{\sqrt{v}_1}}^2\right],
\end{equation}
where $C_2 = 2\left(\frac{\beta/(1-\beta)}{\sqrt{C_1(1-\gamma)\theta_1}}+1\right)^2$.
\end{lemma}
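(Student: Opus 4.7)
The plan is to take the identity \eqref{1-008} from Lemma \ref{lem1-002}, form the inner product with $\bm{\nabla} f(\bm{x}_t)$, pass to expectation, and rearrange so that $M_t$ appears on the left while the three terms on the right of \eqref{2-012} emerge from the resulting pieces. Three of these pieces are essentially bookkeeping. For $-(1-\beta_t)\mathbb{E}\langle \bm{\nabla} f(\bm{x}_t), \bm{\hat{\eta}}_t \bm{g}_t\rangle$, note that $\bm{\hat{v}}_t = \theta_t\bm{v}_{t-1} + (1-\theta_t)\bm{\delta}_t^2$ with $\bm{\delta}_t^2 = \mathbb{E}[\bm{g}_t^2|\bm{x}_t]$, so $\bm{\hat{\eta}}_t$ is $\bm{x}_t$-measurable; conditioning on $\bm{x}_t$ and invoking (A3) collapses this contribution to $-(1-\beta_t)\mathbb{E}\|\bm{\nabla} f(\bm{x}_t)\|^2_{\bm{\hat{\eta}}_t}$. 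For $\frac{\beta_t\alpha_t}{\sqrt{\theta_t}\alpha_{t-1}}\mathbb{E}\langle \bm{\nabla} f(\bm{x}_t), \bm{\Delta}_{t-1}\rangle$, write $\bm{\nabla} f(\bm{x}_t) = \bm{\nabla} f(\bm{x}_{t-1}) + (\bm{\nabla} f(\bm{x}_t) - \bm{\nabla} f(\bm{x}_{t-1}))$ and invoke (A2) with Cauchy--Schwarz to dominate the cross term by $L\mathbb{E}\|\bm{\Delta}_{t-1}\|^2$; the whole piece is then at most $\frac{\beta_t\alpha_t}{\sqrt{\theta_t}\alpha_{t-1}}M_{t-1}$ by the very definition of $M_{t-1}$. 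The $L\mathbb{E}\|\bm{\Delta}_t\|^2$ summand on the right of \eqref{2-012} is simply the corresponding term in the definition of $M_t$.

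The bulk of the work lies in bounding the two correction terms $\langle \bm{\nabla} f(\bm{x}_t), \bm{\hat{\eta}}_t\bm{g}_t\frac{(1-\theta_t)\bm{g}_t}{\sqrt{\bm{v}_t}}\bm{A}_t\rangle$ and its analog involving $\bm{B}_t$ and $\bm{\delta}_t$. I would apply a coordinate-wise Young's inequality with weight $\lambda = (1-\beta)/4$, splitting each correction into a ``good'' part $\frac{1-\beta}{8}\|\bm{\nabla} f(\bm{x}_t)\|^2_{\bm{\hat{\eta}}_t}$ and a ``bad'' residual of the form $\frac{2}{1-\beta}\sum_k \bm{\hat{\eta}}_{t,k}\bm{g}_{t,k}^2\frac{(1-\theta_t)^2\bm{g}_{t,k}^2}{\bm{v}_{t,k}}\bm{A}_{t,k}^2$ (and similarly for $\bm{B}_t$). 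The two good parts sum to $\frac{1-\beta}{4}\|\bm{\nabla} f(\bm{x}_t)\|^2_{\bm{\hat{\eta}}_t}$; combining with the $-(1-\beta_t)$ term above and using $1-\beta_t \ge 1-\beta$ collapses these to exactly $-\frac{1-\beta}{2}\mathbb{E}\|\bm{\nabla} f(\bm{x}_t)\|^2_{\bm{\hat{\eta}}_t}$, matching the dissipation in \eqref{2-012}.

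The bad residuals then have to be recast as $C_2 G\chi_t\mathbb{E}\|\sqrt{1-\theta_t}\bm{g}_t/\sqrt{\bm{v}_t}\|^2$. The key pointwise bounds are $|\bm{g}_{t,k}|/\sqrt{\bm{v}_{t,k}}\le 1/\sqrt{1-\theta_t}$ (from $\bm{v}_t\ge(1-\theta_t)\bm{g}_t^2$), the analogous $|\bm{\delta}_{t,k}|/\sqrt{\bm{\hat{v}}_{t,k}}\le 1/\sqrt{1-\theta_t}$ needed for the $\bm{B}_t$ correction, and, via Lemma \ref{lem1-001} applied at index $t-1$ together with the monotonicity of $\theta_t$ (so that $\sqrt{1-\theta_t}/\sqrt{1-\theta_{t-1}}\le 1$), a uniform estimate on $|\bm{A}_{t,k}|\sqrt{1-\theta_t}$ and $|\bm{B}_{t,k}|\sqrt{1-\theta_t}$ by a constant of the shape $\frac{\beta/(1-\beta)}{\sqrt{C_1(1-\gamma)\theta_1}}+1$, with the $1/(1-\beta)$ absorbing the $\frac{2}{1-\beta}$ Young multiplier. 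Factoring $\alpha_t = \chi_t\sqrt{1-\theta_t}$ and invoking $\mathbb{E}\|\bm{g}_t\|^2\le G$ from (A4) then produces the asserted $C_2$.

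The principal obstacle is precisely this last repackaging: aligning the correct powers of $\sqrt{1-\theta_t}$ across the factor $\frac{(1-\theta_t)\bm{g}_t}{\sqrt{\bm{v}_t}}$, the $\bm{A}_t$ and $\bm{B}_t$ bounds, and the rescaling $\alpha_t=\chi_t\sqrt{1-\theta_t}$, while extracting a single factor of $G$ from (A4) without dissolving the $\|\sqrt{1-\theta_t}\bm{g}_t/\sqrt{\bm{v}_t}\|^2$ structure. The base case $t=1$ follows identically except that $\bm{\Delta}_0 = \bm{0}$ (since $\bm{m}_0 = \bm{0}$), so the $M_{t-1}$ term drops out and one lands directly on \eqref{2-013}.
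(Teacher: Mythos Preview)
Your overall architecture matches the paper: use the identity \eqref{1-008}, handle the $\bm{\Delta}_{t-1}$ piece via Lipschitz to produce $\frac{\beta_t\alpha_t}{\sqrt{\theta_t}\alpha_{t-1}}M_{t-1}$, condition on $\bm{x}_t$ to turn $-(1-\beta_t)\mathbb{E}\langle\bm{\nabla} f(\bm{x}_t),\bm{\hat\eta}_t\bm{g}_t\rangle$ into $-(1-\beta_t)\mathbb{E}\|\bm{\nabla} f(\bm{x}_t)\|_{\bm{\hat\eta}_t}^2$, and finish with a Young split on the two correction terms. Your treatment of the $\bm{B}_t$-correction $\langle\bm{\nabla} f(\bm{x}_t),\bm{\hat\eta}_t\bm{\delta}_t\frac{(1-\theta_t)\bm{g}_t}{\sqrt{\bm{v}_t}}\bm{B}_t\rangle$ would go through, since the factor $\bm{\hat\eta}_t\bm{\delta}_t^2$ that appears in your bad residual is $\bm{x}_t$-measurable and bounded by $G\chi_t$.

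The gap is in the $\bm{A}_t$-correction. With your split $a=\sqrt{\bm{\hat\eta}_t}\,\bm{\nabla} f(\bm{x}_t)$, $b=\sqrt{\bm{\hat\eta}_t}\,\bm{g}_t\frac{(1-\theta_t)\bm{g}_t}{\sqrt{\bm{v}_t}}\bm{A}_t$, the bad residual carries $\bm{\hat\eta}_{t,k}\bm{g}_{t,k}^2\cdot\frac{(1-\theta_t)\bm{g}_{t,k}^2}{\bm{v}_{t,k}}$, i.e.\ a $\bm{g}_t^4$ factor. You cannot bound $\bm{\hat\eta}_{t,k}\bm{g}_{t,k}^2$ by $G\chi_t$ pointwise (there is no a.s.\ bound on $\bm{g}_{t,k}^2$ under (A4)), and you cannot pass to expectation either, because the second factor $\frac{(1-\theta_t)\bm{g}_{t,k}^2}{\bm{v}_{t,k}}$ that you want to keep intact also depends on $\bm{g}_t$. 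Invoking $\mathbb{E}\|\bm{g}_t\|^2\le G$ here would force you to dissolve exactly the $\|\sqrt{1-\theta_t}\bm{g}_t/\sqrt{\bm{v}_t}\|^2$ structure you said you must preserve. The paper resolves this by moving the stray $|\bm{g}_t|$ to the \emph{other} side of the Young split: it writes the inner product as
\[
\left\langle \frac{\sqrt{\bm{\hat\eta}_t}\,|\bm{\nabla} f(\bm{x}_t)|\,|\bm{g}_t|}{\bm{\delta}_t},\ \sqrt{\bm{\hat\eta}_t}\,\bm{\delta}_t\,|\bm{A}_t|\,\frac{(1-\theta_t)|\bm{g}_t|}{\sqrt{\bm{v}_t}}\right\rangle,
\]
bounds $\sqrt{\bm{\hat\eta}_t}\bm{\delta}_t\le\sqrt{G\chi_t}$ (this is where $G$ enters, via a deterministic factor), and only then applies Young. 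The good part is $\frac{1-\beta_t}{4}\bigl\|\sqrt{\bm{\hat\eta}_t}|\bm{\nabla} f(\bm{x}_t)||\bm{g}_t|/\bm{\delta}_t\bigr\|^2$, whose conditional expectation given $\bm{x}_t$ collapses to $\frac{1-\beta_t}{4}\|\bm{\nabla} f(\bm{x}_t)\|_{\bm{\hat\eta}_t}^2$ because $\mathbb{E}_t[\bm{g}_t^2]=\bm{\delta}_t^2$. The bad part now carries only a single $\bm{g}_t^2$ and is exactly $C_2'^2G\chi_t\|\sqrt{1-\theta_t}\bm{g}_t/\sqrt{\bm{v}_t}\|^2$. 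This $\bm{\delta}_t$-insertion is the missing idea in your plan.
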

\begin{proof}
First, for $t \geq 2$ we have
\begin{equation}\label{1-013}
\begin{split}
\mathbb{E}\langle \bm{\nabla} f(\bm{x}_t), \bm{\Delta}_t \rangle 
= \underbrace{\frac{\beta_t\alpha_t}{\sqrt{\theta_t}\alpha_{t-1}} \mathbb{E}\langle \bm{\nabla} f(\bm{x}_t), \bm{\Delta}_{t-1} \rangle}_{\text{(I)}} +
\underbrace{\mathbb{E}\left\langle \bm{\nabla} f(\bm{x}_t), \bm{\Delta}_t - \frac{\beta_t \alpha_t}{\sqrt{\theta_t}\alpha_{t-1}}\bm{\Delta}_{t-1}\right\rangle}_{\text{(II)}}.
\end{split}
\end{equation}
To estimate (I), by the Schwartz inequality and the Lipschitz continuity of the gradient of $f$, we have
\begin{equation}\label{1-015}
\begin{split}
\langle \bm{\nabla} f(\bm{x}_t), \bm{\Delta}_{t-1}\rangle 
\leq~&  \langle\bm{\nabla} f(\bm{x}_{t-1}), \bm{\Delta}_{t-1}\rangle + \langle \bm{\nabla} f(\bm{x}_t) - \bm{\nabla} f(\bm{x}_{t-1}), \bm{\Delta}_{t-1}\rangle \\
\leq~&  \langle\bm{\nabla} f(\bm{x}_{t-1}), \bm{\Delta}_{t-1}\rangle +  L \norm{\bm{x}_t - \bm{x}_{t-1}}\norm{\bm{\Delta}_{t-1}} \\
=~&  \langle\bm{\nabla} f(\bm{x}_{t-1}), \bm{\Delta}_{t-1}\rangle + L \norm{\bm{\Delta}_{t-1}}^2. \\
\end{split}
\end{equation}
Hence, we have
\begin{equation}
\text{(I)} \leq \frac{\beta_t\alpha_t}{\sqrt{\theta_t}\alpha_{t-1}} \mathbb{E}\left[\langle\bm{\nabla} f(\bm{x}_{t-1}), \bm{\Delta}_{t-1}\rangle + L \norm{\bm{\Delta}_{t-1}}^2\right] = \frac{\beta_t\alpha_t}{\sqrt{\theta_t}\alpha_{t-1}} M_{t-1}.
\end{equation}
To estimate (II), by Lemma \ref{lem1-002}, we have
\begin{equation}\label{1-016}
\begin{split}
& \mathbb{E}\left\langle \bm{\nabla} f(\bm{x}_t), \bm{\Delta}_t - \frac{\beta_t\alpha_t}{\sqrt{\theta_t}\alpha_{t-1}}\bm{\Delta}_{t-1}\right\rangle \\
=& -(1-\beta_t)\mathbb{E}\langle \bm{\nabla} f(\bm{x}_t), \bm{\hat{\eta}}_t\bm{g}_t \rangle
    \underbrace{- \mathbb{E}\left\langle \bm{\nabla} f(\bm{x}_t), \bm{\hat{\eta}}_t\bm{g}_t\frac{(1-\theta_t)\bm{g}_t}{\sqrt{\bm{v}}_t}\bm{A}_t \right\rangle}_{\text{(III)}} 
   \underbrace{- \mathbb{E}\left\langle \bm{\nabla} f(\bm{x}_t), \bm{\hat{\eta}}_t\bm{\delta}_t\frac{(1-\theta_t)\bm{g}_t}{\sqrt{\bm{v}}_t}\bm{B}_t \right\rangle}_{\text{(IV)}}.
\end{split}
\end{equation}
Note that $\bm{\hat{\eta}}_t$ is independent of $\bm{g}_t$ and $\mathbb{E}_t[\bm{g}_t] = \bm{\nabla} f(\bm{x}_t)$. Hence, for the first term in the right hand side of Eq.~\eqref{1-016}, we have
\begin{equation}\label{1-017}
\begin{split}
-(1-\beta_t)\mathbb{E}\langle \bm{\nabla} f(\bm{x}_t), \bm{\hat{\eta}}_t\bm{g}_t \rangle 
&= -(1-\beta_t)\mathbb{E}\left\langle \bm{\nabla} f(\bm{x}_t), \bm{\hat{\eta}}_t\mathbb{E}_t[\bm{g}_t] \right\rangle \\
&= -(1-\beta_t)\mathbb{E}\norm{\bm{\nabla} f(\bm{x}_t)}^2_{\bm{\hat{\eta}}_t}\\
&\leq -(1-\beta)\mathbb{E}\norm{\bm{\nabla} f(\bm{x}_t)}^2_{\bm{\hat{\eta}}_t}. 
\end{split}
\end{equation}
To estimate (III), we have
\begin{equation}\label{1-018}
\begin{split}
\text{(III)}~\leq \mathbb{E}\left\langle \frac{\sqrt{\bm{\hat{\eta}}_t}|\bm{\nabla} f(\bm{x}_t)||\bm{g}_t|}{\bm{\delta}_t}, \frac{\sqrt{\bm{\hat{\eta}}_t}\bm{\delta}_t|\bm{A}_t|(1-\theta_t)|\bm{g}_t|}{\sqrt{\bm{v}_t}}\right\rangle.
\end{split}
\end{equation}
Note that $\bm{\delta}_t \leq G$. Therefore,
\begin{equation}\label{1-023}
\sqrt{\bm{\hat{\eta}}_t} \bm{\delta}_t = \sqrt{\bm{\hat{\eta}}_t\bm{\delta}_t^2} = \sqrt{\frac{\alpha_t \bm{\delta}_t^2}{\sqrt{\bm{\hat{v}}_t}}}
\leq \sqrt{\frac{\alpha_t \bm{\delta}_t^2}{\sqrt{(1-\theta_t)\bm{\delta}_t^2}}} \leq \sqrt{\frac{G\alpha_t}{\sqrt{1-\theta_t}}} = \sqrt{G\chi_t}.
\end{equation}
On the other hand,
\begin{equation}
\begin{split}
|\bm{A}_t| 
= \left|\frac{\beta_t\bm{m}_{t-1}}{\sqrt{\bm{v}_t}+\sqrt{\theta_t\bm{v}_{t-1}}}+\frac{(1-\beta_t)\bm{g}_t}{\sqrt{\bm{v}}_t+\sqrt{\bm{\hat{v}}_t}}\right| 
\leq \frac{\beta_t|\bm{m}_{t-1}|}{\sqrt{\theta_t \bm{v}_{t-1}}} + \frac{(1-\beta_t)|\bm{g}_t|}{\sqrt{\bm{v}_t}}.
\end{split}
\end{equation}
By Lemma \ref{lem1-001}, we have 
\begin{equation}
\frac{|\bm{m}_{t-1}|}{\sqrt{\bm{v}_{t-1}}} \leq \frac{1}{\sqrt{C_1(1-\gamma)(1-\theta_t)}}.
\end{equation}
Meanwhile, 
\begin{equation} \frac{|\bm{g}_t|}{\sqrt{\bm{v}_t}} \leq \frac{|\bm{g}_t|}{\sqrt{(1-\theta_t)\bm{g}_t^2}} = \frac{1}{\sqrt{1-\theta_t}}.
\end{equation}
Hence, we have
\begin{equation}
\begin{split}
|\bm{A}_t| &\leq \frac{\beta_t}{\sqrt{C_1(1-\gamma)(1-\theta_t)\theta_t}} + \frac{1-\beta_t}{\sqrt{1-\theta_t}}
\leq \left(\frac{\beta_t/(1-\beta_t)}{\sqrt{C_1(1-\gamma)\theta_t}}+1\right)\frac{1-\beta_t}{\sqrt{1-\theta_t}} \\
&\leq \left(\frac{\beta/(1-\beta)}{\sqrt{C_1(1-\gamma)\theta_1}} + 1\right)\frac{1-\beta_t}{\sqrt{1-\theta_t}} := \frac{C_2'(1-\beta_t)}{\sqrt{1-\theta_t}},
\end{split}
\end{equation}
where $C_2'= \left(\frac{\beta/(1-\beta)}{\sqrt{C_1(1-\gamma)\theta_1}} + 1\right)$. The last inequality holds due to $\beta_t/(1-\beta_t) \leq \beta/(1-\beta)$ as $\beta_t \leq \beta$.
Therefore, we have
\begin{equation}\label{1-021}
\begin{split}
&\left\langle \frac{\sqrt{\bm{\hat{\eta}}_t}|\bm{\nabla} f(\bm{x}_t)||\bm{g}_t|}{\bm{\delta}_t}, \frac{\sqrt{\bm{\hat{\eta}}_t}\bm{\delta}_t|\bm{A}_t|(1-\theta_t)|\bm{g}_t|}{\sqrt{\bm{v}_t}}\right\rangle \\
\leq~& \left\langle \frac{\sqrt{\bm{\hat{\eta}}_t}|\bm{\nabla} f(\bm{x}_t)||\bm{g}_t|}{\bm{\delta}_t}, 
\sqrt{G\chi_t}C_2'(1-\beta_t)\frac{\sqrt{1-\theta_t}|\bm{g}_t|}{\sqrt{\bm{v}_t}}\right\rangle \\
\leq~& \frac{1-\beta_t}{4}\norm{\frac{\sqrt{\bm{\hat{\eta}}_t}|\bm{\nabla} f(\bm{x}_t)||\bm{g}_t|}{\bm{\delta}_t}}^2 +
{C_2'^2G}{(1-\beta_t)}\chi_t\norm{\frac{\sqrt{1-\theta_t}\bm{g}_t}{\sqrt{\bm{v}_t}}}^2 \\
\leq~& \frac{1-\beta_t}{4}\norm{\frac{\bm{\hat{\eta}}_t|\bm{\nabla} f(\bm{x}_t)|^2|\bm{g}_t|^2}{\bm{\delta}_t^2}}_1 +
{C_2'^2G}\chi_t\norm{\frac{\sqrt{1-\theta_t}\bm{g}_t}{\sqrt{\bm{v}_t}}}^2. 
\end{split}
\end{equation}
Note that $\bm{\delta}_t^2 = \mathbb{E}_t[\bm{g}_t^2]$. Hence,
\begin{equation}\label{1-022}
\mathbb{E}_t\norm{\frac{\bm{\hat{\eta}}_t|\bm{\nabla} f(\bm{x}_t)|^2|\bm{g}_t|^2}{\bm{\delta}_t^2}}_1
= \norm{\bm{\hat{\eta}}_t|\bm{\nabla} f(\bm{x}_t)|^2}_1 = \norm{\bm{\nabla} f(\bm{x}_t)}_{\bm{\hat{\eta}}_t}^2.
\end{equation}
Combining Eq.~\eqref{1-018}, Eq.~\eqref{1-021}, and Eq.~\eqref{1-022}, we obtain
\begin{equation}\label{1-024}
\begin{split}
\text{(III)} 
\leq~& \frac{1-\beta_t}{4}\mathbb{E}\left[\norm{\bm{\nabla} f(\bm{x}_t)}_{\bm{\hat{\eta}}_t}^2\right] + 
{C_2'^2G}\chi_t\mathbb{E}\norm{\frac{\sqrt{1-\theta_t}\bm{g}_t}{\sqrt{\bm{v}_t}}}^2.
\end{split}
\end{equation}
The term (IV) is estimated similarly as term (III). First, we have 
\begin{equation}
\begin{split}
|\bm{B}_t| \leq~& \left(\frac{\beta_t|\bm{m}_{t-1}|}{\sqrt{\theta_t\bm{v}_{t-1}}}\frac{\sqrt{1-\theta_t}|\bm{g}_t|}{\sqrt{\bm{v}}_t + \sqrt{\theta_t\bm{v}_{t-1}}}\frac{\sqrt{1-\theta_t}\bm{\delta}_t}{\sqrt{\bm{\hat{v}}_t}+\sqrt{\theta_t\bm{v}_{t-1}}}\right) 
+ \frac{(1-\beta_t)\bm{\delta}_t}{\sqrt{\bm{v}_t}+\sqrt{\bm{\hat{v}}_t}} \\
\leq~& \left(\frac{\beta/(1-\beta)}{\sqrt{C_1(1-\gamma)\theta_1}}+1\right)\frac{1-\beta_t}{\sqrt{1-\theta_t}} 
= \frac{C_2'(1-\beta_t)}{\sqrt{1-\theta_t}},
\end{split}
\end{equation}
where $C_2'$ is the constant defined above. We have
\begin{equation}\label{1-026}
\begin{split}
\text{(IV)} \leq~& \mathbb{E}\left\langle \sqrt{\bm{\hat{\eta}}_t}|\bm{\nabla} f(\bm{x}_t)|, \frac{\sqrt{\bm{\hat{\eta}}_t}\bm{\delta}_t|\bm{B}_t|(1-\theta_t)|\bm{g}_t|}{\sqrt{\bm{v}_t}}\right\rangle \\
\leq~& \mathbb{E}\left\langle \sqrt{\bm{\hat{\eta}}_t}|\bm{\nabla} f(\bm{x}_t)|, 
{\sqrt{G\chi_t}C_2'(1-\beta_t)}\frac{\sqrt{1-\theta_t}|\bm{g}_t|}{\sqrt{\bm{v}_t}}\right\rangle \\
\leq~& \frac{1-\beta_t}{4}\mathbb{E}\left[\norm{\bm{\nabla} f(\bm{x}_t)}_{\bm{\hat{\eta}}_t}^2\right] + {C_2'^2G}{\chi_t}\mathbb{E}\norm{\frac{\sqrt{1-\theta_t}\bm{g}_t}{\sqrt{\bm{v}_t}}}^2. \\
\end{split}
\end{equation}
Combining Eq.~\eqref{1-013}, Eq.~\eqref{1-015}, Eq.~\eqref{1-016}, Eq.~\eqref{1-017}, Eq.~\eqref{1-024}, and Eq.~\eqref{1-026}, 
we obtain
\begin{equation}\label{1-030}
\begin{split}
\mathbb{E}\langle \bm{\nabla} f(\bm{x}_t), \bm{\Delta}_t \rangle 
&~\leq \frac{\beta_t\alpha_t}{\sqrt{\theta_t}\alpha_{t-1}} M_{t-1} 
+ {2C_2'^2G}{\chi_t}\mathbb{E}\norm{\frac{\sqrt{1-\theta_t}\bm{g}_t}{\sqrt{\bm{v}_t}}}^2
- \frac{1-\beta_t}{2}\mathbb{E}\left[\norm{\bm{\nabla} f(\bm{x}_t)}_{\bm{\hat{\eta}}_t}^2\right]\\
&~\leq \frac{\beta_t\alpha_t}{\sqrt{\theta_t}\alpha_{t-1}} M_{t-1} 
+ {2C_2'^2G}{\chi_t}\mathbb{E}\norm{\frac{\sqrt{1-\theta_t}\bm{g}_t}{\sqrt{\bm{v}_t}}}^2
- \frac{1-\beta}{2}\mathbb{E}\left[\norm{\bm{\nabla} f(\bm{x}_t)}_{\bm{\hat{\eta}}_t}^2\right].
\end{split}
\end{equation}
Let $C_2$ denote the constant ${2(C_2')^2}$. Then $C_2 = 2 \left(\frac{\beta/(1-\beta)}{\sqrt{C_1(1-\gamma)\theta_1}}+1\right)^2$.
Thus, we obtain Eq.~\eqref{2-012} by adding the term $L\mathbb{E}\left[\norm{\bm{\Delta}_t}^2\right]$ to both sides of Eq.~\eqref{1-030}. 
Next, we estimate Eq.~\eqref{2-013}. When $t=1$, we have
\begin{equation}\label{2-031}
\begin{split}
M_1 =~& \mathbb{E}\left[-\left\langle \bm{\nabla} f(\bm{x}_1), \frac{\alpha_1\bm{m}_1}{\sqrt{\bm{v}_1}}\right\rangle + L \norm{\bm{\Delta}_1}^2\right] 
= \mathbb{E}\left[-\left\langle \bm{\nabla} f(\bm{x}_1), \frac{\alpha_1(1-\beta_1)\bm{g}_1}{\sqrt{\bm{v}_1}}\right\rangle + L \norm{\bm{\Delta}_1}^2\right].
\end{split}
\end{equation}
The same as what we did for term (I) in Lemma \ref{lem1-002}, we have
\begin{equation}
\frac{(1-\beta_1)\alpha_1\bm{g}_1}{\sqrt{\bm{v}_t}} = (1-\beta_1)\bm{\hat{\eta}}_1\bm{g}_1 + \bm{\hat{\eta}}_1\bm{\delta}_1\frac{(1-\theta_1)\bm{g}_1}{\sqrt{\bm{v}_1}}\frac{(1-\beta_1)\bm{\delta}_1}{\sqrt{\bm{v}_1}+\sqrt{\bm{\hat{v}}_1}} -
\bm{\hat{\eta}}_1\bm{g}_1\frac{(1-\theta_1)\bm{g}_1}{\sqrt{\bm{v}_1}}\frac{(1-\beta_1)\bm{g}_1}{\sqrt{\bm{v}_1}+\sqrt{\bm{\hat{v}}}_1}.
\end{equation}
Then the similar argument as Eq.~\eqref{1-021} implies that
\begin{equation}\label{2-033}
\begin{split}
\mathbb{E}\left[-\left\langle\bm{\nabla} f(\bm{x}_1), \frac{\alpha_1\bm{m}_1}{\sqrt{\bm{v}_1}}\right\rangle\right]
\leq~& C_2G \chi_1\mathbb{E}\left[\norm{\frac{\sqrt{1-\theta_t}\bm{g}_1}{\sqrt{\bm{v}_1}}}^2\right] -\frac{1-\beta_1}{2}\mathbb{E}\left[\norm{\bm{\nabla} f(\bm{x}_1)}_{\bm{\hat{\eta}}_1}^2\right]\\
\leq~& C_2G \chi_1\mathbb{E}\left[\norm{\frac{\sqrt{1-\theta_t}\bm{g}_1}{\sqrt{\bm{v}}_1}}^2\right].
\end{split}
\end{equation}
Combining Eq.~\eqref{2-031} and Eq.~\eqref{2-033}, and adding both sides by $L\mathbb{E}\left[\norm{\bm{\Delta}}_1^2\right]$, we obtain Eq.~\eqref{2-013}. This completes the proof.
\end{proof}

\medskip

\begin{lemma}\label{lem1-005}
The following estimate holds
\begin{equation}
\sum_{t=1}^T \norm{\bm{\Delta}_t}^2 \leq \frac{C_0^2\chi_1}{C_1(1-\sqrt{\gamma})^2}\sum_{t=1}^T\chi_t \norm{\frac{\sqrt{1-\theta_t}\bm{g}_t}{\sqrt{\bm{v}_t}}}^2.
\end{equation}
\end{lemma}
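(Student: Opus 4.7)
}
The plan is to unroll $\bm{m}_t$, relate $\bm{v}_{t,k}$ to earlier $\bm{v}_{i,k}$, apply a weighted Cauchy--Schwarz inequality, and then collapse the resulting double sum by swapping the order of summation. First I would use the explicit recursion expansion
\[
\bm{m}_{t,k} = \sum_{i=1}^t B_{(t,i)}(1-\beta_i)\,\bm{g}_{i,k},
\]
together with the coordinate-wise inequality $\bm{v}_{t,k} \geq \Theta_{(t,i)}\,\bm{v}_{i,k}$ (which follows by iterating $\bm{v}_t \geq \theta_t \bm{v}_{t-1}$) to obtain
\[
|\bm{\Delta}_{t,k}| \;\leq\; \sum_{i=1}^t \frac{B_{(t,i)}(1-\beta_i)\,\alpha_t}{\sqrt{\Theta_{(t,i)}}}\cdot\frac{|\bm{g}_{i,k}|}{\sqrt{\bm{v}_{i,k}}}.
\]

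Next, I would apply the weighted Cauchy--Schwarz inequality $(\sum c_i x_i)^2 \leq (\sum c_i)(\sum c_i x_i^2)$ with weights $c_i := B_{(t,i)}(1-\beta_i)\alpha_t/\sqrt{\Theta_{(t,i)}}$. Lemma \ref{lem2-001} together with restriction (R1) gives $B_{(t,i)}/\sqrt{\Theta_{(t,i)}} \leq \gamma^{(t-i)/2}/\sqrt{C_1}$, so the geometric series yields $\sum_{i=1}^t c_i \leq \alpha_t/[\sqrt{C_1}(1-\sqrt{\gamma})]$. This produces
\[
|\bm{\Delta}_{t,k}|^2 \;\leq\; \frac{\alpha_t}{\sqrt{C_1}(1-\sqrt{\gamma})} \sum_{i=1}^t \frac{B_{(t,i)}(1-\beta_i)\,\alpha_t}{\sqrt{\Theta_{(t,i)}}}\cdot \frac{\bm{g}_{i,k}^2}{\bm{v}_{i,k}}.
\]

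Now I would sum over $t=1,\ldots,T$ and swap the order of summation. The inner sum becomes $\sum_{t \geq i} \alpha_t^2 B_{(t,i)}/\sqrt{\Theta_{(t,i)}}$, and bounding $\alpha_t \leq C_0 \alpha_i$ via Lemma \ref{lem2-002}, along with the same geometric series argument, gives
\[
\sum_{t=i}^T \frac{\alpha_t^2 B_{(t,i)}}{\sqrt{\Theta_{(t,i)}}} \;\leq\; \frac{C_0^2\, \alpha_i^2}{\sqrt{C_1}(1-\sqrt{\gamma})} \;=\; \frac{C_0^2 \chi_i^2 (1-\theta_i)}{\sqrt{C_1}(1-\sqrt{\gamma})}.
\]
The final step is to convert $\chi_i^2$ to $\chi_1\cdot \chi_i$ using $\chi_i \leq C_0 \chi_1$ (again Lemma \ref{lem2-002}); summing over coordinates $k$ then yields the claimed inequality with a constant of order $C_0^3\chi_1 / [C_1(1-\sqrt{\gamma})^2]$, matching the statement up to the precise power of $C_0$.

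The main obstacle I anticipate is bookkeeping: keeping track of the interplay between the momentum damping factor $B_{(t,i)}$, the second-moment damping $\Theta_{(t,i)}$, and the non-increasing-like behavior of $\alpha_t$ and $\chi_t$, so that after the two geometric-series collapses each resulting factor of $\alpha_i$ can be rewritten as $\chi_i\sqrt{1-\theta_i}$ and absorbed into a single coefficient $\chi_i(1-\theta_i)\bm{g}_{i,k}^2/\bm{v}_{i,k}$. The critical algebraic identity making this work is $\alpha_i^2 = \chi_i^2 (1-\theta_i)$, which is precisely how the weighting $\chi_t \|\sqrt{1-\theta_t}\bm{g}_t/\sqrt{\bm{v}_t}\|^2$ on the right-hand side arises naturally from summing second-order terms.
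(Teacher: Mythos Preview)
Your proposal is correct and follows essentially the same route as the paper: unroll $\bm{m}_t$, push $\bm{v}_t$ down to $\bm{v}_i$ via $\bm{v}_t \geq \Theta_{(t,i)}\bm{v}_i$, apply Cauchy--Schwarz with geometric weights $\sqrt{\gamma}^{\,t-i}$, and collapse the resulting double sum by swapping the order of summation.

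The one difference that accounts for the extra factor of $C_0$ you flagged is in how $\alpha_t$ is handled. The paper does not invoke $\alpha_t \leq C_0\alpha_i$ from Lemma~\ref{lem2-002}; instead it writes $\alpha_t = \chi_t\sqrt{1-\theta_t} \leq \chi_t\sqrt{1-\theta_i}$ directly from the monotonicity of $\theta_t$ (restriction (R2)) \emph{before} applying Cauchy--Schwarz. This attaches the factor $\sqrt{1-\theta_i}$ to $\bm{g}_i/\sqrt{\bm{v}_i}$ immediately and leaves only $\chi_t^2$ outside, which is then bounded by $(C_0\chi_1)(C_0\chi_i)$ via two applications of Lemma~\ref{lem2-002}. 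That gives the stated constant $C_0^2\chi_1/[C_1(1-\sqrt{\gamma})^2]$. Your version uses $\alpha_t \leq C_0\alpha_i$ (squared) after the swap, then needs one further $\chi_i \leq C_0\chi_1$, hence the $C_0^3$. If you want to match the constant exactly, replace your step~6 use of Lemma~\ref{lem2-002} on $\alpha_t$ by the monotonicity-of-$\theta_t$ trick.
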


\begin{proof}
Note that $\bm{v}_t \geq \theta_t \bm{v}_{t-1}$, so we have $\bm{v}_t \geq \left(\prod_{j=i+1}^t \theta_j\right) \bm{v}_{i} = \Theta_{(t,i)}\bm{v}_i$. By Lemma \ref{lem2-001}, this follows that $\bm{v}_t \geq C_1 (\theta')^{t-i}\bm{v}_{i}$ for all $i \leq t$. On the other hand, 
\[ |\bm{m}_t| \leq \sum_{i=1}^t \left(\prod_{j=i+1}^t \beta_j\right)(1-\beta_i)|\bm{g}_i| 
\leq \sum_{i=1}^t \beta^{t-i}|\bm{g}_i|. \]
It follows that
\begin{equation}
\begin{split}
\frac{|\bm{m}_t|}{\sqrt{\bm{v}_t}} 
\leq \sum_{i=1}^t\frac{\beta^{t-i}|\bm{g}_i|}{\sqrt{\bm{v}_t}}
%
%
%
\leq \frac{1}{\sqrt{C_1}}\sum_{i=1}^t \left(\frac{\beta}{\sqrt{\theta'}}\right)^{t-i} \frac{|\bm{g}_i|}{\sqrt{\bm{v}_i}}
= \frac{1}{\sqrt{C_1}}\sum_{i=1}^t \sqrt{\gamma}^{t-i} \frac{|\bm{g}_i|}{\sqrt{\bm{v}_i}}.
\end{split}
\end{equation}
Since $\alpha_t = \chi_t\sqrt{1-\theta_t} \leq \chi_t \sqrt{1-\theta_i}$ for $i\leq t$, it follows that
\begin{equation}
\begin{split}
\norm{\bm{\Delta}_t}^2 = \norm{\frac{\alpha_t\bm{m}_t}{\sqrt{\bm{v}_t}}}^2
%
%
%
\leq~& \frac{\chi_t^2}{C_1}\norm{\sum_{i=1}^t \sqrt{\gamma}^{t-i}\frac{\sqrt{1-\theta_i}|\bm{g_i}|}{\sqrt{\bm{v}_i}}}^2 
\leq \frac{\chi_t^2}{C_1}\left(\sum_{i=1}^t \sqrt{\gamma}^{t-i} \right)\sum_{i=1}^t \sqrt{\gamma}^{t-i}\norm{\frac{\sqrt{1-\theta_i}\bm{g}_i}{\sqrt{\bm{v}_i}}}^2 \\
\leq~& \frac{\chi_t^2}{C_1(1-\sqrt{\gamma})}\sum_{i=1}^t \sqrt{\gamma}^{t-i}\norm{\frac{\sqrt{1-\theta_i}\bm{g}_i}{\sqrt{\bm{v}_i}}}^2.
\end{split}
\end{equation}
By Lemma \ref{lem2-002}, 
$ \chi_t \leq C_0\chi_i, \forall i \leq t$. 
Hence,
\begin{equation}
\begin{split}
\norm{\bm{\Delta}_t}^2 = \norm{\frac{\alpha_t\bm{m}_t}{\sqrt{\bm{v}_t}}}^2 \leq \frac{C_0^2\chi_1}{C_1(1-\sqrt{\gamma})}\sum_{i=1}^t \sqrt{\gamma}^{t-i}\chi_i\norm{\frac{\sqrt{1-\theta_i}\bm{g}_i}{\sqrt{\bm{v}_i}}}^2.
\end{split}
\end{equation}
It follows that
\begin{equation}
\begin{split}
\sum_{t=1}^T \norm{\bm{\Delta}_t}^2 
\leq~& \frac{C_0^2\chi_1}{C_1(1-\sqrt{\gamma})}\sum_{t=1}^T\sum_{i=1}^t \sqrt{\gamma}^{t-i}\chi_i\norm{\frac{\sqrt{1-\theta_i}\bm{g}_i}{\sqrt{\bm{v}_i}}}^2 \\
=~& \frac{C_0^2\chi_1}{C_1(1-\sqrt{\gamma})}\sum_{i=1}^T\left(\sum_{t=i}^T \sqrt{\gamma}^{t-i}\right)\chi_i\norm{\frac{\sqrt{1-\theta_i}\bm{g}_i}{\sqrt{\bm{v}_i}}}^2 \\
\leq~& \frac{C_0^2\chi_1}{C_1(1-\sqrt{\gamma})^2}\sum_{i=1}^T\chi_i\norm{\frac{\sqrt{1-\theta_i}\bm{g}_i}{\sqrt{\bm{v}_i}}}^2. 
\end{split}
\end{equation}
The proof is completed.
\end{proof}

\medskip

\begin{lemma}\label{lem1-006}
(Lemma \ref{lemma2}  in Section \ref{sufficient_condition})
Let $M_t = \mathbb{E} \left[\langle \bm{\nabla} f(\bm{x}_{t}), \bm{\Delta}_{t}\rangle + L\norm{\bm{\Delta}_{t}}^2\right]$. For $T\geq 1$ we have
\begin{equation}\label{1-037}
\begin{split}
\sum_{t=1}^T M_t 
%
\leq C_3\mathbb{E}\left[\sum_{t=1}^T\chi_t\norm{\frac{\sqrt{1-\theta_t}\bm{g}_t}{\sqrt{\bm{v}_t}}}^2\right]
- \frac{1-\beta}{2}\mathbb{E}\left[\sum_{t=1}^T \norm{\bm{\nabla} f(\bm{x}_t)}_{\bm{\hat{\eta}_t}}^2\right].
\end{split}
\end{equation}
where the constant $C_3$ is given by
\[ C_3= \frac{C_0}{\sqrt{C_1}(1-\sqrt{\gamma})}\left(\frac{C_0^2\chi_1L}{C_1(1-\sqrt{\gamma})^2} + 2\left(\frac{\beta/(1-\beta)}{\sqrt{C_1(1-\gamma)\theta_1}}+1\right)^2G\right).\]
\end{lemma}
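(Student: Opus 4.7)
The plan is to iterate the one-step recursion supplied by Lemma~\ref{lem1-004} and then sum from $t=1$ to $T$. Abbreviate $c_t := \beta_t\alpha_t/(\sqrt{\theta_t}\,\alpha_{t-1})$, $R_t := L\,\mathbb{E}[\norm{\bm{\Delta}_t}^2] + C_2 G\,\chi_t\,\mathbb{E}\norm{\sqrt{1-\theta_t}\bm{g}_t/\sqrt{\bm{v}_t}}^2$, and $Q_t := \mathbb{E}[\norm{\bm{\nabla}f(\bm{x}_t)}^2_{\bm{\hat{\eta}}_t}]$. With the conventions $M_0:=0$ and $c_1:=0$, and using a mild strengthening of the $t=1$ bound that keeps the term $-\tfrac{1-\beta}{2}Q_1$ (readily read off from the derivation of Lemma~\ref{lem1-004}, where that term is dropped at the end for brevity), the single inequality $M_t \le c_t M_{t-1} + R_t - \tfrac{1-\beta}{2}Q_t$ holds for every $t$. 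Unrolling the recursion and swapping the two resulting summations yields
\begin{equation*}
\sum_{t=1}^T M_t \;\le\; \sum_{i=1}^T S_i\bigl(R_i - \tfrac{1-\beta}{2} Q_i\bigr), \qquad S_i := \sum_{t=i}^T \prod_{j=i+1}^t c_j.
\end{equation*}

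The pivotal ingredient is a two-sided estimate of $S_i$. Telescoping the $\alpha$-ratios inside each product gives $\prod_{j=i+1}^t c_j = (\alpha_t/\alpha_i)\prod_{j=i+1}^t(\beta_j/\sqrt{\theta_j})$. Applying $\beta_j\le\beta$, Lemma~\ref{lem2-002} ($\alpha_t\le C_0\alpha_i$), and Lemma~\ref{lem2-001} ($\prod_{j=i+1}^t\theta_j \ge C_1(\theta')^{t-i}$) bounds each summand by $(C_0/\sqrt{C_1})(\sqrt{\gamma})^{t-i}$, whose geometric-series sum gives $S_i \le C_0/[\sqrt{C_1}(1-\sqrt\gamma)] =: S_{\max}$. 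The empty-product term at $t=i$ contributes $1$, so also $S_i \ge 1$.

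The crux is a sign-asymmetric use of this bound: since $Q_i\ge 0$ and $S_i\ge 1$, we have $-S_iQ_i\le -Q_i$, which preserves the full coefficient of the negative gradient-norm term; for the non-negative $R_i$ we simply apply $S_i\le S_{\max}$. Together these give
\begin{equation*}
\sum_{t=1}^T M_t \;\le\; S_{\max}\sum_{i=1}^T R_i \;-\; \tfrac{1-\beta}{2}\sum_{i=1}^T Q_i.
\end{equation*}
Substituting Lemma~\ref{lem1-005}, which exchanges $\sum_i\mathbb{E}[\norm{\bm{\Delta}_i}^2]$ for $C_0^2\chi_1/[C_1(1-\sqrt\gamma)^2]$ times $\sum_i \chi_i\mathbb{E}\norm{\sqrt{1-\theta_i}\bm{g}_i/\sqrt{\bm{v}_i}}^2$, merges the two positive contributions into a single sum whose prefactor simplifies to $S_{\max}\bigl[C_0^2\chi_1 L/(C_1(1-\sqrt\gamma)^2)+C_2G\bigr]$, which is exactly $C_3$.

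The subtlety I expect to be the main obstacle is precisely this sign-asymmetric step. A naive uniform bound $S_i\le S_{\max}$ applied to both contributions would shrink the negative term to $-(1-\beta)/(2S_{\max})\cdot\sum_t Q_t$, whose coefficient is too small to feed Lemma~\ref{lemma3} and recover the correct scaling in Theorem~\ref{convergence_in_expectation}. Noticing and exploiting the trivial lower bound $S_i\ge 1$ to keep the exact coefficient $(1-\beta)/2$ is what makes the lemma usable for the downstream convergence argument.
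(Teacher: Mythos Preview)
Your proposal is correct and follows essentially the same route as the paper: unroll the recursion of Lemma~\ref{lem1-004}, bound the telescoped products via Lemmas~\ref{lem2-001} and~\ref{lem2-002}, sum the resulting geometric series, and absorb $\sum_t\mathbb{E}\norm{\bm{\Delta}_t}^2$ via Lemma~\ref{lem1-005}. The one cosmetic difference is in how the negative terms $Q_t$ are tracked during the unrolling: the paper drops $Q_{t-1},Q_{t-2},\dots$ when substituting (using $M_{t-1}\le c_{t-1}M_{t-2}+N_{t-1}$) and keeps only the current $Q_t$, so that after summing over $t$ each $Q_t$ appears exactly once with coefficient $1$ and no ``sign-asymmetric'' step is needed; you instead carry all $Q_i$ through and then invoke $S_i\ge 1$. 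Both give the identical inequality, and your observation about needing the stronger $t=1$ bound (visible in Eq.~\eqref{2-033} before it is discarded) is exactly what the paper uses implicitly.
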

\begin{proof}
Let $N_t = L\mathbb{E}\left[\norm{\bm{\Delta}_t}^2\right] + C_2G \chi_t\mathbb{E}\left[\norm{\frac{\sqrt{1-\theta_t}\bm{g}_t}{\sqrt{\bm{v}_t}}}^2\right]$. 
By Lemma \ref{lem1-004}, we have 
$M_1 \leq N_1$ 
and
\begin{equation}
M_t \leq \frac{\beta_t\alpha_t}{\sqrt{\theta_t}\alpha_{t-1}}M_{t-1} + N_t - \frac{1-\beta}{2}\mathbb{E}\left[\norm{\bm{\nabla} f(\bm{x}_t)}_{\bm{\hat{\eta}}_t}^2\right] \leq \frac{\beta_t\alpha_t}{\sqrt{\theta_t}\alpha_{t-1}}M_{t-1} + N_t.
\end{equation}
It is straightforward to acquire by induction that
\begin{equation}
\begin{split}
M_t &\leq \frac{\beta_t\alpha_t}{\sqrt{\theta_t}\alpha_{t-1}}\frac{\beta_{t-1}\alpha_{t-1}}{\sqrt{\theta_{t-1}}\alpha_{t-2}}M_{t-2} + \frac{\beta_t\alpha_t}{\sqrt{\theta_t}\alpha_{t-1}}N_{t-1} + N_t - \frac{1-\beta}{2}\mathbb{E}\left[\norm{\bm{\nabla} f(\bm{x}_t)}_{\bm{\hat{\eta}}_t}^2\right]\\
&~ \vdots \\
&\leq \frac{\alpha_tB_{(t, 1)}}{\alpha_1\sqrt{\Theta_{(t,1)}}}M_1 + \sum_{i=2}^t \frac{\alpha_tB_{(t,i)}}{\alpha_i\sqrt{\Theta_{(t,i)}}}N_i - \frac{1-\beta}{2}\mathbb{E}\left[\norm{\bm{\nabla} f(\bm{x}_t)}_{\bm{\hat{\eta}}_t}^2\right]\\
&\leq \sum_{i=1}^t \frac{\alpha_tB_{(t,i)}}{\alpha_i\sqrt{\Theta_{(t,i)}}}N_i - \frac{1-\beta}{2}\mathbb{E}\left[\norm{\bm{\nabla} f(\bm{x}_t)}_{\bm{\hat{\eta}}_t}^2\right].
\end{split}
\end{equation}
By Lemma \ref{lem2-002}, it holds $\alpha_t \leq C_0\alpha_i$ for any $i \leq t$. By Lemma \ref{lem2-001}, $\Theta_{(t,i)} \geq C_1(\theta')^{t-i}$. In addition, $B_{(t,i)}\leq \beta^{t-i}$. Hence,
\begin{equation}
\begin{split}
M_t \leq~& \frac{C_0}{\sqrt{C_1}}\sum_{i=1}^t \left(\frac{\beta}{\sqrt{\theta'}}\right)^{t-i}N_i - \frac{1-\beta}{2}\mathbb{E}\left[\norm{\bm{\nabla} f(\bm{x}_t)}^2_{\bm{\hat{\eta}}_t}\right] \\
=~& \frac{C_0}{\sqrt{C_1}}\sum_{i=1}^t \sqrt{\gamma}^{t-i}N_i - \frac{1-\beta}{2}\mathbb{E}\left[\norm{\bm{\nabla} f(\bm{x}_t)}^2_{\bm{\hat{\eta}}_t}\right].
\end{split}
\end{equation}
Hence,
\begin{equation}\label{1-042}
\begin{split}
\sum_{t=1}^T M_t 
\leq~& \frac{C_0}{\sqrt{C_1}} \sum_{t=1}^T\sum_{i=1}^t\sqrt{\gamma}^{t-i}N_i - \frac{1-\beta}{2}\mathbb{E}\left[\sum_{t=1}^T\norm{\bm{\nabla}f(\bm{x}_t)}^2_{\bm{\hat{\eta}}_t}\right] \\
=~& \frac{C_0}{\sqrt{C_1}}\sum_{i=1}^T \left(\sum_{t=i}^T\sqrt{\gamma}^{t-i}\right)N_i - \frac{1-\beta}{2}\mathbb{E}\left[\sum_{t=1}^T\norm{\bm{\nabla}f(\bm{x}_t)}^2_{\bm{\hat{\eta}}_t}\right]\\
=~& \frac{C_0}{\sqrt{C_1}(1-\sqrt{\gamma})}\sum_{t=1}^T N_t - \frac{1-\beta}{2}\mathbb{E}\left[\sum_{t=1}^T\norm{\bm{\nabla}f(\bm{x}_t)}^2_{\bm{\hat{\eta}}_t}\right].
\end{split}
\end{equation}
Finally, by Lemma \ref{lem1-005}, we have
\begin{equation}\label{1-043}
\begin{split}
\sum_{t=1}^T N_i =~& \mathbb{E}\left[L\sum_{t=1}^T\norm{\bm{\Delta}_t}^2 + C_2G\sum_{t=1}^T\chi_t\norm{\frac{\sqrt{1-\theta_t}\bm{g}_t}{\sqrt{\bm{v}_t}}}^2\right]\\
\leq~& \left(\frac{C_0^2\chi_1L}{C_1(1-\sqrt{\gamma})^2}+C_2G\right)\mathbb{E}\left[\sum_{t=1}^T\chi_t\norm{\frac{\sqrt{1-\theta_t}\bm{g}_t}{\sqrt{\bm{v}_t}}}^2\right].
\end{split}
\end{equation}
Let
\[\begin{split} 
C_3 =~& \frac{C_0}{\sqrt{C_1}(1-\sqrt{\gamma})}\left(\frac{C_0^2\chi_1L}{C_1(1-\sqrt{\gamma})^2} + C_2 G\right)\\
=~& \frac{C_0}{\sqrt{C_1}(1-\sqrt{\gamma})}\left(\frac{C_0^2\chi_1L}{C_1(1-\sqrt{\gamma})^2} + 2\left(\frac{\beta/(1-\beta)}{\sqrt{C_1(1-\gamma)\theta_1}}+1\right)^2G\right).
\end{split}\]
Combining Eq.~\eqref{1-042} and Eq.~\eqref{1-043}, we then obtain the desired estimate Eq.~\eqref{1-037}. The proof is completed.
\end{proof}

\medskip

\begin{lemma}\label{lem1-007}
The following estimate holds
\end{lemma}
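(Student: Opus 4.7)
Based on how the constants $C$ and $C'$ in Theorem~\ref{convergence_in_expectation} are assembled, the estimate should bound $\mathbb{E}\bigl[\sum_{t=1}^T \chi_t \norm{\sqrt{1-\theta_t}\bm{g}_t/\sqrt{\bm{v}_t}}^2\bigr]$ by a sum of a logarithmic term of size $C_0\chi_1 d\log(1+G^2/(\epsilon d))$ and a linear term of size $\frac{C_0 d}{\theta_1}\sum_{t=1}^T\alpha_t\sqrt{1-\theta_t}$. My plan has five steps.

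First, split coordinate-wise: $\norm{\sqrt{1-\theta_t}\bm{g}_t/\sqrt{\bm{v}_t}}^2 = \sum_{k=1}^d (1-\theta_t)g_{t,k}^2/v_{t,k}$, and work one coordinate at a time. Second, apply the elementary inequality $1-1/x \leq \log x$ ($x\geq 1$) to $x = v_{t,k}/(\theta_t v_{t-1,k})$, which is $\geq 1$ by the update rule for $\bm{v}_t$; this yields the pointwise logarithmic majorant
\[
(1-\theta_t)g_{t,k}^2/v_{t,k} \;=\; 1 - \theta_t v_{t-1,k}/v_{t,k} \;\leq\; \log(v_{t,k}/(\theta_t v_{t-1,k})) \;=\; \log v_{t,k} - \log v_{t-1,k} - \log\theta_t.
\]

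Third, weight by $\chi_t$ and split the sum into two pieces: (a) the $\chi_t(-\log\theta_t)$ piece, and (b) the telescoping $\chi_t(\log v_{t,k}-\log v_{t-1,k})$ piece. Piece (a) is easy: since $\theta_t \geq \theta_1$ by (R2), we have $-\log\theta_t \leq (1-\theta_t)/\theta_t \leq (1-\theta_t)/\theta_1$, so using $\chi_t(1-\theta_t) = \alpha_t\sqrt{1-\theta_t}$ gives
\[
\sum_{t=1}^T \chi_t(-\log\theta_t) \;\leq\; \frac{1}{\theta_1}\sum_{t=1}^T \alpha_t\sqrt{1-\theta_t},
\]
which produces exactly the second desired term.

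Fourth, for piece (b) the difficulty is that $\log v_{t,k}$ is not monotone in $t$, so direct telescoping fails. The clean route is to keep it coupled with $-\log\theta_t$ via the nonnegative summand $b_t^{(k)} := \log(v_{t,k}/(\theta_t v_{t-1,k})) \geq 0$, whose partial sums $B_t^{(k)} = \log(v_{t,k}/\epsilon) + \sum_{i=1}^t\log(1/\theta_i)$ are nondecreasing in $t$. Then apply Abel's summation (Lemma~\ref{lem2-003}) after dominating $\chi_t$ by its non-increasing majorant from (R3): $\chi_t \leq C_0 a_t$ with $a_t$ non-increasing and $a_1 \leq \chi_1$. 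Since $(a_t-a_{t+1}) \geq 0$ and $B_t^{(k)}$ is non-decreasing,
\[
\sum_{t=1}^T \chi_t b_t^{(k)} \;\leq\; C_0\sum_{t=1}^T a_t b_t^{(k)} \;=\; C_0\Bigl[\sum_{t=1}^{T-1}(a_t-a_{t+1})B_t^{(k)} + a_T B_T^{(k)}\Bigr] \;\leq\; C_0\,a_1\,B_T^{(k)} \;\leq\; C_0\chi_1\,B_T^{(k)}.
\]
Rebalancing the $-\log\theta_i$ contribution into piece (a) above leaves the clean coordinate-wise bound $\sum_t \chi_t(1-\theta_t)g_{t,k}^2/v_{t,k} \leq C_0\chi_1 \log(v_{T,k}/\epsilon) + \frac{C_0}{\theta_1}\sum_t\alpha_t\sqrt{1-\theta_t}$.

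Fifth, sum over $k$ and take expectation. By Jensen applied to the concave function $\log$,
\[
\mathbb{E}\Bigl[\sum_{k=1}^d \log(v_{T,k}/\epsilon)\Bigr] \;\leq\; d\log\!\Bigl(\tfrac{1}{\epsilon d}\,\mathbb{E}\bigl[\textstyle\sum_k v_{T,k}\bigr]\Bigr).
\]
Because $v_{T,k}$ is a convex combination of $\epsilon$ and $\{g_{i,k}^2\}_{i\leq T}$ whose weights sum to $1$ (a straightforward induction on the recursion $v_t = \theta_t v_{t-1} + (1-\theta_t)g_t^2$), we get $\mathbb{E}[\sum_k v_{T,k}] \leq \epsilon d + \sup_i \mathbb{E}[\norm{\bm{g}_i}^2] \leq \epsilon d + G^2$ by (A4), yielding the factor $d\log(1 + G^2/(\epsilon d))$.

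The main obstacle is Step~4: strict monotonicity of $\chi_t$ would allow a one-line Abel argument, but under the merely "almost" non-increasing hypothesis (R3) one must first dominate $\chi_t$ by the non-increasing sequence $a_t$ (picking up a harmless factor $C_0$) so that the $(a_t-a_{t+1})$ weights remain nonnegative and can be combined with the monotonicity of $B_t^{(k)}$. Once this bookkeeping is done, the remaining ingredients---$\log$-concavity via Jensen, the inequality $-\log\theta \leq (1-\theta)/\theta$, and the convex-combination expansion of $v_{T,k}$---are routine.
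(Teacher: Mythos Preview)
You have misidentified the target. Lemma~\ref{lem1-007} in the paper is the \emph{unweighted} estimate
\[
\mathbb{E}\Bigl[\sum_{i=1}^t\norm{\tfrac{\sqrt{1-\theta_i}\bm{g}_i}{\sqrt{\bm{v}_i}}}^2\Bigr]
\;\leq\; d\Bigl[\log\bigl(1 + \tfrac{G^2}{\epsilon d}\bigr) + \sum_{i=1}^t\log(\theta_i^{-1})\Bigr],
\]
while the $\chi_t$-weighted bound you are after is a separate (unnamed) lemma that the paper proves immediately afterwards \emph{using} Lemma~\ref{lem1-007}. For the unweighted statement your Steps~1, 2 and 5 are exactly right: the inequality $1-1/x\le\log x$ applied to $x=v_{t,k}/(\theta_t v_{t-1,k})$ gives a telescoping sum $\sum_t(1-\theta_t)g_{t,k}^2/v_{t,k}\le \log(v_{T,k}/\epsilon)+\sum_t\log(1/\theta_t)$, and Jensen plus the convex-combination expansion of $\bm{v}_T$ finishes it. This is the same argument as the paper's, just phrased without the change of variables $W_t\bm{v}_t$; the paper's Lemma~\ref{lem5} is precisely your log inequality repackaged as a Riemann-sum bound.

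Where your plan breaks is Step~4, the attempt to prove the weighted bound directly. After Abel you obtain $\sum_t a_t b_t^{(k)}\le a_1 B_T^{(k)} = a_1\bigl[\log(v_{T,k}/\epsilon)+\sum_{i=1}^T\log(1/\theta_i)\bigr]$, so every $\log(1/\theta_i)$ now carries the \emph{fixed} weight $a_1$ (hence $\chi_1$ after the $C_0$ factor), not $a_i$. Your ``rebalancing into piece~(a)'' would require $\chi_1\sum_i(1-\theta_i)\le\sum_i\chi_i(1-\theta_i)=\sum_i\alpha_i\sqrt{1-\theta_i}$, which goes the wrong way whenever $\chi_t$ genuinely decreases. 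Concretely, with the parameters of Corollary~\ref{poly-setting} your version yields $\chi_1\sum_t(1-\theta_t)\sim T^{1-r}$ in the numerator instead of $\sum_t\alpha_t\sqrt{1-\theta_t}\sim T^{1-r/2-s}$, so after dividing by $T\alpha_T\sim T^{1-s}$ you only get convergence when $r>s$, losing the whole range $0<r\le s$. The paper avoids this by a two-pass Abel trick: apply Abel forwards to $\sum_t a_t\omega_t$, replace each partial sum $\mathbb{E}[\Omega_t]$ by the bound $dS_t$ furnished by Lemma~\ref{lem1-007}, and then apply Abel \emph{backwards} to recover $a_1S_0+\sum_t a_t(S_t-S_{t-1})=a_1\log(1+G^2/(\epsilon d))+\sum_t a_t\log(1/\theta_t)$, which preserves the individual $a_t$ weights and gives the sharp $\frac{1}{\theta_1}\sum_t\alpha_t\sqrt{1-\theta_t}$ term.
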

\begin{equation}\label{1-044}
\mathbb{E}\left[\sum_{i=1}^t\norm{\frac{\sqrt{1-\theta_i}\bm{g}_i}{\sqrt{\bm{v}_i}}}^2\right]
\leq d\left[\log\left(1 + \frac{G^2}{\epsilon d}\right) + \sum_{i=1}^t\log(\theta_i^{-1})\right].
\end{equation}
\begin{proof}
Let $W_0 = 1$ and $W_t = \prod_{i=1}^T \theta_i^{-1}$. Let $w_t = W_t - W_{t-1} = (1 - \theta_t)\prod_{i=1}^{t}\theta_i^{-1} = (1-\theta_t)W_t$. We therefore have 
\[ \frac{w_t}{W_t} = 1-\theta_t, \quad \frac{W_{t-1}}{W_t} = \theta_t.\]
Note that $\bm{v}_0 = \bm{\epsilon}$ and $\bm{v}_t = \theta_t \bm{v}_{t-1} + (1-\theta_t)\bm{g}_t$, so it holds that $W_0\bm{v}_0 = \bm{\epsilon}$ and
$
W_t\bm{v}_t = W_{t-1}\bm{v}_{t-1} + w_t\bm{g}_t^2.
$
Then,
$
W_t\bm{v}_t = W_0\bm{v}_0 + \sum_{i=1}^t w_i\bm{g}_i^2 = \bm{\epsilon} + \sum_{i=1}^t w_i\bm{g}_i^2.
$
It follows that
\begin{equation}\label{1-049}
\begin{split}
\sum_{i=1}^t\norm{\frac{\sqrt{1-\theta_i}\bm{g}_i}{\sqrt{\bm{v}_i}}}^2 
=~& \sum_{i=1}^t \norm{\frac{(1-\theta_i)\bm{g}_t^2}{\bm{v}_i}}_1 
= \sum_{i=1}^t \norm{\frac{w_i\bm{g}_i^2}{W_i\bm{v}_i}}_1 
= \sum_{i=1}^t \norm{\frac{w_i\bm{g}_i^2}{\bm{\epsilon} + \sum_{\ell=1}^i w_\ell\bm{g}_\ell^2}}_1.
\end{split}
\end{equation}
Writing the norm in terms of coordinates, we obtain
\begin{equation}
\sum_{i=1}^t \norm{\frac{\sqrt{1-\theta_i}\bm{g}_i}{\sqrt{\bm{v}_i}}}^2 
= \sum_{i=1}^t \sum_{k=1}^d \frac{w_i g_{i,k}^2}{ {\epsilon} + \sum_{\ell=1}^i w_\ell g_{\ell,k}^2} 
= \sum_{k=1}^d\sum_{i=1}^t \frac{w_i g_{i,k}^2}{ {\epsilon} + \sum_{\ell=1}^i w_\ell g_{\ell,k}^2}.
\end{equation}
By Lemma \ref{lem2-001}, for each $k = 1,2,\ldots,d$,
\begin{equation}
\sum_{i=1}^t \frac{w_i g_{i,k}^2}{\epsilon + \sum_{\ell=1}^i w_\ell g_{\ell,k}^2} \leq \log\left(\epsilon + \sum_{\ell=1}^t w_\ell g_{\ell,k}^2\right) - \log(\epsilon) = \log\left(1 + \frac{1}{\epsilon}\sum_{\ell=1}^t w_\ell g_{\ell,k}^2\right).
\end{equation}
Hence, 
\begin{equation}\label{2-050}
\begin{split}
\sum_{i=1}^t\norm{\frac{\sqrt{1-\theta_i}\bm{g}_i}{\sqrt{\bm{v}_i}}}^2 
\leq~& \sum_{k=1}^d \log\left(1 + \frac{1}{\epsilon}\sum_{i=1}^t w_i g_{i,k}^2\right) \\
\leq~& d\log\left(\frac{1}{d}\sum_{k=1}^d\left(1 + \frac{1}{\epsilon}\sum_{i=1}^t w_i  g_{i,k}^2\right)\right)
= d\log\left(1 + \frac{1}{\epsilon d}\sum_{i=1}^t w_i\norm{\bm{g}_i}^2\right).
\end{split}
\end{equation}
The second inequality is due to the convex inequality $\frac{1}{d}\sum_{k=1}^d\log\left(z_i\right) \leq \log\left(\frac{1}{d}\sum_{k=1}^d z_i\right)$. Indeed, we have the more general convex inequality that $\mathbb{E}[\log(X)] \leq \log{\mathbb{E}[X]}$,
for any positive random variable $X$. Taking $X$ to be $1 + \frac{1}{\epsilon d}\sum_{i=1}^t w_i \norm{\bm{g}_i}^2$ in the right hand side of Eq.~\eqref{2-050}, we obtain that
\begin{equation}
\begin{split}
&\mathbb{E}\left[\sum_{i=1}^t\norm{\frac{\sqrt{1-\theta_i}\bm{g}_i}{\sqrt{\bm{v}_i}}}^2\right]
\leq d\ \mathbb{E}\left[\log\left(1 + \frac{1}{\epsilon d}\sum_{i=1}^t w_i \norm{\bm{g}_i}^2\right)\right] 
\leq d\log\left(1 + \frac{1}{\epsilon d}\sum_{i=1}^t w_i \mathbb{E}\left[\norm{\bm{g}_i}^2\right]\right)\\
\leq~& d\log\left(1 + \frac{G^2}{\epsilon d}\sum_{i=1}^t w_i\right)
= d\log\left(1 + \frac{G^2}{\epsilon d}(W_t-W_0)\right)
= d\log\left(1 + \frac{G^2}{\epsilon d}\left(\prod_{i=1}^t\theta_i^{-1} -1\right)\right)\\
\leq~& d\left[\log\left(1+\frac{G^2}{\epsilon d}\right) + \log\left(\prod_{i=1}^t\theta_i^{-1}\right)\right].
\end{split}
\end{equation}
The last inequality is due to the following trivial inequality:
\[\log(1+ ab) \leq \log(1+a+b+ab) = \log(1+a) + \log(1+b) \]
for any non-negative parameters $a$ and $b$. It then follows that 
\begin{equation}
\mathbb{E}\left[\sum_{i=1}^t\norm{\frac{\sqrt{1-\theta_i}\bm{g}_i}{\sqrt{\bm{v}_i}}}^2\right] \leq d\left[\log\left(1+\frac{G^2}{\epsilon d}\right) + \sum_{i=1}^t\log(\theta_i^{-1})\right].
\end{equation}
The proof is completed.
\end{proof}

\medskip

\begin{lemma}
We have the following estimate
\begin{equation}
\mathbb{E}\left[\sum_{t=1}^T \chi_t \norm{\frac{\sqrt{1-\theta_t}\bm{g}_t}{\sqrt{\bm{v}}_t}}^2\right] \leq C_0d\left[\chi_1\log\left(1 + \frac{G^2}{\epsilon d}\right) + \frac{1}{\theta_1}\sum_{t=1}^T \alpha_t\sqrt{1-\theta_t}\right].
\end{equation}
\end{lemma}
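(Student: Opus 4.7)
The plan is to combine Lemma \ref{lem1-007}, which bounds the unweighted sum $\mathbb{E}\sum_{i=1}^t\|\sqrt{1-\theta_i}\bm{g}_i/\sqrt{\bm{v}_i}\|^2$, with Abel's summation by parts (Lemma \ref{lem2-003}) to introduce the weight $\chi_t$. The obstacle is that $\chi_t$ itself is only ``almost'' non-increasing (restriction (R3)), so Abel's lemma cannot be applied directly to $\chi_t$; the trick is to sandwich $\chi_t$ between a genuinely non-increasing sequence $a_t$ and $C_0 a_t$, apply Abel's lemma with $u_t=a_t$, and absorb the factor $C_0$ at the end.

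Concretely, write $s_t=\|\sqrt{1-\theta_t}\bm{g}_t/\sqrt{\bm{v}_t}\|^2$, $S_t=\sum_{i=1}^ts_i$, and let $\{a_t\}$ be the non-increasing sequence from (R3) with $a_t\le\chi_t\le C_0 a_t$. Bounding pointwise $\chi_ts_t\le C_0 a_t s_t$ and applying Lemma \ref{lem2-003} with $u_t=a_t$ gives
\[
\sum_{t=1}^T a_t s_t \;=\; \sum_{t=1}^{T-1}(a_t-a_{t+1})S_t + a_T S_T.
\]
Taking expectations and plugging in the bound $\mathbb{E}[S_t]\le d\log(1+G^2/(\epsilon d))+d\sum_{i=1}^t\log(\theta_i^{-1})$ from Lemma \ref{lem1-007}, the constant part telescopes via $\sum_{t=1}^{T-1}(a_t-a_{t+1})+a_T=a_1\le\chi_1$, yielding a $d\chi_1\log(1+G^2/(\epsilon d))$ contribution, while the remaining part, by applying Abel's lemma in reverse, collapses to $d\sum_{t=1}^T a_t\log(\theta_t^{-1})$.

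It remains to convert $\log(\theta_t^{-1})$ into the target expression involving $\sqrt{1-\theta_t}$. Use the elementary inequality $-\log(\theta_t)\le(1-\theta_t)/\theta_t$, combined with $\theta_t\ge\theta_1$ from (R2), to get $\log(\theta_t^{-1})\le(1-\theta_t)/\theta_1$. Then the identity $a_t(1-\theta_t)\le\chi_t(1-\theta_t)=\alpha_t\sqrt{1-\theta_t}$ (which follows directly from $a_t\le\chi_t=\alpha_t/\sqrt{1-\theta_t}$) yields
\[
d\sum_{t=1}^T a_t\log(\theta_t^{-1})\;\le\;\frac{d}{\theta_1}\sum_{t=1}^T \alpha_t\sqrt{1-\theta_t}.
\]
Summing the two contributions and multiplying by $C_0$ delivers the stated bound. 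The only delicate point is the correct bookkeeping for the non-monotone $\chi_t$: handling it through the sandwich $a_t\le\chi_t\le C_0 a_t$ avoids any need to control signed increments of $\chi_t$ directly.
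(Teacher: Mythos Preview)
Your proposal is correct and follows essentially the same approach as the paper's proof: bound $\chi_t\le C_0 a_t$, apply Abel's summation-by-parts with the non-increasing $a_t$, insert the bound from Lemma~\ref{lem1-007} on the partial sums, reverse Abel to obtain $d\sum_t a_t\log(\theta_t^{-1})$, and then use $\log(\theta_t^{-1})\le(1-\theta_t)/\theta_1$ together with $a_t\le\chi_t=\alpha_t/\sqrt{1-\theta_t}$. The only cosmetic difference is that the paper first passes from $a_t$ back to $\chi_t$ and then applies the logarithm inequality, whereas you apply the logarithm inequality first and then use $a_t\le\chi_t$; the two orderings yield identical bounds.
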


\begin{proof}
For simplicity of notations, let $\omega_t := \norm{\frac{\sqrt{1-\theta_t}\bm{g}_t}{\sqrt{\bm{v}_t}}}^2$, and $\Omega_t := \sum_{i=1}^t \omega_i$. Note that $\chi_t \leq C_0 a_t$. Hence, 
\begin{equation}\label{3-059}
\mathbb{E}\left[\sum_{t=1}^T \chi_t \norm{\frac{\sqrt{1-\theta_t}\bm{g}_t}{\sqrt{\bm{v}}_t}}^2\right]
\leq C_0\ \mathbb{E}\left[\sum_{t=1}^T a_t \omega_t\right].
\end{equation}
By Lemma \ref{lem2-003}, we have
\begin{equation}\label{3-060}
\mathbb{E}\left[\sum_{t=1}^T a_t\omega_t\right] = \mathbb{E}\left[\sum_{t=1}^{T-1} (a_t - a_{t+1}) \Omega_t + a_T\Omega_T\right].
\end{equation}
Let $S_t := \log\left(1 + \frac{G^2}{\epsilon d}\right) + \sum_{i=1}^t \log(\theta_i^{-1})$. By Lemma \ref{lem1-007}, we have
$\mathbb{E}[\Omega_t] \leq d S_t$.
Since $\{a_t\}$ is a non-increasing sequence, we have $a_t - a_{t+1} \geq 0$.  By Eq.~\eqref{3-060}, we have
\begin{equation}\label{3-061}
\begin{split}
& \mathbb{E}\left[\sum_{t=1}^{T-1} (a_t - a_{t+1}) \Omega_t + a_T\Omega_T\right]
\leq d\left(\sum_{t=1}^{T-1}(a_t - a_{t+1})S_t + a_T S_T\right) \\
=& d\left(a_1S_0 + \sum_{t=1}^T a_t(S_t - S_{t-1})\right) 
= d\left[a_1\log\left(1 + \frac{G^2}{\epsilon d}\right) + \sum_{t=1}^T a_t\log(\theta_t^{-1})\right].
\end{split}
\end{equation}
Note that $a_t \leq \chi_t$. Combining Eq.~\eqref{3-059}, Eq.~\eqref{3-060}, and Eq.~\eqref{3-061}, we have 
\begin{equation}
\begin{split}
\mathbb{E}\left[\sum_{t=1}^T \chi_t \norm{\frac{\sqrt{1-\theta_t}\bm{g}_t}{\sqrt{\bm{v}}_t}}^2\right] 
\leq~& C_0 d \left[\chi_1\log\left(1 + \frac{G^2}{\epsilon d}\right) + \sum_{t=1}^T \chi_t\log(\theta_t^{-1})\right] \\
=~& C_0 d \left[ \chi_1\log\left(1 + \frac{G^2}{\epsilon d}\right) + \sum_{t=1}^T \chi_t\log(\theta_t^{-1})\right].
\end{split}
\end{equation}
Note that $\log(1+x) \leq x$ for all $x > -1$. It follows that
$$\log(\theta_t^{-1}) = \log(1 + (\theta_t^{-1} - 1)) \leq \theta_t^{-1} - 1 \leq \frac{1-\theta_t}{\theta_1}.$$
Note that $\chi_t = \alpha_t/\sqrt{1-\theta_t}$. By Eq.~\eqref{3-059} and Eq.~\eqref{3-061}, we have
\begin{equation}
\mathbb{E}\left[\sum_{t=1}^T \chi_t \norm{\frac{\sqrt{1-\theta_t}\bm{g}_t}{\sqrt{\bm{v}}_t}}^2\right]
\leq C_0d\left[\chi_1\log\left(1+\frac{G^2}{\epsilon d}\right) - \frac{1}{\theta_1}\sum_{t=1}^T \alpha_t \sqrt{1-\theta_t}\right]. 
\end{equation}
The proof is completed.
\end{proof}

\medskip

\begin{lemma}\label{lem1-008} (Lemma \ref{lemma3}  in Section \ref{sufficient_condition})
Let $\tau$ be randomly chosen from $\{1, 2, \ldots, T\}$ with equal probabilities $p_\tau = 1/T$. We have the following estimate
\begin{equation}
    \mathbb{E}\left[\|\nabla f(x_\tau)\|\right] \leq \sqrt{\frac{C_0\sqrt{G^2 +\epsilon d}}{T\alpha_T}\ \mathbb{E}\left[\sum_{t=1}^T\norm{\bm{\nabla} f(\bm{x}_t)}_{\bm{\hat{\eta}}_t}^2\right]}.
\end{equation}
\end{lemma}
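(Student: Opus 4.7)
The plan is to relate the unweighted Euclidean norm $\norm{\bm{\nabla} f(\bm{x}_t)}$ to the weighted norm $\norm{\bm{\nabla} f(\bm{x}_t)}_{\bm{\hat{\eta}}_t}$ coordinatewise, and then extract the constants via two applications of Cauchy--Schwarz (one over $t$, one over $\mathbb{E}$) together with a direct first-moment bound on $\bm{\hat{v}}_t$.

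The first step is the pointwise estimate
\[
\norm{\bm{\nabla} f(\bm{x}_t)}^2 = \sum_{k=1}^d \hat{\eta}_{t,k}^{-1}\cdot \hat{\eta}_{t,k}(\nabla f(\bm{x}_t))_k^2 \leq \Bigl(\max_k \hat{\eta}_{t,k}^{-1}\Bigr)\,\norm{\bm{\nabla} f(\bm{x}_t)}_{\bm{\hat{\eta}}_t}^2.
\]
Since $\hat{\eta}_{t,k} = \alpha_t/\sqrt{\hat{v}_{t,k}}$, one has $\max_k \hat{\eta}_{t,k}^{-1} = \max_k \sqrt{\hat{v}_{t,k}}/\alpha_t \leq \sqrt{\norm{\bm{\hat{v}}_t}_1}/\alpha_t$, and Lemma~\ref{lem2-002} gives $\alpha_t \geq \alpha_T/C_0$. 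Taking square roots produces
\[
\norm{\bm{\nabla} f(\bm{x}_t)} \leq \sqrt{C_0/\alpha_T}\,\norm{\bm{\hat{v}}_t}_1^{1/4}\,\norm{\bm{\nabla} f(\bm{x}_t)}_{\bm{\hat{\eta}}_t}.
\]

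Next I would sum over $t=1,\ldots,T$ and apply Cauchy--Schwarz in $t$ with $a_t = \norm{\bm{\hat{v}}_t}_1^{1/4}$ and $b_t = \norm{\bm{\nabla} f(\bm{x}_t)}_{\bm{\hat{\eta}}_t}$ to split the two factors; taking expectation and applying Cauchy--Schwarz once more on the product of two random sums gives
\[
\mathbb{E}\sum_{t=1}^T \norm{\bm{\nabla} f(\bm{x}_t)} \leq \sqrt{C_0/\alpha_T}\,\sqrt{\mathbb{E}\textstyle\sum_t \norm{\bm{\hat{v}}_t}_1^{1/2}}\,\sqrt{\mathbb{E}\textstyle\sum_t \norm{\bm{\nabla} f(\bm{x}_t)}_{\bm{\hat{\eta}}_t}^2}.
\]
It then suffices to control $\mathbb{E}\sum_t \norm{\bm{\hat{v}}_t}_1^{1/2}$.

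This last moment bound follows by unrolling $\bm{v}_t$ and substituting into the definition of $\bm{\hat{v}}_t$, which expresses $\bm{\hat{v}}_t$ as a nonnegative combination of $\bm{\epsilon}$, the past $\{\bm{g}_i^2\}_{i<t}$, and $\mathbb{E}_t[\bm{g}_t^2]$ whose coefficients sum to $1$ (by the easy telescoping identity $U_t = \theta_t U_{t-1} + (1-\theta_t)$ with $U_1 = 1$). Summing in $k$ and applying (A4) then yields $\mathbb{E}\norm{\bm{\hat{v}}_t}_1 \leq G^2 + \epsilon d$, so Jensen's inequality gives $\mathbb{E}\norm{\bm{\hat{v}}_t}_1^{1/2}\leq \sqrt{G^2+\epsilon d}$ and thus $\mathbb{E}\sum_t \norm{\bm{\hat{v}}_t}_1^{1/2} \leq T\sqrt{G^2+\epsilon d}$. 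Dividing by $T$ and using $\mathbb{E}\norm{\bm{\nabla} f(\bm{x}_\tau)} = \frac{1}{T}\mathbb{E}\sum_t \norm{\bm{\nabla} f(\bm{x}_t)}$ then gives the claimed bound after consolidating constants. The main technical subtlety I anticipate is ordering the two Cauchy--Schwarz steps and Jensen's inequality correctly: because only $\mathbb{E}\norm{\bm{g}_t}^2$ (not $\norm{\bm{g}_t}^2$ pathwise) is controlled, the $\norm{\bm{\hat{v}}_t}_1$ factor must remain inside the expectation until Jensen is applied last, or else the moment degree in $\bm{\nabla} f$ inflates from two to four and the argument breaks.
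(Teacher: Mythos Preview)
Your proposal is correct and follows essentially the same approach as the paper: both arguments rest on the pointwise comparison $\norm{\bm{\nabla} f(\bm{x}_t)}^2 \leq \alpha_t^{-1}\sqrt{\norm{\bm{\hat{v}}_t}_1}\,\norm{\bm{\nabla} f(\bm{x}_t)}_{\bm{\hat{\eta}}_t}^2$, the moment bound $\mathbb{E}\norm{\bm{\hat{v}}_t}_1 \leq G^2+\epsilon d$ via Jensen, and Lemma~\ref{lem2-002} to replace $\alpha_t^{-1}$ by $C_0\alpha_T^{-1}$. The only cosmetic difference is the order of the two Cauchy--Schwarz steps: the paper applies H\"older over $\mathbb{E}$ at each fixed $t$ and then concavity of $\sqrt{\cdot}$ over $t$, whereas you first apply Cauchy--Schwarz pathwise over $t$ and then over the expectation; both routes yield the identical final constant.
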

\begin{proof}
For any two random variables $X$ and $Y$, by the H\"{o}lder's inequality, we have
\begin{equation}\label{1-053}
\mathbb{E}[|XY|] \leq \mathbb{E}\left[|X|^p\right]^{1/p} \mathbb{E}\left[|Y|^q\right]^{1/q}.
\end{equation}
Let $X = \left(\frac{\norm{\bm{\nabla} f(\bm{x}_t)}^2}{\sqrt{\norm{\bm{\hat{v}}_t}_1}}\right)^{1/2}$, $Y = \norm{\bm{\hat{v}}_t}_1^{1/4}$, and let $p = 2$, $q = 2$. By Eq.~\eqref{1-053}, we have
\begin{equation}\label{1-054}
\mathbb{E}\left[\norm{\bm{\nabla} f(\bm{x}_t)}\right]^2 \leq \mathbb{E}\left[\frac{\norm{\bm{\nabla} f(\bm{x}_t)}^2}{\sqrt{\norm{\bm{\hat{v}}_t}_1}}\right] \mathbb{E}\left[\sqrt{\norm{\bm{\hat{v}}_t}_1} \right].
\end{equation}
On the one hand, we have
\begin{equation}\label{1-055}
\begin{split}
\frac{\norm{\bm{\nabla} f(\bm{x}_t)}^2}{\sqrt{\norm{\bm{\hat{v}}_t}_1}}
= \sum_{k=1}^d \frac{|\nabla_k f(\bm{x}_t)|^2}{\sqrt{\sum_{k=1}^d \hat{v}_{t,k}} }
\leq~& \alpha_t^{-1} \sum_{k=1}^d \frac{\alpha_t}{\sqrt{\hat{v}_{t,k}}}|\nabla_k f(\bm{x}_t)|^2 \\
=~& \alpha_t^{-1} \sum_{k=1}^d \hat{\eta}_{t,k}|\nabla_k f(\bm{x}_t)|^2
= \alpha_t^{-1}\norm{\bm{\nabla} f(\bm{x}_t)}_{\bm{\hat{\eta}_t}}^2.
\end{split}
\end{equation}
Since $\bm{\hat{v}}_t = \theta_t \bm{{v}}_{t-1} + (1-\theta_t) \bm{\delta}_t^2$, and all entries are non-negative, we have $\norm{\bm{\hat{v}}_t}_1 = \theta_t \norm{\bm{v}_{t-1}}_1 + (1-\theta_t)\norm{\bm{\delta}_t}^2$. 
Notice that $\bm{v}_t = \theta_t \bm{v}_{t-1} + (1 - \theta_t)\bm{g}_t^2$, $\bm{v}_0 = \bm{\epsilon}$, and $\mathbb{E}_t\left[\bm{g}_t^2\right] \leq G^2$. It is straightforward to prove by induction that $\mathbb{E}[\norm{\bm{v}_t}_1] \leq G^2 + \epsilon d$. Hence,
\begin{equation}\label{1-056}
\mathbb{E}\left[\sqrt{\norm{\bm{\hat{v}}_t}_1}\right] \leq \sqrt{\mathbb{E}[\norm{\bm{\hat{v}}_t}_1]} \leq \sqrt{G^2 + \epsilon d}.
\end{equation}
By Eq.~\eqref{1-054}, Eq.~\eqref{1-055}, and Eq.~\eqref{1-056}, we obtain
\begin{equation}
\mathbb{E}\left[\norm{\bm{\nabla} f(\bm{x}_t)}\right]^2 \leq \left(\alpha_t^{-1}\mathbb{E}\left[\norm{\bm{\nabla} f(\bm{x}_t)}_{\bm{\hat{\eta}_t}}^2\right]\right) \sqrt{G^2 + \epsilon d}.
\end{equation}
By Lemma \ref{lem2-002}, $\alpha_T \leq C_0 \alpha_t$ for any $t\leq T$, so $\alpha_t^{-1} \leq C_0\alpha_T^{-1}$. Then, we obtain
\begin{equation}
\mathbb{E}\left[\norm{\bm{\nabla} f(\bm{x}_t)}\right]^{2} \leq \frac{C_0\sqrt{G^2+\epsilon d}}{\alpha_T}\mathbb{E}\left[\norm{\bm{\nabla} f(\bm{x}_t)}_{\bm{\hat{\eta}_t}}^2\right],~\forall t\leq T.
\end{equation}
The lemma is followed by
\begin{equation}\begin{split}
\left(\mathbb{E}\left[\norm{\bm{\nabla} f(\bm{x}_\tau)}\right]\right) =~& \frac{1}{T}\sum_{t=1}^T \mathbb{E}\left[\norm{\bm{\nabla} f(\bm{x}_t)}\right] \\
\leq~& \frac{1}{T}\sum_{t=1}^T \sqrt{\frac{C_0\sqrt{G^2+\epsilon d}}{\alpha_T}\ \mathbb{E}\left[ \norm{\bm{\nabla} f(\bm{x}_t)}_{\bm{\hat{\eta}_t}}^2\right]}\\
\leq ~& \sqrt{\frac{C_0\sqrt{G^2+\epsilon d}}{T\alpha_T}\ \mathbb{E}\left[\sum_{t=1}^T \norm{\bm{\nabla} f(\bm{x}_t)}_{\bm{\hat{\eta}_t}}^2\right]}.
\end{split}\end{equation}
The proof is completed.
\end{proof}

\section{Proof of Theorem \ref{convergence_in_expectation} }
\begin{theorem*}
Let $\{\bm{x}_t\}$ be a sequence generated by Generic Adam for initial values $\bm{x}_1$, $\bm{m}_0 =\bm{0}$, and $\bm{v}_0 =\bm{\epsilon}$. Assume that $f$ and stochastic gradients $\bm{g}_t$ satisfy assumptions (A1)-(A4). Let $\tau$ be randomly chosen from $\{1, 2, \ldots, T\}$ with equal probabilities $p_\tau = 1/T$. We have the following estimate
\begin{equation}\label{1-058}
\mathbb{E}\left[\norm{\bm{\nabla} f(\bm{x}_\tau)} \right]
\leq \sqrt{\frac{C + C'\sum_{t=1}^T\alpha_t\sqrt{1-\theta_t}}{T\alpha_T}},
\end{equation}
where the constants $C$ and $C'$ are given by
\begin{equation*}
\begin{split}
C &= \frac{2C_0\sqrt{G^2+\epsilon d}}{1-\beta}\left(f(x_1) - f^* + C_3C_0d\ \chi_1\log\left(1+ \frac{G^2}{\epsilon d}\right)\right), \\
C' &= \frac{2C_0^2C_3d\sqrt{G^2 + \epsilon d}}{(1-\beta)\theta_1}.
\end{split}
\end{equation*}
\end{theorem*}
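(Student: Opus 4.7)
}
The plan is to chain together the four main ingredients that the authors have already established: Lemma \ref{lemma1} (descent-type inequality from $L$-smoothness), Lemma \ref{lem1-006} (bound on $\sum_t M_t$ in terms of the weighted gradient norm and an accumulated noise term), the lemma giving the logarithmic bound on $\mathbb{E}\bigl[\sum_t \chi_t \|\sqrt{1-\theta_t}\bm{g}_t/\sqrt{\bm{v}_t}\|^2\bigr]$, and Lemma \ref{lem1-008} converting a sum of weighted norms into a bound on $\mathbb{E}[\|\nabla f(\bm{x}_\tau)\|]$.

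First, I would combine Lemma \ref{lemma1} and Lemma \ref{lem1-006}. Lemma \ref{lemma1} gives $\sum_{t=1}^T M_t \geq f^* - f(\bm{x}_1) = -C_4$, so $-\sum_{t=1}^T M_t \leq C_4$. Plugging this into Lemma \ref{lem1-006} and rearranging yields
\begin{equation*}
\tfrac{1-\beta}{2}\,\mathbb{E}\Bigl[\textstyle\sum_{t=1}^T \norm{\bm{\nabla} f(\bm{x}_t)}_{\bm{\hat{\eta}}_t}^2\Bigr] \leq C_4 + C_3\,\mathbb{E}\Bigl[\textstyle\sum_{t=1}^T \chi_t\bigl\|\tfrac{\sqrt{1-\theta_t}\bm{g}_t}{\sqrt{\bm{v}_t}}\bigr\|^2\Bigr].
\end{equation*}
Next, I would invoke the logarithmic estimate (the lemma immediately preceding Lemma \ref{lem1-008}) to bound the right-hand side by $C_4 + C_3 C_0 d\bigl[\chi_1\log(1+G^2/(\epsilon d)) + \theta_1^{-1}\sum_{t=1}^T \alpha_t\sqrt{1-\theta_t}\bigr]$, giving a clean upper bound of the form $A + B\sum_{t=1}^T \alpha_t\sqrt{1-\theta_t}$ on $\mathbb{E}\bigl[\sum_t \norm{\bm{\nabla} f(\bm{x}_t)}_{\bm{\hat{\eta}}_t}^2\bigr]$ (up to the factor $2/(1-\beta)$).

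Finally, I would apply Lemma \ref{lem1-008}, which states that $\mathbb{E}[\norm{\bm{\nabla} f(\bm{x}_\tau)}]$ is bounded above by the square root of $\tfrac{C_0\sqrt{G^2+\epsilon d}}{T\alpha_T}\,\mathbb{E}\bigl[\sum_{t=1}^T \norm{\bm{\nabla} f(\bm{x}_t)}_{\bm{\hat{\eta}}_t}^2\bigr]$. Substituting the bound from the previous step produces exactly the form $\sqrt{(C + C'\sum_{t=1}^T \alpha_t\sqrt{1-\theta_t})/(T\alpha_T)}$, with $C$ and $C'$ obtained by multiplying out the constants: one factor $\tfrac{2C_0\sqrt{G^2+\epsilon d}}{1-\beta}$ times $C_4 + C_3 C_0 d\chi_1\log(1+G^2/(\epsilon d))$ yields $C$, and the same prefactor times $C_3 C_0 d/\theta_1$ yields $C'$.

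The proof is essentially a bookkeeping exercise: each lemma is a drop-in component, and the only care needed is to match the constants exactly as stated. There is no genuine obstacle beyond arithmetic. If anything, the subtle point is the ordering---Lemma \ref{lemma1} gives a lower bound on $\sum_t M_t$ while Lemma \ref{lem1-006} gives an upper bound, so one must use them in opposite directions to sandwich and isolate the weighted gradient-norm sum. Once that is done correctly, the log-sum estimate controls the ``noise'' term and Lemma \ref{lem1-008} converts the weighted $\ell_2$-type control into the desired $\ell_1$-type bound on $\mathbb{E}[\|\nabla f(\bm{x}_\tau)\|]$.
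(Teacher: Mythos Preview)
Your proposal is correct and follows essentially the same approach as the paper's proof: combine the descent inequality (Lemma~\ref{lemma1}) with Lemma~\ref{lem1-006} to isolate $\mathbb{E}\bigl[\sum_t \norm{\bm{\nabla} f(\bm{x}_t)}_{\bm{\hat{\eta}}_t}^2\bigr]$, bound the accumulated noise term via the logarithmic estimate, and finish with Lemma~\ref{lem1-008}. The constant bookkeeping you outline matches the paper exactly.
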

\begin{proof}
By the $L$-Lipschitz continuity of the gradient of $f$ and the descent lemma, we have
\begin{equation}
f(\bm{x}_{t+1}) \leq f(\bm{x}_t) + \langle \bm{\nabla} f(\bm{x}_t), \bm{\Delta}_t \rangle + \frac{L}{2} \norm{\bm{\Delta}_t}^2.
\end{equation}
Let $M_t := \mathbb{E}\left[\langle \bm{\nabla} f(\bm{x}_t), \bm{\Delta}_t \rangle + L \norm{\bm{\Delta}_t}^2\right]$. 
We have $\mathbb{E}[f(\bm{x}_{t+1})] \leq \mathbb{E}[f(\bm{x}_t)] + M_t$. 
Taking a sum for $t=1,2,\ldots,T$, we obtain
\begin{equation}
\mathbb{E}\left[f(\bm{x}_{T+1})\right] \leq f(\bm{x}_1) + \sum_{t=1}^T M_t.
\end{equation}
Note that $f(x)$ is bounded from below by $f^*$, so $\mathbb{E}[f(\bm{x}_{T+1})] \geq f^*$. Applying the estimate of Lemma \ref{lem1-006}, we have
\begin{equation}\label{1-062}
f^* \leq f(\bm{x}_1) + C_3\mathbb{E}\left[\sum_{t=1}^T\chi_t\norm{\frac{\sqrt{1-\theta_t}\bm{g}_t}{\sqrt{\bm{v}_t}}}^2\right]
- \frac{1-\beta}{2}\mathbb{E}\left[\sum_{t=1}^T \norm{\bm{\nabla} f(\bm{x}_t)}_{\bm{\hat{\eta}_t}}^2\right],
\end{equation}
where $C_3$ is the constant given in Lemma \ref{lem1-006}.
By applying the estimates in Lemma \ref{lem1-007} and Lemma \ref{lem1-008} for the second and third terms in the right hand side of Eq.~\eqref{1-062}, and appropriately rearranging the terms, we obtain
\begin{equation}
\begin{split}
&\left(\mathbb{E}\left[\norm{\bm{\nabla} f(\bm{x}^{T}_\tau)}\right]\right)
\leq \sqrt{\frac{C_0\sqrt{G^2+\epsilon d}}{T\alpha_T}\mathbb{E}\left[\sum_{t=1}^T\norm{\bm{\nabla} f(\bm{x}_t)}_{\bm{\hat{\eta}}_t}^2\right]}\\
\leq~& \sqrt{\frac{2C_0\sqrt{G^2+\epsilon d}}{(1-\beta)T\alpha_T}\left(f(\bm{x}_1) - f^* + 
C_3 \mathbb{E}\left[\sum_{t=1}^T\chi_t\norm{\frac{\sqrt{1-\theta_t}\bm{g}_t}{\sqrt{\bm{v}_t}}}\right]\right)}\\
\leq~& \sqrt{\frac{2C_0\sqrt{G^2+\epsilon d}}{(1-\beta)T\alpha_T}\left[f(\bm{x}_1) - f^* + C_3C_0d\ \chi_1\log\left(1+\frac{G^2}{\epsilon d}\right) - \frac{C_3C_0d}{\theta_1} \sum_{t=1}^T\alpha_t\sqrt{1-\theta_t}\right]} \\
=~& \sqrt{\frac{C + C'\sum_{t=1}^T \alpha_t\sqrt{1-\theta_t}}{T\alpha_T}},
\end{split}
\end{equation}
where 
\begin{equation*}
\begin{split}
C &= \frac{2C_0\sqrt{G^2+\epsilon d}}{1-\beta}\left(f(x_1) - f^* + C_3C_0d\ \chi_1\log\left(1+ \frac{G^2}{\epsilon d}\right)\right), \\
C' &= \frac{2C_0^2C_3d\sqrt{G^2 + \epsilon d}}{(1-\beta)\theta_1}.
\end{split}
\end{equation*}
The proof is completed.
\end{proof}

\section{Proof of Corollary \ref{poly-setting}}\label{proof-last}
\begin{corollary*}
Generic Adam with the above family of parameters converges as long as $0 < r \leq 2s < 2$, and its non-asymptotic convergence rate is given by 
\begin{equation*}
\mathbb{E}[\norm{\bm{\nabla} f(\bm{x}_\tau)}] \leq \left\{\begin{aligned}
& \mathcal{O}(T^{-r/4}), \quad &   r/2 + s < 1 \\
& \mathcal{O}(\sqrt{\log(T)/T^{1-s}}), \quad &  r/2 + s = 1 \\
& \mathcal{O}(1/T^{(1 - s)/2}), \quad &  r/2 + s > 1
\end{aligned}\right..
\end{equation*}
\end{corollary*}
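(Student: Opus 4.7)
The plan is to derive the three cases directly from Theorem \ref{convergence_in_expectation} by plugging in the explicit parameter choices and estimating the only two $T$-dependent quantities that appear in the bound, namely $T\alpha_T$ in the denominator and $\sum_{t=1}^T \alpha_t\sqrt{1-\theta_t}$ in the numerator. Before doing this, I would first verify that the specified parameters satisfy restrictions (R1)-(R3), so that Theorem \ref{convergence_in_expectation} actually applies. Restrictions (R1) and (R2) are immediate: $\beta_t=\beta<1$ is constant, and $\theta_t = 1-\alpha/t^r$ is non-decreasing in $t$ with $\lim_{t\to\infty}\theta_t = 1 > \beta^2$. For (R3), I would compute
\[
\chi_t \;=\; \frac{\alpha_t}{\sqrt{1-\theta_t}} \;=\; \frac{\eta/t^s}{\sqrt{\alpha/t^r}} \;=\; \frac{\eta}{\sqrt{\alpha}}\, t^{r/2 - s}
\]
for $t\geq K$, which is non-increasing precisely when $r \leq 2s$. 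The truncation $\theta_t = 1-\alpha/K^r$ for $t<K$ affects only a finite initial segment, and hence can be absorbed in the constant $C_0$ of (R3).

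With the hypotheses verified, I would substitute into Theorem \ref{convergence_in_expectation}. The denominator becomes
\[
T\alpha_T \;=\; \eta\, T^{1-s},
\]
and the sum in the numerator reduces to
\[
\sum_{t=1}^T \alpha_t\sqrt{1-\theta_t} \;=\; \eta\sqrt{\alpha}\sum_{t=K}^T \frac{1}{t^{s+r/2}} \;+\; O(1),
\]
where the $O(1)$ term accounts for the initial segment $t<K$. The main calculation is then a standard $p$-series estimate on the exponent $p := s + r/2$: the partial sum is $\Theta(T^{1-p})$ if $p<1$, $\Theta(\log T)$ if $p=1$, and $\Theta(1)$ if $p>1$.

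Combining these two estimates, the bound from Theorem \ref{convergence_in_expectation} becomes
\[
\mathbb{E}[\|\nabla f(\bm{x}_\tau)\|] \;\leq\; \sqrt{\frac{C + C'\cdot \Sigma_T}{\eta\, T^{1-s}}},
\]
where $\Sigma_T$ is one of the three asymptotic expressions above. In Case $p<1$, the $C'\Sigma_T \sim T^{1-s-r/2}$ term dominates the constant $C$, giving an overall ratio $\Theta(T^{-r/2})$ inside the square root and hence $\mathcal{O}(T^{-r/4})$; in Case $p=1$, the ratio is $\Theta(\log T/T^{1-s})$ yielding $\mathcal{O}(\sqrt{\log T / T^{1-s}})$; in Case $p>1$, the constant $C$ dominates, giving $\mathcal{O}(T^{-(1-s)/2})$. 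The constraint $2s<2$ (equivalently $s<1$) is needed so that $T\alpha_T\to\infty$, and $r>0$ so that $\theta_t\to 1$.

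I do not anticipate a real obstacle here: everything is reduced to elementary integral comparisons for $p$-series, plus the bookkeeping of showing that the finite prefix $t<K$ (where $\theta_t$ is held constant) contributes only an $O(1)$ additive term that is harmless in each of the three regimes. The one subtle point worth stating carefully is the comparison between the constant $C$ and the sum $C'\Sigma_T$: in Case $p>1$ the sum is bounded, so the ``$\alpha_t$-rate'' $1/(T\alpha_T)$ alone dictates the rate, whereas in Cases $p\leq 1$ the sum dominates and the rate becomes governed by $r$ rather than by $s$. This transition at $p = 1$ is exactly why the rate expression has three branches.
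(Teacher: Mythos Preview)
Your proposal is correct and follows essentially the same approach as the paper: compute $T\alpha_T = \eta T^{1-s}$, reduce $\sum_t \alpha_t\sqrt{1-\theta_t}$ to a $p$-series with exponent $p = s + r/2$, and split into the three standard regimes before substituting into Theorem~\ref{convergence_in_expectation}. Your version is in fact more careful than the paper's, since you explicitly verify restrictions (R1)--(R3) and explain where the constraints $0 < r \leq 2s < 2$ enter, whereas the paper's proof simply asserts the $p$-series estimate and plugs in.
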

\begin{proof}
It is not hard to verify that the following equalities hold:
\begin{align*}
\sum_{t=K}^T \alpha_t\sqrt{1-\theta_t} 
&= \eta\sqrt{\alpha}\sum_{t=K}^T t^{-(r/2 + s)} \nonumber\\
&=\left\{
\begin{aligned}
& \mathcal{O}(T^{1-(r/2 +s)}),  & r/2 + s < 1 \\
& \mathcal{O}(\log(T)),  &  r/2 + s = 1 \\
& \mathcal{O}(1),  & r/2 + s > 1
\end{aligned}
\right..
\end{align*} 
In this case, $T\alpha_T = \eta T^{1-s}$. 
Therefore, by Theorem \ref{convergence_in_expectation} the non-asymptotic convergence rate is given by
\[
\norm{\bm{\nabla} f(\bm{x}_\tau)}^2 \leq \left\{
\begin{aligned}
& \mathcal{O}(T^{-r/4}),  &  r/2 + s < 1 \\
& \mathcal{O}(\sqrt{\log(T)/T^{1-s}}),  & r/2 + s = 1 \\
& \mathcal{O}(1/T^{(1-s)/2}),  & r/2 + s > 1
\end{aligned}
\right..\]
To guarantee convergence, then $0 < r \leq 2s < 2$.
\end{proof}

\section{Proof of Theorem \ref{practical_adam}}
\begin{theorem*}
For any $T>0$, if we take $\alpha_t = \frac{\alpha}{\sqrt{T}},\ \beta_t = \beta, \theta_t = 1-\frac{\theta}{T}$, which satisfies $\gamma = \frac{\beta}{1-\frac{\theta}{T}} < 1$ and $\theta_t \geq \frac{1}{4}$, then it holds that
\begin{equation*}
\mathbb{E}\left[\norm{\bm{\nabla} f(\bm{x}_\tau)} \right]
\leq \sqrt{\frac{C_5}{\sqrt{T}}} = \mathcal{O}(T^{-1/4}),
\end{equation*}
where
\begin{equation*}
\begin{split}
C_5&=\frac{2\sqrt{G^2\!+\!\epsilon d}}{\alpha(1-\beta)}\left[f(x_1) - f^*+C_6d\frac{\alpha}{\sqrt{\theta}}\log\big(1\!+\! \frac{G^2}{\epsilon d}\big) + \frac{4C_6d\alpha}{\sqrt{\theta}}\right],\\
C_6& = \frac{1}{1-\sqrt{\gamma}}\left[\frac{\alpha L}{\sqrt{\theta}(1-\sqrt{\gamma})^2} + 2\big(\frac{2\beta/(1-\beta)}{\sqrt{C_1(1-\gamma)}}+1\big)^2G\right].
\end{split}
\end{equation*}
\end{theorem*}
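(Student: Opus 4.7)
The plan is to realize this statement as a direct specialization of Theorem \ref{convergence_in_expectation} to the parameter family $\alpha_t = \alpha/\sqrt{T}$, $\beta_t = \beta$, $\theta_t = 1 - \theta/T$, all of which are constant in $t$. I would begin by verifying that the restrictions (R1)--(R3) are met: (R1) is immediate; (R2) holds because $\theta_t$ is constant, non-decreasing, and the hypothesis $\theta_t \ge 1/4$ together with the chosen $\gamma$-condition guarantees $\theta > \beta^2$; for (R3), since $\chi_t = \alpha_t/\sqrt{1-\theta_t} = (\alpha/\sqrt{T})/\sqrt{\theta/T} = \alpha/\sqrt{\theta}$ is already constant, I can take $a_t = \chi_t$ and $C_0 = 1$.

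Next I would identify the auxiliary constants that appear in Theorem~\ref{convergence_in_expectation}. Since $\theta_t$ is constant, I choose $\theta' = \theta_1 = 1-\theta/T$, so the index $N$ defined before \eqref{Constant-C1} is zero and hence $C_1 = 1$. The constant $\chi_1 = \alpha/\sqrt{\theta}$ and $\gamma = \beta^2/\theta'$ are immediate. Then I bound $C_3$: plugging in $C_0 = C_1 = 1$ and using $\theta_1 \ge 1/4$, i.e.\ $1/\sqrt{\theta_1} \le 2$, yields
\[
C_3 \;\le\; \frac{1}{1-\sqrt{\gamma}}\!\left[\frac{\alpha L}{\sqrt{\theta}\,(1-\sqrt{\gamma})^2} + 2\!\left(\frac{2\beta/(1-\beta)}{\sqrt{1-\gamma}}+1\right)^{\!2}\!G\right] \;=\; C_6 .
\]

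Now I would evaluate the two sums that drive the bound in Theorem~\ref{convergence_in_expectation}. Directly,
\[
\sum_{t=1}^T \alpha_t\sqrt{1-\theta_t} \;=\; T\cdot\frac{\alpha}{\sqrt{T}}\cdot\frac{\sqrt{\theta}}{\sqrt{T}} \;=\; \alpha\sqrt{\theta}, \qquad T\alpha_T \;=\; \alpha\sqrt{T}.
\]
Substituting into Theorem~\ref{convergence_in_expectation} and using $C_3 \le C_6$, $1/\theta_1 \le 4$ in the formulas for $C$ and $C'$, the bound becomes
\[
\mathbb{E}\left[\norm{\bm{\nabla} f(\bm{x}_\tau)}\right] \;\le\; \sqrt{\frac{C + C'\alpha\sqrt{\theta}}{\alpha\sqrt{T}}} \;=\; \sqrt{\frac{C_5}{\sqrt{T}}},
\]
after collecting all constants into the single prefactor $C_5$ written in the statement. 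The $\varepsilon$-stationary-point conclusion is then obtained by inverting: requiring $\sqrt{C_5/\sqrt{T}} \le \varepsilon$ gives $T \ge C_5^2\varepsilon^{-4}$, matching the hypothesis of Corollary~\ref{practical_adam}.

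The argument is largely bookkeeping; the only place that needs care is the step from the generic constants $C, C', C_3$ of Theorem~\ref{convergence_in_expectation} to the cleaner $C_5, C_6$ in the statement. In particular, one must correctly use $\theta_1 \ge 1/4$ to turn factors of $1/\sqrt{\theta_1}$ and $1/\theta_1$ into explicit numerical constants (namely $2$ and $4$), and one must keep track of the two distinct roles of $\sqrt{\theta}$ (appearing through $\chi_1 = \alpha/\sqrt{\theta}$ and through $\sqrt{1-\theta_t} = \sqrt{\theta}/\sqrt{T}$). This constant-matching is the only subtle part; everything else follows mechanically from Theorem~\ref{convergence_in_expectation}.
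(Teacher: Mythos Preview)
Your proposal is correct and follows exactly the same approach as the paper: the paper's own proof of this statement is literally the single sentence ``Based on Theorem \ref{convergence_in_expectation}, by plugging $\alpha_t, \beta_t$ and $\theta_t$ in the conclusion of Theorem \ref{convergence_in_expectation}, we can get the desired result.'' You have simply fleshed out this substitution in full detail---verifying (R1)--(R3), noting that $C_0=C_1=1$ and $\chi_t=\alpha/\sqrt{\theta}$ for this constant parameter choice, and then tracking how $C_3$ becomes $C_6$ via $\theta_1\ge 1/4$---which is more than the paper itself provides.
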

\begin{proof}
Based on Theorem \ref{convergence_in_expectation}, by plugging $\alpha_t, \beta_t$ and $\theta_t$ in the conclusion of Theorem \ref{convergence_in_expectation}, we can get the desired result.
\end{proof}

\section{Key Lemma to prove Theorem \ref{minibatchtheorem}}

In this section, we provide the additional lemmas for the proofs of Theorem \ref{minibatchtheorem}.

\begin{lemma}
\label{reduce_variance}
With the definitions in Algorithm \ref{minibatchadam}, for any $t=1,2,\cdots,T$ we have the following estimation:
\[
\mathbb{E}[\|\bar{\bm{g}}_t-\nabla f(\bm{x}_t)\|^2] \leq \frac{\sigma^2}{s}.
\]
\end{lemma}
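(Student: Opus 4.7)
The plan is to treat this as a standard variance-of-the-mean calculation for i.i.d.\ unbiased gradient estimators. The key facts we need are already available: assumption (A3) gives $\mathbb{E}[\bm{g}_t^{(i)} \mid \bm{x}_t] = \nabla f(\bm{x}_t)$ for each $i$, assumption (A5) gives $\mathbb{E}[\|\bm{g}_t^{(i)} - \nabla f(\bm{x}_t)\|^2] \leq \sigma^2$, and by construction in Algorithm \ref{minibatchadam} the samples $\bm{g}_t^{(1)},\ldots,\bm{g}_t^{(s)}$ are i.i.d.\ conditional on $\bm{x}_t$.

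First I would write $\bar{\bm{g}}_t - \nabla f(\bm{x}_t) = \tfrac{1}{s}\sum_{i=1}^s (\bm{g}_t^{(i)} - \nabla f(\bm{x}_t))$, so each summand is a conditionally zero-mean random vector in $\mathbb{R}^d$. Expanding the squared norm and conditioning on $\bm{x}_t$,
\[
\mathbb{E}\bigl[\|\bar{\bm{g}}_t - \nabla f(\bm{x}_t)\|^2 \,\big|\, \bm{x}_t\bigr] = \frac{1}{s^2} \sum_{i,j=1}^s \mathbb{E}\bigl[\langle \bm{g}_t^{(i)} - \nabla f(\bm{x}_t),\ \bm{g}_t^{(j)} - \nabla f(\bm{x}_t)\rangle \,\big|\, \bm{x}_t\bigr].
\]
For $i \neq j$, conditional independence together with the zero-mean property makes the cross term vanish, while for $i = j$ assumption (A5) bounds each diagonal term by $\sigma^2$. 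This leaves $\tfrac{1}{s^2}\cdot s\cdot \sigma^2 = \sigma^2/s$.

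Taking total expectation and using the tower property then yields $\mathbb{E}[\|\bar{\bm{g}}_t - \nabla f(\bm{x}_t)\|^2] \leq \sigma^2/s$, which is the claim. There is no real obstacle here; the only point that needs minor care is justifying that the cross terms vanish, which follows because $\bm{g}_t^{(i)}$ and $\bm{g}_t^{(j)}$ are conditionally independent given $\bm{x}_t$ and each has conditional mean $\nabla f(\bm{x}_t)$, so the inner expectation factors into a product of two zero vectors.
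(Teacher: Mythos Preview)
Your proposal is correct and matches the paper's own argument essentially line for line: write $\bar{\bm{g}}_t - \nabla f(\bm{x}_t)$ as the average of the zero-mean deviations $\bm{g}_t^{(i)} - \nabla f(\bm{x}_t)$, expand the squared norm, use conditional independence and unbiasedness to kill the cross terms, and bound each diagonal term by $\sigma^2$ via (A5). The only cosmetic difference is that the paper compresses this into a single displayed inequality with a one-line justification, whereas you spell out the conditioning and tower-property step explicitly.
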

\begin{proof}
\[
\begin{split}
\mathbb{E}[\|\bar{\bm{g}}_t-\nabla f(\bm{x}_t)\|^2] &= \mathbb{E}\left[\left\|\frac{1}{s}\sum_{i=1}^s \bm{g}_t^{(i)} - \nabla f(\bm{x}_t)\right\|^2\right] \\
&\leq \mathbb{E}\left[\frac{1}{s^2}\sum_{i=1}^s\|\bm{g}_t^{(i)} - \nabla f(\bm{x}_t)\|^2\right] \leq \frac{\sigma^2}{s}.
\end{split}
\]
The second inequality holds, because $\bm{g}_t^{(i)}$ are independent and have the same expectation ($\mathbb{E}\left[\bm{g}_t^{(i)}\right] = \mathbb{E}[\nabla f(\bm{x}_t)]$).
\end{proof}

\begin{lemma}\label{sumgv}
The following estimate holds:
\[
\mathbb{E}\left[\sum_{t=1}^T\left\|\frac{\sqrt{1-\theta_t}\bar{\bm{g}}_t}{\sqrt{\bm{v}_t}}\right\|^2\right] 
\leq d\theta + 2d \sqrt[4]{1 + \frac{2\sigma^2}{d\epsilon s}}+\sqrt{\sqrt{\frac{2\theta}{d\epsilon T}}\mathbb{E}\left[\sqrt{\sum_{t=1}^T\|\nabla f(\bm{x}_t)\|^2}\right]}.
\]
\end{lemma}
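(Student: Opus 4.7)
The plan is to adapt the proof of Lemma~\ref{lem1-007} to the mini-batch setting: run the same coordinate-wise telescoping argument on $\bar{\bm g}_t$ instead of $\bm g_t$, and then refine the bookkeeping so that the bias--variance decomposition of $\|\bar{\bm g}_t\|^2$ is exploited \emph{before} taking expectations. Concretely, introducing the weights $W_t = \prod_{i\leq t}\theta_i^{-1}$ and $w_i = (1-\theta_i)W_i$ and using the weighted recursion $W_t\bm v_t = \bm\epsilon + \sum_{i\leq t}w_i\bar{\bm g}_i^2$, Lemma~\ref{lem5} applied coordinate-wise followed by a single application of the concavity of $\log$ over the $d$ coordinates yields the pointwise estimate
\[
\sum_{t=1}^T\left\|\frac{\sqrt{1-\theta_t}\bar{\bm g}_t}{\sqrt{\bm v_t}}\right\|^2 \leq d\log\!\left(1 + \frac{1}{d\epsilon}\sum_{t=1}^T w_t\|\bar{\bm g}_t\|^2\right),
\]
which is the mini-batch analog of the intermediate step in Lemma~\ref{lem1-007}.

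I would then split $\|\bar{\bm g}_t\|^2$ into a noise piece $\|\bar{\bm g}_t-\nabla f(\bm x_t)\|^2$ and a signal piece $\|\nabla f(\bm x_t)\|^2$ (the cross term vanishes once we condition on $\bm x_t$) and use $\log(1+a+b)\leq\log(1+a)+\log(1+b)$ for $a,b\geq 0$ to separate them. For the noise piece, take expectations so that Lemma~\ref{reduce_variance} produces a factor of $\sigma^2/s$; the restriction $\theta_t\geq 1/4$, equivalently $\theta/T\leq 3/4$, lets me use $-\log(1-\theta/T)\leq 2\theta/T$ and deduce $\sum_t w_t = W_T-1 \leq 2\theta$. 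Then one application of $\log(1+uv)\leq\log(1+u)+\log(1+v)$ with $u = 2\theta$ and $v = \sigma^2/(d\epsilon s)$, together with $\log(1+2\theta)\leq 2\theta$ and the elementary bound $\log(1+x)\leq 2\sqrt[4]{1+x}$ for $x\geq 0$ (verified by optimizing $\log^2(1+x)/\sqrt{1+x}$ in one variable), yields exactly the $d\theta + 2d\sqrt[4]{1+2\sigma^2/(d\epsilon s)}$ summands.

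For the signal piece I would instead keep the random sum inside the logarithm, apply $\log(1+x)\leq\sqrt{x}$ pointwise to turn $d\log(\cdot)$ into $\sqrt{dZ_{\mathrm{sig}}/\epsilon}$ with $Z_{\mathrm{sig}} = \sum_t w_t\|\nabla f(\bm x_t)\|^2$, factor out $w_t\leq 2\theta/T$ so that $\sqrt{\sum_t\|\nabla f(\bm x_t)\|^2}$ comes outside, and only then take expectation. The main obstacle I anticipate is matching the advertised outer-square-root structure $\sqrt{\sqrt{2\theta/(d\epsilon T)}\,\mathbb{E}\sqrt{\sum_t\|\nabla f(\bm x_t)\|^2}}$, whose $d^{-1/4}$ dependence does not come out of a naive Jensen application: I expect to need one additional AM--GM step of the form $\sqrt{xy}\leq\tfrac12(x+y)$, applied after a well-chosen splitting of the factors, together with the concavity of $x\mapsto x^{1/2}$ to land the expectation on $\mathbb{E}[\sqrt{\cdot}]$ rather than on $\sqrt{\mathbb{E}[\cdot]}$. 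Verifying the elementary inequality $\log(1+x)\leq 2\sqrt[4]{1+x}$ and the clean bound $\sum_t w_t \leq 2\theta$ under $\theta_t\geq 1/4$ are secondary but necessary algebraic checks that I would carry out explicitly.
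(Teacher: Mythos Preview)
Your skeleton matches the paper's: both start from the coordinate-wise telescoping bound
\[
\sum_{t=1}^T\left\|\frac{\sqrt{1-\theta_t}\bar{\bm g}_t}{\sqrt{\bm v_t}}\right\|^2 \le d\log\!\Big(1+\frac{1}{d\epsilon}\sum_{t=1}^T w_t\|\bar{\bm g}_t\|^2\Big)
\]
and then apply a bias--variance split $\|\bar{\bm g}_t\|^2\le 2\|\bar{\bm g}_t-\nabla f(\bm x_t)\|^2+2\|\nabla f(\bm x_t)\|^2$ (your remark that ``the cross term vanishes'' is only true in expectation, so you need the factor $2$ as well). The divergence from the paper is in what happens next, and this is exactly where your proposal has a real gap.

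You plan to split additively via $\log(1+a+b)\le\log(1+a)+\log(1+b)$ and then, on the signal piece, apply $\log(1+x)\le\sqrt x$ \emph{before} taking expectation. That route produces a term of order
\[
d\,\mathbb{E}\!\sqrt{\tfrac{2}{d\epsilon}\sum_t w_t\|\nabla f(\bm x_t)\|^2}\;\sim\; \sqrt{\tfrac{d}{\epsilon}\cdot\tfrac{\theta}{T}}\;\mathbb{E}\!\sqrt{\sum_t\|\nabla f(\bm x_t)\|^2},
\]
which is \emph{linear} in $E:=\mathbb{E}\sqrt{\sum_t\|\nabla f(\bm x_t)\|^2}$ and carries $d^{1/2}$, not the advertised $\sqrt{cE}$ with $c=(2\theta/(d\epsilon T))^{1/2}$. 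An AM--GM splitting afterwards cannot manufacture the missing outer square root on $E$; once you have a quantity linear in $E$ there is no concavity left to push the power down. The paper avoids this by a different sequencing: it first rewrites
\[
d\log(1+a+b)\le 2d\log\big(\sqrt{1+a}+\sqrt{b}\big),
\]
keeping \emph{both} pieces inside a single logarithm, and \emph{then} takes expectation using the concavity of $(u,v)\mapsto\log(u+v)$ together with $\mathbb{E}\sqrt{1+A}\le\sqrt{1+\mathbb{E}A}$. This is precisely what places $\mathbb{E}\sqrt{B}$ (rather than $\sqrt{\mathbb{E}B}$) inside the argument of the remaining $\log$, and the final outer square root then comes from bounding that $\log$.

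A second, smaller gap: you bound $w_t\le 2\theta/T$ directly, but $w_t=(\theta/T)W_t$ and $W_t$ need not be $\le 2$. The paper sidesteps this by factoring $\sqrt{W_T}$ out of the argument of the single $\log$, which simultaneously (i) replaces $w_T$ by the exact ratio $w_T/W_T=\theta/T$ and (ii) spins off the additive $d\log W_T\le d\theta$ term. Your additive split of logs loses the ability to do this factorization, because the $W_T$ sitting in the noise piece and the $w_T$ sitting in the signal piece are no longer under the same $\log$.
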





\begin{proof}
With the similar proof in Lemma \ref{lem1-007}, it holds that
\begin{equation}
\begin{aligned}
\sum_{t=1}^T\left\|\frac{\sqrt{1-\theta_t} \bar{\bm{g}}_t }{\sqrt{\bm{v}_t}}\right\|^2 
&\leq d\log\left(1 + \frac{1}{d\epsilon}\sum_{k=1}^Tw_k\|\bar{\bm{g}}_k\|^2\right)\\
&\leq d\log\left(1 + \frac{2}{d\epsilon}\sum_{k=1}^Tw_k\left(\|\bar{\bm{g}}_k-\nabla f(\bm{x}_k)\|^2 + \|\nabla f(\bm{x}_k)\|^2\right) \right)\\
&\leq 2d\log\left(\sqrt{1 + \frac{2}{d\epsilon}\sum_{k=1}^Tw_k\|\bar{\bm{g}}_k-\nabla f(\bm{x}_k)\|^2} +\sqrt{\frac{2}{d\epsilon}\sum_{k=1}^Tw_k\|\nabla f(\bm{x}_k)\|^2} \right).
\end{aligned}
\end{equation}

Thus, by taking expectation on both side, we can obtain
\begin{equation}
\begin{split}
\mathbb{E} \left[\sum_{t=1}^T\left\|\frac{\sqrt{1-\theta_t}\bar{\bm{g}}_t}{\sqrt{\bm{v}_t}} \right\|^2\right]&\leq \mathbb{E}\left[2d\log\left(\sqrt{1 + \frac{2}{d\epsilon}\sum_{k=1}^Tw_k\|\bar{\bm{g}}_k-\nabla f(\bm{x}_k)\|^2} +\sqrt{\frac{2}{d\epsilon}\sum_{k=1}^Tw_k\|\nabla f(\bm{x}_k)\|^2} \right)\right]\\
&\leq 2d\log\left(\sqrt{1 + \frac{2}{d\epsilon}\sum_{k=1}^Tw_k\mathbb{E}\|\bar{\bm{g}}_k-\nabla f(\bm{x}_k)\|^2} +\sqrt{\frac{2}{d\epsilon}}\mathbb{E}\left[\sqrt{\sum_{k=1}^Tw_k\|\nabla f(\bm{x}_k)\|^2}\right] \right)\\
&\leq 2d\log\left(\sqrt{1 + \frac{2\sigma^2W_T}{d\epsilon s}}+\sqrt{\frac{2w_T}{d\epsilon}}\mathbb{E}\left[\sqrt{\sum_{t=1}^T\|\nabla f(\bm{x}_t)\|^2}\right] \right),\\
\end{split}
\end{equation}
\textcolor{black}{where the last inequality uses Lemma \ref{reduce_variance}.}

Meanwhile, we have
\[
\begin{split}
&\log W_T = T\log \left(\frac{T}{T-\theta}\right) \leq T \left(\frac{T}{T-\theta} - 1\right) \leq \theta,\\
&W_T \geq 1,\\
&\frac{w_T}{W_T} = 1-\theta_T = \frac{\theta}{T}.
\end{split}
\]
Therefore, it holds that
\begin{equation}
\begin{split}
 \mathbb{E} \left[\sum_{t=1}^T\left\|\frac{\sqrt{1-\theta_t}\bar{\bm{g}}_t}{\sqrt{\bm{v}_t}} \right\|^2\right] &\leq d\log W_T + 2d\log \left(\sqrt{1 + \frac{2\sigma^2}{d\epsilon s}}+\sqrt{\frac{2\theta}{d\epsilon T}}\mathbb{E}\left[\sqrt{\sum_{t=1}^T\|\nabla f(\bm{x}_t)\|^2}\right] \right)\\
&\leq d\theta + 2d \sqrt[4]{1 + \frac{2\sigma^2}{d\epsilon s}}+\sqrt{\sqrt{\frac{2\theta}{d\epsilon T}}\mathbb{E}\left[\sqrt{\sum_{t=1}^T\|\nabla f(\bm{x}_t)\|^2}\right]}.
\end{split}
\end{equation}
Hence, we obtain the desired result.
\end{proof}

\begin{lemma}\label{Mtmulbit}(Lemma \ref{lemma4} in Section \ref{proof_sketch})
By the definition of $M_t$, it holds that 
\[\sum_{t=1}^T M_t \leq C_{7}\mathbb{E}\sum_{t=1}^T \left\|\frac{\sqrt{1-\theta_t}\bar{\bm{g}}_t}{\sqrt{\bm{v}_t}}\right\|^2 - \frac{1-\beta}{2}\sum_{t=1}^T\mathbb{E}\|\nabla f\left(\bm{x}_t\right)\|_{\hat{\eta}_t}^2,
\]
where 
\[
C_{7} = \frac{1}{1-\sqrt{\gamma}}\left(\frac{\alpha^2L}{\theta\left(1-\sqrt{\gamma}\right)^2} + \frac{2\left(\frac{2\beta/\left(1-\beta\right)}{\sqrt{\left(1-\gamma\right)}}+1\right)^2G\alpha}{\sqrt{\theta}} \right).
\]
\end{lemma}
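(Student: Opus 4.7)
The plan is to follow the template of Lemma \ref{lem1-006}, replacing the single-sample stochastic gradient $\bm{g}_t$ by the mini-batch average $\bar{\bm{g}}_t$ throughout, and then collapsing all generic constants under the specific parameter choices $\alpha_t = \alpha/\sqrt{T}$, $\beta_t = \beta$, $\theta_t = 1 - \theta/T$. Unbiasedness $\mathbb{E}_t[\bar{\bm{g}}_t] = \nabla f(\bm{x}_t)$ is preserved under averaging, and Jensen together with (A4) gives $\mathbb{E}\|\bar{\bm{g}}_t\|^2 \leq G$, so every bound that held for $\bm{g}_t$ continues to hold for $\bar{\bm{g}}_t$ under expectation.

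First I would re-derive the one-step recursion of Lemma \ref{lem1-004} with $\bar{\bm{g}}_t$ in place of $\bm{g}_t$ and with $\bm{\delta}_t^2 := \mathbb{E}_t[\bar{\bm{g}}_t^2]$. The algebraic decomposition of Lemma \ref{lem1-002} depends only on the update rules of Algorithm \ref{minibatchadam}, which share the same functional form as those of Algorithm \ref{Adam}; the control of $|\bm{A}_t|$ and $|\bm{B}_t|$ follows from Lemma \ref{lem1-001} applied to $\bar{\bm{g}}_t$; and the estimate $\sqrt{\bm{\hat{\eta}}_t}\bm{\delta}_t \leq \sqrt{G\chi_t}$ reproduces Eq.~\eqref{1-023} verbatim. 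This yields the mini-batch one-step bound $M_t \leq (\beta_t\alpha_t/\sqrt{\theta_t}\alpha_{t-1})M_{t-1} + L\mathbb{E}\|\bm{\Delta}_t\|^2 + C_2 G\chi_t \mathbb{E}\|\sqrt{1-\theta_t}\bar{\bm{g}}_t/\sqrt{\bm{v}_t}\|^2 - ((1-\beta)/2)\mathbb{E}\|\nabla f(\bm{x}_t)\|^2_{\hat{\eta}_t}$. Unrolling this recursion via the Abel-summation and geometric-series argument of Lemma \ref{lem1-006}, and invoking the mini-batch analogue of Lemma \ref{lem1-005} to absorb $\sum\|\bm{\Delta}_t\|^2$ into $\sum \chi_t \|\sqrt{1-\theta_t}\bar{\bm{g}}_t/\sqrt{\bm{v}_t}\|^2$, I obtain the abstract bound of Lemma \ref{lem1-006} but with $\bar{\bm{g}}_t$ throughout.

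Specializing the constants is then routine: $\theta_t$ constant in $t$ permits taking $\theta' = \theta_t$, so $C_1 = 1$, and $\chi_t = \alpha/\sqrt{\theta}$ is then also constant, giving $C_0 = 1$. The hypothesis $\theta_t \geq 1/4$ yields $1/\sqrt{\theta_1} \leq 2$, which converts the factor $\beta/(1-\beta)/\sqrt{C_1(1-\gamma)\theta_1}$ inside $C_2$ into $2\beta/(1-\beta)/\sqrt{1-\gamma}$. Finally, pulling the now-constant $\chi_t = \alpha/\sqrt{\theta}$ out of the sum and collecting prefactors turns the generic $C_3$ coefficient of $\sum_t\chi_t\|\cdot\|^2$ into exactly the advertised $C_7$ coefficient of $\sum_t\|\cdot\|^2$. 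The main obstacle is purely bookkeeping — no new analytic idea is needed, and in particular the mini-batch size $s$ plays no role in this lemma; the variance-reduction gain $\sigma^2/s$ only enters later, through Lemma \ref{reduce_variance} and Lemma \ref{sumgv}.
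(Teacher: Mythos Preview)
Your proposal is correct and is essentially the paper's own proof, just spelled out in more detail: the paper simply writes ``Using Lemma \ref{lem1-006}, by plugging $C_0 = C_1 = 1$, $\chi_t = \alpha/\sqrt{\theta}$ and $\theta_t \geq 1/4$'', whereas you additionally justify why the entire chain of Lemmas \ref{lem1-001}--\ref{lem1-006} carries over verbatim when $\bm{g}_t$ is replaced by $\bar{\bm{g}}_t$. Your computation of $C_7 = C_3\cdot\chi_t$ under those substitutions is exactly right, and your closing observation that $s$ plays no role at this stage is also accurate.
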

\begin{proof}
Using Lemma \ref{lem1-006}, by plugging $C_0 = C_1 = 1$, $\chi_t = \frac{\alpha}{\sqrt{\theta}}$ and $\theta_t \geq \frac{1}{4}$, it holds that
\[\sum_{t=1}^T M_t \leq C_{7}\mathbb{E}\sum_{t=1}^T \left\|\frac{\sqrt{1-\theta_t}\bar{\bm{g}}_t}{\sqrt{\bm{v}_t}}\right\|^2 - \frac{1-\beta}{2}\sum_{t=1}^T\mathbb{E}\|\nabla f\left(\bm{x}_t\right)\|_{\hat{\eta}_t}^2,
\]
where 
\[
C_{7} = \frac{1}{1-\sqrt{\gamma}}\left(\frac{\alpha^2L}{\theta\left(1-\sqrt{\gamma}\right)^2} + \frac{2\left(\frac{2\beta/\left(1-\beta\right)}{\sqrt{\left(1-\gamma\right)}}+1\right)^2G\alpha}{\sqrt{\theta}} \right).
\]
\end{proof}

\begin{lemma}\label{multitau}
(Lemma \ref{lemma5}  in Section \ref{sec_pratical})
The following estimation always holds:
\[
\mathbb{E}\left[\sqrt{\sum_{t=1}^T \|\nabla f(\bm{x}_t)\|^2}\right]^2 \leq \left( \frac{\sqrt{T}\sqrt{2\sigma^2\theta+\epsilon s d}}{\sqrt{s}\alpha} + \sqrt{\frac{2\theta}{\alpha^2}} \mathbb{E}\sqrt{\sum_{t=1}^T\|\nabla f(\bm{x}_t)\|^2}\right)\mathbb{E}\left[\sum_{t=1}^T \|\nabla f(\bm{x}_t)\|_{\hat{\eta}_t}^2\right ].
\]
\end{lemma}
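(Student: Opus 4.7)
The plan is to derive a pathwise inequality of the form $A^2 \leq U\cdot B$, with $A := \sqrt{\sum_{t=1}^T \|\bm{\nabla} f(\bm{x}_t)\|^2}$ and $B := \sum_{t=1}^T \|\bm{\nabla} f(\bm{x}_t)\|^2_{\hat{\bm{\eta}}_t}$, and then apply Cauchy--Schwarz in expectation to turn it into $(\mathbb{E}A)^2 \leq \mathbb{E}U\cdot\mathbb{E}B$. For the first step I would write $|\nabla_k f(\bm{x}_t)|^2 = (\sqrt{\hat{v}_{t,k}}/\alpha_t)\cdot \hat{\eta}_{t,k}|\nabla_k f(\bm{x}_t)|^2$ coordinate-wise, sum over $k$ using $\sqrt{\hat{v}_{t,k}} \leq \sqrt{\|\hat{\bm{v}}_t\|_1}$, and use the constancy $\alpha_t = \alpha/\sqrt{T}$ to get
\[
A^2 \leq \frac{\sqrt{T}}{\alpha}\sum_t \sqrt{\|\hat{\bm{v}}_t\|_1}\,\|\bm{\nabla} f(\bm{x}_t)\|^2_{\hat{\bm{\eta}}_t} \leq \frac{\sqrt{T}}{\alpha}\sqrt{\max_t \|\hat{\bm{v}}_t\|_1}\cdot B,
\]
the last step simply bounding $\sqrt{\|\hat{\bm{v}}_t\|_1}$ by its maximum over $t$.

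Taking pathwise square roots and applying $\mathbb{E}\sqrt{UB} \leq \sqrt{\mathbb{E}U\,\mathbb{E}B}$ (Cauchy--Schwarz) with $U = (\sqrt{T}/\alpha)\sqrt{\max_t\|\hat{\bm{v}}_t\|_1}$ yields
\[
(\mathbb{E}A)^2 \leq \frac{\sqrt{T}}{\alpha}\,\mathbb{E}\sqrt{\max_t\|\hat{\bm{v}}_t\|_1}\cdot \mathbb{E}B.
\]
It then suffices to prove the target estimate $\mathbb{E}\sqrt{\max_t \|\hat{\bm{v}}_t\|_1} \leq \sqrt{\epsilon d + 2\sigma^2\theta/s} + \sqrt{2\theta/T}\,\mathbb{E}A$, which combined with the previous display reproduces the stated constants, since $\sqrt{T(2\sigma^2\theta + \epsilon sd)}/(\sqrt{s}\alpha) = (\sqrt{T}/\alpha)\sqrt{\epsilon d + 2\sigma^2\theta/s}$ and $(\sqrt{T}/\alpha)\sqrt{2\theta/T} = \sqrt{2\theta/\alpha^2}$.

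To establish the target estimate I would unroll the $\bm{v}_t$-recursion. Since $1-\theta_t = \theta/T$ is constant and $\theta_t \leq 1$, pathwise
\[
\max_t \|\hat{\bm{v}}_t\|_1 \leq \epsilon d + \tfrac{\theta}{T}\sum_{i=1}^T \|\bar{\bm{g}}_i\|^2 + \tfrac{\theta}{T}\max_t\|\bm{\delta}_t\|^2,
\]
the last term being a lower-order $O(\theta G^2/T)$ via (A4). Splitting $\|\bar{\bm{g}}_i\|^2 \leq 2\|\bar{\bm{g}}_i - \bm{\nabla} f(\bm{x}_i)\|^2 + 2\|\bm{\nabla} f(\bm{x}_i)\|^2$, identifying $\sum_i \|\bm{\nabla} f(\bm{x}_i)\|^2 = A^2$, and using $\sqrt{a+b} \leq \sqrt{a}+\sqrt{b}$ delivers a pathwise bound of the form $\sqrt{\max_t\|\hat{\bm{v}}_t\|_1} \leq \sqrt{\epsilon d + (2\theta/T)W^2 + \text{l.o.t.}} + \sqrt{2\theta/T}\,A$, where $W^2 := \sum_i \|\bar{\bm{g}}_i - \bm{\nabla} f(\bm{x}_i)\|^2$. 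Taking expectations, pulling the expectation inside the first square root by Jensen, and invoking Lemma~\ref{reduce_variance} to bound $\mathbb{E}W^2 \leq T\sigma^2/s$ then give exactly the required estimate.

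The main obstacle is the decoupling step in the middle: a naive expectation of $A^2 \leq UB$ produces $\mathbb{E}[UB]$, which does not factor into $\mathbb{E}U\cdot\mathbb{E}B$. The essential trick is to take the square root \emph{first} so that Cauchy--Schwarz on $\mathbb{E}A \leq \mathbb{E}\sqrt{UB} \leq \sqrt{\mathbb{E}U}\sqrt{\mathbb{E}B}$ yields $(\mathbb{E}A)^2 \leq \mathbb{E}U\cdot\mathbb{E}B$; this is what delivers a right-hand side linear in $\mathbb{E}A$ rather than the weaker $\sqrt{\mathbb{E}A^2}$ that a direct $(\mathbb{E}A)^2 \leq \mathbb{E}A^2$ argument would produce. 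A second delicate point is the pathwise split of $\|\bar{\bm{g}}_i\|^2$ into a variance piece controlled by (A5) and a gradient piece that telescopes to $A^2$; tracking the constant $2$ from this split is what produces the exact combination $2\sigma^2\theta + \epsilon sd$ in the numerator of $C_8$.
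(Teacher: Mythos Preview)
Your approach is essentially the same as the paper's: both derive the pathwise bound $A^2 \leq (\sqrt{T}/\alpha)\,\sqrt{\max_t\|\hat{\bm v}_t\|_1}\cdot B$, decouple via Cauchy--Schwarz in expectation, and then control $\mathbb{E}\sqrt{\max_t\|\hat{\bm v}_t\|_1}$ by unrolling the recursion, splitting $\|\bar{\bm g}_k\|^2$, and applying Jensen with Lemma~\ref{reduce_variance}.

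The only small discrepancy is your treatment of the $\bm\delta_t$ term. Bounding $\|\bm\delta_t\|^2\leq G^2$ via (A4) leaves an extra $\theta G^2/T$ under the first square root, so the resulting inequality does not literally match the stated constants (despite your closing claim of ``exactly''). The paper instead uses $\|\bm\delta_t\|^2=\mathbb{E}_t\|\bar{\bm g}_t\|^2=\|\bm\nabla f(\bm x_t)\|^2+\mathrm{Var}_t(\bar{\bm g}_t)\leq\|\bm\nabla f(\bm x_t)\|^2+\sigma^2/s$; then the variance piece folds into the $2\sigma^2\theta/s$ term while the gradient piece joins $A^2$, which is what reproduces the displayed constants exactly.
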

\begin{proof}
First, we have
\[
\begin{split}
\sqrt{\|\hat{v}_t\|_1}  &=  \sqrt{ \sum_{k = 1}^{t-1} \left(\theta'\right)^{t-k}\left(1-\theta_k\right)\|\bar{\bm{g}}_k\|^2 + \left(1-\theta_t\right)\mathbb{E}_t(\|\bar{\bm{g}}_t\|^2)+ \left(\theta'\right)^t\epsilon d} \\
&\leq  \sqrt{ \sum_{k = 1}^{t-1} \left(1-\theta_k\right)\|\bar{\bm{g}}_k\|^2 + \left(1-\theta_t\right) \mathbb{E}_t(\|\bar{\bm{g}}_t\|^2)+ \epsilon d} \\
&\leq  \sqrt{ 2\sum_{k = 1}^{t-1} \left(1-\theta_k\right)\|\bar{\bm{g}}_k-\nabla f(\bm{x}_k)\|^2 +   \left(1-\theta_t\right)\sigma^2/s + 2\sum_{k=1}^t\left(1-\theta_k\right) \|\nabla f(\bm{x}_k)\|^2+ \epsilon d} \\
&\leq  \sqrt{ 2\sum_{k = 1}^{T-1} \left(1-\theta_k\right)\|\bar{\bm{g}}_k-\nabla f(\bm{x}_k)\|^2 + \left(1-\theta_t\right) \sigma^2/s  + \epsilon d}  + \sqrt{ 2\sum_{k=1}^T\left(1-\theta_k\right) \|\nabla f(\bm{x}_k)\|^2}. 
\end{split}
\]
Therefore,  it holds
\[
\max \sqrt{\|\hat{v}_t\|_1} \leq  \sqrt{ 2\sum_{k = 1}^{T-1} \left(1-\theta_k\right)\|\bar{\bm{g}}_k-\nabla f(\bm{x}_k)\|^2 + \left(1-\theta_T\right) \sigma^2/s  + \epsilon d} + \sqrt{ 2\sum_{k=1}^T\left(1-\theta_k\right) \|\nabla f(\bm{x}_k)\|^2}.
\]
Then we can obtain
\begin{equation}
\label{lemma32_1}
\begin{split}
&\mathbb{E} \left[ \max \sqrt{\|\hat{v}_t\|_1} \right]\\
&\leq \mathbb{E} \sqrt{ 2\sum_{k = 1}^{T-1} \left(1-\theta_k\right)\|\bar{\bm{g}}_k-\nabla f(\bm{x}_k)\|^2 +\left(1-\theta_T\right) \sigma^2/s + \epsilon d} +\mathbb{E}\sqrt{ 2\sum_{k=1}^T\left(1-\theta_k\right) \|\nabla f(\bm{x}_k)\|^2}\\
&\leq \sqrt{\mathbb{E}  2\sum_{k = 1}^{T-1} \left(1-\theta_k\right)\|\bar{\bm{g}}_k-\nabla f(\bm{x}_k)\|^2 + \left(1-\theta_T\right) \sigma^2/s + \epsilon d} +\mathbb{E}\sqrt{ 2\sum_{k=1}^T\left(1-\theta_k\right) \|\nabla f(\bm{x}_k)\|^2}\\
&\leq \frac{\sqrt{2\sigma^2\theta+\epsilon s d}}{\sqrt{s}} + \sqrt{\frac{2\theta}{T}} \mathbb{E}\sqrt{\sum_{k=1}^T\|\nabla f(\bm{x}_k)\|^2}.
\end{split}
\end{equation}
Meanwhile,  we have
\begin{equation}
\label{lemma32_2}
\begin{split}
\mathbb{E}\left[\frac{\sum_{t=1}^T \|\nabla f(\bm{x}_t)\|^2}{\max \sqrt{\|\hat{v}_t\|_1}}\right] &\leq  \mathbb{E}\left[\sum_{t=1}^T \frac{\|\nabla f(\bm{x}_t)\|^2}{\sqrt{\|\hat{v}_t\|_1}}\right] \leq \mathbb{E}\left[\sum_{t=1}^T \sum_{k=1}^d \frac{\nabla_k f(\bm{x}_t)^2}{\sqrt{\hat{v}_{t,k}}}\right]\\
&= \frac{\sqrt{T}}{\alpha} \mathbb{E}\left[\sum_{t=1}^T \sum_{k=1}^d \frac{\nabla_k f(\bm{x}_t)^2\alpha}{\sqrt{T}\sqrt{\hat{v}_{t,k}}}\right]
= \frac{\sqrt{T}}{\alpha}\mathbb{E}\left[\sum_{t=1}^T \|\nabla f(\bm{x}_t)\|_{\hat{\eta}_t}^2\right ].\\
\end{split}
\end{equation}
With inequalities \eqref{lemma32_1} and \eqref{lemma32_2}, we can obtain
\[
\begin{split}
\mathbb{E}\left[\sqrt{\sum_{t=1}^T \|\nabla f(\bm{x}_t)\|^2}\right]^2 &\leq \mathbb{E} \left[\max \sqrt{\|\hat{v}_t\|_1}\right]\mathbb{E}\left[\frac{\sum_{t=1}^T \|\nabla f(\bm{x}_t)\|^2}{\max \sqrt{\|\hat{v}_t\|_1}}\right]\\
&\leq \left( \frac{\sqrt{T}\sqrt{2\sigma^2\theta+\epsilon s d}}{\sqrt{s}\alpha} + \sqrt{\frac{2\theta}{\alpha^2}} \mathbb{E}\sqrt{\sum_{t=1}^T\|\nabla f(\bm{x}_t)\|^2}\right)\mathbb{E}\left[\sum_{t=1}^T \|\nabla f(\bm{x}_t)\|_{\hat{\eta}_t}^2\right ].
\end{split}
\]
\end{proof}

\begin{lemma}
\label{4ineq} (Lemma \ref{lemma6} in Section \ref{sec_pratical})
For any $x\in \mathbb{R}$, if $x^2 \leq (A+Bx)(C+D\sqrt{x})$, then $x \leq (4BD)^2 + 4BC  + (4AD)^{2/3} + \sqrt{4AC}$.
\end{lemma}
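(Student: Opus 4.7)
}
The plan is to argue by contrapositive, splitting $x^2$ into four equal pieces and matching each piece with one of the four terms that appear after expanding the right-hand side. Concretely, first expand
\[
(A+Bx)(C+D\sqrt{x}) = AC + BCx + AD\sqrt{x} + BDx^{3/2},
\]
so the hypothesis reads $x^2 \leq AC + BCx + AD\sqrt{x} + BD x^{3/2}$. We may assume $x>0$ (otherwise the conclusion is trivial since each summand on the right of the claimed bound is non-negative), and we assume $A,B,C,D \geq 0$, as is the case wherever this lemma is applied.

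Next, I would assume for contradiction that
\[
x \;>\; (4BD)^2 + 4BC + (4AD)^{2/3} + \sqrt{4AC}.
\]
Since every summand on the right is non-negative, $x$ strictly exceeds each individual threshold. Multiplying through by the appropriate positive power of $x$ in each case, one gets the four inequalities
\[
\tfrac{x^2}{4}>AC, \qquad \tfrac{x^2}{4}>BCx, \qquad \tfrac{x^2}{4}>AD\sqrt{x}, \qquad \tfrac{x^2}{4}>BDx^{3/2},
\]
coming respectively from $x>\sqrt{4AC}$ (square both sides), $x>4BC$ (multiply by $x$), $x>(4AD)^{2/3}$ (multiply by $\sqrt{x}$), and $x>(4BD)^2$ (multiply by $x^{3/2}$). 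Summing these four strict inequalities yields
\[
x^2 > AC + BCx + AD\sqrt{x} + BD x^{3/2} = (A+Bx)(C+D\sqrt{x}),
\]
contradicting the hypothesis. Hence $x \leq (4BD)^2 + 4BC + (4AD)^{2/3} + \sqrt{4AC}$, as claimed.

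There is no substantial obstacle here; the only point requiring care is the alignment between the four threshold powers $(4BD)^2$, $4BC$, $(4AD)^{2/3}$, $\sqrt{4AC}$ and the four terms $BDx^{3/2}$, $BCx$, $AD\sqrt{x}$, $AC$, which is exactly engineered so that the factor $x^{3/2}$, $x$, $x^{1/2}$, $1$ accompanying each term, when transferred to the left-hand side as $x^2$ divided by that power, produces the stated threshold. Note that the conclusion uses the \emph{sum} of the four thresholds, which is a weaker (but cleaner) bound than the \emph{max}; either works, since a contradiction from $x$ exceeding the sum immediately follows from $x$ exceeding each individual threshold.
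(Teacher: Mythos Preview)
Your proof is correct. It differs in presentation from the paper's argument, though the underlying idea is the same: the paper does a four-way case split on whether $Bx\gtrless A$ and $D\sqrt{x}\gtrless C$, in each case bounding $(A+Bx)(C+D\sqrt{x})$ by four times the dominant cross-term and solving $x^2\le 4BDx^{3/2}$, $x^2\le 4BCx$, $x^2\le 4AD\sqrt{x}$, or $x^2\le 4AC$ to obtain one of the four thresholds; it then takes the sum over cases. You instead expand the product into the same four cross-terms and run a contrapositive/pigeonhole argument, showing that if $x$ exceeded every threshold then each term would be strictly below $x^2/4$. Both routes produce exactly the same four thresholds and the same final sum bound; your version is slightly more streamlined because it avoids the explicit case enumeration, while the paper's case split makes clearer that in fact $x$ is bounded by the \emph{maximum} of the four thresholds (a point you also note at the end).
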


\begin{proof}
We discuss the solution of $x$ in 4 different situations.
First, when $Bx\geq A$ and $D\sqrt{x} \geq C$, we have
\[
x^2 \leq (A+Bx)(C+D\sqrt{x}) \leq 4BD x^{3/2}.
\]
Therefore, $x\leq (4BD)^2$.

Secondly, when $Bx\geq A$ and $D\sqrt{x}\leq C$, we have
\[
x^2 \leq (A+Bx)(C+D\sqrt{x}) \leq 4BC x.
\]
Hence, $x \leq 4BC$.

Thirdly, when $Bx \leq A$ and $D\sqrt{x} \geq C$, it holds that
\[
x^2 \leq (A+Bx)(C+D\sqrt{x}) \leq 4AD\sqrt{x}.
\]
And we can obtain $x \leq (4AD)^{2/3}$.

Last, when $Bx \leq A$ and $D\sqrt{x} \leq C$, it holds that
\[
x^2 \leq (A+Bx)(C+D\sqrt{x}) \leq 4AC.
\]
Then we have $x \leq \sqrt{4AC}$.

Therefore, combining four different conditions, we have $x \leq (4BD)^2 + 4BC + (4AD)^{2/3} + \sqrt{4AC}$.
\end{proof}

\section{Proof of Theorem \ref{minibatchtheorem}}
\begin{theorem*}
For any $T>0$, if we take $\alpha_t = \frac{\alpha}{\sqrt{T}}$, $\beta_t = \beta$ , $\theta_t = 1 - \frac{\theta}{T}$, which satisfies $\gamma = \frac{\beta_t}{\theta_t} < 1$ and $\theta_t \geq \frac{1}{4}$, then there exists $t \in \{1,2,\cdots,T\}$ such that 
\[
\begin{split}
&\mathbb{E} \left[\|\nabla f(\bm{x}_t)\|\right] 
\leq \frac{1}{\sqrt{T}} \left( (4C_{8}C_{10})^{1/2} + (4C_{8}C_{11})^{2/3} + (4C_{9}C_{10}) + (4C_{9}C_{11})^2) \right),
\end{split}
\]
where
\[
\begin{aligned}
C_{8} &= \frac{\sqrt{T}\sqrt{2\sigma^2\theta+\epsilon s d}}{\sqrt{s}\alpha} ,\ 
C_{9} = \sqrt{\frac{2\theta}{\alpha^2}},\ 
C_{10} = \frac{2C_4}{1-\beta}\sqrt[4]{\frac{2\theta}{d\epsilon T}},\\
C_{11} &= \frac{2}{1-\beta}\left(f(x_1) - f^* + C_4 d\theta + 2 C_4 d \sqrt[4]{1 + \frac{2\sigma^2}{d\epsilon s}}\right),\\
C_{7} &= \frac{1}{1-\sqrt{\gamma}}\left(\frac{\alpha^2L}{\theta\left(1-\sqrt{\gamma}\right)^2} + \frac{2\left(\frac{2\beta/\left(1-\beta\right)}{\sqrt{1-\gamma }}+1\right)^2G\alpha}{\sqrt{\theta}} \right). \\
\end{aligned}
\]
In addition, by taking $\epsilon = \frac{1}{sd}$, it holds that
\[
\begin{split}
\mathbb{E} \left[\|\nabla f(\bm{x}_t)\|\right]
&=  \mathcal{O}(T^{-1}s^{1/2}d^{1/2} + T^{-1/2}d + T^{-1/3}s^{-1/4} + T^{-1/4}s^{-1/4}d^{1/2}).
\end{split}
\]
\end{theorem*}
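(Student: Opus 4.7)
The plan is to chain together the descent bound of Lemma~\ref{lemma1} with the three mini-batch analogs (Lemma~\ref{Mtmulbit}, Lemma~\ref{sumgv}, Lemma~\ref{multitau}) so as to derive a single scalar inequality of the form $X^2 \leq (A+BX)(C+D\sqrt{X})$ in the unknown $X := \mathbb{E}\bigl[\sqrt{\sum_{t=1}^T \|\bm{\nabla} f(\bm{x}_t)\|^2}\bigr]$, then solve that inequality using Lemma~\ref{4ineq}, and finally extract a per-iterate bound by Cauchy--Schwarz. The variance-reduction estimate $\mathbb{E}\|\bar{\bm{g}}_t - \bm{\nabla} f(\bm{x}_t)\|^2 \leq \sigma^2/s$ from Lemma~\ref{reduce_variance}, which is the only place the batch size enters, will propagate through both Lemma~\ref{sumgv} and Lemma~\ref{multitau} to produce the claimed linear speedup in $s$.

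First I would apply Lemma~\ref{lemma1} to the mini-batch iterates and then invoke the specialization of Lemma~\ref{lem1-006} recorded in Lemma~\ref{Mtmulbit} (valid since the chosen parameters give $C_0 = C_1 = 1$ and $\theta_t \geq 1/4$), obtaining
\[
\tfrac{1-\beta}{2}\,\mathbb{E}\!\left[\sum_{t=1}^T \|\bm{\nabla} f(\bm{x}_t)\|_{\hat{\bm{\eta}}_t}^2\right] \leq f(\bm{x}_1) - f^* + C_7\,\mathbb{E}\!\left[\sum_{t=1}^T \left\|\tfrac{\sqrt{1-\theta_t}\,\bar{\bm{g}}_t}{\sqrt{\bm{v}_t}}\right\|^2\right].
\]
Second, Lemma~\ref{sumgv} bounds the weighted-gradient-over-$\bm{v}$ sum on the right by a constant plus a term of order $\sqrt{X}$ carrying a $T^{-1/4}$ prefactor, producing $\tfrac{1-\beta}{2}\mathbb{E}[\sum \|\bm{\nabla} f(\bm{x}_t)\|_{\hat{\bm{\eta}}_t}^2] \leq \tfrac{1-\beta}{2}(C_{10} + C_{11}\sqrt{X})$ after identifying constants. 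Third, Lemma~\ref{multitau} supplies the reverse-type estimate $X^2 \leq (C_8 + C_9 X)\,\mathbb{E}[\sum \|\bm{\nabla} f(\bm{x}_t)\|_{\hat{\bm{\eta}}_t}^2]$, which comes from a Cauchy--Schwarz split $X^2 \leq \mathbb{E}[\max_t \sqrt{\|\hat{\bm{v}}_t\|_1}]\cdot \mathbb{E}[\sum \|\bm{\nabla} f(\bm{x}_t)\|^2/\max_t\sqrt{\|\hat{\bm{v}}_t\|_1}]$ followed by bounding each factor (again separating $\bar{\bm{g}}_t$ into its mean and variance parts so the $\sigma^2/s$ contribution appears linearly in $C_8$). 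Substituting the former into the latter yields exactly Eq.~\eqref{fxt_inq}, $X^2 \leq (C_8 + C_9 X)(C_{10} + C_{11}\sqrt{X})$. Lemma~\ref{4ineq} then gives the closed-form bound $X \leq (4C_8 C_{10})^{1/2} + (4C_8 C_{11})^{2/3} + (4C_9 C_{10}) + (4C_9 C_{11})^2$. Finally, Cauchy--Schwarz gives $\sum_t \|\bm{\nabla} f(\bm{x}_t)\| \leq \sqrt{T}\sqrt{\sum_t \|\bm{\nabla} f(\bm{x}_t)\|^2}$, so at least one index $t \in \{1,\dots,T\}$ satisfies $\mathbb{E}[\|\bm{\nabla} f(\bm{x}_t)\|] \leq X/\sqrt{T}$, which is the stated bound; the asymptotic rate $\mathcal{O}(T^{-1}s^{1/2}d^{1/2} + T^{-1/2}d + T^{-1/3}s^{-1/4} + T^{-1/4}s^{-1/4}d^{1/2})$ then follows by dividing each of the four summands in the Lemma~\ref{4ineq} bound by $\sqrt{T}$ and substituting $\epsilon = 1/(sd)$.

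The main obstacle is Lemma~\ref{sumgv}, where one must carefully handle a logarithmic telescoping bound on $\sum_t \|\sqrt{1-\theta_t}\bar{\bm{g}}_t/\sqrt{\bm{v}_t}\|^2$ coming from $\bm{v}_t = \theta_t \bm{v}_{t-1} + (1-\theta_t)\bar{\bm{g}}_t^2$. The naive estimate $\mathbb{E}\|\bar{\bm{g}}_t\|^2 \leq G^2$ used in Lemma~\ref{lem1-007} would erase the batch-size dependence entirely; instead one must decompose $\|\bar{\bm{g}}_t\|^2 \leq 2\|\bar{\bm{g}}_t - \bm{\nabla} f(\bm{x}_t)\|^2 + 2\|\bm{\nabla} f(\bm{x}_t)\|^2$, apply Lemma~\ref{reduce_variance} on the variance part to pick up the $\sigma^2/s$ factor, and carry the gradient part all the way through Jensen's inequality applied to the logarithm (using the concavity $\mathbb{E}[\log(1+X)] \leq \log(1+\mathbb{E}X)$). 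It is precisely this coupling between the bound and the unknown $X$ that forces the nonlinear template $x^2 \leq (A+Bx)(C+D\sqrt{x})$ and motivates the elementary but crucial Lemma~\ref{4ineq}. A secondary bookkeeping challenge is simply aligning constants so that $C_7, C_8, C_9, C_{10}, C_{11}$ emerge with the stated dependence on $L, \beta, \theta, \alpha, \sigma^2, s, T, d, \epsilon$, and verifying that the four terms produced by Lemma~\ref{4ineq} indeed match the four-term expression in the theorem statement after the final division by $\sqrt{T}$.
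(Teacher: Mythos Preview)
Your proposal is correct and follows essentially the same route as the paper: descent lemma plus Lemma~\ref{Mtmulbit} to bound $\sum_t \mathbb{E}\|\bm{\nabla} f(\bm{x}_t)\|_{\hat{\bm{\eta}}_t}^2$, Lemma~\ref{sumgv} to refine via the variance splitting $\|\bar{\bm{g}}_t\|^2 \le 2\|\bar{\bm{g}}_t-\bm{\nabla} f(\bm{x}_t)\|^2 + 2\|\bm{\nabla} f(\bm{x}_t)\|^2$, Lemma~\ref{multitau} for the reverse estimate, then Lemma~\ref{4ineq} and the Cauchy--Schwarz extraction $\sqrt{T}\min_t \mathbb{E}\|\bm{\nabla} f(\bm{x}_t)\| \le X$. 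One bookkeeping remark: with the constant definitions as written in the statement you are proving, the roles of $C_{10}$ and $C_{11}$ in your factor $(C_{10}+C_{11}\sqrt{X})$ are swapped (the $T^{-1/4}$ quantity is the coefficient of $\sqrt{X}$, not the constant term); the paper's main text and appendix already disagree on this labeling, so just be careful to match whichever convention makes the four summands from Lemma~\ref{4ineq} line up with the stated bound.
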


\begin{proof}
First, according to the gradient Lipschitz condition of $f$, it holds
\[
\begin{split}
f(x_{t+1}) &\leq f(\bm{x}_t) + \langle\nabla f(\bm{x}_t), \Delta_t\rangle + \frac{L}{2} \|\Delta_t\|^2\\
& =  f(\bm{x}_t) + \langle\nabla f(\bm{x}_t), \Delta_t\rangle + L \|\Delta_t\|^2 .
\end{split}
\]
Recall that $M_t = \mathbb{E}[\langle\nabla f(\bm{x}_t), \Delta_t\rangle+L\|\Delta_t\|^2$. 
Then we have
\[
\begin{split}
    f^*&\leq \mathbb{E}[fx_{T+1})] \leq f(x_1) + \sum_{t=1}^T M_t\\
    & \leq f(x_1) + \sum_{t=1}^T M_t\leq f(x_1) + C_{7}\mathbb{E}\sum_{t=1}^T \left\|\frac{\sqrt{1-\theta_t}\bar{\bm{g}}_t}{\sqrt{\bm{v}_t}}\right\|^2 - \frac{1-\beta}{2}\sum_{t=1}^T\mathbb{E}\|\nabla f\left(\bm{x}_t\right)\|_{\hat{\eta}_t}^2.\\
\end{split}
\]
Using Lemma \ref{sumgv}, \ref{Mtmulbit} and \ref{multitau} with rearranging the corresponding terms, we have
\begin{equation}
\label{qua_inq}
\begin{split}
  &\mathbb{E}\left[\sqrt{\sum_{t=1}^T \|\nabla f(\bm{x}_t)\|^2}\right]^2\\
  &\leq \left( \frac{\sqrt{T}\sqrt{2\sigma^2\theta+\epsilon s d}}{\sqrt{s}\alpha} + \sqrt{\frac{2\theta}{\alpha^2}} \mathbb{E}\sqrt{\sum_{t=1}^T\|\nabla f(\bm{x}_t)\|^2}\right)\mathbb{E}\left[\sum_{t=1}^T \|\nabla f(\bm{x}_t)\|_{\hat{\eta}_t}^2\right ]\\
  &\leq \left( \frac{\sqrt{T}\sqrt{2\sigma^2\theta+\epsilon s d}}{\sqrt{s}\alpha} + \sqrt{\frac{2\theta}{\alpha^2}}  \mathbb{E}\sqrt{\sum_{t=1}^T\|\nabla f(\bm{x}_t)\|^2}   \right)\frac{2}{1-\beta}\left(f(x_1) - f^* + C_7\mathbb{E}\sum_{t=1}^T\left\|\frac{\sqrt{1-\theta_t}\bm{g}_t}{\sqrt{\bm{v}_t}}\right\|^2\right)\\
&\leq \left( \frac{\sqrt{T}\sqrt{2\sigma^2\theta+\epsilon s d}}{\sqrt{s}\alpha} + \sqrt{\frac{2\theta}{\alpha^2}}  \mathbb{E}\sqrt{\sum_{t=1}^T\|\nabla f(\bm{x}_t)\|^2}   \right)\\
  &\qquad \frac{2}{1-\beta}\left(f(x_1) - f^* + C_7 d\theta + C_7 2d \sqrt[4]{1 + \frac{2\sigma^2}{d\epsilon s}}+C_7\sqrt{\sqrt{\frac{2\theta}{d\epsilon T}}\mathbb{E}\left[\sqrt{\sum_{t=1}^T\|\nabla f(\bm{x}_t)\|^2}\right]}\right).
\end{split}
\end{equation}
Before using Lemma \ref{4ineq}, we list the order of 4 terms in Lemma \ref{4ineq} as follows:
\[
\begin{split}
&(4BD)^2 = O\left(\sqrt{\frac{1}{d\epsilon T}}\right),\ 4BC = O(d + d (ds\epsilon)^{-1/4} ),\\
&(4AD)^{2/3} = O(T^{1/6}(s^{-1/2}(\epsilon d)^{-1/4} + (\epsilon d)^{1/4})),\\
&\sqrt{4AC} = O(T^{1/4} d^{1/2} (s^{-1/4} + (\epsilon d)^{1/4})(1 + (ds\epsilon)^{-1/8})).
\end{split}
\]
Then, it holds that  $\sqrt{T} {\min}_t \mathbb{E} \|\nabla f(\bm{x}_t)\| \leq \mathbb{E}\left[\sqrt{\sum_{t=1}^T \|\nabla f(\bm{x}_t)\|^2}\right] $. By dividing $\sqrt{T}$ on both side, we can get the desired result.
\end{proof}

\vskip 0.2in
\bibliography{egbib}
\end{document}